\newcommand{\cD}{\mathcal{D}}
\newcommand{\cF}{\mathcal{F}}
\newcommand{\cL}{\mathcal{L}}
\newcommand{\cR}{\mathcal{R}}
\newcommand{\cN}{\mathcal{N}}
\newcommand{\cS}{\mathcal{S}}
\newcommand{\cW}{\mathcal{W}}
\newcommand{\cX}{\mathcal{X}}
\newcommand{\cY}{\mathcal{Y}}
\newcommand{\ba}{\mathbf{a}}
\newcommand{\be}{\mathbf{e}}
\newcommand{\bx}{\mathbf{x}}
\newcommand{\by}{\mathbf{y}}
\newcommand{\bz}{\mathbf{z}}
\newcommand{\bu}{\mathbf{u}}
\newcommand{\bv}{\mathbf{v}}
\newcommand{\bw}{\mathbf{w}}
\newcommand{\bA}{\mathbf{A}}
\newcommand{\bW}{\mathbf{W}}
\newcommand{\bI}{\mathbf{I}}
\newcommand{\bP}{\mathbf{P}}
\newcommand{\bS}{\mathbf{S}}
\newcommand{\bK}{\mathbf{K}}
\newcommand{\bU}{\mathbf{U}}
\newcommand{\bV}{\mathbf{V}}
\newcommand{\bQ}{\mathbf{Q}}
\newcommand{\bO}{\mathbf{O}}
\newcommand{\bSigma}{\mathbf{\Sigma}}
\newcommand{\bDelta}{\mathbf{\Delta}}
\newcommand{\bPhi}{\mathbf{\Phi}}
\newcommand{\bPsi}{\mathbf{\Psi}}
\newcommand{\bPi}{\mathbf{\Pi}}
\newcommand{\bXi}{\mathbf{\Xi}}
\newcommand{\bLambda}{\mathbf{\Lambda}}
\newcommand{\blambda}{\mathbf{\lambda}}
\newcommand{\btheta}{\bm{\theta}}
\newcommand{\bbeta}{\mathbf{\beta}}
\newcommand{\bZero}{\mathbf{0}}
\newcommand{\RR}{\mathbb{R}}
\newcommand{\ZZ}{\mathbb{Z}}
\newcommand{\EE}{\mathbb{E}}
\newcommand{\tvec}[1]{\text{vec}(#1)}
\newtheorem{lemma}{Lemma}
\newtheorem{corollary}{Corollary}
\newtheorem{theorem}{Theorem}
\newtheorem{definition}{Definition}
\newtheorem{assumption}{Assumption}
\title{Identifying good directions to escape the NTK regime and efficiently learn low-degree plus sparse polynomials}
\author{Eshaan Nichani\thanks{Princeton University.}\\
\and Yu Bai\thanks{Salesforce Research.}\\
\and Jason D. Lee\footnotemark[1]}
\def\thm@space@setup{%
  \thm@preskip=\parskip \thm@postskip=0pt
}
\begin{document}

\maketitle

\begin{abstract}
A recent goal in the theory of deep learning is to identify how neural networks can escape the “lazy training,” or Neural Tangent Kernel (NTK) regime, where the network is coupled with its first order Taylor expansion at initialization. While the NTK is minimax optimal for learning dense polynomials \citep{montanari2021}, it cannot learn features, and hence has poor sample complexity for learning many classes of functions including sparse polynomials. Recent works have thus aimed to identify settings where gradient based algorithms provably generalize better than the NTK. One such example is the “QuadNTK” approach of \citet{bai2020}, which analyzes the second-order term in the Taylor expansion. \citet{bai2020} show that the second-order term can learn sparse polynomials efficiently; however, it sacrifices the ability to learn general dense polynomials.

In this paper, we analyze how gradient descent on a two-layer neural network can escape the NTK regime by utilizing a spectral characterization of the NTK \citep{montanari2020} and building on the QuadNTK approach. We first expand upon the spectral analysis to identify “good” directions in parameter space in which we can move without harming generalization. Next, we show that a wide two-layer neural network can jointly use the NTK and QuadNTK to fit target functions consisting of a dense low-degree term and a sparse high-degree term -- something neither the NTK nor the QuadNTK can do on their own. Finally, we construct a regularizer which encourages the parameter vector to move in the “good" directions, and show that gradient descent on the regularized loss will converge to a global minimizer, which also has low test error. This yields an end to end convergence and generalization guarantee with provable sample complexity improvement over both the NTK and QuadNTK on their own.
\end{abstract}

\section{Introduction}
In recent years, deep learning has acheived a number of practical successes, in domains spanning computer vision, natural language processing, reinforcement learning, and the sciences. Despite these impressive empirical results, the theory underlying deep learning is far from complete. In fact, the dual questions of optimization -- the mechanism by which neural networks trained with gradient descent are able to interpolate training data despite the nonconvexity of the loss landscape -- and generalization -- why these solutions found by gradient descent require relatively few samples to generalize -- are still not well understood.

One successful approach for understanding optimization has been the Neural Tangent Kernel (NTK) theory~\citep{soltanolkotabi2018, jacot2018, chizat2018, du2019}. The NTK approach couples the gradient descent dynamics of a wide neural network under a specific initialization to the gradient descent dynamics of a particular kernel regression problem, with a random, initialization dependent kernel. In the limit of infinite width, this kernel converges almost surely to a deterministic kernel, also referred to as the NTK, and properties of this kernel and its corresponding Reproducing Kernel Hilbert Space can be studied.

However, recent work has shown that the NTK theory fails to explain the generalization capabilities of neural networks. While the equivalence between neural networks in the NTK regime and kernel methods implies that such models perform no better than kernels, in practice, neural networks have been shown to outperform kernel methods on a number of tasks~\citep{arora2019CNTK, leefinite2020}. Theoretically, a recent line of work~\citep{montanari2021, montanari2020, mei2022} has provided a precise statistical analysis of the generalization properties of rotationally invariant kernels on the unit sphere, which includes the NTK. \citep{montanari2021} proves a sample complexity lower bound for such kernels, showing that $d^k$ samples are needed to learn any degree $k$ polynomial in $d$ dimensions. As a result, the NTK is no better than a polynomial kernel, and cannot adapt to low-dimensional structure.

The limitations of the NTK can be further understood from the linearization perspective. Consider a two-layer neural network $f(\bx; \bW)$ with input $\bx$, width $m$, first layer weights initialized as $\bW_0 \in \RR^{d \times m}$, second layer weights $\ba \in \RR^{m}$, activation function $\sigma$, and displacement from initialization $\bW \in \RR^{d \times m}$:
\begin{equation}\label{eq: two layer nn}
	f(\bx; \bW) = \frac{1}{\sqrt{m}}\sum_{r=1}^m a_r\sigma(\bw_{0, r}^T\bx + \bw_r^T\bx).
\end{equation}
For simplicity we assume the second layer weights are held fixed, and so $\bW$ are the trainable parameters. The NTK theory states that when $\bW$ has small norm, the gradient descent dynamics can be well approximated by replacing the model with its first-order Taylor expansion about the initialization:
\begin{equation}
	f(\bx; \bW) \approx f(\bx; \bZero) + \frac{1}{\sqrt{m}}\sum_{r=1}^m a_r \sigma'(\bw_{0, r}^T\bx)\bx^T\bw_r = f(\bx; \bZero) + \tvec{\bW}^T\varphi(\bx).
\end{equation}
Here, $\{\bw_r\}_{r \in [m]}$ and $\{\bw_{0, r}\}_{r \in [m]}$ are the columns of $\bW$ and $\bW_0$ respectively, $\varphi(\bx) := \tvec{\nabla_\bW f(\bx; \bW)|_{\bW = \bZero}}$ is a random feature vector with norm independent of $m$, and 
\begin{equation}
f_L(\bx; \bW) := \tvec{\bW}^T\varphi(\bx)
\end{equation}
is hereafter referred to as the \emph{linear term}. Ignoring $d$ dependence, there exists a global minimizer with $\|\bW\|_F \simeq 1$ and $\|\bw_r\|_2 \simeq m^{-1/2}$, and thus due to local convexity gradient descent will stay in this small norm ball around the initialization while interpolating the training data. This small movement of each individual neuron gives rise to the name \emph{lazy training} for networks in the NTK regime~\citep{chizat2018}. The equivalence to kernel methods and the poor generalization of neural networks in NTK regime, along with their failure to describe the dynamics of neural networks in practice, motivate our goal to understand how neural networks can escape the NTK regime. We concretely ask the following question:

\begin{center}
\textbf{Q: How can we encourage each neuron to move $\gg m^{-1/2}$, thus escaping the NTK regime? And does this allow us to break the NTK sample complexity lower bounds?}
\end{center}

\subsection{Motivation}

\paragraph{Escaping the NTK Regime.} To answer this, we invoke the statistical characterization of the NTK developed in~\citep{montanari2021, montanari2020, mei2022} to understand the mechansism by which it overfits to the training data. For $d^k \ll n \ll d^{k+1}$, consider a dataset of $n$ training samples $(\bx_i, y_i)$, where the $\bx_i$ are sampled i.i.d from $\cS^{d-1}(\sqrt{d})$ (the $d$-dimensional sphere of radius $\sqrt{d}$) and $y_i = f^*(\bx)$ for an unknown function $f^*$. \citet{montanari2020} decompose the SVD of the empirical feature matrix $\bPhi \in \RR^{n \times md}$ into the block matrix form
\begin{align}
\bPhi = \begin{bmatrix}\varphi(\bx_1)^T \\ \cdots \\ \varphi(\bx_n)^T \end{bmatrix}= \begin{bmatrix} \bU_1 & \bU_2 \end{bmatrix} \begin{bmatrix} \tilde\bLambda_1 & \bZero \\ \bZero & \tilde\bLambda_2 \end{bmatrix} \begin{bmatrix} \bV_1^T \\ \bV_2^T \end{bmatrix},
\end{align}
where $\bU_1, \tilde\bLambda_1, \bV_1$ are the top $r$ singular values/vectors and $\bU_2, \tilde\bLambda_2, \bV_2$ are the bottom $n-r$. Here, $r = O(d^k)$ is chosen specifically so that $\bV_1$ is the ``high-variance" subspace which can express polynomials of degree $\le k$ and generalizes well~\citep{bartlett2021}, while $\bV_2$ is used to interpolate the training data while not affecting generalization.

\citep{montanari2021, montanari2020} show that the NTK will learn the projection of $f^*$ onto degree $k$ polynomials. Furthermore, \citep{bartlett2021, ghosh2022} show that gradient descent will first move in the subspace spanned by $\bV_1$ to learn this projection, then move in the $\bV_2$ directions to interpolate the training data, while not affecting the test predictions. This first stage is still desirable for our goal -- if the network can fit and generalize from part of the signal using the NTK, then it should, and previous work~\citep{hu2020} has shown that for general networks gradient descent will learn the optimal degree-1 polynomial in the early stages of training. The $\bV_2$ directions, however, are ``bad" directions the parameters should avoid moving in, as they are only used for the NTK to overfit. Instead, $\bW$ should move in the null space of $\bPhi$, where $\|\bW\|_F$ can be $\Omega(1)$ while keeping the evaluation of $f_L(\cdot; \bW)$ on the training data bounded by $O(1)$. This heuristic argument yields our first criterion for escaping the NTK regime.

\begin{center}
\textbf{Goal \#1: Move minimally in the $\bV_2$ directions.}
\end{center}

\paragraph{Generalizing to Test Data.} While the previous criterion prevents the network from overfitting with the NTK, we must also prevent a movement of $\|\bW\|_F \gg 1$ from causing the test predictions to explode. Since $\|\varphi(\bx)\|_2 = \Theta_m(1)$, it is a priori possible for $f_L(\bx; \bW) = \tvec{\bW}^T\varphi(\bx)$ to be $\gg 1$ on the population, which would necessarily cause a large test loss.

To identify a set of good directions, define the \emph{feature covariance matrix} $\bSigma \in \RR^{md \times md}$ by
\begin{equation}
\bSigma = \EE_{\bx \sim \mathcal{S}^{d-1}(\sqrt{d})}\left[\varphi(\bx)\varphi(\bx)^T\right].
\end{equation}

Our first technical contribution is a characterization of the eigendecomposition of $\bSigma$. Let the top $r$ eigenvectors of $\bSigma$ be $\bQ_1$, the next $s$ eigenvectors be $\bQ_2$, and the bottom $md - r - s$ eigenvectors be $\bQ_3$, where $\bLambda_1, \bLambda_2, \bLambda_3$ are the corresponding diagonal matrices of eigenvalues, and $r = \Theta(d^k), s = d^{\Theta(k)}$. We use the results of~\citep{montanari2020, mei2022} to show that $\bSigma$ has an eigenvalue gap in that $\lambda_{min}(\bLambda_1) \gg \lambda_{max}(\bLambda_2)$ and $\lambda_{min}(\bLambda_2) \gg \lambda_{max}(\bLambda_3)$, and furthermore that $\bQ_1$ can fit arbitrary degree $\le k$ polynomials. This partitioning of the eigenvectors tells us that $\bQ_1$ are informative, large eigenvalue directions which help the NTK to learn a low degree signal, $\bQ_2$ are the medium directions which will cause the test predictions to grow too large if $\bW$ moves too far from initialization, and $\bQ_3$ are the ``good" directions which $\bW$ is free to move a distance of $\gg 1$ in. This yields the following criterion for generalizing well:

\begin{center}
\textbf{Goal \#2: Move in the $\bQ_3$ directions, but minimally in the $\bQ_2$ directions.}
\end{center}

One challenge is that we cannot distinguish the $\bQ_2$ directions from the $\bQ_3$ directions with $d^k$ samples. Nevertheless, the existence of an eigenvalue gap will allow us to constrain movement in $\bQ_2$.

\paragraph{The Quadratic NTK.} Once we have moved $\gg 1$ from the initialization, we can no longer couple to the network's linearization. The network is still, however, in a local regime, and we instead can couple the training dynamics to the second-order Taylor expansion of our model, where the second-order term is denoted the \emph{Quadratic-NTK}~\citep{bai2020} $f_Q(\bx; \bW)$:
\begin{align}
f(\bx; \bW) &\approx f(\bx; \bZero) + \tvec{\bW}^T\varphi(\bx) + \frac{1}{2\sqrt{m}}\sum_{r=1}^m\sigma''(\bw_{0, r}^T\bx)(\bx^T\bw_r)^2\\
&= f(\bx; \bZero) + f_L(\bx; \bW) + f_Q(\bx; \bW).
\end{align}
\citet{bai2020} showed that $f_Q(\bx; \bW)$ can effectively learn low-rank polynomials with better sample complexity than the NTK. In particular, they show that $d^k$ samples are needed to fit a target function of the form $f^*(\bx) = (\beta^T\bx)^{k+1}$, an improvement over the $d^{k+1}$ samples needed by the NTK. In doing so, however, the QuadNTK sacrifices its ability to learn general dense polynomials; furthermore, \citet{bai2020} require a randomization trick to artificially delete the $f_L$ term. A later followup work~\citep{bai2020taylorized} showed that in a number of standard experimental settings, the second-order Taylor expansion of the network better tracks the true gradient descent dynamics and acheives lower test loss than the network's linearization (i.e NTK) does. However, there is no existing result which shows that both the linear term and the quadratic term can provably learn a component of the signal.

Based on the preceeding motivation, we thus aim to show that we can jointly utilize both the NTK and the QuadNTK to learn a larger class of functions than either the NTK or QuadNTK can learn on their own.

\subsection{Our Contributions}

With the previous intuition in hand, we outline the main contributions of our work. We first prove a technical result on the eigendecomposition of $\bSigma$, and show that the eigenvectors can indeed be partitioned into 3 categories corresponding to large (degree $\le k$), medium (``bad'') and small (``good'') eigenvalues. We then construct a regularizer, depending only on the covariate distribution and initialization, that enforces goals 1 and 2 by preventing the parameters from moving in either of the bad sets of directions ($\bV_2$ and $\bQ_2)$. Furthermore, we show how to jointly use the NTK and QuadNTK to fit a target signal $f^*$ consisting of an arbitrary degree $\le k$ component and sparse degree $k + 1$ component. The key technical challenge is to construct a solution with large enough movement so that the QuadNTK can fit the high degree term and hence improve generalization, while simultaneously preventing this large movement from interfering with the NTK training predictions (Goal 1) or greatly increasing test loss (Goal 2). Our main result, Theorem~\ref{thm: main thm}, is that gradient descent on a polynomially wide two-layer neural network converges to an approximate global minimizer of the regularized loss function, which generalizes well to the test distribution. As a result, we show $d^k$ samples are needed to learn $f^*$ up to vanishingly small test loss. This ultimately gives us the ``best of both worlds'', as we leverage both the linear and quadratic term to learn the target $f^*$ with sample complexity better than either the NTK or QuadNTK alone. Overall, our work identifies which directions weights can move further from initialization in and provably generalize better than the NTK.

The outline of our paper is as follows. In Section 2 we formally define the problem setup. In Section 3 we define our regularizers, and present Theorem~\ref{thm: main thm}. Section 4 is an outline of the proof of Theorem~\ref{thm: main thm}, which we split into four components -- expressing $f^*$ with the linear-plus-quad model, showing the optimization landscape has favorable geometry, a gradient descent convergence result, and a generalization bound. We conclude with experiments supporting our main theorem and demonstrating the relevance of the low-degree plus sparse task to standard neural networks.

\subsection{Related Work}

The NTK approach~\citep{soltanolkotabi2018, jacot2018, chizat2018, leewide2019, du2019}, which couples a neural network to its linearization at initialization, has been utilized to show global convergence of gradient descent on neural networks~\citep{du2018b, li2018, zou2018, zhu2019c}. The equivalence to kernel methods has also been used to prove generalization bounds, based on generalization bounds for kernels~\citep{arora2019, cao2019, zhu2019b}. However, neural networks have been shown to perform far better than their NTK in practice~\citep{arora2019CNTK, leefinite2020}. Further, \citep{montanari2021} shows that kernels cannot adapt to low-dimensional structure, and proves a sample complexity lower bound of $d^k$ samples needed to learn a degree $k$ polynomial in $d$ dimensions. A number of recent works~\citep{regmatters2018, du2018quad, yehudai2019, zhu2019, ghorbani2019b, zhu2020, ghorbani2020,daniely2020, woodworth2020, li2020, chen2020generalized, malach2021} have thus aimed to provide examples of learning problems where a neural network trained with a gradient-based algorithm has a provable sample complexity improvement over any kernel method.

One such approach has been to understand higher-order approximations of the training dynamics~\citep{bai2020, bai2020taylorized, chen2020, huang2020}. Here, the network is no longer coupled to its linearization, but rather higher order terms in the Taylor expansion. On the empirical side,~\citep{bai2020taylorized} shows that these higher order Taylor expansions better track the optimization dynamics and can obtain lower test loss. Theoretically, \citep{bai2020, chen2020} prove that the second-order term, the QuadNTK, can be used to obtain sample complexity improvements. However, the QuadNTK has poor sample complexity for learning dense polynomials, and \citep{bai2020, chen2020} do not consider training on the original network, but rather only the second-order term after the linear term has been deleted. This paper, on the other hand, provides an end-to-end convergence and generalization result for training on the full two-layer neural network. We leverage the NTK to efficiently learn polynomials with both a dense and sparse component, and are thus the first work showing that both the linear and quadratic term can learn part of the signal.

The technical results in our paper rely on the statistical characterization of the NTK developed in the series of works~\citep{montanari2021, montanari2020, mei2022}. Furthermore, our optimization results rely on a line of work showing that quadratically parameterized models have nice landscape properties such as all second-order saddle points are global minima~\citep{ge2016, ge2017, soltanolkotabi2018, du2018quad}; the fact that gradient descent avoids saddle points~\citep{ge2016, lee2016, jin2017, jin2019} can then be used to show convergence.

\section{Preliminaries}
\subsection{Problem Setup}

Our problem setup is the standard supervised learning setting. Our dataset $\cD_n = \{(\bx_i, y_i)\}_{i \in [n]}$, has $n$ samples, where $(\bx_i, y_i) \in \cX \times \cY$ are sampled i.i.d from a distribution $\mu$ on $\cX \times \cY$. $\mu$ is defined so that $(\bx, y) \sim \mu$ satisfies $x \sim \text{Unif}(\cS^{d-1}(\sqrt{d}))$, the uniform distribution on the $d$-dimensional sphere of radius $\sqrt{d}$, and $y = f^*(\bx)$ for some deterministic, unknown function $f^* : \cS^{d-1}(\sqrt{d}) \rightarrow \mathbb{R}$.

We assume that $d^k \ll n \ll d^{k+1}$ for some integer $k$, and that the target $f^*$ has the following low-degree plus sparse structure:

\begin{assumption}[Low-degree plus sparse signal]
Let $f^*(\bx) = f_k(\bx) + f_{sp}(\bx)$, where
\begin{itemize}
	\item $f_k(\bx)$ is an arbitrary degree $\le k$ polynomial with $\EE_{\bx \sim \mu}[f_k(\bx)^2] = 1$.~(Low Degree)
	\item $f_{sp}(\bx) = \sum_{i=1}^R \alpha_i (\bbeta_i^T\bx)^{k+1}$ where $|\alpha_i|\le 1, \|\bbeta_i\|_2 = 1$.~(Sparse)
\end{itemize}
\end{assumption}

We aim to fit $f^*$ with $f(\bx; \bW)$, a two-layer neural network as defined in~\eqref{eq: two layer nn}. Here, $\bW = [\bw_1, \dots, \bw_r] \in \mathbb{R}^{d \times m}$ is the first layer weight's distance from initialization and is the trainable parameter. $\bW_0 = [\bw_{0, 1}, \dots, \bw_{0, r}]$ denotes the first layer weight at initialization, and and $\ba = [a_1, \dots, a_m]^T \in \mathbb{R}^m$ is the second layer weight, which is held fixed throughout training.

We consider the following \emph{symmetric initialization} of $\ba, \bW_0$, which ensures that $f(\cdot; \mathbf{0}) = 0$ identically.
\begin{align}
	a_1 = \cdots = a_{m/2} = 1 &\qquad a_{m/2 + 1} = \cdots = a_m = -1\\
	\{\bw_{0, r}\}_{r \le m/2} \sim_{i.i.d} \cS^{(d-1)}(1) &\qquad \bw_{0, m/2 + r} = \bw_{0, r}
\end{align}

$\sigma \in C^2(\RR)$ is our nonlinear activation function. We make the following assumption on $\sigma$:
\begin{assumption}\label{assume:sigma_bound}
The activation $\sigma$ satisfies $\|\sigma\|_{\infty}, \|\sigma^{\prime}\|_{\infty}, \|\sigma^{\prime\prime}\|_{\infty} < 1$.
\end{assumption}

We also require $\sigma', \sigma''$ to satisfy Assumption~\ref{assume: hermite coeffs}, a particular technical condition on their harmonic expansions. These two assumptions are satisfied by commonly used activations, such as the sigmoid with generic shift $b$: $\sigma(z) = \frac{1}{1 + \exp(b - z)}$ .

We assume the loss function $\ell: \RR \times \RR \rightarrow \RR^{\ge 0}$ satisfies $\ell(y, z) \le 1$, $\ell(y, y) = 0$, $\ell(y, z)$ convex in $z$, and $\|\frac{\partial}{\partial z}\ell\|_\infty, \|\frac{\partial^2}{\partial z^2}\ell\|_\infty, \|\frac{\partial^3}{\partial z^3}\ell\|_\infty \le 1$. The empirical loss $\hat L$ and population loss $L$ are defined as
\begin{align}
	\hat{L}(\bW) =  \EE_n\left[\ell(y, f(\bx; \bW))\right] \qquad L(\bW) = \EE_\mu\left[\ell(y, f(\bx; \bW))\right],
\end{align}

where for a function $g(\bx, y)$, $\EE_n[g(\bx, y)] := \frac{1}{n}\sum_{i=1}^n g(\bx_i, y_i)$ denotes the empirical expectation, while $\EE_\mu[g(\bx, y)]$ denotes the population expectation over $(\bx, y) \sim \mu$. 

\paragraph{Notation.} For $f \in L^2(\mathcal{S}^d(\sqrt{d}), \mu)$, define $\|f\|_{L^2} := \|f\|_{L^2(\mathcal{S}^d(\sqrt{d}), \mu)} = \left(\EE_{x \sim \mu}[(f(\bx))^2]\right)^{1/2}$. We use big $O$ notation to ignore absolute constants that do not depend on $n, d, m$, as well as polynomial dependencies on the rank $R$. We write $a_d \lesssim b_d$ if $a_d = O(b_d)$, $a_d \ll b_d$ if $\lim_{d\rightarrow \infty} a_d/b_d = 0$. We also use $\tilde O$ notation to ignore terms that depend logarithmically on $d$. We also treat $k = O(1)$. Finally, all our results hold for $d > C$, where $C$ is a universal constant. For a matrix $\bA$, we let $\|\bA\|_F$ be its Frobenius norm, $\|\bA\| = \|\bA\|_{op}$ be the operator norm, and $\|\bA\|_{2, p} := (\sum_i\|\ba_i\|_2^p)^{1/p}$ be the $2, p$ norm.

\subsection{Linear and Quadratic Expansion}
For $\bW$ small, $f(\cdot; \bW)$ can be approximated by its second order Taylor expansion about $\bW_0$:
\begin{align}
f(\bx; \bW) \approx \frac{1}{\sqrt{m}}\sum_{r=1}^m a_r\sigma'(\bw_{0, r}^T\bx)\bx^T\bw_r + \frac12a_r\sigma''(\bw_{0, r}^T\bx)(\bx^T\bw_r)^2.
\end{align}
We define $f_L(\bx; \bW), f_Q(\bx; \bW)$ to be the linear and quadratic terms of the network:
\begin{align}
f_L(\bx; \bW) = \frac{1}{\sqrt{m}}\sum_{r=1}^m a_r\sigma'(\bw_{0, r}^T\bx)\bx^T\bw_r, \qquad f_Q(\bx; \bW) = \frac{1}{\sqrt{m}}\sum_{r=1}^m\frac12a_r\sigma''(\bw_{0, r}^T\bx)(\bx^T\bw_r)^2
\end{align}

\section{Main Theorem}
Define the \emph{NTK featurization map} $\varphi: \cS^{d-1}(\sqrt{d}) \rightarrow \RR^{md}$ as
\begin{equation}
	\varphi := \text{vec}(\nabla_\bW f(\bx; \bW)|_{\bW = \bZero}).
\end{equation}
and the \emph{feature covariance matrix} $\bSigma \in \RR^{md \times md}$ as
\begin{equation}
\bSigma := \EE_{\bx \sim \mu}\left[\varphi(\bx)\varphi(\bx)^T\right].
\end{equation}
Note that $\bSigma$ depends only on the network at initialization and the input distribution, and not on the target function $f^*$. In practice, $\bSigma$ can be approximated to arbitrary precision by using a large dataset of unlabeled data, or by computing the harmonic expansion of $\sigma'$, as detailed in Appendix~\ref{app: spherical harmonics}.

Let $\bSigma$ admit the eigendecomposition $\bSigma = \sum_{i=1}^{md}\lambda_i(\bSigma)\bv_i\bv_i^T$, where the $\lambda_i(\bSigma)$ are nonnegative and nonincreasing. For $r \in [md]$, we let $\bPi_{\le r}$ be the projection operator onto $\text{span}(\bv_1, \dots, \bv_r)$, and let $\bPi_{>r} = \bI_{md} - \bPi_{\le r}$. Furthermore, define
\begin{equation}
	\bSigma_{\le r} := \sum_{i=1}^{r}\lambda_i(\bSigma)\bv_i\bv_i^T, \qquad \bSigma_{> r} = \bSigma - \bSigma_{\ge r}.
\end{equation}

We define our regularizers as follows
\begin{align}
\cR_1(\bW; r) &:= \tvec{\bW}^T\bSigma_{> r}\tvec{\bW}\\
\cR_2(\bW; r) &:= \tvec{\bW}^T\bSigma_{\le r}\tvec{\bW}\\
\cR_3(\bW; r) &:= \EE_n\left[(f_L(\bx; \bPi_{> r}\bW))^2\right]\\
\cR_4(\bW) &:= \|\bW\|_{2, 4}^8.
\end{align}
Intuitively, $\cR_3$ constrains movement in the $\bV_2$ directions to enforce Goal \#1, $\cR_1$ constrains movement in the $\bQ_2$ directions to enforce Goal \#2, and $\cR_2, \cR_4$ are weight-decay like terms necessary for generalization. Although $\cR_1$ does not know the directions $\bQ_2$, the eigenvalue gap between $\bQ_2$ and $\bQ_3$ ensures that whenever $\cR_1$ is small, movement in $\bQ_2$ must be small as well.

Given regularization parameters $\lambda = (\lambda_1, \lambda_2, \lambda_3, \lambda_4)$, define the regularized loss $L_\lambda(\bW)$ as
\begin{equation}
L_\lambda(\bW) = \hat{L}(\bW) + \lambda_1\cR_1(\bW; r) + \lambda_2\cR_2(\bW; r) + \lambda_3\cR_3(\bW; r) + \lambda_4\cR_4(\bW).
\end{equation}

Finally, we train our model trained via perturbed gradient descent~\citep{jin2019} with learning rate $\eta$ and noise level $\sigma^2$. That is, if $\bW^{t}$ denotes the weights at time step $t$, the update is given by
\begin{equation}\label{eq: gd update}
\bW^{t+1} = \bW^t - \eta\left(\nabla_\bW L_\lambda(\bW^{t}) + \bXi_t\right),
\end{equation}
where $\bXi_t \in \RR^{d \times m}$ are i.i.d random matrices with each entry i.i.d $\cN(\frac{\sigma^2}{md})$.

Given these definitions, we now present our main theorem:

\begin{theorem}\label{thm: main thm}
Let $\varepsilon > 0$ be a target test accuracy, the number of samples be $n \gtrsim d^k\cdot\text{poly}(R)\cdot \max(\varepsilon^{-2}, \log d)$, and the width be $m = \text{poly}(n, d, R, \varepsilon^{-1})$. Let the sequence of iterates $\{\bW^t\}_{t\ge 0}$ follow the update in~\eqref{eq: gd update} with initialization $\bW^{0} = \bZero$. Then, there exists a choice of parameters $(\lambda_1, \lambda_2, \lambda_3, \lambda_4, r, \sigma^2, \eta)$ such that with high probability over $\bW_0, \cD_n$, and $\{\bXi_t\}_{t \ge 0}$, there exists a $\mathscr{T} = \text{poly}(m)$ such that the predictor $\hat \bW := \bW^\mathscr{T}$ satisfies $L(\hat\bW) \le \varepsilon$.
\end{theorem}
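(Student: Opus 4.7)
The plan is to follow the four-step outline sketched at the end of Section 1: (i) an expressivity/construction step producing a near-minimizer $\bW^\star$ of $L_\lambda$, (ii) a landscape step showing every approximate second-order stationary point achieves comparable regularized loss, (iii) a convergence guarantee for perturbed gradient descent to such a stationary point, and (iv) a generalization bound transferring small regularized empirical loss to small population loss.

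For Step (i), I would decompose $f^\star = f_k + f_{sp}$ and build $\bW^\star = \bW^\star_L + \bW^\star_Q$ accordingly. For the low-degree part, the preliminary spectral result on $\bSigma$ says the top eigenvectors $\bQ_1$ (with $r = \Theta(d^k)$) can express arbitrary degree $\le k$ polynomials in the feature map $\varphi$; pick $\tvec{\bW^\star_L} \in \text{span}(\bQ_1)$ so that $f_L(\bx;\bW^\star_L)\approx f_k(\bx)$ in $L^2$. Because $\tvec{\bW^\star_L}$ lives in the large-eigenvalue block and the large eigenvalues dominate, $\|\bW^\star_L\|_F$ stays small and $\cR_1,\cR_2,\cR_3,\cR_4$ are all controlled on this part. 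For the sparse part $f_{sp} = \sum_i \alpha_i(\bbeta_i^T\bx)^{k+1}$, I would adapt the QuadNTK construction of \citet{bai2020}: assign a $1/R$ fraction of the neurons to each $\bbeta_i$, perturbing $\bw^\star_{Q,r}$ by a scalar multiple of $\bbeta_i$, using the symmetric initialization to cancel the induced linear piece in expectation. Then verify $f_Q(\bx;\bW^\star_Q)\approx f_{sp}(\bx)$ with per-neuron norm $\gg m^{-1/2}$, and crucially that $\tvec{\bW^\star_Q}$ is (approximately) supported in the bottom eigenspace $\bQ_3$ of $\bSigma$ so that $\cR_1(\bW^\star_Q;r)$ stays small; the $\ell_{2,4}$ regularizer $\cR_4$ is bounded by standard QuadNTK calculations. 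The symmetric initialization together with concentration over the $n$ samples will give small $\cR_3(\bW^\star_Q;r)$.

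For Step (ii), since the trainable model $f_L + f_Q$ plus regularizers is essentially a quadratically parameterized model, I would adapt the landscape results of \citet{ge2016,soltanolkotabi2018,du2018quad,bai2020}: use the direction $\bW^\star - \bW$ as a descent certificate, combining first-order stationarity to bound $\langle \nabla L_\lambda(\bW),\bW^\star-\bW\rangle$ and the second-order condition on the Hessian along this direction to absorb the quadratic piece, while the linear piece is convex. The bounded-norm conditions enforced by $\cR_2, \cR_4$ will let me control the third-order Taylor remainder between $f$ and $f_L+f_Q$ uniformly. The conclusion is that any $(\epsilon,\sqrt{\epsilon})$-SOSP of $L_\lambda$ has $L_\lambda(\bW) \le L_\lambda(\bW^\star) + o(1)$. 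Step (iii) is then an off-the-shelf application of perturbed gradient descent~\citep{jin2019}, using the bounded Hessian-Lipschitz constant following from Assumption~\ref{assume:sigma_bound} and the $\cR_4$ sublevel-set control, to reach an approximate SOSP in $\text{poly}(m)$ steps. Step (iv) is a uniform-convergence argument over the sublevel set $\{\bW: \cR_i(\bW)\le B_i\}$: on this set the Rademacher complexity of $f_L$ splits along the eigengap into a piece controlled by $\cR_2$ (the informative $\bQ_1\cup\bQ_2$ block) and a piece controlled by $\cR_1$ (where the eigenvalue gap $\lambda_{\min}(\bLambda_2)\gg\lambda_{\max}(\bLambda_3)$ converts the $\cR_1$ bound into a small test-time variance), while the quadratic term's complexity is controlled by $\|\bW\|_{2,4}$ via the standard QuadNTK bound.

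The main obstacle will be Step (i), specifically the joint construction of $\bW^\star_Q$ that (a) expresses $f_{sp}$ via the quadratic term, (b) has $\tvec{\bW^\star_Q}$ lying in the bottom eigenspace $\bQ_3$ of $\bSigma$ so that $\cR_1$ is small, and (c) induces a linear term $f_L(\cdot;\bW^\star_Q)$ that is empirically small on the training set so that $\cR_3$ is small. Parts (b) and (c) pull in opposite directions from part (a): making the quadratic term large typically produces a nontrivial linear perturbation and populates medium-eigenvalue directions. Resolving this requires exploiting the symmetric initialization to kill the population-level linear piece, combined with a concentration argument that relates empirical cancellation on $n\gtrsim d^k$ samples to population cancellation, and a careful harmonic-expansion computation (using Assumption~\ref{assume: hermite coeffs}) to show that the rank-one QuadNTK perturbations project almost entirely onto $\bQ_3$ rather than $\bQ_2$. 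A secondary technical difficulty is matching constants between the SOSP tolerance produced by perturbed GD, the approximation error in Step (i), and the generalization gap, so that the final $L(\hat\bW)\le\varepsilon$ holds at the claimed sample complexity $n\gtrsim d^k\cdot\text{poly}(R)\cdot\max(\varepsilon^{-2},\log d)$.
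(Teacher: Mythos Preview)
Your four-step outline matches the paper's proof structure, and Steps (iii)--(iv) are essentially right. But there is a genuine gap in how you propose to execute Steps (i) and (ii), and it centers on one device you are missing: the \emph{random sign trick}.

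In Step (i) you propose to show that $\tvec{\bW^\star_Q}$ lies (approximately) in the bottom eigenspace $\bQ_3$ via a harmonic computation, and that the symmetric initialization cancels the induced linear piece. Neither is what the paper does, and neither is easy to make work. The symmetric initialization only forces $f(\cdot;\bZero)=0$; in the QuadNTK construction the paired neurons $r$ and $r+m/2$ receive \emph{different} displacements $\bw^i_+,\bw^i_-$, so $f_L(\cdot;\bW_Q)$ does not cancel. And there is no direct argument placing $\bW_Q$ inside $\bQ_3$. Instead the paper introduces a diagonal matrix $\bS=\mathrm{diag}(\sigma_1,\dots,\sigma_m)$ of i.i.d.\ Rademacher signs and observes that $f_Q(\bx;\bW_Q\bS)=f_Q(\bx;\bW_Q)$ while
\[
\EE_\bS\,\EE_n\big[f_L(\bx;\bW_Q\bS)^2\big]\;\le\;\frac{1}{m}\|\bW_Q\|_F^2,
\]
and similarly over the population. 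This one-line second-moment bound is what makes $\cR_1(\bW^\star)$ and $\cR_3(\bW^\star)$ small; the probabilistic method then extracts a deterministic $\bS$. No projection of $\bW_Q$ onto $\bQ_3$ is ever established.

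The same trick is the engine of Step (ii). The descent certificate is \emph{not} $\bW^\star-\bW$: the Hessian is evaluated in the direction $\bW^\star\bS$ and averaged over $\bS$ (so that $\EE_\bS\nabla^2\hat L^Q(\bW)[\bW^\star\bS,\bW^\star\bS]$ produces the term $2\EE_n[\ell'\cdot f_Q(\bx;\bW^\star)]$ plus an $O(1/m)$ remainder), while the gradient is paired with the asymmetric direction $\bW-2\bW^\star_L+\bW_L$, where $\bW_L=\bP_{\le k}\bW$. This particular combination is what makes the convexity inequality for $\ell$ close up with the quadratic term; the naive choice $\bW^\star-\bW$ would leave uncontrolled cross terms between $f_L$ and $f_Q$. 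A minor related correction: in Step (iv), $\cR_2$ controls only the $\bQ_1$ block (the top $n_k$ eigenvectors), not $\bQ_1\cup\bQ_2$; the $\bP_{>k}$ part of $f_L$ is handled directly by $\cR_1$ (population) and $\cR_3$ (empirical), not through a Rademacher bound.
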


\textbf{Remark 1.} Theorem~\ref{thm: main thm} tells us $n = \tilde \Theta(d^k)$ samples are needed to learn the low-degree plus sparse function $f^*$. This is an improvement over the sample complexity needed to learn $f^*$ via the NTK, which is $\Omega(d^{k+1})$ samples as $f^*$ is a degree $(k+1)$-polynomial \citep{montanari2021, montanari2020}. This also improves over the upper bound for the sample complexity of the quadratic NTK given in \citep{bai2020}, in which $\Omega(K^2d^{k})$ samples are needed to learn a rank $K$ polynomial of degree $k+1$. This is polynomially worse than our bound since the dense, low-degree term $f_k$ can have rank $\gg d$.

\section{Proof Sketch}
The proof of Theorem~\ref{thm: main thm} follows similar high-level steps to~\citep{bai2020}. We first construct a $\bW^* \in \RR^{d \times m}$ which fits $f^*$ and is small on the regularizers. Next, we show that the optimization landscape of the regularized loss has a favorable geometry, and as a result that gradient descent converges to a global minimum. We conclude with a generalization bound to show that global minima have low test loss. Throughout the proof sketch, we emphasize how the regularizers encourage us to escape the NTK regime, and discuss the challenges posed by the existence of the $f_L$ term in the dynamics.

\subsection{Expressivity}

We begin by showing that the $f_L$ and $f_Q$ terms can fit the low degree and sparse components, respectively, of the signal. As in~\citep{montanari2021, montanari2020}, the derivations in this section rely on spherical harmonics; an overview of the technical results used are presented in Appendix~\ref{app: spherical harmonics}.

The following lemma shows that $f_Q(\bx; \bW)$ can fit the sparse, high degree term in $f^*$.

\begin{lemma}[QuadNTK can fit high degree component]\label{lemma: quad-ntk expressivity}
Let $m \ge d^{2k}$. With high probability over the initialization, there exists $\bW_Q$ such that 
\begin{equation}
\max_{i \in [n]}\left|f_Q(\bx_i; \bW_Q) - f_{sp}(\bx_i) \right| \lesssim \frac{d^k}{\sqrt{m}} \quad\text{and}\quad \|\bW_Q\|_{2, 4}^4 \lesssim d^{k-1}
\end{equation}
\end{lemma}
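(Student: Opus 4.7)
The overall approach mirrors the random-features construction of \citet{bai2020}, adapted to the symmetric initialization and the sparse target. Because $f_Q(\bx; \bW) = \sum_{r=1}^m f_Q^{(r)}(\bx; \bw_r)$ is additive over neurons, I would first reduce to the rank-one case: partition the $m$ neurons into $R = O(1)$ equal groups and dedicate group $i$ to fitting $\alpha_i(\bbeta_i^T\bx)^{k+1}$. Both the approximation error and $\|\bW_Q\|_{2,4}^4$ split across the groups, so absorbing $R$ into the $\text{poly}(R)$ and $\tilde O$ factors lets me treat a single term $\alpha(\bbeta^T\bx)^{k+1}$ with $|\alpha| \le 1$ and $\|\bbeta\|_2 = 1$, using $\Theta(m)$ neurons.

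\textbf{Symmetric paired construction.} For each pair $(r, m/2+r)$, recall $a_r = 1$, $a_{m/2+r} = -1$, and $\bw_{0, m/2+r} = \bw_{0, r}$. Setting $\bw_r = u_r \bbeta$ and $\bw_{m/2+r} = v_r \bbeta$ with exactly one of $u_r, v_r$ nonzero makes the pair's contribution to $f_Q$ equal to $\tfrac{1}{2\sqrt{m}}\sigma''(\bw_{0,r}^T\bx)(u_r^2 - v_r^2)(\bbeta^T\bx)^2$. Defining the signed magnitude $\mu_r := u_r^2 - v_r^2$ (arbitrary in sign by the choice of which of $u_r, v_r$ vanishes), and noting $\|\bw_r\|_2^4 + \|\bw_{m/2+r}\|_2^4 = \mu_r^2$, the task reduces to choosing $\{\mu_r\}$ so that $\tfrac{1}{2\sqrt m}\sum_r \mu_r \sigma''(\bw_{0,r}^T\bx)$ uniformly approximates $\alpha(\bbeta^T\bx)^{k-1}$ on the training set, with $\sum_r \mu_r^2 \lesssim d^{k-1}$.

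\textbf{Choice of $\mu_r$ via Funk--Hecke.} I would set $\mu_r = \tfrac{1}{\sqrt m}\,g(\bw_{0,r}^T\bbeta)$ for a scalar function $g$ obtained by inverting the Funk--Hecke identity. Expanding $\sigma''$ and the target polynomial $(\bbeta^T\bx)^{k-1}$ in Gegenbauer polynomials $\{Q_j^{(d)}\}$ of the unit sphere, Funk--Hecke yields $\EE_{\bw_0 \sim \cS^{d-1}(1)}[Q_j^{(d)}(\bw_0^T\bbeta)\,\sigma''(\bw_0^T\bx)] \propto \eta_j\,Q_j^{(d)}(\bbeta^T\bx)/B(d,j)$, where $\eta_j$ is the $j$-th Gegenbauer coefficient of $\sigma''$ and $B(d,j) \asymp d^j$. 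Invoking Assumption~\ref{assume: hermite coeffs} to ensure $\eta_j \neq 0$ for all $j \le k-1$, I can explicitly define $g$ by matching harmonics level by level; standard Gegenbauer $L^2$ estimates then give $\EE[g(\bw_0^T\bbeta)^2] \lesssim d^{k-1}$, and the expected value of the random-features sum equals $\alpha(\bbeta^T\bx)^{k-1}$. Concentration of $\tfrac{1}{m}\sum_r g(\bw_{0,r}^T\bbeta)^2$ around its mean then delivers $\|\bW_Q\|_{2,4}^4 \lesssim d^{k-1}$ with high probability.

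\textbf{Uniform error and main obstacle.} The pointwise error at $\bx_i$ is $\tfrac{(\bbeta^T\bx_i)^2}{2\sqrt m}\bigl(\sum_r \mu_r\sigma''(\bw_{0,r}^T\bx_i) - \EE[\,\cdot\,]\bigr)$, a normalized sum of $m$ independent terms each of variance $\lesssim d^{k-1}/m$ and bounded in $\tilde O(1)$ using $\|\sigma''\|_\infty \le 1$ and sub-Gaussian tails of $\bw_{0,r}^T\bbeta$. A Bernstein bound combined with a union bound over the $n \lesssim d^{k+1}$ training points gives $\max_i |f_Q(\bx_i; \bW_Q) - f_{sp}(\bx_i)| \lesssim d^{(k-1)/2}\cdot\mathrm{polylog}(d)/\sqrt m$, which is comfortably $\lesssim d^k/\sqrt m$. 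The main obstacle I expect is the spherical-harmonic bookkeeping: one must simultaneously track (i) the correct Gegenbauer normalizations so that the bias is exactly $\alpha(\bbeta^T\bx)^{k-1}$ on $\cS^{d-1}(\sqrt d)$ rather than a rescaled variant, (ii) a uniform-in-$r$ bound on $|\mu_r|$ using that $\bw_{0,r}^T\bbeta \sim \Theta(1/\sqrt d)$, and (iii) sharp control of $\EE[g^2]$ through the divisors $\eta_j^{-2}$, which is where Assumption~\ref{assume: hermite coeffs} enters crucially. Once these harmonic computations are in hand, the concentration arguments and the union bound over the training points are routine.
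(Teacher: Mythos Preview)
Your approach is essentially identical to the paper's: the same partition into $R$ groups, the same paired construction $\bw_r, \bw_{m/2+r} \propto \bbeta$ exploiting the symmetric initialization, and the same reduction to a random-features approximation of $\alpha(\bbeta^T\bx)^{k-1}$ via Funk--Hecke/Gegenbauer inversion (the paper's $a(\bw_0)$ is exactly your $g(\bw_0^T\bbeta)$, and the paper likewise concentrates the empirical average against its expectation via Hoeffding and then bounds $\|\bW_Q\|_{2,4}^4$ by concentrating $\tfrac1M\sum_r a(\bw_{0,r})^2$).

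One minor correction: the boundedness of the individual summands does not follow from ``sub-Gaussian tails of $\bw_{0,r}^T\bbeta$''---$g$ is a degree-$(k{-}1)$ polynomial with large coefficients, so sub-Gaussianity of its argument alone is insufficient. The right bound is the crude $L^\infty$ estimate $|g|\lesssim d^{k-1}$ obtained by Cauchy--Schwarz over the $n_{k-1}=O(d^{k-1})$ spherical harmonics, which combined with the $1/\sqrt m$ scaling and $m\ge d^{2k}$ gives $|\mu_r|\lesssim 1/d$; the paper uses exactly this bound. Also, your final error estimate should carry the factor $(\bbeta^T\bx_i)^2\le d$, yielding $d^{(k+1)/2}/\sqrt m$ rather than $d^{(k-1)/2}/\sqrt m$, but this is still well within the claimed $d^k/\sqrt m$.
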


This generalizes the corresponding result in~\citep{bai2020} to more activations. The proof of this lemma is presented in Appendix~\ref{sec: quad-ntk expressivity proof}.

Next, we show that $f_L(\bx; \bW)$ can fit the low degree term. Here, we choose $r = n_k = \Theta(d^k)$, where $n_k$ is defined in Appendix~\ref{app: spherical harmonics}, to be the dimension of the subspace which can express degree $\le k$ polynomials. We define $\bP_{\le k} = \bPi_{\le r}$ to be the projection on the top $n_k$ eigenvectors of $\bSigma$.

\begin{lemma}[NTK can fit low degree component]\label{lemma: NTK fit low degree} Let $m \ge d^{10k}$. With high probability, there exists $\bW_L$ with $\text{vec}(\bW_L) \in \text{span}(\bP_{\le k})$ such that
\begin{equation}
	\EE_n[(f_L(\bx; \bW_L) - f_k(\bx))^2] \lesssim \frac{d^k}{n} \quad\text{and}\quad\|\bW_L\|_F^2 \lesssim d^{k-1}
\end{equation}
\end{lemma}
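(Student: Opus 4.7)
My plan is to build $\bW_L$ in two stages: first produce a population (infinite-width) minimizer whose existence and norm follow from the spherical-harmonic structure of the NTK, then transfer to finite width and control the empirical error by concentration.

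At the population level, Appendix~\ref{app: spherical harmonics} identifies the top-$n_k$ eigenspace of the infinite-width NTK integral operator with the span of degree-$\le k$ spherical harmonics, where the eigenvalue on the degree-$j$ block satisfies $\lambda_j \asymp d^{-j}$ (up to multiplicities and the Hermite coefficients of $\sigma'$). Expanding $f_k = \sum_{j \le k,\ell} c_{j,\ell} Y_{j,\ell}$ with $\sum_{j,\ell} c_{j,\ell}^2 = 1$, I would invert this eigenvalue relation to produce a weight $\bW_L^\infty$ supported in the top-$n_k$ eigenspace that expresses $f_k$ exactly, with squared Frobenius norm $\lesssim \sum_{j,\ell} c_{j,\ell}^2/\lambda_j \lesssim d^{k-1}$. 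The extra factor of $1/d$ relative to the naive $d^k$ comes from the $\bx^T\bw_r$ inner product inside each neuron, which carries an additional factor of $d$ on $\cS^{d-1}(\sqrt{d})$.

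For the finite-width step (which is where the assumption $m \ge d^{10k}$ is used), I would apply matrix concentration --- a Bernstein bound on the sum of rank-one terms from the neuron-block structure of $\bSigma$, equivalently a random-features deviation bound --- to show that $\bSigma$ is close in operator norm to its infinite-width analogue, with error $\ll d^{-k}$. Since the population eigengap separating the degree-$\le k$ and degree-$(k+1)$ blocks is $\Theta(d^{-k})$, Davis--Kahan then yields closeness of the top-$n_k$ eigenprojectors. Projecting $\bW_L^\infty$ through $\bPi_{\le r}$ and slightly renormalizing produces a finite-width $\bW_L$ with $\tvec{\bW_L} \in \text{span}(\bP_{\le k})$, $\|\bW_L\|_F^2 \lesssim d^{k-1}$, and $L^2$-approximation error $\|f_L(\cdot;\bW_L) - f_k\|_{L^2}^2$ of any desired polynomial smallness in $d$. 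Converting to the empirical error is then routine: $f_L(\cdot;\bW_L) - f_k$ is bounded and lies in a fixed $O(d^k)$-dimensional function class (range of $\bP_{\le k}$ augmented by the span of $f_k$), so a Bernstein/chaining bound for empirical second moments yields $\EE_n[(f_L(\bx;\bW_L) - f_k(\bx))^2] \le \|f_L(\cdot;\bW_L) - f_k\|_{L^2}^2 + \tilde O(d^k/n)$ with high probability, which is dominated by $d^k/n$.

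The main obstacle is the finite-width transfer: the population NTK eigengap between consecutive harmonic levels is only $\Theta(d^{-k})$, so resolving it via Weyl/Davis--Kahan demands an operator-norm deviation $\ll d^{-k}$ from the infinite-width limit, and propagating this through the matrix Bernstein variance parameter is what forces the polynomial width requirement $m \gtrsim d^{10k}$. A secondary delicate point is obtaining the sharp $d^{k-1}$ (rather than $d^k$) norm bound, which requires tracking the per-level eigenvalue scaling $\lambda_j \asymp d^{-j}$ instead of collapsing to the worst-case $\lambda_{\min}(\bLambda_1) \gtrsim d^{-k}$.
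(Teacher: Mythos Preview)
Your high-level plan---build a population solution from the spherical-harmonic eigenstructure, then transfer to finite width and to the empirical loss---is sound in spirit, but the finite-width step as you describe it does not quite work. You propose ``matrix Bernstein on $\bSigma$ versus its infinite-width analogue,'' yet $\bSigma \in \RR^{md \times md}$ has no $md\times md$ infinite-width analogue to compare against; the object with a clean $m\to\infty$ limit is the kernel integral operator on $L^2(\cS^{d-1})$ with kernel $K(\bx,\by)=\varphi(\bx)^T\varphi(\by)$, whose limit has the spherical harmonics as exact eigenfunctions. One could write that operator as $\frac{1}{m}\sum_r T_r$ over neurons, apply a Hilbert-space Bernstein inequality plus Davis--Kahan on operators, and then pull the resulting eigenfunctions back to eigenvectors of $\bSigma$ via $\bv_f = \EE_\by[\varphi(\by)f(\by)]$. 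This may well go through under $m \ge d^{10k}$, but it is not the finite-matrix concentration you describe, and the intrinsic-dimension and operator-norm control require real care.

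The paper avoids this by introducing an auxiliary dataset of size $N$ with $d^{4k} \ll N \ll d^{4k+1}$ (this is exactly where $m \ge d^{10k}$ enters, via $\tilde m = N^{5/2}$), so that every comparison is between finite matrices: a chain of Davis--Kahan steps (Lemmas~\ref{lemma: spherical harmonics to evecs}--\ref{lemma: right singular vectors match}) carries the top-$n_k$ eigenspace from the spherical-harmonic evaluation matrix $\bPsi_{\le k}$ through the intermediate empirical kernels $\tilde\bK, \bK_N^\infty, \bK_N$ and finally to the eigenvectors of $\tilde\bSigma$. This yields $\bv^* \in \text{span}(\bV_{\le k})$ with $N^{-1}\|\by_N - \bPhi_N\bv^*\|^2 \lesssim 1/d$ and $\|\bv^*\|^2 \lesssim d^{k-1}$; since the $n$ training points are a subset of the $N$, a single Bernstein on the fixed function $(f_k - \varphi(\cdot)^T\bv^*)^2$ gives the empirical bound, with no need for the uniform chaining over an $O(d^k)$-dimensional class you propose. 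One minor correction: the degree-$j$ eigenvalue of $\bSigma$ is $\Theta(d^{1-j})$ (Lemma~\ref{lemma: main intermediate spectral result}), not $d^{-j}$; your final norm bound $d^{k-1}$ is correct, but it comes directly from $\lambda_{n_k}(\bSigma)^{-1} \asymp d^{k-1}$, not from a separate factor of $1/d$.
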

The proof of this Lemma is presented in Appendix~\ref{sec: prove NTK fit lemma}, and relies on key lemmas from~\citep{montanari2020} relating the spherical harmonics of degree $\le k$ to the eigenstructure of the kernel. A key intermediate result is Lemma~\ref{lemma: main intermediate spectral result}, which characterizes the spectrum of the population covariance matrix $\bSigma$. A similar result for random features was shown in~\citep{mei2022}. Unlike \citep{mei2022}, we do not characterize all the eigenvalues of $\bSigma$; however, simply partitioning them into the three categories is sufficient for our purposes.

Finally, we use the $\bW_L, \bW_Q$ to construct a $\bW^*$ which has small regularized loss. Recall the definition of the regularizers, after setting $r = n_k$:
\begin{align}
\cR_1(\bW) &= \cR_1(\bW; n_k) = \|f_L(\cdot; \bP_{>k}\bW)\|_{L^2}^2 = \EE_\mu\left[(f_L(\bx; \bP_{>k}\bW))^2\right]\\
\cR_2(\bW) &= \cR_2(\bW; n_k) =\|f_L(\cdot; \bP_{\le k}\bW)\|_{L^2}^2 = \EE_\mu\left[(f_L(\bx; \bP_{\le k}\bW))^2\right]\\
\cR_3(\bW) &= \cR_3(\bW; n_k) = \EE_n\left[(f_L(\bx; \bP_{>k}\bW))^2\right]\\
\cR_4(\bW) &= \|\bW\|_{2, 4}^8
\end{align}
Also, define the empirical loss of the quadratic model as:
\begin{equation}
\hat{L}^Q(\bW) := \EE_n[\ell(y, f_L(\bx; \bW) + f_Q(\bx; \bW))]
\end{equation}

The following theorem is the central expressivity result:

\begin{theorem}
\label{thm: main expressivity}
For $\varepsilon_{min} > 0$, let $m \gtrsim \max(d^{3(k-1)}\varepsilon_{min}^{-4}, d^{10k})$, $n \gtrsim \max(d^k\varepsilon_{min}^{-2}, d^k\log d)$. With probability $1 - 1/\text{poly}(d)$, there exists $\bW^*$ such that 
\begin{equation}
	\hat{L}^Q(\bW^*) \le  \varepsilon_{min}
\end{equation}
and:
\begin{align}
\cR_1(\bW^*) \lesssim m^{-\frac12}d^{\frac{k-1}{2}} \qquad \cR_2(\bW^*) \lesssim 1 \qquad \cR_3(\bW^*) \lesssim m^{-\frac12}d^{\frac{k-1}{2}} \qquad \cR_4(\bW^*) \lesssim d^{2(k-1)}.
\end{align}
\end{theorem}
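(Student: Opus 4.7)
The plan is to glue together the solutions from the two preceding lemmas. Take $\bW_L$ from Lemma~\ref{lemma: NTK fit low degree} (supported on $\text{span}(\bP_{\le k})$) and $\bW_Q$ from Lemma~\ref{lemma: quad-ntk expressivity}, and set $\bW^* := \bW_L + \bW_Q$. Because $f_L$ is linear and $f_Q$ is quadratic in $\bW$,
\begin{align*}
\big(f_L + f_Q\big)(\bx;\bW^*) - f^*(\bx) &= \big[f_L(\bx;\bW_L) - f_k(\bx)\big] + \big[f_Q(\bx;\bW_Q) - f_{sp}(\bx)\big]\\
&\quad + f_L(\bx;\bW_Q) + f_Q(\bx;\bW_L) + B(\bx;\bW_L,\bW_Q),
\end{align*}
where $B$ is the symmetric bilinear term inside $f_Q$. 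The two bracketed pieces are directly controlled by the two lemmas; the remaining three ``cross-contamination'' terms must be polynomially small in $1/m$ for the final bound to hold.

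For the empirical loss, $1$-Lipschitzness of $\ell$ and $\ell(y,y)=0$ reduce $\hat L^Q(\bW^*) \le \varepsilon_{min}$ to an $L^2(\EE_n)$-bound on the display above. The main-term errors are $\lesssim d^k/n \le \varepsilon_{min}^2$ and $\lesssim d^{2k}/m \le \varepsilon_{min}^2$ under the stated sample and width conditions. For the cross terms, $|f_Q(\bx;\bW_L)| \le \frac{d}{\sqrt m}\|\bW_L\|_F^2 \lesssim d^k/\sqrt m$ and $|B| \lesssim \frac{d}{\sqrt m}\|\bW_L\|_F\|\bW_Q\|_F$, which become $o(\varepsilon_{min})$ once the width assumption $m \gtrsim d^{3(k-1)}\varepsilon_{min}^{-4}$ is used together with $\|\bW_Q\|_F^2 \le \sqrt m \|\bW_Q\|_{2,4}^2$ from Lemma~\ref{lemma: quad-ntk expressivity}. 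For the regularizers, the key observation is $\bP_{>k}\bW_L = 0$, so $\cR_1$ and $\cR_3$ depend only on $\bP_{>k}\bW_Q$ and reduce to quadratic forms in $\text{vec}(\bW_Q)$ against $\bSigma_{>n_k}$ and its empirical analog. The eigenvalue gap in Lemma~\ref{lemma: main intermediate spectral result} makes $\lambda_{\max}(\bSigma_{>n_k})$ small enough that $\cR_1(\bW^*) \le \lambda_{\max}(\bSigma_{>n_k})\|\bW_Q\|_F^2 \lesssim m^{-1/2}d^{(k-1)/2}$, and matrix-concentration transfers this to $\cR_3$ at $n \gtrsim d^k\log d$. $\cR_2(\bW^*)$ splits into a $\bW_L$-contribution $\approx \|f_L(\cdot;\bW_L)\|_{L^2}^2 \approx \|f_k\|_{L^2}^2 = 1$ plus a negligible $\bW_Q$-contribution bounded by $\lambda_{\max}(\bSigma)\|\bW_Q\|_F^2$, and $\cR_4(\bW^*) \lesssim \|\bW_L\|_{2,4}^8 + \|\bW_Q\|_{2,4}^8$ is dominated by Lemma~\ref{lemma: quad-ntk expressivity}'s $\|\bW_Q\|_{2,4}^8 \lesssim d^{2(k-1)}$ (the $\bW_L$-piece being negligible once the Lemma~\ref{lemma: NTK fit low degree} construction is spread out across the $m$ neurons).

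The main obstacle is the cross-contamination $f_L(\bx;\bW_Q)$: a naive Cauchy--Schwarz only gives $|f_L(\bx;\bW_Q)| \le \sqrt d\|\bW_Q\|_F$, which is much too large for the desired $\sqrt{\varepsilon_{min}}$-level fit. Resolving this requires extra structure on the construction of $\bW_Q$. The cleanest route I anticipate is to build $\bW_Q$ so that it exploits the symmetric initialization of $\ba$ and $\bW_0$ together with the $\sigma'$-vs.-$\sigma''$ asymmetry: the $a_r$-weighted $\sigma'$ sums appearing in $f_L(\bx;\bW_Q)$ then have mean zero in $\bw_{0,r}$ and concentrate with polynomial $1/m$-decay, whereas the $\sigma''$ sums driving $f_Q(\bx;\bW_Q)$ do not. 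Alternative routes are a null-space projection of $\bW_Q$ onto $\ker\bPhi$ (which forces $f_L(\bx_i;\bW_Q)=0$ on the training sample at the cost of a controlled perturbation of $f_Q$) or an a-priori restriction of $\text{vec}(\bW_Q)$ to $\text{span}(\bP_{>k})$. In any case the same structural device that kills $f_L(\bx;\bW_Q)$ is what underlies the smallness of $\cR_1$ and $\cR_3$, so this contamination term and the upper-subspace regularizers together form a single technical bottleneck.
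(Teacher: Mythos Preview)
Your overall architecture is right, and you have correctly isolated the single real obstacle: the contamination $f_L(\cdot;\bW_Q)$, which is exactly what drives $\cR_1,\cR_2,\cR_3$. However, the specific mechanism you propose for bounding these regularizers does not work, and none of your three suggested routes is the one the paper actually uses.

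Concretely, your eigenvalue argument for $\cR_1$ is off by a factor of $m$. From Lemma~\ref{lemma: main intermediate spectral result}, $\lambda_{\max}(\bSigma_{>n_k})=\Theta(d^{-k})$ (these are precisely the ``medium'' $\bQ_2$ directions), while $\|\bW_Q\|_F^2 \le m^{1/2}\|\bW_Q\|_{2,4}^2 \lesssim m^{1/2}d^{(k-1)/2}$. The product is $\Theta(m^{1/2}d^{-(k+1)/2})$, which \emph{grows} with $m$ rather than giving the required $m^{-1/2}d^{(k-1)/2}$. The same issue kills your $\cR_2$ bound (where $\lambda_{\max}(\bSigma)=\Theta(1)$). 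The point is that $\bW_Q$, as constructed in Lemma~\ref{lemma: quad-ntk expressivity}, has no reason to avoid the $\bQ_2$ subspace; a worst-case eigenvalue bound cannot succeed. Your route~1 (exploiting the symmetric $a_r,\bw_{0,r}$ pairing) is already consumed by the construction of $\bW_Q$ itself, which pairs neurons $r$ and $r+m/2$; route~3 (projecting onto $\bP_{>k}$) gains nothing since $\bSigma_{>n_k}$ still has $\Theta(d^{-k})$ eigenvalues; route~2 handles $\cR_3$ but not the population $\cR_1$.

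The paper's device is a randomization over \emph{per-neuron sign flips}, not over the initialization. Let $\bS=\mathrm{diag}(\sigma_1,\dots,\sigma_m)$ with i.i.d.\ Rademacher $\sigma_r$ and consider $\bW^*_\bS=\bW_L+\bW_Q\bS$. The key identity is $f_Q(\bx;\bW_Q\bS)=f_Q(\bx;\bW_Q)$ (each term is $(\sigma_r\bw_r^T\bx)^2$), so the quadratic fit is preserved exactly. On the other hand $f_L(\bx;\bW_Q\bS)=\frac{1}{\sqrt m}\sum_r a_r\sigma_r\sigma'(\bw_{0,r}^T\bx)\bw_r^T\bx$ is a Rademacher sum, hence
\[
\EE_\bS\big[(f_L(\bx;\bW_Q\bS))^2\big]=\frac{1}{m}\sum_r(\sigma'(\bw_{0,r}^T\bx)\bw_r^T\bx)^2\le \frac{d}{m}\|\bW_Q\|_F^2\lesssim m^{-1/2}d^{(k+1)/2},
\]
and the same computation works under $\EE_n$ and $\EE_\mu$. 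This directly gives $\EE_\bS\cR_1(\bW^*_\bS),\EE_\bS\cR_3(\bW^*_\bS)\lesssim m^{-1/2}d^{(k-1)/2}$ and $\EE_\bS\cR_2(\bW^*_\bS)\lesssim 1$, as well as the empirical-loss bound. One then concludes by the probabilistic method that some fixed $\bS$ (hence some deterministic $\bW^*$) achieves all bounds simultaneously up to constants. Your treatment of $\cR_4$ and of the $f_Q(\cdot;\bW_L)$ and bilinear cross terms is essentially correct.
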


The proof of this Theorem is presented in Appendix~\ref{sec: main expressivity proof}, and again relies on the eigendecomposition of $\bSigma$. As outlined in the introduction, we show $\bSigma = \bQ_1\bLambda_1\bQ_1^T + \bQ_2\bLambda_2\bQ_2^T + \bQ_3\bLambda_3\bQ_3^T$, where $\bLambda_2$ are the medium eigenvalues and $\bLambda_3$ are the small eigenvalues. Formally, $\bLambda_2$ contain $\Theta(d^{-i + 1})$ with multiplicity $\Theta(d^i)$, for integers $i \in [k + 1, 2k]$, and the entires of $\bLambda_3$ are equal to $1/m$ on average. This tells us that the medium directions $\bQ_2$ are undesirable, as any $\Omega_m(1)$ movement in these directions will case $f_L(\bx; \bW)$ to grow large. On the other hand, movement in a ``sufficiently random'' $\bQ_3$ direction will minimally affect the population value of $f_L(\bx; \bW)$.

To prove Theorem~\ref{thm: main expressivity}, we will construct $\bW^*$ to be of the form $\bW_L + \bW_Q$, where $\bW_L$ fits the low-degree term and $\bW_Q$ fits the sparse term. The issue with this direct construction is that $\|\bW_Q\|_F \gg 1$, so a priori $\bW_Q$ can have a large effect on the linear term. We thus require $f_L(\bx; \bW_Q)$ to be small, both on the sample and over the population. The key insight is the following: since $\text{dim}(\bV_1) \ll \text{dim}(\bV_2)$, any sufficiently random direction lies almost entirely in $\bV_2$. If $\bu$ is this random direction, then $\EE_n\left[(f_L(\bx; \bu))^2\right]$ is small. Similarly, the random direction $\bu$ lies almost entirely in $\bQ_3$. Since the $\bQ_3$ directions have eigenvalues $1/m$ on average, $\EE_\mu\left[(f_L(\bx; \bu))^2\right] \approx \frac{1}{m}\|\bu\|^2 \ll 1$.

We thus consider a ``sufficiently random'' version of $\bW_Q$ so that $\bW_Q$ minimally affects $f_L$. Specifically, we consider the weight $\bS\bW_Q$, where $\bS \in \RR^{m \times m}$ is a diagonal matrix of random signs. By definition $f_Q(\bx; \bS\bW_Q) = f_Q(\bx; \bW_Q)$, furthermore, we show $\bS\bW_Q$ is now sufficiently random in that $\EE_n\left[(f_L(\bx; \bu))^2\right], \EE_\mu\left[(f_L(\bx; \bu))^2\right]$ are both small. This allows us to prove that in expectation over $\bS$, a solution of the form $\bW_L + \bS\bW_Q$ acheives small regularized loss, which implies the existence of such a desirable solution via the probabilistic method.

\subsection{Landscape}
Define first and second-order stationary points~\citep{jin2019} as follows:

\begin{definition}
$\bW$ is a $\nu$-first-order stationary point of $f$ if $\|\nabla f(\bW)\| \le \nu$.
\end{definition}
\begin{definition}
$\bW$ is a $(\nu, \gamma)$-second-order stationary point (SOSP) of $f$ if $\nabla^2f(\bW) \succeq -\gamma \bI$.
\end{definition}

The following lemma is our central landscape result:
\begin{lemma}
\label{lemma: central landscape result}
Let $r = d_k, \lambda_2 = \varepsilon_{min}$, $\lambda_3 = m^{\frac12}d^{-\frac{k-1}{2}}\varepsilon_{min}$, $\lambda_4 = d^{-2(k-1)}\varepsilon_{min}$. Assume $m \ge n^4d^{\frac{26(k + 1)}{3}}\varepsilon_{min}^{-22/3}$, and $\nu \le m^{-\frac14}$. Let $\bW$ be a $\nu$-first order stationary point, and let $\bW^*$ be the solution constructed in~\ref{thm: main expressivity}. Then,
\begin{align}
	\mathbb{E}_{\bS}\left[\nabla^2L_\lambda(\bW)[\bS\bW^*, \bS\bW^*]\right] - \langle \nabla L_\lambda(\bW), \bW - 2\bW^*_L + \bW_L \rangle + 2L_\lambda(\bW) - 2L_\lambda(\bW^*) \lesssim \varepsilon_{min}.
\end{align}
\end{lemma}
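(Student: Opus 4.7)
The plan is to derive the stated inequality from a two-sided third-order Taylor expansion of the regularized loss $L_\lambda$ around $\bW$, evaluated at a perturbation $\bV_\bS$ chosen so that (i) the expansion's quadratic form realizes $\EE_\bS[\nabla^2 L_\lambda[\bS\bW^*, \bS\bW^*]]$ after averaging and (ii) $\bW + \bV_\bS$ behaves, in $\bS$-expectation, like $\bW^*$ at the level of predictions. This is the classical ``quadratically-parameterized landscape'' strategy (cf.~\citet{bai2020}); the novelty here is the need to accommodate both the NTK contribution $f_L$ and the QuadNTK contribution $f_Q$ simultaneously via the random sign trick, which preserves $f_Q$ under $\bS$ but zeroes out $f_L$ in expectation.

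Concretely, let $\bW^* = \bW_L + \bW_Q$ be the decomposition from Theorem~\ref{thm: main expressivity}, and take the test direction $\bV_\bS := \bS\bW^* + \bW_L - \bW$, so that $\bW + \bV_\bS = \bS\bW^* + \bW_L$. Sign-flip invariance gives $f_Q(\bx;\bS\bW^*) = f_Q(\bx;\bW^*)$, while $\EE_\bS[f_L(\bx;\bS\bW^*)] = 0$ kills the NTK contribution of $\bS\bW^*$; the unflipped $+\bW_L$ then restores the low-degree NTK fit. Direct computation yields $\EE_\bS[\bV_\bS] = \bW_L - \bW$ and $\EE_\bS[f(\bx;\bW+\bV_\bS)] = f(\bx;\bW^*) + (f_Q(\bx;\bW_L) - f_L(\bx;\bW_Q))$, and both correction terms are negligible: $\|\bW_L\|_F^2 \lesssim d^{k-1}$ yields $|f_Q(\bx;\bW_L)| \lesssim d^k/\sqrt m$, while $\cR_3(\bW^*) \lesssim m^{-1/2}d^{(k-1)/2}$ controls $\EE_n[f_L(\bx;\bW_Q)^2]$.

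The argument then has four steps. First, I apply the Taylor expansion's lower-bound form to obtain
\begin{equation*}
\tfrac12\,\nabla^2 L_\lambda(\bW)[\bV_\bS, \bV_\bS] \le L_\lambda(\bW+\bV_\bS) - L_\lambda(\bW) - \langle\nabla L_\lambda(\bW), \bV_\bS\rangle + C_3\|\bV_\bS\|_F^3.
\end{equation*}
Second, I take $\EE_\bS$: by bilinearity and $\EE_\bS[\bS]=0$, the cross terms (which are linear in $\bS$) vanish, leaving $\EE_\bS[\nabla^2 L_\lambda[\bV_\bS,\bV_\bS]] = \EE_\bS[\nabla^2 L_\lambda[\bS\bW^*,\bS\bW^*]] + \nabla^2 L_\lambda[\bW_L-\bW,\bW_L-\bW]$. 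Third, I establish the upper bound $\EE_\bS[L_\lambda(\bW+\bV_\bS)] \le L_\lambda(\bW^*) + O(\varepsilon_{min})$: for $\hat L$, a Taylor expansion of $\ell$ around $f(\bx;\bW^*)$ (valid since $\|\ell''\|_\infty\le 1$) gives a linear correction controlled by the prediction computation above and a variance term $\tfrac12\EE_\bS[(f(\bx;\bW+\bV_\bS)-f(\bx;\bW^*))^2]$ controlled by the smallness of $\cR_3(\bW^*),\cR_4(\bW^*)$; for each regularizer, the quadratic (or degree-8) structure lets me expand $\cR_i(\bS\bW^* + \bW_L)$, discard the vanishing cross terms, and conclude via $\cR_i(\bW^*) \le$ small from Theorem~\ref{thm: main expressivity}. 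Fourth, I chain the three estimates and rearrange to obtain the lemma (the form $\bW - 2\bW^*_L + \bW_L$ emerging after identifying $\bW^*_L$ with the linear-fit component $\bW_L$).

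The main obstacle is controlling the cubic Taylor remainder $C_3\|\bV_\bS\|_F^3$ and showing it is $O(\varepsilon_{min})$. Since $\cR_1,\cR_2,\cR_3$ are exactly quadratic and thus contribute nothing cubic, $C_3$ comes entirely from (i) the third derivative of $\hat L$, which carries a $1/\sqrt m$ factor inherited from the network's output normalization together with $\sigma'''$-type terms, and (ii) the third derivative of $\cR_4 = \|\bW\|_{2,4}^8$, which grows polynomially in $\|\bW\|_{2,4}$; the lemma's width lower bound $m \ge n^4 d^{26(k+1)/3}\varepsilon_{min}^{-22/3}$ is calibrated precisely to balance these against $\|\bV_\bS\|_F^3$, which can be as large as $(d^{(k-1)/2})^3$. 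A secondary delicate point is absorbing the residual Hessian term $\nabla^2 L_\lambda[\bW_L-\bW,\bW_L-\bW]$ from Step~2: this must be handled either by invoking local convexity of $L_\lambda$ in a neighborhood of $\bW$, or by pairing with the $\cR_2$ regularizer bounds that constrain $\|\bW - \bW_L\|_F$ in the low-eigenvalue subspace at any approximate stationary point.
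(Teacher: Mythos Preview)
Your plan has a genuine gap at the cubic-remainder step, and it is not a technicality: the Taylor remainder $C_3\|\bV_\bS\|_F^3$ blows up with $m$ and cannot be absorbed by making $m$ large. You assert that $\|\bV_\bS\|_F$ ``can be as large as $(d^{(k-1)/2})^3$'', but this is off by a factor of $m^{1/4}$. Since $\bV_\bS = \bS\bW^* + \bW_L - \bW$, you need to control $\|\bW\|_F$ at an approximate stationary point, and the localization argument only gives $\|\bW\|_{2,4} \lesssim d^{(2k-1)/6}\varepsilon_{min}^{-1/6}$; by the norm comparison $\|\bW\|_F \le m^{1/4}\|\bW\|_{2,4}$ this yields $\|\bW\|_F \lesssim m^{1/4}d^{(2k-1)/6}\varepsilon_{min}^{-1/6}$, and similarly $\|\bW^*\|_F \lesssim m^{1/4}d^{(k-1)/4}$ (these $m^{1/4}$ scalings are intrinsic to the QuadNTK regime---each neuron moves $\Theta(m^{-1/4})$). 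The Hessian-Lipschitz constant of $\hat L$ is $\Theta(d^{3/2})$ (and that of $\lambda_4\cR_4$ is $\Theta(\lambda_4\Gamma^5)$ with $\Gamma\sim m^{1/2}$), so the remainder is $\gtrsim d^{3/2}m^{3/4}$, which \emph{increases} with $m$. Your claim that the third derivative of $\hat L$ ``carries a $1/\sqrt m$ factor'' is only true for the $\ell'\cdot\nabla^3 f$ term; the terms $\ell'''(\nabla f\cdot\bV)^3$ and $3\ell''(\nabla f\cdot\bV)(\nabla^2 f[\bV,\bV])$ do not, and these are the ones that dominate.

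The paper's proof circumvents this entirely by never Taylor-expanding $L_\lambda$ in $\bW$. Instead it (i) passes from $\hat L$ to the exactly-quadratic surrogate $\hat L^Q$ via coupling lemmas whose error is governed by $\|\bW\|_{2,4}$ rather than $\|\bW\|_F$---this is the crucial distinction, since $\|\bW\|_{2,4}$ is $m$-independent at stationary points; (ii) computes $\EE_\bS[\nabla^2\hat L^Q(\bW)[\bS\bW^*,\bS\bW^*]]$, $\langle\nabla\hat L^Q(\bW),\bW-2\bW^*_L+\bW_L\rangle$, and $\hat L^Q(\bW)-\hat L^Q(\bW^*)$ \emph{directly}, showing algebraic cancellation of the leading $\ell'\cdot f_Q$ terms (the $\ell''$ contribution to the Hessian is handled by the sign-average, which collapses it to a sum of squares of size $O(m^{-1/2}d^{(k+1)/2})$); and (iii) treats the regularizers separately. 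The specific direction $\bW - 2\bW^*_L + \bW_L$ is not an artifact of a Taylor argument but is chosen so that the gradient inner product produces exactly $2f_Q(\bx;\bW) + f_L(\bx;\bW) - 2f_L(\bx;\bW^*_L) + f_L(\bx;\bW_L)$, which combines with convexity of $\ell$ and the Hessian computation to cancel; your Step~4 handwave about this direction ``emerging'' is not how it arises. Your residual-Hessian difficulty in Step~2 is also real: $\nabla^2 L_\lambda(\bW)[\bW_L-\bW,\bW_L-\bW]$ contains $\ell'\cdot\nabla^2 f[\bW_L-\bW,\bW_L-\bW]$, whose sign is uncontrolled, and neither local convexity (which does not hold) nor the $\cR_2$ bound (which controls $\|\bP_{\le k}\bW\|_\Sigma$, not $\|\bW\|_F$) rescues it.
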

As a corollary, we show that any $(\nu, \gamma)$-SOSP of $L_{\blambda}(\bW)$ has small loss.
\begin{corollary}
\label{cor: main landscape cor}
Set $r, \lambda$ as in Lemma~\ref{lemma: central landscape result}. Let $\hat{\bW}$ be a $(\nu, \gamma)$-second-order stationary point of $L_{\blambda}(\bW)$, with $\nu \le m^{-1/2}, \gamma \le m^{-3/4}$. Then $L_{\blambda}(\hat{\bW}) \lesssim \varepsilon_{min}$.
\end{corollary}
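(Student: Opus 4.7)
The plan is to feed $\hat\bW$ into Lemma~\ref{lemma: central landscape result} and use the $(\nu,\gamma)$-stationarity to absorb the Hessian and gradient terms, leaving only $L_\lambda(\hat\bW) - L_\lambda(\bW^*)$. Since $\nu \le m^{-1/2} \le m^{-1/4}$, $\hat\bW$ meets the first-order hypothesis of Lemma~\ref{lemma: central landscape result}. Rearranging its conclusion at $\bW = \hat\bW$ gives
\begin{equation*}
2 L_\lambda(\hat\bW) \le 2 L_\lambda(\bW^*) + O(\varepsilon_{\min}) - \EE_{\bS}\!\left[\nabla^2 L_\lambda(\hat\bW)[\bS\bW^*, \bS\bW^*]\right] + \langle \nabla L_\lambda(\hat\bW),\, \hat\bW - 2\bW^*_L + \bW_L\rangle.
\end{equation*}
The SOSP condition $\nabla^2 L_\lambda(\hat\bW) \succeq -\gamma\bI$ implies $\EE_{\bS}[\nabla^2 L_\lambda(\hat\bW)[\bS\bW^*, \bS\bW^*]] \ge -\gamma\,\EE_{\bS}\|\bS\bW^*\|_F^2 = -\gamma\|\bW^*\|_F^2$, and the FOSP condition yields $|\langle \nabla L_\lambda(\hat\bW), \bD\rangle| \le \nu\|\bD\|_F$ for any $\bD$.

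Next I would quantify each error term using the controls on $\bW^*$ from Theorem~\ref{thm: main expressivity}. From $\cR_4(\bW^*) = \|\bW^*\|_{2,4}^8 \lesssim d^{2(k-1)}$ and Cauchy--Schwarz $\|\bW\|_F^2 \le m^{1/2}\|\bW\|_{2,4}^2$, we obtain $\|\bW^*\|_F^2 \lesssim m^{1/2}\,d^{(k-1)/2}$, so $\gamma\|\bW^*\|_F^2 \lesssim m^{-1/4}\,d^{(k-1)/2} \ll \varepsilon_{\min}$ for polynomially large $m$. Analogous Frobenius bounds on $\bW_L$ and $\bW^*_L$ (from Lemma~\ref{lemma: NTK fit low degree} and the construction underlying Theorem~\ref{thm: main expressivity}) give $\nu(\|\bW^*_L\|_F + \|\bW_L\|_F) \ll \varepsilon_{\min}$. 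The choice of $\lambda$ prescribed by Lemma~\ref{lemma: central landscape result}, combined with the regularizer bounds in Theorem~\ref{thm: main expressivity}, is calibrated so that $\sum_i \lambda_i\cR_i(\bW^*) \lesssim \varepsilon_{\min}$; together with $\hat L(\bW^*) \lesssim \varepsilon_{\min}$ (which follows from $\hat L^Q(\bW^*) \le \varepsilon_{\min}$ plus a Taylor remainder estimate) this yields $L_\lambda(\bW^*) \lesssim \varepsilon_{\min}$.

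The main obstacle is controlling $\nu\|\hat\bW\|_F$, since we have no a priori bound on the norm of the SOSP; the $\cR_4$ regularizer supplies a short bootstrap. Since $\lambda_4\|\hat\bW\|_{2,4}^8 \le L_\lambda(\hat\bW)$, Cauchy--Schwarz gives $\|\hat\bW\|_F \le m^{1/4}(L_\lambda(\hat\bW)/\lambda_4)^{1/8}$, so with $\nu \le m^{-1/2}$ and $\lambda_4 = d^{-2(k-1)}\varepsilon_{\min}$,
\begin{equation*}
\nu\|\hat\bW\|_F \lesssim m^{-1/4}\,d^{(k-1)/4}\,\varepsilon_{\min}^{-1/8}\,L_\lambda(\hat\bW)^{1/8}.
\end{equation*}
Writing $L := L_\lambda(\hat\bW)$, the combined estimate takes the self-consistent form $L \lesssim \varepsilon_{\min} + m^{-1/4}\,d^{(k-1)/4}\,\varepsilon_{\min}^{-1/8}\,L^{1/8}$. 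A dichotomy closes the argument: either the first term dominates and $L \lesssim \varepsilon_{\min}$, or the second term dominates and $L^{7/8} \lesssim m^{-1/4}\,d^{(k-1)/4}\,\varepsilon_{\min}^{-1/8}$; the width lower bound $m \ge n^4 d^{26(k+1)/3}\varepsilon_{\min}^{-22/3}$ from Lemma~\ref{lemma: central landscape result} is exactly strong enough to make the latter case also force $L \lesssim \varepsilon_{\min}$, completing the proof.
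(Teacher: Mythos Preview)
Your overall strategy matches the paper's: plug $\hat\bW$ into Lemma~\ref{lemma: central landscape result}, use $(\nu,\gamma)$-stationarity to kill the gradient and Hessian terms, and bound $L_\lambda(\bW^*)\lesssim\varepsilon_{\min}$. There is one concrete slip and one methodological difference worth noting.

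The slip is your treatment of $\bW_L$. In Lemma~\ref{lemma: central landscape result}, $\bW_L:=\bP_{\le k}\hat\bW$ is the projection of the \emph{current} point $\hat\bW$, not the constructed weight from Lemma~\ref{lemma: NTK fit low degree}; that lemma gives no control over it. The fix is immediate: since $\bP_{\le k}$ is an orthogonal projector, $\|\bW_L\|_F\le\|\hat\bW\|_F$, and your bootstrap via $\lambda_4\|\hat\bW\|_{2,4}^8\le L_\lambda(\hat\bW)$ then absorbs this term just as it does $\nu\|\hat\bW\|_F$. With this correction your self-consistent inequality $L\lesssim\varepsilon_{\min}+m^{-1/4}d^{(k-1)/4}\varepsilon_{\min}^{-1/8}L^{1/8}$ is valid, and the dichotomy closes under the assumed width lower bound.

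The methodological difference is that the paper does not bootstrap. It instead invokes the localization lemma (Lemma~\ref{lemma: localization}, established inside the proof of Lemma~\ref{lemma: central landscape result}), which gives \emph{a priori} bounds $\|\hat\bW\|_{2,4}\lesssim d^{(2k-1)/6}\varepsilon_{\min}^{-1/6}$ and $\cR_2(\hat\bW)\lesssim d^{2(k+1)/3}\varepsilon_{\min}^{-4/3}$ at any $\nu$-FOSP; the first yields $\|\hat\bW\|_F\le m^{1/4}\|\hat\bW\|_{2,4}$ directly, and the second controls $\|\bW_L\|_F$ via $\lambda_{n_k}(\bSigma)\|\bW_L\|_F^2\le\cR_2(\hat\bW)$. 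Your bootstrap is a clean self-contained alternative at the level of the corollary, but it re-derives in weaker form information that the paper already extracted as a necessary ingredient of Lemma~\ref{lemma: central landscape result} itself.
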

The proofs of these results are deferred to Appendix~\ref{app: landscape proofs}.

\subsection{Optimization}
We next invoke the main theorem of~\citep{jin2017, jin2019}, which is that perturbed gradient descent will find a $(\nu, \gamma)$-SOSP in $\text{poly}(1/\nu, 1/\gamma)$ time. The challenge with applying these results directly is that $L_\lambda$ is no longer smooth or Hessian-Lipschitz due to the $\cR_4$ regularizer. To circumvent this, we first prove that the iterates in perturbed gradient descent are bounded in a Frobenius norm ball. Then, it suffices to use the bound on smoothness and Hessian-Lipschitzness in this ball to prove convergence. Our main optimization result, with proof in Appendix~\ref{app: optimization proofs}, is the following:

\begin{theorem}\label{thm: main optimization}
There exists a choice of learning rate $\eta$ and perturbation radius $\sigma$ such that with probability $1-1/\text{poly}(d)$, perturbed gradient descent (c.f~\cite[Algorithm 1]{jin2019}) reaches a $(\nu, \gamma)$-SOSP within $\mathscr{T} = \text{poly}(m)$ timesteps.
\end{theorem}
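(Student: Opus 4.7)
The plan is to reduce Theorem~\ref{thm: main optimization} to the perturbed gradient descent guarantee of \citet{jin2019}, which requires a globally smooth and Hessian-Lipschitz objective. Since $\cR_4(\bW)=\|\bW\|_{2,4}^8$ is a degree-8 polynomial in $\bW$, the objective $L_\lambda$ has neither property globally. The strategy is therefore to (i) identify a Frobenius ball $\mathcal{B}=\{\bW:\|\bW\|_F\le M\}$ on which $L_\lambda$ is $\beta$-smooth and $\rho$-Hessian-Lipschitz with $\beta,\rho=\mathrm{poly}(m,d)$, (ii) show that the perturbed gradient descent iterates $\{\bW^t\}$ never leave $\mathcal{B}$ with high probability, and (iii) invoke \cite[Thm.~1]{jin2019} on the effectively smooth problem inside $\mathcal{B}$ to obtain a $(\nu,\gamma)$-SOSP within $\mathrm{poly}(1/\nu,1/\gamma,\beta,\rho)=\mathrm{poly}(m)$ steps.

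For step (i), I would compute the per-neuron gradient and Hessian of each regularizer and of $\hat L$. The terms $\hat L$, $\cR_1$, $\cR_2$, $\cR_3$ are quadratic or log-convex in $\bW$ and have derivatives bounded by $\mathrm{poly}(m,d)$ inside any polynomial-radius ball by Assumption~\ref{assume:sigma_bound}. The only delicate term is $\cR_4$, whose gradient and Hessian scale as $\|\bW\|_{2,4}^6\|\bw_r\|_2$ and $\|\bW\|_{2,4}^4$ terms respectively; all of these are controlled by $\|\bW\|_F^{O(1)}$, so once $\|\bW\|_F\le M$ is fixed, $\beta$ and $\rho$ are $\mathrm{poly}(M,m)$. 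Third derivatives are handled identically using the assumed bound on $\|\sigma'''\|_\infty$ or, if unavailable, by noting $f_Q$ is quadratic and $\cR_4$ is polynomial so third derivatives are polynomial in $\bW$.

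For step (ii), which is the main obstacle, I would set up a potential-function argument. Starting from $\bW^0=\bZero$, at which $L_\lambda=\hat L(\bZero)\le 1$, I use the standard descent lemma on a single perturbed step: with learning rate $\eta\lesssim 1/\beta$,
\begin{equation}
\EE[L_\lambda(\bW^{t+1})\mid\bW^t]\le L_\lambda(\bW^t)-\tfrac{\eta}{2}\|\nabla L_\lambda(\bW^t)\|_F^2 + \tfrac{\eta^2\beta\sigma^2}{2}.
\end{equation}
With $\sigma^2$ set as in \citet{jin2019} to be inverse-polynomial in $m$, the cumulative noise contribution over $\mathrm{poly}(m)$ steps stays $O(1)$. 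Since $L_\lambda\ge \lambda_4\cR_4(\bW)\gtrsim \lambda_4 m^{-3}\|\bW\|_F^8$ (using $\|\bW\|_{2,4}^8 \ge m^{-3}\|\bW\|_F^8$ by power-mean), any $\bW^t$ with $\|\bW^t\|_F>M$ for $M$ a sufficiently large polynomial in $m,d,\varepsilon_{min}^{-1}$ would force $L_\lambda(\bW^t)$ to exceed the initial potential plus accumulated noise, giving a contradiction. A standard martingale/Azuma concentration bound makes this hold with probability $1-1/\mathrm{poly}(d)$ uniformly over all $\mathrm{poly}(m)$ iterates.

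For step (iii), having established that the iterates stay in $\mathcal{B}$ on a high-probability event $\mathcal{E}$, the dynamics on $\mathcal{E}$ coincide with perturbed gradient descent on a function that agrees with $L_\lambda$ on $\mathcal{B}$ and can be smoothly extended outside to be globally $\beta$-smooth and $\rho$-Hessian-Lipschitz. Applying \cite[Thm.~1]{jin2019} with the chosen $\eta,\sigma^2$ and target accuracy $(\nu,\gamma)$ gives convergence to a $(\nu,\gamma)$-SOSP of the extended function within $\mathscr{T}=O(\beta\Delta/\nu^2)\cdot\log^4(\cdot)=\mathrm{poly}(m)$ steps, where $\Delta=L_\lambda(\bZero)-\inf L_\lambda\le 1$. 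Since this SOSP lies in $\mathcal{B}$, it is also an SOSP of $L_\lambda$ itself, completing the proof. The combined failure probability from the iterate-confinement event and the \citet{jin2019} guarantee is $1/\mathrm{poly}(d)$, yielding the stated theorem.
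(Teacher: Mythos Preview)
Your proposal follows the same three-step skeleton as the paper---bound smoothness and Hessian-Lipschitzness on a Frobenius ball, prove iterate confinement, then invoke \citet{jin2019}---and is correct in outline. The substantive difference is in step (ii). The paper does not use a potential-function/descent-lemma argument; instead it gives a direct per-step induction on $\|\bW^t\|_F$. Writing the $\cR_4$ gradient explicitly as $\{\nabla\cR_4(\bW)\}_r=8\|\bW\|_{2,4}^4\|\bw_r\|_2^2\bw_r$, the $\cR_4$ part of the update shrinks each neuron multiplicatively, giving $\|\bW^t-\eta\lambda_4\nabla\cR_4(\bW^t)\|_F^2\le\|\bW^t\|_F^2-8c\Gamma^{-6}m^{-2}\|\bW^t\|_F^8$ for $\Gamma=m^{1/2}$. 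When $\|\bW^t\|_F$ is near the boundary this contraction dominates the bounded contributions from $\nabla\hat L$, $\nabla\cR_{1},\nabla\cR_{2},\nabla\cR_{3}$, and the Gaussian perturbation, so $\|\bW^{t+1}\|_F\le\Gamma$ follows directly. This avoids martingale concentration for confinement (only a union bound on the perturbation norms is needed) and sidesteps the circularity latent in your argument: the descent lemma with parameter $\beta$ is only valid while both endpoints of the step lie in the $\beta$-smooth region, which is precisely what you are trying to establish. Your route can be made rigorous via a stopping-time argument---apply the descent lemma up to the first exit time, use the resulting loss bound together with $L_\lambda\ge\lambda_4 m^{-2}\|\bW\|_F^8$ to force that time past $\mathscr{T}$---but as written it would benefit from making that explicit. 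The paper's contraction argument is cleaner because it never needs the loss value to control the norm; your approach has the advantage of being generic (it would work for any coercive regularizer, not just one with this multiplicative-shrinkage structure).
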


\subsection{Generalization}
Finally, we conclude by showing that any $\hat\bW$ with small $L_\lambda(\hat \bW)$ also has small test loss:
\begin{theorem}[Main generalization theorem]\label{thm: main generalization}
Let $r = d_k, \lambda_1 = m^{\frac12}d^{-\frac{k-1}{2}}\varepsilon_{min}$, $\lambda_2 = \varepsilon_{min}$, $\lambda_3 = m^{\frac12}d^{-\frac{k-1}{2}}\varepsilon_{min}$, $\lambda_4 = d^{-2(k-1)}\varepsilon_{min}$. Assume $m \gtrsim \varepsilon_{min}^{-4}d^{3(k-1)}$. With probability $1 - 1/\text{poly}(d)$ over the draw of $\cD$, any data dependent $\hat{\bW}$ with $L_\lambda(\hat \bW) \le C\varepsilon_{min}$ has population loss
\begin{equation}
L(\hat \bW) \lesssim \varepsilon_{min} + \sqrt{\frac{d^k}{n}}.
\end{equation}
\end{theorem}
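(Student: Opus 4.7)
The plan is to upper bound $L(\hat\bW) \le \hat L(\hat\bW) + |L(\hat\bW) - \hat L(\hat\bW)|$, control the first term by $\hat L(\hat\bW) \le L_\lambda(\hat\bW) \le C\varepsilon_{min}$, and handle the generalization gap via a uniform convergence argument over the class of weights meeting the regularization constraints. The first step is to translate $L_\lambda(\hat\bW) \le C\varepsilon_{min}$ into individual bounds $\cR_i(\hat\bW) \le C\varepsilon_{min}/\lambda_i$, which with the stated values of $\lambda_i$ give $\cR_1(\hat\bW), \cR_3(\hat\bW) \lesssim m^{-1/2}d^{(k-1)/2}$, $\cR_2(\hat\bW) \lesssim 1$, and $\cR_4(\hat\bW) \lesssim d^{2(k-1)}$. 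These match exactly the bounds on $\bW^*$ from Theorem~\ref{thm: main expressivity}, so $\hat\bW$ lies in a well-controlled parameter class $\cW$ over which uniform convergence must be established.

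Second, I would couple the network to its second-order Taylor expansion $f = f_L + f_Q + R$ and show that the $f_Q$ and $R$ contributions are pointwise negligible on the sphere. Under Assumption~\ref{assume:sigma_bound}, $|f_Q(\bx;\bW)| \lesssim m^{-1/2}\sum_r (\bx^T\bw_r)^2 \lesssim m^{-1/2}d^{1/2}\|\bW\|_{2,4}^2$ using $\|\bx\|_2^2 = d$ and Cauchy--Schwarz, and a similar third-order bound holds for $R$. With $\|\hat\bW\|_{2,4}^4 \lesssim d^{k-1}$ this gives a uniform $O(d^{k/2}/\sqrt m)$ bound, which for the $m = \text{poly}(n,d,\varepsilon_{min}^{-1})$ assumed in Theorem~\ref{thm: main thm} is much smaller than $\varepsilon_{min}$. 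By Lipschitzness of $\ell$, it therefore suffices to bound the generalization gap of $f_L(\cdot;\hat\bW)$ alone, up to additive $O(\varepsilon_{min})$ slack.

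Third, for the linear term I would split $\bW = \bPi_{\le r}\bW + \bPi_{>r}\bW$ with $r = d_k = \Theta(d^k)$. On the top subspace, $\EE_\mu[f_L(\cdot; \bPi_{\le r}\bW)^2] = \cR_2(\bW) \lesssim 1$ caps the effective RKHS norm of the $\Theta(d^k)$-dimensional predictor, and a standard Rademacher-plus-contraction bound for bounded-norm linear predictors in an $r$-dimensional feature space yields the advertised $\sqrt{d^k/n}$ rate. On the complement, the key point is that \emph{both} the population $L^2$ norm $\sqrt{\cR_1(\hat\bW)}$ and the empirical $L^2$ norm $\sqrt{\cR_3(\hat\bW)}$ of $f_L(\cdot; \bPi_{>r}\hat\bW)$ are bounded by $(m^{-1/2}d^{(k-1)/2})^{1/2} \ll \varepsilon_{min}$, so by Cauchy--Schwarz the complement contributes at most $O(\varepsilon_{min})$ to both $L(\hat\bW)$ and $\hat L(\hat\bW)$. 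Summing the three pieces gives $L(\hat\bW) \lesssim \varepsilon_{min} + \sqrt{d^k/n}$.

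The main obstacle is that $\|\hat\bW\|_F$ itself may be large, so a naive NTK-norm Rademacher bound over $\cW$ is vacuous; the regularizers are designed precisely to circumvent this. $\cR_2$ restricts the NTK-norm control to the informative $\Theta(d^k)$-dimensional top subspace, while $\cR_1$ and $\cR_3$ force the remaining, potentially large component of $\hat\bW$ to lie in directions where $f_L$ is vanishingly small in both the population and empirical $L^2$ senses. The spectral partition of $\bSigma$ from Lemma~\ref{lemma: main intermediate spectral result} -- in particular the eigenvalue gap between $\bLambda_2$ and $\bLambda_3$ -- is what makes this two-scale decomposition work, since bounding $\cR_1$ simultaneously controls both the ``medium" $\bLambda_2$ directions (through the gap) and the truly-free ``small" $\bLambda_3$ directions.
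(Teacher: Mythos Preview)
Your first step (reading off $\cR_i(\hat\bW)\lesssim\varepsilon_{min}/\lambda_i$) and your third step (splitting $f_L$ along $\bP_{\le k}$, using $\cR_2$ to run a Rademacher argument on the low-degree part and $\cR_1,\cR_3$ to kill the complement) match the paper's proof exactly. The problem is the second step.

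The claim that $f_Q$ is pointwise negligible is wrong, and in fact contradicts the whole setup of the paper. The inequality $\sum_r(\bx^T\bw_r)^2\lesssim d^{1/2}\|\bW\|_{2,4}^2$ you use is false: the correct bound is $\sum_r(\bx^T\bw_r)^2\le d\|\bW\|_F^2\le d\,m^{1/2}\|\bW\|_{2,4}^2$, which gives $|f_Q(\bx;\hat\bW)|\lesssim d\|\hat\bW\|_{2,4}^2\lesssim d^{(k+1)/2}$, not anything small in $m$. More to the point, $f_Q(\cdot;\hat\bW)$ \emph{cannot} be $o(1)$, because by design it is the term that fits the sparse signal $f_{sp}(\bx)=\sum_i\alpha_i(\beta_i^T\bx)^{k+1}=\Theta_d(1)$; if $f_Q$ vanished you would be back in the NTK regime with no hope of learning $f_{sp}$ from $n\ll d^{k+1}$ samples.

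What the paper does instead is keep $f_Q$ in the predictor and bound the generalization gap of the class $\{(\bx,y)\mapsto\ell(y,f_Q(\bx;\bW)+f_L(\bx;\bP_{\le k}\bW)):\bW\in\cW\}$ directly. After Lipschitz contraction this splits into a linear piece (handled exactly as you describe, yielding $\sqrt{d^k/n}$) and a quadratic piece $\cR_\cD(\cF^Q(\cW))\lesssim\sqrt{d/(mn)}\cdot\sup_{\bW\in\cW}\|\bW\|_F^2$ (Lemma~\ref{lemma: quad rademacher}, from \citet{bai2020}). The role of $\cR_4$ is precisely here: $\|\bW\|_{2,4}\lesssim d^{(k-1)/4}$ gives $\|\bW\|_F^2\le m^{1/2}\|\bW\|_{2,4}^2\lesssim m^{1/2}d^{(k-1)/2}$, so the quadratic Rademacher term is $\sqrt{d/(mn)}\cdot m^{1/2}d^{(k-1)/2}=\sqrt{d^k/n}$ as well. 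The third-order remainder $R$ \emph{is} negligible (that part of your argument is fine), but $f_Q$ must be kept and generalized over, not discarded.
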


The proof of this theorem is presented in Appendix~\ref{app: generalization proofs}. The key is to use the small values of the regularizers at $\hat \bW$ to bound the test loss. $\cR_2$ and $\cR_4$ are used to bound the Rademacher complexities of the linear and quadratic terms respectively, while $\cR_1$ and $\cR_3$ control the influence of the high degree component of the linear term on the population loss and empirical loss respectively.

\section{Experiments}
We conclude with experiments to support our main theorem. In Figure~\ref{fig: main figure} we train the joint linear and quadratic model $f_L(\bx; \bW) + f_Q(\bx; \bW)$ via gradient descent on the square loss, with signal $f^*(\bx) = f_1(\bx) + (\beta^T\bx)^2$ where $f_1(\bx) = x_1 - 1$. Our covariates are dimension $d=100$, we have $n = d^{1.5} = 1000$ samples, and our network has width $m = 10000$.

We train our model with the regularizer $\cR_3(\bW)$ for varying values of $\lambda_3$. Rather than computing $\bSigma$, we use the top $n_k$ right singular vectors of $\bPhi$ to estimate $\cR_3$. We observe that for all $\lambda_3$ we reach near zero training error. However, when $\lambda_3$ is zero or very small, the model struggles to learn the entire signal, and test error plateaus near 0.6. In the leftmost pane, we plot the value of $\cR_3(\bW)$ over the course of training. We observe that $\cR_3(\bW)$ grows large for small $\lambda_3$, which implies that the model is using the ``bad" directions in $\bV_2$ to overfit to the training data. For large values of $\lambda_3$, however, $\cR_3$ prevents $\bW$ from moving in the ``bad" directions to overfit the data. This is seen in the leftmost pane, where the value of $\cR_3(\bW)$ is small. Instead, the parameter moves in the ``good'' $\bQ_3$ directions, and the Quad-NTK term kicks in to fit the remaining component of the signal while generalizing. This leads to the consistently lower test loss (around 0.2) as shown in the rightmost pane.

\begin{figure}
\centering
\includegraphics[width=\textwidth]{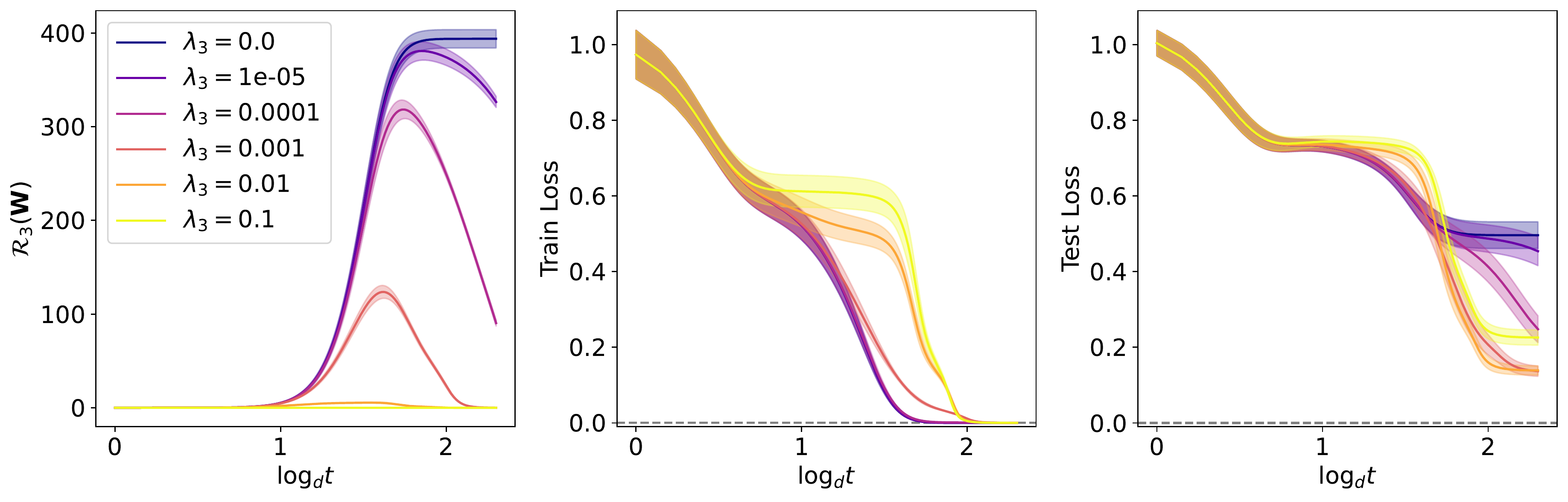}
\caption{We train $f_L + f_Q$ with varying $\lambda_3$. When $\lambda_3$ is small, the NTK overfits the high degree signal and test error is large. When $\lambda_3$ is large, the QuadNTK can learn the high degree signal, and test error is smaller. Results are averaged over 5 trials, with one standard deviation shown.}
\label{fig: main figure}
\end{figure}

\paragraph{On the regularizer $\cR_1$.} Our proof required the existence of $\cR_1$ to prevent movement in the bad $\bQ_2$ directions. While $\cR_1$ being small is necessary for generalization, in Figure~\ref{fig: main figure} we observe that we generalize well without explicitly regularizing $\cR_1$. We hypothesize that the noise in the perturbed gradient descent update may be implicitly regularizing $\cR_1$. A heuristic justification is as follows. Recall that any sufficiently random direction $\bu$ lies mostly in $\bQ_3$. Due to the rotational symmetry of the perturbations, one might expect that the solutions $\bP_{\le k}\hat \bW + \bP_{\> k}\hat \bW\bS$, over all choices of random signs $\bS$, can be reached with roughly equal probability. Since this set of solutions is ``sufficiently random,'' in expectation over $\bS$ it generalizes well, and thus gradient descent is likely to converge to a solution which generalizes well. In Appendix~\ref{app:extra_experiments} we conduct experiments with both $\cR_1$ and $\cR_3$ to support the claim that explicit regularization of $\cR_1$ is not needed to acheive low test error; however, proving this claim is left for future work.

\clearpage

\begin{wrapfigure}{r}{0.37\textwidth} 
\centering
\includegraphics[width=0.37\textwidth]{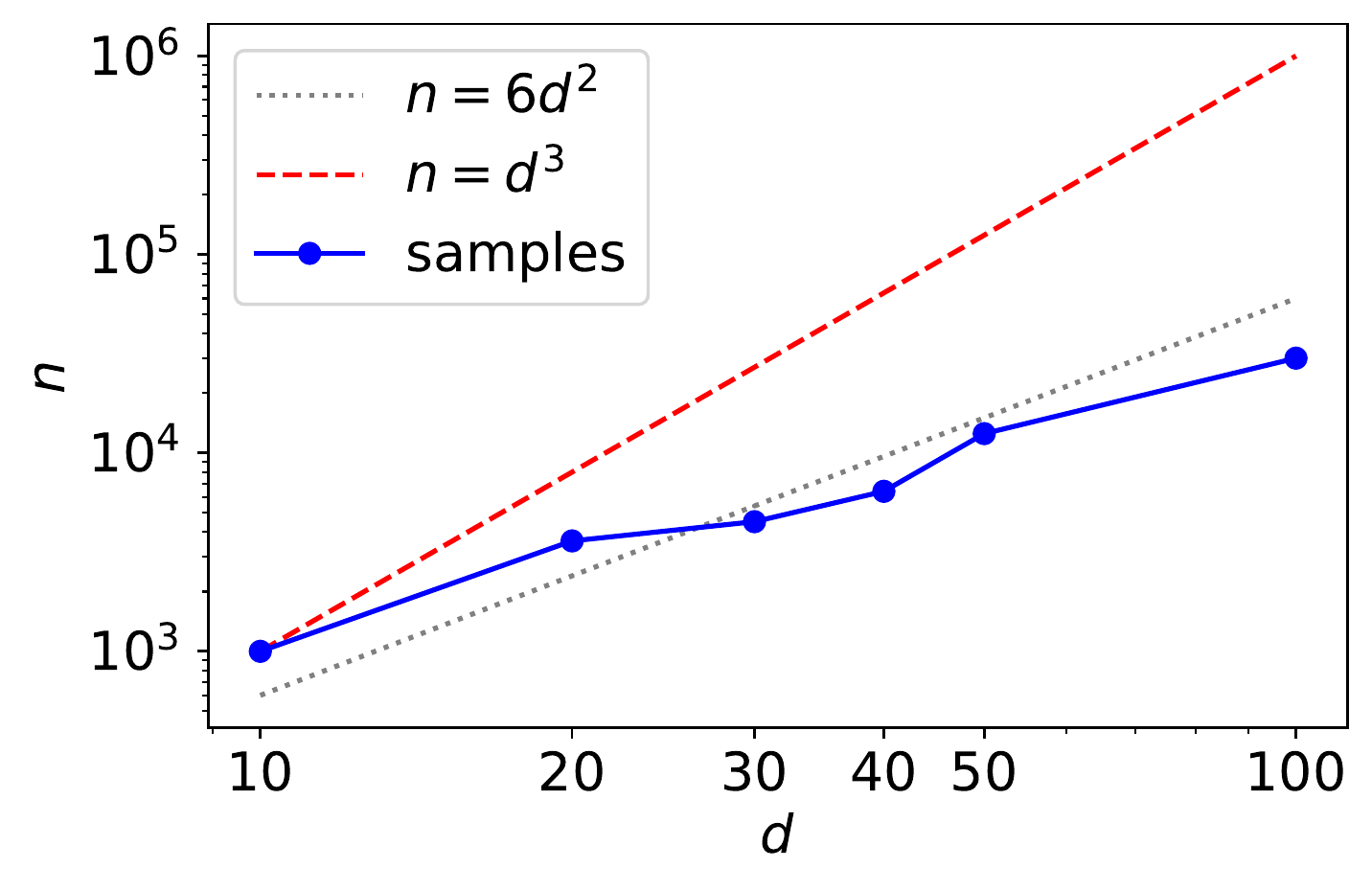}
\caption{Neural networks optimally learn the ``dense quadratic plus sparse cubic" task.}
\label{fig:real}
\end{wrapfigure}

\paragraph{On the Low-Degree Plus Sparse Task.} Theorem~\ref{thm: main thm} shows that two-layer neural networks trained via noisy gradient descent with a specific regularizer can more efficiently learn low-degree plus sparse target functions. In Figure~\ref{fig:real} we show empirically that neural networks with standard initialization and trained via vanilla gradient descent efficiently learn a ``dense quadratic plus sparse cubic.'' For varying values of $d$, we train a two-layer neural network to convergence and plot the smallest $n$ such that the test loss was $<0.1$. We observe this minimal $n$ roughly scales with $d^2$, the optimal (in $d$) sample complexity. This provides evidence that standard networks trained with vanilla GD can effectively learn the low-degree plus sparse task, and hence that this is a sensible task to study and that our work presents a step towards understanding why standard neural networks perform better in practice more generally. See Appendix~\ref{app:extra_experiments} for more details.

\paragraph{On Second-Order Taylor Expansions of Standard Networks.} Throughout this paper we studied the quadratic Taylor expansion of a two-layer neural network. In Appendix~\ref{app:extra_experiments}, we complement the results of~\cite{bai2020taylorized} and show that for a standard architecture and data distribution, the quadratic Taylor expansion better approximates the optimization dynamics and acheives lower test loss than the linearization (NTK) does.



\section{Conclusion}
The goal of this work is to better understand how neural networks can escape the NTK regime. By analyzing the eigendecomposition of the feature covariance matrix, we identified 3 sets of directions -- ones that can fit low degree signal, ``bad'' directions which either cause the NTK to overfit or the test predictions to explode, and ``good'' directions in which the parameters can move a large distance. We then constructed a regularizer which encourages movement in these good directions, and showed how a network can jointly use the linear and quadratic terms in its Taylor expansion to fit a low-degree plus sparse signal. Altogether, we provided an end-to-end convergence and generalization guarantee with a provable sample complexity improvement over the NTK and QuadNTK.

As discussed above, one interesting direction of future work is to understand the role of $\cR_1$. Other directions of future work include understanding whether our analysis can be used to leverage higher-order terms in the Taylor expansion, understanding the connection between the QuadNTK and feature learning, and investigating whether increasing the depth of the network can allow the NTK and QuadNTK to jointly learn a hierarchical representation~\citep{chen2020}.

\section*{Acknowledgements}
EN acknowledges support from a National Defense Science \& Engineering Graduate Fellowship. JDL and EN acknowledge support of the ARO under MURI Award W911NF-11-1-0304, the Sloan Research Fellowship, NSF CCF 2002272, NSF IIS 2107304, NSF CIF 2212262, ONR Young Investigator Award, and NSF-CAREER under award \#2144994. The authors would like to thank Alex Damian for helpful discussions throughout the project.

\bibliographystyle{plainnat}
\bibliography{main}

\begin{thebibliography}{47}
\providecommand{\natexlab}[1]{#1}
\providecommand{\url}[1]{\texttt{#1}}
\expandafter\ifx\csname urlstyle\endcsname\relax
  \providecommand{\doi}[1]{doi: #1}\else
  \providecommand{\doi}{doi: \begingroup \urlstyle{rm}\Url}\fi

\bibitem[Allen-Zhu and Li(2019)]{zhu2019}
Zeyuan Allen-Zhu and Yuanzhi Li.
\newblock What can resnet learn efficiently, going beyond kernels?
\newblock In \emph{Advances in Neural Information Processing Systems}, 2019.

\bibitem[Allen{-}Zhu and Li(2020)]{zhu2020}
Zeyuan Allen{-}Zhu and Yuanzhi Li.
\newblock Backward feature correction: How deep learning performs deep
  learning.
\newblock 2020.
\newblock URL \url{https://arxiv.org/abs/2001.04413}.

\bibitem[Allen-Zhu et~al.(2019{\natexlab{a}})Allen-Zhu, Li, and
  Liang]{zhu2019b}
Zeyuan Allen-Zhu, Yuanzhi Li, and Yingyu Liang.
\newblock Learning and generalization in overparameterized neural networks,
  going beyond two layers.
\newblock In \emph{Advances in Neural Information Processing Systems},
  2019{\natexlab{a}}.

\bibitem[Allen-Zhu et~al.(2019{\natexlab{b}})Allen-Zhu, Li, and Song]{zhu2019c}
Zeyuan Allen-Zhu, Yuanzhi Li, and Zhao Song.
\newblock A convergence theory for deep learning via over-parameterization.
\newblock In \emph{International Conference on Machine Learning}, Proceedings
  of Machine Learning Research, pages 242--252, 2019{\natexlab{b}}.

\bibitem[Arora et~al.(2019{\natexlab{a}})Arora, Du, Hu, Li, Salakhutdinov, and
  Wang]{arora2019CNTK}
Sanjeev Arora, Simon~S. Du, Wei Hu, Zhiyuan Li, Ruslan Salakhutdinov, and
  Ruosong Wang.
\newblock On exact computation with an infinitely wide neural net.
\newblock In \emph{Advances in Neural Information Processing Systems
  (NeurIPS)}, 2019{\natexlab{a}}.

\bibitem[Arora et~al.(2019{\natexlab{b}})Arora, Du, Hu, Li, and
  Wang]{arora2019}
Sanjeev Arora, Simon~S. Du, Wei Hu, Zhiyuan Li, and Ruosong Wang.
\newblock Fine-grained analysis of optimization and generalization for
  overparameterized two-layer neural networks.
\newblock In \emph{International Conference on Machine Learning (ICML)},
  2019{\natexlab{b}}.

\bibitem[Bai and Lee(2020)]{bai2020}
Yu~Bai and Jason~D. Lee.
\newblock Beyond linearization: On quadratic and higher-order approximation of
  wide neural networks.
\newblock In \emph{International Conference on Learning Representations
  (ICLR)}, 2020.

\bibitem[Bai et~al.(2020)Bai, Krause, Wang, Xiong, and
  Socher]{bai2020taylorized}
Yu~Bai, Ben Krause, Huan Wang, Caiming Xiong, and Richard Socher.
\newblock Taylorized training: Towards better approximation of neural network
  training at finite width, 2020.
\newblock URL \url{https://arxiv.org/abs/2002.04010.abs}.

\bibitem[Bartlett et~al.(2021)Bartlett, Montanari, and Rakhlin]{bartlett2021}
Peter~L. Bartlett, Andrea Montanari, and Alexander Rakhlin.
\newblock Deep learning: A statistical viewpoint.
\newblock \emph{Acta Numerica}, 30:\penalty0 87--201, 2021.

\bibitem[Bradbury et~al.(2018)Bradbury, Frostig, Hawkins, Johnson, Leary,
  Maclaurin, Necula, Paszke, Vander{P}las, Wanderman-{M}ilne, and
  Zhang]{jax2018github}
James Bradbury, Roy Frostig, Peter Hawkins, Matthew~James Johnson, Chris Leary,
  Dougal Maclaurin, George Necula, Adam Paszke, Jake Vander{P}las, Skye
  Wanderman-{M}ilne, and Qiao Zhang.
\newblock {JAX}: composable transformations of {P}ython+{N}um{P}y programs,
  2018.
\newblock URL \url{http://github.com/google/jax}.

\bibitem[Cao and Gu(2019)]{cao2019}
Yuan Cao and Quanquan Gu.
\newblock Generalization bounds of stochastic gradient descent for wide and
  deep neural networks.
\newblock In \emph{Advances in Neural Information Processing Systems}, 2019.

\bibitem[Chen et~al.(2020{\natexlab{a}})Chen, Bai, Lee, Zhao, Wang, Xiong, and
  Socher]{chen2020}
Minshuo Chen, Yu~Bai, Jason~D. Lee, Tuo Zhao, Huan Wang, Caiming Xiong, and
  Richard Socher.
\newblock Towards understanding hierarchical learning: Benefits of neural
  representations.
\newblock In \emph{Advances in Neural Information Processing Systems},
  2020{\natexlab{a}}.

\bibitem[Chen et~al.(2021)Chen, Chi, Fan, and Ma]{spectral_methods}
Yuxin Chen, Yuejie Chi, Jianqing Fan, and Cong Ma.
\newblock Spectral methods for data science: A statistical perspective.
\newblock \emph{Foundations and Trends® in Machine Learning}, 14\penalty0
  (5):\penalty0 566--806, 2021.
\newblock ISSN 1935-8237.
\newblock \doi{10.1561/2200000079}.

\bibitem[Chen et~al.(2020{\natexlab{b}})Chen, Cao, Gu, and
  Zhang]{chen2020generalized}
Zixiang Chen, Yuan Cao, Quanquan Gu, and Tong Zhang.
\newblock A generalized neural tangent kernel analysis for two-layer neural
  networks.
\newblock In \emph{Advances in Neural Information Processing Systems
  (NeurIPS)}, 2020{\natexlab{b}}.

\bibitem[Chizat et~al.(2019)Chizat, Oyallon, and Bach]{chizat2018}
Lénaïc Chizat, Edouard Oyallon, and Francis Bach.
\newblock On lazy training in differentiable programming.
\newblock In \emph{Advances in Neural Information Processing Systems
  (NeurIPS)}, 2019.

\bibitem[Daniely and Malach(2020)]{daniely2020}
Amit Daniely and Eran Malach.
\newblock Learning parities with neural networks.
\newblock In \emph{Advances in Neural Information Processing Systems
  (NeurIPS)}, 2020.

\bibitem[Davis and Kahan(1970)]{daviskahan}
Chandler Davis and W.~M. Kahan.
\newblock The rotation of eigenvectors by a perturbation. iii.
\newblock \emph{SIAM Journal on Numerical Analysis}, 7:\penalty0 1--46, 1970.

\bibitem[Du and Lee(2018)]{du2018quad}
Simon~S. Du and Jason~D. Lee.
\newblock On the power of over-parametrization in neural networks with
  quadratic activation.
\newblock In \emph{Proceedings of the 35th International Conference on Machine
  Learning}, Proceedings of Machine Learning Research, pages 1329--1338, 2018.

\bibitem[Du et~al.(2019{\natexlab{a}})Du, Lee, Li, Wang, and Zhai]{du2018b}
Simon~S. Du, Jason~D. Lee, Haochuan Li, Liwei Wang, and Xiyu Zhai.
\newblock Gradient descent finds global minima of deep neural networks.
\newblock In \emph{International Conference on Machine Learning},
  2019{\natexlab{a}}.

\bibitem[Du et~al.(2019{\natexlab{b}})Du, Zhai, Poczos, and Singh]{du2019}
Simon~S. Du, Xiyu Zhai, Barnabas Poczos, and Aarti Singh.
\newblock Gradient descent provably optimizes over-parameterized neural
  networks.
\newblock In \emph{International Conference on Learning Representations
  (ICLR)}, 2019{\natexlab{b}}.

\bibitem[Ge et~al.(2016)Ge, Lee, and Ma]{ge2016}
Rong Ge, Jason~D. Lee, and Tengyu Ma.
\newblock Matrix completion has no spurious local minimum.
\newblock In \emph{Advances in Neural Information Processing Systems}, 2016.

\bibitem[Ge et~al.(2017)Ge, Lee, and Ma]{ge2017}
Rong Ge, Jason~D. Lee, and Tengyu Ma.
\newblock Learning one-hidden-layer neural networks with landscape design.
\newblock In \emph{International Conference on Learning Representations}.
  arXiv, 2017.

\bibitem[Ghorbani et~al.(2019)Ghorbani, Mei, Misiakiewicz, and
  Montanari]{ghorbani2019b}
Behrooz Ghorbani, Song Mei, Theodor Misiakiewicz, and Andrea Montanari.
\newblock Limitations of lazy training of two-layers neural network.
\newblock In \emph{Advances in Neural Information Processing Systems}, 2019.

\bibitem[Ghorbani et~al.(2020)Ghorbani, Mei, Misiakiewicz, and
  Montanari]{ghorbani2020}
Behrooz Ghorbani, Song Mei, Theodor Misiakiewicz, and Andrea Montanari.
\newblock When do neural networks outperform kernel methods?
\newblock In \emph{Advances in Neural Information Processing Systems}, 2020.

\bibitem[Ghorbani et~al.(2021)Ghorbani, Mei, Misiakiewicz, and
  Montanari]{montanari2021}
Behrooz Ghorbani, Song Mei, Theodor Misiakiewicz, and Andrea Montanari.
\newblock Linearized two-layers neural networks in high dimension.
\newblock \emph{The Annals of Statistics}, 49:\penalty0 1029–1054, 2021.

\bibitem[Ghosh et~al.(2022)Ghosh, Mei, and Yu]{ghosh2022}
Nikhil Ghosh, Song Mei, and Bin Yu.
\newblock The three stages of learning dynamics in high-dimensional kernel
  methods.
\newblock In \emph{International Conference on Learning Representations
  (ICLR)}, 2022.

\bibitem[Hu et~al.(2020)Hu, Xiao, Adlam, and Pennington]{hu2020}
Wei Hu, Lechao Xiao, Ben Adlam, and Jeffrey Pennington.
\newblock The surprising simplicity of the early-time learning dynamics of
  neural networks.
\newblock In \emph{Advances in Neural Information Processing Systems
  (NeurIPS)}, 2020.

\bibitem[Huang and Yau(2020)]{huang2020}
Jiaoyang Huang and Horng-Tzer Yau.
\newblock Dynamics of deep neural networks and neural tangent hierarchy.
\newblock In \emph{Proceedings of the 37th International Conference on Machine
  Learning}, Proceedings of Machine Learning Research, pages 4542--4551, 2020.

\bibitem[Jacot et~al.(2018)Jacot, Gabriel, and Hongler]{jacot2018}
Arthur Jacot, Franck Gabriel, and Clément Hongler.
\newblock Neural tangent kernel: Convergence and generalization in neural
  networks.
\newblock In \emph{Advances in Neural Information Processing Systems
  (NeurIPS)}, pages 8571 -- 8580, 2018.

\bibitem[Jin et~al.(2017)Jin, Ge, Netrapalli, Kakade, and Jordan]{jin2017}
Chi Jin, Rong Ge, Praneeth Netrapalli, Sham~M. Kakade, and Michael~I. Jordan.
\newblock How to escape saddle points efficiently.
\newblock In \emph{International Conference on Machine Learning (ICML)}, 2017.

\bibitem[Jin et~al.(2021)Jin, Netrapalli, Ge, Kakade, and Jordan]{jin2019}
Chi Jin, Praneeth Netrapalli, Rong Ge, Sham~M. Kakade, and Michael~I. Jordan.
\newblock On nonconvex optimization for machine learning: Gradients,
  stochasticity, and saddle points.
\newblock \emph{Journal of the ACM}, 68, 2021.

\bibitem[Lee et~al.(2019)Lee, Xiao, Schoenholz, Bahri, Novak, Sohl-Dickstein,
  and Pennington]{leewide2019}
Jaehoon Lee, Lechao Xiao, Samuel Schoenholz, Yasaman Bahri, Roman Novak, Jascha
  Sohl-Dickstein, and Jeffrey Pennington.
\newblock Wide neural networks of any depth evolve as linear models under
  gradient descent.
\newblock In \emph{Advances in Neural Information Processing Systems}, 2019.

\bibitem[Lee et~al.(2020)Lee, Schoenholz, Pennington, Adlam, Xiao, Novak, and
  Sohl-Dickstein]{leefinite2020}
Jaehoon Lee, Samuel Schoenholz, Jeffrey Pennington, Ben Adlam, Lechao Xiao,
  Roman Novak, and Jascha Sohl-Dickstein.
\newblock Finite versus infinite neural networks: an empirical study.
\newblock In \emph{Advances in Neural Information Processing Systems}, 2020.

\bibitem[Lee et~al.(2016)Lee, Simchowitz, Jordan, and Recht]{lee2016}
Jason~D. Lee, Max Simchowitz, Michael~I. Jordan, and Benjamin Recht.
\newblock In \emph{29th Annual Conference on Learning Theory}, Proceedings of
  Machine Learning Research, pages 1246--1257, 2016.

\bibitem[Li and Liang(2018)]{li2018}
Yuanzhi Li and Yingyu Liang.
\newblock Learning overparameterized neural networks via stochastic gradient
  descent on structured data.
\newblock In \emph{Advances in Neural Information Processing Systems}, 2018.

\bibitem[Li et~al.(2020)Li, Ma, and Zhang]{li2020}
Yuanzhi Li, Tengyu Ma, and Hongyang~R. Zhang.
\newblock Learning over-parametrized two-layer neural networks beyond ntk.
\newblock In \emph{Proceedings of Thirty Third Conference on Learning Theory},
  Proceedings of Machine Learning Research, pages 2613--2682, 2020.

\bibitem[Malach et~al.(2021)Malach, Kamath, Abbe, and Srebro]{malach2021}
Eran Malach, Pritish Kamath, Emmanuel Abbe, and Nathan Srebro.
\newblock Quantifying the benefit of using differentiable learning over tangent
  kernels.
\newblock In \emph{Proceedings of the 38th International Conference on Machine
  Learning}, Proceedings of Machine Learning Research, pages 7379--7389, 2021.

\bibitem[Mei et~al.(2022)Mei, Misiakiewicz, and Montanari]{mei2022}
Song Mei, Theodor Misiakiewicz, and Andrea Montanari.
\newblock Generalization error of random features and kernel methods:
  hypercontractivity and kernel matrix concentration.
\newblock \emph{Applied and Computational Harmonic Analysis, Special Issue on
  Harmonic Analysis and Machine Learning}, 59:\penalty0 3--84, 2022.

\bibitem[Montanari and Zhong(2020)]{montanari2020}
Andrea Montanari and Yiqiao Zhong.
\newblock The interpolation phase transition in neural networks: Memorization
  and generalization under lazy training, 2020.
\newblock URL \url{https://arxiv.org/abs/2007.12826}.

\bibitem[Novak et~al.(2020)Novak, Xiao, Hron, Lee, Alemi, Sohl-Dickstein, and
  Schoenholz]{neuraltangents2020}
Roman Novak, Lechao Xiao, Jiri Hron, Jaehoon Lee, Alexander~A. Alemi, Jascha
  Sohl-Dickstein, and Samuel~S. Schoenholz.
\newblock Neural tangents: Fast and easy infinite neural networks in python.
\newblock In \emph{International Conference on Learning Representations}, 2020.
\newblock URL \url{https://github.com/google/neural-tangents}.

\bibitem[Soltanolkotabi et~al.(2018)Soltanolkotabi, Javanmard, and
  Lee]{soltanolkotabi2018}
Mahdi Soltanolkotabi, Adel Javanmard, and Jason~D. Lee.
\newblock Theoretical insights into the optimization landscape of
  over-parameterized shallow neural networks.
\newblock \emph{IEEE Transactions on Information Theory}, 65:\penalty0
  742--769, 2018.

\bibitem[Vershynin(2018)]{vershynin}
Roman Vershynin.
\newblock \emph{High-dimensional probability: An introduction with applications
  in data science}, volume~47.
\newblock Cambridge University Press, 2018.

\bibitem[Wainwright(2019)]{wainwright}
Martin~J. Wainwright.
\newblock \emph{High-dimensional statistics: A non-asymptotic viewpoint},
  volume~48.
\newblock Cambridge University Press, 2019.

\bibitem[Wei et~al.(2018)Wei, Lee, Liu, and Ma]{regmatters2018}
Colin Wei, Jason~D. Lee, Qiang Liu, and Tengyu Ma.
\newblock Regularization matters: Generalization and optimization of neural
  nets v.s. their induced kernel, 2018.
\newblock URL \url{https://arxiv.org/abs/1810.05369}.

\bibitem[Woodworth et~al.(2020)Woodworth, Gunasekar, Lee, Moroshko, Savarese,
  Golan, Soudry, and Srebro]{woodworth2020}
Blake Woodworth, Suriya Gunasekar, Jason~D. Lee, Edward Moroshko, Pedro
  Savarese, Itay Golan, Daniel Soudry, and Nathan Srebro.
\newblock Kernel and rich regimes in overparametrized models.
\newblock In \emph{Conference on Learning Theory}, Proceedings of Machine
  Learning Research, pages 3635--3673, 2020.

\bibitem[Yehudai and Shamir(2019)]{yehudai2019}
Gilad Yehudai and Ohad Shamir.
\newblock On the power and limitations of random features for understanding
  neural networks.
\newblock In \emph{Advances in Neural Information Processing Systems}, 2019.

\bibitem[Zou et~al.(2018)Zou, Cao, Zhou, and Gu]{zou2018}
Difan Zou, Yuan Cao, Dongruo Zhou, and Quanquan Gu.
\newblock Stochastic gradient descent optimizes over-parameterized deep relu
  networks, 2018.
\newblock URL \url{https://arxiv.org/abs/1811.08888}.

\end{thebibliography}

\clearpage
\appendix 

\section{Spherical Harmonics: Technical Background}\label{app: spherical harmonics}
Below we present relevant results on spherical harmonics, Gegenbauer polynomials, and Hermite polynomials. These results are from~\cite{montanari2021, montanari2020}, and a more in depth discussion of the technical background can be found in those references.

For $\ell \in \ZZ^{\ge 0}$, define $B(d, \ell)$ as
\begin{align}
	B(d, \ell) := \frac{2\ell + d - 2}{\ell}\binom{\ell + d - 3}{\ell - 1} = (1 + o_d(1))\frac{d^\ell}{\ell!},
\end{align}
and define
\begin{align}
	n_k = \sum_{\ell = 0}^k B(d, \ell) = \Theta_d(d^k).
\end{align}

We let $L^2(\cS^{d-1}(\sqrt{d}), \mu)$ denote the space of square-integrable functions over the sphere of radius $\sqrt{d}$, with respect to the uniform probability measure $\mu$. We use the shorthand $\langle \cdot, \cdot \rangle_{L^2} = \langle \cdot, \cdot \rangle_{L^2(\cS^{d-1}(\sqrt{d}), \mu)}$, and likewise for $\|f\|_{L^2}$.

The \emph{normalized spherical harmonics} $\{Y_{k, i}^{(d)}\}_{0 \le i \le B(d, k), k \ge 0}$ are a sequence of polynomials such that $Y^{(d)}_{k, i}$ is degree $k$, and the $Y^{(d)}_{k, i}$ form an orthonormal basis of $L^2(\cS^{d-1}(\sqrt{d}), \mu)$, i.e:
\begin{align}
 \langle Y^{(d)}_{k, i}, Y^{(d)}_{m, j} \rangle_{L^2} = \delta_{km}\delta_{ij}.
\end{align}

For $\bx \sim \cS^{(d-1)}(\sqrt{d})$, let $\tilde \tau_{d-1}^1$ be the probability measure of $\sqrt{d}\langle \bx, \be \rangle$ and $\tau_{d-1}^1$ be the measure of $\langle \bx, \be \rangle$, where $\be$ is an arbitrary unit vector.

The \emph{Gegenbauer polynomials} $\{Q^{(d)}\}_{k \ge 0}$ are a basis of $L^2([-d, d], \tilde \tau_{d-1}^1)$ such that $Q^{(d)}_k$ is a degree $k$ polynomial with $Q_k^{(d)}(d) = 1$ and 
\begin{align}
	\langle Q_k^{(d)}, Q_j^{(d)} \rangle_{L^2([-d, d], \tilde \tau_{d-1}^1)} = \frac{1}{B(d, k)}\delta_{jk}.
\end{align}

The following identity relates spherical harmonics and Gegenbauer polynomials:
\begin{align}\label{eq:spherical-to-geg}
	Q_k^{(d)}(\langle \bx, \by \rangle) = \frac{1}{B(d, k)}\sum_{i=1}^{B(d, k)}Y_{ki}^{(d)}(\bx)Y_{ki}^{(d)}(\by).
\end{align}

We also use the following fact about Gegenbauer polynomials:
\begin{align}\label{eq:geg-identity}
\langle Q_j^{(d)}(\langle \bx, \cdot \rangle), Q_k^{(d)}(\langle \by, \cdot \rangle) \rangle_{L^2} = \frac{1}{B(d, k)}\delta_{jk}Q_k^{(d)}(\langle \bx, \by \rangle).
\end{align}

Furthermore, $f \in L^2([-\sqrt{d}, \sqrt{d}], \tau_{d-1}^1)$ can be decomposed via Gegenbauer polynomials as
\begin{align}
f(x) &= \sum_{k=0}^\infty \lambda_{d, k}(\sigma')B(d, k)Q_k^{(d)}(\sqrt{d}x)\\
\lambda_{d, k}(f) &:= \langle f, Q_k^{(d)}(\sqrt{d}\cdot) \rangle_{L^2([-\sqrt{d}, \sqrt{d}], \tau_{d-1}^1)}.
\end{align}
Thus $\|f\|_{L^2([-\sqrt{d}, \sqrt{d}], \tau_{d-1}^1)} = \sum_{k \ge 0}B(d, k)\lambda^2_{d, k}(f)$

Let $\gamma$ be the measure of a standard Gaussian on $\mathbb{R}$. The \emph{normalized Hermite polynomials} $\{h_k\}_{h \ge 0}$ are an orthonormal basis of $L^2(\RR, \gamma)$ such that $h_k$ is degree $k$. For $f \in L^2(\RR, \gamma)$, let $\mu_k(f) := \langle f, h_k \rangle_{L^2(\RR, \gamma)}$ be the $k$th hermite coefficient. One observes that $\tau_{d-1}^1$ converges weakly to $\gamma$. As a result we have the following connection between Hermite and Gegenbauer coefficients:
\begin{align}
\mu_k(f) = \lim_{d \rightarrow \infty} B(d, k)^{1/2}\lambda_{d, k}(f).
\end{align}

\subsection{Assumptions}

We additionally require $\sigma', \sigma''$ to satisfy the following assumptions, which are that the Hermite/Gegenbauer coefficients are nonzero and well behaved:\\

\begin{assumption}\label{assume: hermite coeffs}
We assume $\sigma, \sigma'$ satisfy the following:
\begin{enumerate}[(a)]
	\item Let $\sigma'$ satisfy $\mu_\ell(\sigma') \neq 0$ for $\ell \le 4k$ and $\sum_{\ell > 4k}\mu^2_\ell(\sigma') > 0$. As a result, we can let $\mu^2_\ell(\sigma') = \Theta_d(1)$ for $\ell < 4k$.
	\item Let $\sigma''$ satisfy
	\begin{align}
		d^{k-1}\cdot \min_{\ell \le k-1}\lambda^2_{d, \ell}(\sigma'') = \Omega_d(1),
	\end{align}
	where $\lambda_{d, \ell}(\sigma'') = \langle \sigma'', Q^{(d)}_\ell(\sqrt{d} \cdot) \rangle_{L^2([-\sqrt{d}, \sqrt{d}], \tau_{d-1}^1)}$.
\end{enumerate}
\end{assumption}

\subsection{Computing \texorpdfstring{$\bSigma$}{Sigma}}
Below we discuss how to express $\bSigma$ in terms of the Gegenbauer coefficients of $\sigma'$, $\lambda_{d, \ell}(\sigma')$. First, observe that that $\bSigma$ is a matrix of $d \times d$ blocks, where the $i,j$th block is equal to
\begin{align*}
	\frac{a_ia_j}{m}u(\bw_{0, i}, \bw_{0, j}),
\end{align*}
where $\bu : \cS^{d-1}(\sqrt{d}) \times \cS^{d-1}(\sqrt{d}) \rightarrow \RR^{d \times d}$ is the function
\begin{align*}
\bu(\btheta_1, \btheta_2) = \EE_\mu\left[\sigma'(\btheta_1^T\bx)\sigma'(\btheta_2^T\bx)\bx\bx^T\right].
\end{align*}

\cite[Lemma 7]{montanari2021} shows that there exist scalar valued functions $u_1, u_2, u_3$ such that
\begin{align*}
\bu(\btheta_1, \btheta_2) = u_1(\btheta_1^T\btheta_2)\bI_d + u_2(\btheta_1^T\btheta_2)[\btheta_1\btheta_2^T + \btheta_2\btheta_2^T] + u_3(\btheta_1^T\btheta_2)[\btheta_1\btheta_1^T + \btheta_2\btheta_2^T],
\end{align*}
where $u_1, u_2, u_3$ can be computed in terms of the quantities $$\text{Tr}(\bu(\btheta_1, \btheta_2)), \quad \btheta_1^T\bu(\btheta_1, \btheta_2)\btheta_2, \quad \btheta_1^T\bu(\btheta_1, \btheta_2)\btheta_1.$$ 
It thus suffices to compute these quantities. We assume that we can compute arbitrarily many Gegenbauer coefficients of $\sigma'$.

Note that
\begin{align*}
	\text{Tr}(\bu(\btheta_1, \btheta_2)) &= d\cdot \EE_\mu[\sigma'(\btheta_1^T\bx)\sigma'(\btheta_2^T\bx)]\\
	\btheta_1^T\bu(\btheta_1, \btheta_2)\btheta_2 &= \EE_\mu[\sigma'(\btheta_1^T\bx)\btheta_1^T\bx\cdot\sigma'(\btheta_2^T\bx)\btheta_2^T\bx]\\
	\btheta_1^T\bu(\btheta_1, \btheta_2)\btheta_1 &= \EE_\mu[\sigma'(\btheta_1^T\bx)(\btheta_1^T\bx)^2\cdot\sigma'(\btheta_2^T\bx)].
\end{align*}
All these expressions are of the form $\EE_\mu[f(\btheta_1^T\bx)g(\btheta_2^T\bx)]$. For arbitrary $f, g$, let their decompositions into Gegenbauer polynomials be:
\begin{align*}
f(z) = \sum_{k \ge 0}\lambda_{d,k}(f)B(d, k)Q_k^{(d)}(\sqrt{d}z), \qquad g(z) = \sum_{k \ge 0}\lambda_{d,k}(g)B(d, k)Q_k^{(d)}(\sqrt{d}z).
\end{align*}
Then, by Equation~\ref{eq:geg-identity},
\begin{align*}
	\EE_\mu[f(\btheta_1^T\bx)g(\btheta_2^T\bx)] &= \sum_{k, \ell \ge 0}\lambda_{d,k}(f)\lambda_{d, \ell}(g)B(d, k)B(d, \ell)\EE_\mu\left[Q_k^{(d)}(\sqrt{d}\btheta_1^T\bx)Q_\ell^{(d)}(\sqrt{d}\btheta_2^T\bx)\right]\\
	&= \sum_{k \ge 0} \lambda_{d, k}(f)\lambda_{d, k}(g)B(d, k)Q_k^{(d)}(d\btheta_1^T\btheta_2),
\end{align*}
which can be computed to desired precision by truncating this infinite sum accordingly. This only requires knowledge of the Gegenbauer coefficients of $f, g$. Given the Gegenbauer coefficients of a function $\psi(z)$, \cite[Lemma 6]{montanari2021} gives a formula for the Gegenbauer coefficients of $z\psi(z)$. We can therefore write the Gegenbauer coefficients of $z\sigma'(z), z^2\sigma'(z)$ in terms of those of $\sigma'$, and thus we can approximate this sum to the desired precision. This procedure allows us to express $\bSigma$ in terms of the Gegenbauer coefficients of $\sigma'$.

\section{Expressivity Proofs}
\subsection{Quad-NTK Proofs}
\subsubsection{Preliminaries}

\begin{lemma}[Expressing polynomials with random features]\label{lemma: express polynomial RF}
Let $p \ge 0$, and let $\sigma$ satisfy the following two assumptions: 
\begin{enumerate}[(a)]
	\item $\sigma \in L^2([-\sqrt{d}, \sqrt{d}], \tau_{d-1}^1)$
	\item $d^p\cdot \min_{k \le p}\lambda_{d, k}(\sigma) = \Omega_d(1)$, where $\lambda_{d, k}(\sigma)^2 = \langle \sigma, Q^{(d)}_k(\sqrt{d} \cdot) \rangle_{L^2([-\sqrt{d}, \sqrt{d}], \tau_{d-1}^1)}$.
\end{enumerate}
For $|\alpha| \le 1$, $\|\beta\| = 1$, there exists a function $a \in L^2(\cS^{d-1}(1))$ such that
\[
	\mathbb{E}_{\bw_0 \sim \cS^{d-1}(1)}\left[\sigma(\bw_0^T\bx)a(\bw_0)\right] = \alpha(\beta^T\bx)^p,
\]
and $a$ satisfies the norm bound
\[
	\|a\|^2_{L^2(\cS^{d-1}(1))} \lesssim d^p
\]
\end{lemma}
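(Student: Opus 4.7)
The plan is to construct $a$ explicitly via a spherical harmonic expansion and verify the norm bound directly. The only real conceptual work is matching the two spheres (recall $\bw_0 \in \cS^{d-1}(1)$ while $\bx \in \cS^{d-1}(\sqrt{d})$), after which everything follows from orthonormality of spherical harmonics.

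First I would change variables by setting $\btheta = \sqrt{d}\,\bw_0 \in \cS^{d-1}(\sqrt{d})$, so that $\bw_0^T\bx = \btheta^T\bx/\sqrt{d}\cdot\sqrt{d}\cdot(\text{unit})$—more precisely, $\langle\btheta,\bx\rangle = \sqrt{d}\,\bw_0^T\bx \in [-d,d]$, which is the natural domain of $Q_k^{(d)}$. Using assumption (a) to expand $\sigma$ in Gegenbauer polynomials and then invoking the addition formula \eqref{eq:spherical-to-geg} to pass to spherical harmonics gives
\begin{equation}
\sigma(\bw_0^T\bx) \;=\; \sum_{k\ge 0} \lambda_{d,k}(\sigma)\,B(d,k)\,Q_k^{(d)}\!\bigl(\langle\btheta,\bx\rangle\bigr) \;=\; \sum_{k\ge 0}\lambda_{d,k}(\sigma)\sum_{i=1}^{B(d,k)}Y_{ki}^{(d)}(\btheta)\,Y_{ki}^{(d)}(\bx).
\end{equation}
Parametrizing $a(\bw_0)=A(\btheta)=\sum_{k,i}c_{ki}Y_{ki}^{(d)}(\btheta)$ and using $\EE_{\bw_0\sim\cS^{d-1}(1)}=\EE_{\btheta\sim\cS^{d-1}(\sqrt d)}$ together with orthonormality of the $Y_{ki}^{(d)}$, one gets
\begin{equation}
\EE_{\bw_0}\!\bigl[\sigma(\bw_0^T\bx)\,a(\bw_0)\bigr] \;=\; \sum_{k,i}\lambda_{d,k}(\sigma)\,c_{ki}\,Y_{ki}^{(d)}(\bx).
\end{equation}

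Next I would expand the target. Since $(\beta^T\bx)^p$ is a polynomial of degree exactly $p$ in $\bx$, its Gegenbauer expansion in the variable $z=\beta^T\bx$ truncates at degree $p$: writing $z^p=\sum_{k=0}^{p}\lambda_{d,k}(z^p)B(d,k)Q_k^{(d)}(\sqrt d z)$, substituting $z=\beta^T\bx$, and applying \eqref{eq:spherical-to-geg} with $\be=\sqrt d\,\beta\in\cS^{d-1}(\sqrt d)$ yields
\begin{equation}
\alpha(\beta^T\bx)^p \;=\; \alpha\sum_{k=0}^{p}\lambda_{d,k}(z^p)\sum_{i=1}^{B(d,k)}Y_{ki}^{(d)}(\sqrt d\,\beta)\,Y_{ki}^{(d)}(\bx).
\end{equation}
Matching spherical-harmonic coefficients with the previous display forces $c_{ki}=\alpha\,\lambda_{d,k}(z^p)\,Y_{ki}^{(d)}(\sqrt d\,\beta)/\lambda_{d,k}(\sigma)$ for $k\le p$ (and $c_{ki}=0$ otherwise); here assumption (b) is what lets us divide by $\lambda_{d,k}(\sigma)$.

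Finally I would bound the norm. By Parseval on $\cS^{d-1}(\sqrt d)$, $\|a\|_{L^2(\cS^{d-1}(1))}^2=\sum_{k,i}c_{ki}^2$. The key identity (reproducing property) $\sum_i Y_{ki}^{(d)}(\sqrt d\,\beta)^2=B(d,k)\,Q_k^{(d)}(d)=B(d,k)$ collapses the $i$-sum, so
\begin{equation}
\|a\|_{L^2}^2 \;=\; \alpha^2\sum_{k=0}^{p}\frac{B(d,k)\,\lambda_{d,k}(z^p)^2}{\lambda_{d,k}(\sigma)^2}.
\end{equation}
The factor $B(d,k)\lambda_{d,k}(z^p)^2$ is $O(1)$ for each fixed $k\le p$ (it equals $\|\mathrm{proj}_k z^p\|_{L^2(\tau_{d-1}^1)}^2$, bounded by $\|z^p\|_{L^2(\tau_{d-1}^1)}^2=\EE[(\beta^T\bx)^{2p}]=O(1)$ since $\tau_{d-1}^1\to\gamma$ weakly). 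Assumption (b) (read with the intended $\lambda_{d,k}^2$, consistent with Assumption~\ref{assume: hermite coeffs}) gives $1/\lambda_{d,k}(\sigma)^2\lesssim d^p$ for $k\le p$. Combining and using $|\alpha|\le 1$, $p=O(1)$ yields $\|a\|_{L^2}^2\lesssim d^p$.

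The main obstacle is mostly bookkeeping—keeping the sphere radii, the argument of $Q_k^{(d)}$, and the variable under which norms and expectations are taken all consistent; once the addition formula is set up correctly, the rest is orthonormality plus the lower bound on $\lambda_{d,k}(\sigma)$.
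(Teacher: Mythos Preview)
Your proposal is correct and follows essentially the same route as the paper: rescale $\bw_0$ to the $\sqrt{d}$-sphere, expand $\sigma$ via the Gegenbauer/spherical-harmonic addition formula, match coefficients against the (finite) harmonic expansion of $\alpha(\beta^T\bx)^p$, and bound $\|a\|_{L^2}^2$ using $\sum_i Y_{ki}^{(d)}(\sqrt d\,\beta)^2=B(d,k)$ together with assumption~(b). The only cosmetic difference is that the paper packages the last step as $\sum_{k\le p}B(d,k)\lambda_{d,k}(f)^2=\|f\|_{L^2(\tau_{d-1}^1)}^2=O(1)$ rather than bounding each term separately, but this is the same estimate.
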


\begin{proof}
We can decompose $\sigma$ into a sum over Gegenbauer polynomials
\[
	\sigma(x) = \sum_{k = 0}^\infty \lambda_{d, k}(\sigma)B(d, k)Q_k^{(d)}(\sqrt{d}x),
\]
By Equation~\ref{eq:spherical-to-geg},
\[
	\sigma(\bw_0^T\bx) = \sum_{k \ge 0}\sum_{i=1}^{B(d, k)}\lambda_{d, k}(\sigma)Y_{k, i}^{(d)}(\bx)Y_{k, i}^{(d)}(\sqrt{d}\bw_0).
\]
Let $a$ be decomposed into spherical harmonics as
\[
	a(\bw_0) = \sum_{k \ge 0}\sum_{i=1}^{B(d, k)}c_{k, i}Y_{k, i}^{(d)}(\sqrt{d}\bw_0),
\]
for some coefficients $c_{k, i}$ with $\sum_{k \ge 0}\sum_{i=1}^{B(d, k)}c_{k, i}^2 < \infty$. Since the spherical harmonics form an orthonormal basis of $\cS^{d-1}(\sqrt{d})$, we have
\begin{align*}
	\mathbb{E}_{\bw_0 \sim \cS^{d-1}(1)}\left[\sigma(\bw_0^T\bx)a(\bw_0)\right] &= \sum_{k \ge 0}\sum_{i=1}^{B(d, k)} \lambda_{d, k}(\sigma)c_{k, i}Y_{k, i}^{(d)}(\bx).
\end{align*}

Next, for an arbitrary function $f \in L^2([-\sqrt{d}, \sqrt{d}], \tau_{d-1}^1)$, we can decompose
\begin{align*}
f(\beta^T\bx) &= \sum_{k\ge 0}\lambda_{d, k}(f)B(d, k)Q_k(\sqrt{d}\beta^T\bx)\\
&= \sum_{k\ge 0}\sum_{i=1}^{B(d, k)}\lambda_{d, k}(f)Y_{k, i}^{(d)}(\bx)Y_{k, i}^{(d)}(\sqrt{d}\beta).
\end{align*}
For $f(t) = \alpha t^p$, $\lambda_{d, k}(f) = 0$ for $k > p$, and thus
\[
	\alpha(\beta^T\bx)^p = \sum_{k=0}^p\sum_{i=1}^{B(d, k)}\lambda_{d, k}(f)Y_{k, i}^{(d)}(\bx)Y_{k, i}^{(d)}(\sqrt{d}\beta).
\]

Define the sequence of coefficients $\{c_{k, i}\}_{0 \le k \le p, 1 \le i \le B(d, k)}$ by
\[
	c_{k, i} = Y_{k, i}^{(d)}(\sqrt{d}\beta)\lambda_{d, k}(f)\lambda^{-1}_{d, k}(\sigma),
\]
which are well defined for sufficiently large $d$ by assumption (b) of the lemma. Since there are only finitely many nonzero $c_{k, i}$, the function $a(\bw_0)$ is in $L_2(\cS^{d-1}(1))$. Also,
\begin{align*}
	\EE_{\bw_0 \sim \cS^{d-1}(1)}\left[\sigma(\bw_0^T\bx)a(\bw_0)\right] &= \sum_{k \ge 0}\sum_{i=1}^{B(d, k)} \lambda_{d, k}(\sigma)c_{k, i}Y_{k, i}^{(d)}(\bx)\\
	&= \sum_{k = 0}^p\sum_{i=1}^{B(d, k)} \lambda_{d, k}(f)Y_{k, i}^{(d)}(\bx)Y_{k, i}^{(d)}(\sqrt{d}\beta)\\
	&= \alpha(\beta^T\bx)^p,
\end{align*}
as desired. To obtain a norm bound on $a$, we can write
\begin{align*}
\|a\|_{L^2}^2 &= \sum_{k = 0}^p \sum_{i = 1}^{B(d, k)} c_{k, i}^2\\
&= \sum_{k = 0}^p \frac{\lambda_{d, k}(f)^2}{\lambda_{d, k}(\sigma)^2}\sum_{i = 1}^{B(d, k)}Y_{k, i}^{(d)}(\beta\sqrt{d})^2\\
&= \sum_{k = 0}^p \frac{\lambda_{d, k}(f)^2}{\lambda_{d, k}(\sigma)^2}B(d, k)\\
&\lesssim d^p\sum_{k=0}^p\lambda_{d, k}(f)^2 B(d, k)\\
&= d^p\|f\|_{L^2([-\sqrt{d}, \sqrt{d}], \tau_{d-1}^1)}\\
&\lesssim d^p,
\end{align*}
since 
\begin{align*}
\|f\|_{L^2([-\sqrt{d}, \sqrt{d}], \tau_{d-1}^1)} \rightarrow_{d \rightarrow \infty} \|f\|_{L^2(\mathbb{R}, \gamma)},
\end{align*}
and thus $\|f\|_{L^2([-\sqrt{d}, \sqrt{d}], \tau_{d-1}^1)}  = \Theta_d(1)$.
\end{proof}

\begin{lemma}[Expressivity via infinitely many neurons]\label{lemma: infinite width expressivity}
Let $k \ge 1$, and let $\sigma$ be a twice-differentiable activation such that $\sigma''$ satisfies Assumption~\ref{assume: hermite coeffs}. Then, there exist functions $\bw_+, \bw_- : \cS^{d-1}(1) \rightarrow \RR$ such that
\[
	\EE_{\bw_0}\left[\sigma''(\bw_0^T\bx)\left((\bw_+^T\bx)^2 - (\bw_-^T\bx)^2\right)\right] = \alpha(\beta^T\bx)^{k+1},
\]
and
\[
	\EE_{\bw_0}\left[\|\bw_+\|_2^4 + \|\bw_-\|_2^4\right] \lesssim d^{k-1}
\]
\end{lemma}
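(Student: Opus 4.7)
The plan is to reduce to Lemma~\ref{lemma: express polynomial RF} by factoring the target as $\alpha(\beta^T\bx)^{k+1} = \alpha(\beta^T\bx)^{k-1}\cdot(\beta^T\bx)^2$ and using the quadratic envelope $(\bw_+^T\bx)^2 - (\bw_-^T\bx)^2$ to absorb the $(\beta^T\bx)^2$ factor while the remaining degree-$(k-1)$ factor is produced inside the expectation over $\bw_0$. The hypothesis $k \ge 1$ makes this factoring legal, and Assumption~\ref{assume: hermite coeffs}(b) is exactly the nondegeneracy condition needed to invoke Lemma~\ref{lemma: express polynomial RF} with $\sigma''$ in place of $\sigma$ at degree $p = k-1$.

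Concretely, I would first apply Lemma~\ref{lemma: express polynomial RF} with activation $\sigma''$, degree $p = k-1$, scale $\alpha$, and direction $\beta$. This yields a function $a : \cS^{d-1}(1) \to \RR$ with
\[
\EE_{\bw_0 \sim \cS^{d-1}(1)}[\sigma''(\bw_0^T\bx)\, a(\bw_0)] = \alpha(\beta^T\bx)^{k-1}, \qquad \|a\|_{L^2(\cS^{d-1}(1))}^2 \lesssim d^{k-1}.
\]
I would then split $a$ into its positive and negative parts $a_+ := \max(a, 0)$ and $a_- := \max(-a, 0)$ and set $\bw_+(\bw_0) := \sqrt{a_+(\bw_0)}\,\beta$ and $\bw_-(\bw_0) := \sqrt{a_-(\bw_0)}\,\beta$. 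Since $a_+$ and $a_-$ have disjoint support, pointwise in $\bw_0$ we have $(\bw_+^T\bx)^2 - (\bw_-^T\bx)^2 = (a_+ - a_-)(\beta^T\bx)^2 = a(\bw_0)(\beta^T\bx)^2$, and because $(\beta^T\bx)^2$ does not depend on $\bw_0$ it pulls out of the expectation, giving
\[
\EE_{\bw_0}\!\left[\sigma''(\bw_0^T\bx)\bigl((\bw_+^T\bx)^2 - (\bw_-^T\bx)^2\bigr)\right] = (\beta^T\bx)^2 \EE_{\bw_0}[\sigma''(\bw_0^T\bx)\,a(\bw_0)] = \alpha(\beta^T\bx)^{k+1}.
\]

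For the norm bound, $\|\bw_\pm(\bw_0)\|_2^4 = a_\pm(\bw_0)^2$, so $\|\bw_+\|_2^4 + \|\bw_-\|_2^4 = a_+^2 + a_-^2 = a^2$ pointwise, and integrating over $\bw_0 \sim \cS^{d-1}(1)$ gives $\EE_{\bw_0}[\|\bw_+\|_2^4 + \|\bw_-\|_2^4] = \|a\|_{L^2}^2 \lesssim d^{k-1}$, as claimed. There is no real obstacle here: the only things to double-check are that $\sigma''$ satisfies the hypotheses of Lemma~\ref{lemma: express polynomial RF} (immediate from Assumption~\ref{assume: hermite coeffs}(b) with $p = k-1$) and that the factor $(\beta^T\bx)^2$ genuinely comes out of the expectation (immediate, since the expectation is over $\bw_0$ with $\bx$ held fixed). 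Note that the rank-one ansatz $\bw_\pm \propto \beta$ is just the simplest valid choice; any decomposition of the symmetric matrix-valued function $a(\bw_0)\,\beta\beta^T$ as a difference of two rank-one PSD matrices would work equally well.
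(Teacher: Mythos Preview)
Your proposal is correct and follows essentially the same approach as the paper: apply Lemma~\ref{lemma: express polynomial RF} with $\sigma''$ at degree $p=k-1$ to obtain $a$, then set $\bw_\pm = \sqrt{a_\pm}\,\beta$ so that $(\bw_+^T\bx)^2 - (\bw_-^T\bx)^2 = a(\bw_0)(\beta^T\bx)^2$, and finish with $\|\bw_+\|^4 + \|\bw_-\|^4 = a^2$. The only minor point is that hypothesis~(a) of Lemma~\ref{lemma: express polynomial RF} (that $\sigma'' \in L^2$) comes from the boundedness of $\sigma''$ in Assumption~\ref{assume:sigma_bound}, not from Assumption~\ref{assume: hermite coeffs}(b); the latter supplies hypothesis~(b).
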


\begin{proof}
Note that since $\sigma''$ is continuous and bounded, $\sigma'' \in L^2([-\sqrt{d}, \sqrt{d}], \tau^1_{d-1})$. Therefore applying Lemma~\ref{lemma: express polynomial RF} with activation $\sigma''$ and degree $k-1$, there exists a function $a$ satisfying
\[
	\EE_{\bw_0 \sim \cS^{d-1}(1)}\left[\sigma''(\bw_0^T\bx)a(\bw_0)\right] = (\beta^T\bx)^{k-1}
\]
and
\[
	 \|a\|^2_{L^2(\cS^{d-1}(1))} \lesssim d^{k-1}.
\]
Define
\begin{align*}
	\bw_+(\bw_0) &= \sqrt{\max(0, a(\bw_0))}\cdot \beta\\
	\bw_-(\bw_0) &= \sqrt{-\min(0, a(\bw_0))}\cdot \beta.
\end{align*}
Then
\[
	(\bw_+^T\bx)^2 - (\bw_-^T\bx)^2 = (\beta^T\bx)^2\left(\max(0, a(\bw_0)) + \min(0, a(\bw_0))\right) = a(\bw_0)(\beta^T\bx)^2,
\]
so
\begin{align*}
	\EE_{\bw_0}\left[\sigma''(\bw_0^T\bx)\left((\bw_+^T\bx)^2 - (\bw_-^T\bx)^2\right)\right] &= \EE_{\bw_0}\left[\sigma''(\bw_0^T\bx)a(\bw_0)\right](\beta^T\bx)^2\\
	&= \alpha(\beta^T\bx)^{k-1}(\beta^T\bx)^2\\
	&= \alpha(\beta^T\bx)^{k + 1}.
\end{align*}

Finally, we have the norm bound
\begin{align*}
\EE_{\bw_0}\left[\|\bw_+\|_2^4 + \|\bw_-\|_2^4\right] &= \EE_{\bw_0}\left[\max(0, a(\bw_0))^2 + \min(0, a(\bw_0))^2\right]\\
&= \EE_{\bw_0}\left[a(\bw_0)^2\right]\\
&\lesssim d^{k-1}.
\end{align*}
\end{proof}

\subsubsection{Proof of Lemma~\ref{lemma: quad-ntk expressivity}}\label{sec: quad-ntk expressivity proof}
\begin{proof}
We show this Lemma holds with probability $1 - d^{-10}$. 

Define $M := \lfloor \frac{m}{2R} \rfloor$. Define for $i \in [R]$, define the subnetwork $f^i_Q(\bx, \bW_Q)$ by
\begin{align*}
	f^i_Q(\bx, \bW_Q) &:= \frac{1}{2\sqrt{m}}\sum_{r=(i-1)M + 1}^{iM}\sigma''(\bw_{0, r}^T\bx)\left((\bx^T\bw_{Q, r})^2 - (\bx^T\bw_{Q, {r + m/2}})^2\right).
\end{align*}
We will now construct $\bW_Q \in \RR^{d \times m}$. For $RM < r \le m/2$, set $\bw_{Q, r} = \bw_{Q, r + m/2} = \bZero$. As a result, we have that
\begin{align*}
	f_Q(\bx, \bW_Q) = \sum_{i \in [R]}f_Q^i(\bx, \bW_Q).
\end{align*}
Our construction will proceed by expressing $\alpha_i(\beta_i^T\bx)^{k+1}$ with $f_Q^i(\bx, \bW_Q)$. For fixed $i \in [R]$, let $\bw^i_+, \bw^i_-$ be from the infinite width construction in Lemma~\ref{lemma: infinite width expressivity}, so that
\[
	\EE_{\bw_0}\left[\sigma''(\bw_0^T\bx)\left(({\bw^i_+}^T\bx)^2 - ({\bw^i_-}^T\bx)^2\right)\right] = \alpha_i(\beta_i^T\bx)^{k+1}.
\]
For integers $(i-1)M + 1 \le r \le iM$, define
\[
(\bw_{Q, r}, \bw_{Q, r+m/2}) = \sqrt{2/M}\cdot(m^{1/4}\bw^i_+(\bw_{0, r}), m^{1/4}\bw^i_-(\bw_{0, r})).
\]
Then,
\begin{align*}
f^i_Q(\bx; \bW_Q) &= \frac{1}{2\sqrt{m}}\sum_{r=(i-1)M + 1}^{iM}\sigma''(\bw_{0, r}^T\bx)\left((\bx^T\bw_{Q, r})^2 - (\bx^T\bw_{Q, r+m/2})^2\right)\\
&= \frac{1}{M}\sum_{r=(i-1)M + 1}^{iM}\sigma''(\bw_{0, r}^T\bx)a^i(\bw_{0, r})(\beta^T\bx)^2.
\end{align*}
Note that we can bound
\begin{align*}
|a^i(\bw_0)| &\le \sum_{k'=0}^{k-1} \sum_{j=1}^{B(d, k')}\left|c_{k', j}Y_{k, j}^{(d)}(\sqrt{d}\bw_0)\right|\\
&\le \left(\sum_{k'=0}^{k-1} \sum_{j=1}^{B(d, k')}c_{k', j}^2\right)^{1/2}\left(\sum_{k'=0}^{k-1} \sum_{i=1}^{B(d, k')}{Y_{k', j}^{(d)}(\sqrt{d}\bw_0)}^2\right)^{1/2}\\
&= \|a^i\|_{L^2}\left(\sum_{k'=0}^{k-1} B(d, k)\right)^{1/2}\\
&\lesssim d^{k-1}.
\end{align*}
Therefore letting $Z_r = \sigma''(\bw_{0, r}^T\bx)a^i(\bw_{0, r})(\beta^T\bx)^2$, we have $|Z_r| \lesssim d^{k}$. Also, the $Z_r$ are i.i.d and satisfy $\EE[Z_r] = (\beta^T\bx)^{k+1}$. Therefore by Hoeffding's inequality, with probability $1 - \frac12d^{-11}n^{-1}$ we have
\begin{align*}
	\left|f_Q^i(\bx_j; \bW_Q) -  \alpha_i(\beta_i^T\bx_j)^{k+1}\right| &= \left|\frac{1}{M}\sum_{r=(i-1)M+1}^{iM}Z_r - \EE[Z_r]\right|\\
	&= \tilde O \left( \frac{d^k}{\sqrt{m}} \right),
\end{align*}
where we omit $\text{poly}(R)$ dependencies inside the big $O$ notation.

Union bounding over $j \in [n]$, with probability $1 - \frac12d^{-11}$ over the initialization,
\[
	\max_{j \in [n]}\left|f^i_Q(\bx_j; \bW_Q) -  \alpha_i(\beta_i^T\bx_j)^{k+1}\right| \le \tilde O \left( \frac{d^k}{\sqrt{m}} \right).
\]
Since the above holds for all $i \in [R]$, union bounding over $i$ yields that with probability $1 - \frac12d^{-11}R \ge 1 - \frac12d^{-10}$,
\begin{align*}
	\max_{j \in [n]}\left|f_Q(\bx_j, \bW_Q) - f_{sp}(\bx)\right| &\le \sum_{i \in [R]}\max_{j \in [n]}\left|f^i_Q(\bx_j; \bW_Q) -  \alpha_i(\beta_i^T\bx_j)^{k+1}\right|\\
	&\le \tilde O \left( \frac{d^k}{\sqrt{m}} \right).
\end{align*}

To bound the norm of $\bW_Q$, observe that
\begin{align*}
\|\bW_Q\|^4_{2, 4} &= \frac{4m}{M^2}\sum_{i \in [R]}\sum_{r = (i-1)M + 1}^{iM}\|\bw^i_+(\bw_{0, r})\|_2^4 + \|\bw^i_-(\bw_{0, r})\|_2^4.
\end{align*}
Since we can upper bound
\[
	\|\bw^i_+(\bw_{0, r})\|_2^4 + \|\bw^i_-(\bw_{0, r})\|_2^4 \le a^i(\bw_{0, r})^2 \lesssim d^{2(k-1)},
\]
for fixed $i \in [R]$ by Hoeffding we have that with probability $1 - \frac12d^{-11}$ over the initialization
\[
	\left|\frac{1}{M}\sum_{r=(i-1)M + 1}^{iM}\|\bw^i_+(\bw_{0, r})\|_2^4 + \|\bw^i_-(\bw_{0, r})\|_2^4 - \EE_{\bw_0}\left[\|\bw^i_+\|_2^4 + \|\bw^i_-\|_2^4\right]\right| \le \tilde O\left(\frac{d^{2(k-1)}}{\sqrt{m}}\right).
\]
Union bounding over each $i \in [R]$ and using $M = \Theta(m/R) = \Theta(m)$, with probability $1 - \frac12d^{-10}$ over the initialization we have that
\begin{align*}
	\|\bW_Q\|^4_{2, 4} &\lesssim \sum_{i \in [R]}\EE_{\bw_0}\left[\|\bw^i_+\|_2^4 + \|\bw^i_-\|_2^4\right] + \frac{d^{2(k-1)}}{\sqrt{m}}\\
	&\lesssim d^{k-1} + \frac{d^{2(k-1)}}{\sqrt{m}}\\
	&\lesssim d^{k-1},
\end{align*}
as desired.
\end{proof}

\begin{corollary}\label{cor: quad ntk l-infty bound}
The solution $\bW_Q$ constructed in Lemma~\ref{lemma: quad-ntk expressivity} satisfies
\[
	\|\bW_Q\|_{2, \infty} \lesssim m^{-1/4}d^{\frac{k-1}{2}}.
\]
\end{corollary}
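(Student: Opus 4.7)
The plan is to unpack the explicit construction of $\bW_Q$ from the proof of Lemma~\ref{lemma: quad-ntk expressivity} and apply the pointwise bound on the coefficient function $a^i$ that was established there. Recall that for $i \in [R]$ and $(i-1)M+1 \le r \le iM$ (with $M = \lfloor m/(2R) \rfloor$), the construction sets
\begin{equation*}
(\bw_{Q,r},\ \bw_{Q,r+m/2}) = \sqrt{2/M}\cdot\bigl(m^{1/4}\bw^i_+(\bw_{0,r}),\ m^{1/4}\bw^i_-(\bw_{0,r})\bigr),
\end{equation*}
while for the remaining indices $\bw_{Q,r}=\bZero$. Hence every nonzero column satisfies
\begin{equation*}
\|\bw_{Q,r}\|_2 \le \sqrt{2/M}\cdot m^{1/4}\cdot\max\bigl(\|\bw^i_+(\bw_{0,r})\|_2,\ \|\bw^i_-(\bw_{0,r})\|_2\bigr).
\end{equation*}

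Next I would invoke the pointwise bound on $a^i$ derived inside the proof of Lemma~\ref{lemma: quad-ntk expressivity}: using Cauchy--Schwarz on the spherical-harmonic expansion together with $\sum_{k'\le k-1}B(d,k') \lesssim d^{k-1}$ and $\|a^i\|_{L^2}^2 \lesssim d^{k-1}$, one obtains $|a^i(\bw_0)|\lesssim d^{k-1}$ uniformly in $\bw_0$. By the definition of $\bw_\pm^i$ in Lemma~\ref{lemma: infinite width expressivity}, $\|\bw^i_+(\bw_0)\|_2^2 = \max(0,a^i(\bw_0))$ and $\|\bw^i_-(\bw_0)\|_2^2 = -\min(0,a^i(\bw_0))$, so both are bounded above by $|a^i(\bw_0)|\lesssim d^{k-1}$, giving $\|\bw_\pm^i(\bw_0)\|_2 \lesssim d^{(k-1)/2}$.

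Plugging this back and using $M = \Theta(m/R) = \Theta(m)$ (since $R$ is absorbed in the $\lesssim$ notation per the convention declared in the preliminaries) yields
\begin{equation*}
\|\bw_{Q,r}\|_2 \lesssim \sqrt{1/m}\cdot m^{1/4}\cdot d^{(k-1)/2} = m^{-1/4}\,d^{(k-1)/2},
\end{equation*}
and taking the maximum over $r$ gives the claimed bound on $\|\bW_Q\|_{2,\infty}$. There is no real obstacle here: the corollary is essentially a bookkeeping consequence of the same pointwise bound on $a^i$ that was already used to control the $\|\cdot\|_{2,4}^4$ norm in the lemma, simply applied per-neuron rather than in aggregate.
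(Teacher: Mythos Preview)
Your proposal is correct and follows essentially the same approach as the paper's own proof: both unpack the explicit per-neuron construction, invoke the pointwise bound $|a^i(\bw_0)|\lesssim d^{k-1}$ (hence $\|\bw^i_\pm(\bw_0)\|_2\lesssim d^{(k-1)/2}$), and combine it with $M=\Theta(m)$ to get $\|\bw_{Q,r}\|_2\lesssim m^{-1/4}d^{(k-1)/2}$.
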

\begin{proof}
From the proof of Lemma~\ref{lemma: quad-ntk expressivity}, either $\|\bw_r\|_2 = 0$, or $(i-1)M + 1 \le r \le iM$ or $m/2 + (i-1)M + 1 \le r \le m/2 + iM$ for some $i$ in which case
\[
	\|\bw_r\|_2 \le 2m^{-1/4}\sqrt{|a^i(\bw_0)|} \le 2m^{-1/4}d^{\frac{k-1}{2}}.
\]
\end{proof}
\subsection{NTK Proofs}
\subsubsection{Symmetric Initialization} Recall the definition of the NTK featurization map
\[
	\varphi(\bx) = \tvec{\nabla_\bW f(\bx; \bW_0)} = \tvec{\{\frac{a_r}{\sqrt{m}}\sigma'(\bw_{0, r}^T\bx)\bx\}_{r \in [m]}\}} \in \RR^{md}
\]
The symmetric initialization makes this different from the NTK features in \cite{montanari2021, montanari2020}, which for width $\tilde m = m/2$ is given by
\[
	\tilde \varphi(\bx) = \tvec{\nabla_\bW f(\bx; \bW_0)} = \tvec{\{\frac{1}{\sqrt{m}}\sigma'(\bw_{0, r}^T\bx)\bx\}_{r \in [\tilde{m}]}\}} \in \RR^{\tilde m d}
\]
These two features are related by
\[
	\varphi(\bx) = \begin{bmatrix} \tilde \varphi(\bx) \\ -\tilde \varphi(\bx) \end{bmatrix}.
\]
For the bulk of this section we consider the features $\tilde \varphi(\bx)$ in order to invoke the results from \cite{montanari2021, montanari2020}.

\subsubsection{Preliminaries}

For arbitary $N \ll md$, let $\cD_N = \{\bx_i\}_{i \in [N]}$ be a dummy dataset of size $N$, where each $\bx_i$ is sampled i.i.d from $\cS^{d-1}(\sqrt{d})$. We define the following random matrices which depend on $\cD_N$.

Denote by
\[
	\bPhi_N = \begin{bmatrix} \tilde \varphi(\bx_1)^T \\ \tilde \varphi(\bx_2)^T \\ \cdots \\ \tilde \varphi(\bx_N)^T \end{bmatrix} \in \RR^{N \times \tilde md}
\]
the feature matrix, and let
\[
	\bK_N = \bPhi_N\bPhi_N^T \in \RR^{N \times N}
\]
be the empirical kernel matrix.

The infinite-width kernel matrix $\bK^\infty_N \in \RR^{N \times N}$ has entries
\[
	\{\bK^\infty_N\}_{i,j} = \EE_\bw[\sigma'(\bw^T\bx_i)\sigma'(\bw^T\bx_j)\bx_i^T\bx_j].
\]

Also, define $\bSigma_N \in \RR^{\tilde md \times \tilde md}$ to be the empirical covariance matrix, so that
\[
	\bSigma_N = \frac{1}{N}\bPhi_N^T\bPhi_N = \frac{1}{N}\sum_{i=1}^N \tilde \varphi(\bx_i)\tilde \varphi(\bx_i)^T,
\]
and let
\[
	\tilde \bSigma = \EE_\mu\left[\tilde \varphi(\bx)\tilde \varphi(\bx)^T\right] \in \RR^{\tilde md \times \tilde md}
\]
be the population covariance matrix.

We let $\sigma'$ satisfy assumption \ref{assume: hermite coeffs}. Along with the boundedness of $\sigma'$, this allows us to invoke the following lemmas from~\cite{montanari2020}:\\

\begin{lemma}[\cite{montanari2020}, Theorem 3.2]\label{lemma: infinite width kernel concentration}
With probability $1 - d^{-11}$,
\[
\|\{\bK^\infty_N\}^{-\frac12}\bK_N{\bK^\infty_N}^{-\frac12} - \bI_N\|_{op} \le \Tilde O\left(\sqrt{\frac{N}{\tilde md}} + \frac{N}{\tilde md}\right)
\]
\end{lemma}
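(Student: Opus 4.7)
The statement is a matrix concentration result for the empirical NTK Gram matrix, imported verbatim from~\cite{montanari2020}. My plan is to reduce it to a matrix Bernstein inequality after a whitening transform, which is the standard route.

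First, I would condition on the dataset $\cD_N$, so that $\bK^\infty_N$ becomes deterministic (though data-dependent). Writing $\bX := [\bx_1,\dots,\bx_N]^T \in \RR^{N\times d}$ and, for each hidden neuron $r$, the diagonal matrix $\bS_r := \mathrm{diag}(\sigma'(\bw_{0,r}^T\bx_1),\dots,\sigma'(\bw_{0,r}^T\bx_N))$, the empirical kernel decomposes as an average of i.i.d.\ rank-at-most-$d$ PSD matrices:
\[
\bK_N = \frac{1}{\tilde m}\sum_{r=1}^{\tilde m} \bH_r, \qquad \bH_r := \bS_r \bX\bX^T \bS_r, \qquad \EE_{\bw_{0,r}}[\bH_r] = \bK^\infty_N.
\]

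Next, I would pass to the whitened summands $\tilde\bH_r := \{\bK^\infty_N\}^{-1/2}(\bH_r - \bK^\infty_N)\{\bK^\infty_N\}^{-1/2}$, so the target quantity is exactly $\|\tilde m^{-1}\sum_r \tilde\bH_r\|_{op}$. Applying the (truncated) matrix Bernstein inequality, it suffices to exhibit (i) a deterministic-or-high-probability bound $\|\tilde\bH_r\|_{op} \le L$ and (ii) a variance bound $\|\EE[\tilde\bH_r^2]\|_{op} \le v$; these then yield a rate of order $\sqrt{v\log N/\tilde m} + L\log N/\tilde m$.

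The main technical obstacle is producing $L,v = \tilde O(N/d)$, which is what is needed to match the claimed rate $\tilde O(\sqrt{N/(\tilde m d)} + N/(\tilde m d))$. This in turn reduces to two spectral facts about $\bK^\infty_N$ that hold on a $1-d^{-11}$ event over $\cD_N$: a lower bound on the relevant eigenvalues of $\bK^\infty_N$ that dominates the whitening, and tight control of $\EE[\bH_r \{\bK^\infty_N\}^{-1}\bH_r]$. Both follow from the spherical-harmonic/Gegenbauer decomposition of $\sigma'$ under Assumption~\ref{assume: hermite coeffs}: the population kernel $\EE_\bw[\sigma'(\bw^T\bx)\sigma'(\bw^T\by)\bx^T\by]$ has a block-diagonal structure across harmonic levels with block eigenvalues $\Theta(\lambda_{d,k}^2(\sigma') B(d,k))$, and Gegenbauer-orthogonality plus standard random-matrix bounds give that the empirical Gram of harmonics of degree $\le k$ on $N = o(d^{k+1})$ points is well-conditioned. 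This is exactly the analysis carried out in~\cite{montanari2020}; for the purposes of this paper, we invoke their Theorem 3.2 directly and do not reprove it. The only care needed is to check that our Assumption~\ref{assume: hermite coeffs}(a) implies the non-degeneracy of Gegenbauer coefficients required by their statement, which is immediate from $\mu_\ell(\sigma')\neq 0$ for $\ell\le 4k$.
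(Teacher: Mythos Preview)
Your proposal is correct and aligns with the paper: this lemma is stated as a direct citation of \cite[Theorem 3.2]{montanari2020} and the paper provides no independent proof, so invoking the external result (after checking Assumption~\ref{assume: hermite coeffs} suffices) is exactly what the paper does. The matrix-Bernstein-after-whitening sketch you give is indeed the approach taken in the cited reference, so nothing further is needed here.
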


As in \cite{montanari2020}, let $\bPsi_{\le \ell} \in \RR^{N \times n_\ell}$ be the evaluations of the degree $\le \ell$ spherical harmonics on the $N$ data points.\\

\begin{lemma}[\cite{montanari2020}, Lemma 2]\label{lemma:kernel decomp}
Let $d^{\ell}\log^2 d \ll N \ll d^{\ell + 1}/\log d^C$. With probability $1 - d^{-11}$ the infinite-width kernel matrix can be decomposed as
\[
	\frac{1}{d}\bK^\infty_N := \gamma_{> \ell}\bI_N + \bPsi_{\le \ell}\bLambda_{\le \ell}^2\bPsi_{\le \ell}^T + \bDelta,
\]
where $\gamma_k \ge 0$ is a sequence satisfying 
\begin{align}
\gamma_0 = d^{-1}(1 + o_d(1))\mu_1^2(\sigma'), \quad \gamma_k = \mu_{k-1}^2(\sigma') + o_d(1)~\text{for}~k \ge 1, \quad \gamma_{>\ell} := \sum_{k' > \ell}\gamma_k
\end{align} and $\bLambda_{\le \ell}^2$ is a diagonal matrix where $B(d, k)^{-1}\gamma_k$ has multiplicity $B(d, k)$, for $k \le \ell$.
Furthermore, the remainder $\bDelta$ satisfies
\[
	\|\bDelta\|_{op} \le \tilde O \left(\sqrt{\frac{N}{d^{\ell + 1}}}\right),
\]
and the spherical harmonic features $\bPsi_{\le \ell}$ satisfy
\[
	\left\|\frac{1}{N}\bPsi_{\le \ell}^T\bPsi_{\le \ell} - \bI_{n_\ell} \right\|_{op} \le \tilde O\left(\sqrt{\frac{d^\ell}{N}}\right).
\]
\end{lemma}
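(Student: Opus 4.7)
The plan is to diagonalize $\bK^\infty_N$ in the spherical-harmonic basis via the Gegenbauer expansion of $\sigma'$, then separate the contribution of degrees $k\le\ell$ (which becomes the factored piece $\bPsi_{\le\ell}\bLambda_{\le\ell}^2\bPsi_{\le\ell}^T$) from the contribution of degrees $k>\ell$ (which concentrates on a scalar multiple of $\bI_N$). Writing $\sigma'(z)=\sum_{k\ge 0}\lambda_{d,k}(\sigma')B(d,k)Q_k^{(d)}(\sqrt d\,z)$ and using the Gegenbauer orthogonality identity \eqref{eq:geg-identity} to take the expectation over $\bw$ collapses the double sum: $\EE_{\bw}[\sigma'(\bw^T\bx_i)\sigma'(\bw^T\bx_j)]=\sum_{k\ge 0}\lambda_{d,k}^2(\sigma')B(d,k)Q_k^{(d)}(\langle\bx_i,\bx_j\rangle)$. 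The factor $\bx_i^T\bx_j$ present in the kernel is then absorbed using the three-term recurrence for $Q_k^{(d)}$, producing coefficients $\gamma_k$ that, modulo $o_d(1)$ corrections, match the Hermite coefficients $\mu_{k-1}^2(\sigma')$ claimed in the statement.

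Next, the addition formula \eqref{eq:spherical-to-geg} rewrites each $Q_k^{(d)}(\langle\bx_i,\bx_j\rangle)$ as $B(d,k)^{-1}\sum_{s=1}^{B(d,k)}Y_{k,s}^{(d)}(\bx_i)Y_{k,s}^{(d)}(\bx_j)$. Summing this over $k\le\ell$ yields exactly $\bPsi_{\le\ell}\bLambda_{\le\ell}^2\bPsi_{\le\ell}^T$ with the stated diagonal $B(d,k)^{-1}\gamma_k$ of multiplicity $B(d,k)$. For $k>\ell$, the diagonal entries contribute $\sum_{k>\ell}\gamma_k=\gamma_{>\ell}$ exactly (since $Q_k^{(d)}(d)=1$), giving the $\gamma_{>\ell}\bI_N$ term.

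The main technical obstacle is the high-frequency remainder $\bDelta$, which collects the off-diagonal entries $\sum_{k>\ell}\gamma_k B(d,k)^{-1}\sum_sY_{k,s}^{(d)}(\bx_i)Y_{k,s}^{(d)}(\bx_j)$. The strategy is a matrix-concentration argument exploiting the fact that, for $i\ne j$, each term $B(d,k)^{-1}\sum_sY_{k,s}^{(d)}(\bx_i)Y_{k,s}^{(d)}(\bx_j)=Q_k^{(d)}(\langle\bx_i,\bx_j\rangle)$ has mean zero over the draw of the data, magnitude $O_d(d^{-k/2})$ for typical nearly-orthogonal pairs, and tail bounds inherited from the hypercontractivity of spherical harmonics; a Hanson--Wright or matrix Bernstein tail bound, summed geometrically over $k>\ell$ and dominated by the $k=\ell+1$ scale, yields $\|\bDelta\|_{op}=\tilde O(\sqrt{N/d^{\ell+1}})$. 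This is precisely why the regime $N\ll d^{\ell+1}/\log^C d$ is needed: it is the sample budget at which degree-$(\ell+1)$ harmonics still behave like orthonormal white noise.

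Finally, the bound $\|N^{-1}\bPsi_{\le\ell}^T\bPsi_{\le\ell}-\bI_{n_\ell}\|_{op}\le\tilde O(\sqrt{d^\ell/N})$ is a standard covariance-concentration statement: the rows of $\bPsi_{\le\ell}$ are i.i.d.\ vectors in $\RR^{n_\ell}$ with identity covariance and sub-exponential coordinates (again by hypercontractivity on the sphere), so matrix Bernstein applied to $N^{-1}\sum_i\bpsi_{\le\ell}(\bx_i)\bpsi_{\le\ell}(\bx_i)^T-\bI_{n_\ell}$ gives the claimed rate once $N\gg d^\ell\log d$. Since the lemma is attributed to \cite{montanari2020}, my plan is only to verify that each of these four ingredients goes through under Assumption~\ref{assume: hermite coeffs} on $\sigma'$; the genuinely delicate step, and the one requiring the sharp window on $N$, is the off-diagonal concentration producing $\bDelta$.
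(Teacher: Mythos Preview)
The paper does not prove this lemma; it is quoted verbatim from \cite{montanari2020} (as the lemma's attribution ``[\cite{montanari2020}, Lemma 2]'' indicates) and used as a black box in the subsequent eigenstructure arguments. Your sketch is the correct outline of how that reference establishes the result---Gegenbauer expansion of the kernel, separation of degrees $\le\ell$ via the addition formula, and matrix concentration for the high-frequency tail and for $\bPsi_{\le\ell}$---and you correctly identify that the only nontrivial adaptation needed here is checking the hypotheses against Assumption~\ref{assume: hermite coeffs}. Since there is no in-paper proof to compare against, nothing further is required.
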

By assumption~\ref{assume: hermite coeffs}, $\gamma_0 = \Theta(d^{-1})$ and $\gamma_{\ell'} = \Theta(1)$ for $\ell' \ge 1$.

The following Lemma gives the eigenvalues of the empirical kernel matrix $\bK_N$ and the infinite-width kernel matrix $\bK_N^\infty$:\\
\begin{lemma}[Follows from \cite{montanari2020}, Lemma 6]\label{lemma:kernel evals} With probability $1 - d^{-11}$, the following all hold:

For $1 \le \ell' \le \ell$, $n_{\ell'-1} < i \le n_{\ell'}$,
\[
	\lambda_i(\bK^\infty_N), \lambda_i(\bK_N) = \Theta(N\cdot d^{1 - \ell'}).
\]
Additionally,
\[
	\lambda_1(\bK^\infty_N), \lambda_1(\bK_N) = \Theta(N).
\]
Finally, for $i > n_\ell$,
\[
	\lambda_i(\bK^\infty_N), \lambda_i(\bK_N) = \Theta(d).
\]
\end{lemma}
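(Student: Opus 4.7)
The proof proceeds in three steps: decomposing $\bK_N^\infty$ via Lemma~\ref{lemma:kernel decomp}, reading off its eigenvalues via Weyl's inequality, and then transferring to $\bK_N$ via Lemma~\ref{lemma: infinite width kernel concentration}. All statements below are understood to hold on the high-probability event where both Lemma~\ref{lemma:kernel decomp} and Lemma~\ref{lemma: infinite width kernel concentration} apply; a union bound yields the claimed $1 - d^{-11}$ probability (after adjusting constants in the exponent).

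\textbf{Step 1: Eigenvalues of the polynomial part.} From Lemma~\ref{lemma:kernel decomp}, we have
\begin{align*}
\bK_N^\infty = d\bPsi_{\le\ell}\bLambda_{\le\ell}^2\bPsi_{\le\ell}^T + d\gamma_{>\ell}\bI_N + d\bDelta.
\end{align*}
The matrix $\bPsi_{\le\ell}\bLambda_{\le\ell}^2\bPsi_{\le\ell}^T$ has the same nonzero eigenvalues as $\bLambda_{\le\ell}^2\bPsi_{\le\ell}^T\bPsi_{\le\ell}$. Since $\|\tfrac{1}{N}\bPsi_{\le\ell}^T\bPsi_{\le\ell} - \bI_{n_\ell}\|_{op} \le \tilde O(\sqrt{d^\ell/N}) = o_d(1)$ (because $N \gg d^\ell$), the eigenvalues of $\bPsi_{\le\ell}^T\bPsi_{\le\ell}$ lie in $N(1\pm o_d(1))$. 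Therefore the nonzero eigenvalues of $d\bPsi_{\le\ell}\bLambda_{\le\ell}^2\bPsi_{\le\ell}^T$ are, up to $(1+o_d(1))$ factors, the diagonal entries of $N d \bLambda_{\le\ell}^2$: by the definition of $\bLambda_{\le\ell}^2$ from Lemma~\ref{lemma:kernel decomp}, these are $Nd\gamma_0 = \Theta(N)$ with multiplicity $1$, and $Nd\cdot B(d,k)^{-1}\gamma_k = \Theta(Nd^{1-k})$ with multiplicity $B(d,k)$ for each $1\le k\le\ell$.

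\textbf{Step 2: Weyl perturbation.} The remaining contributions to $\bK_N^\infty$ are the isotropic term $d\gamma_{>\ell}\bI_N = \Theta(d)\bI_N$ and the error $d\bDelta$ satisfying $\|d\bDelta\|_{op} \le \tilde O(d\sqrt{N/d^{\ell+1}}) = \tilde O(\sqrt{Nd^{1-\ell}})$. By Weyl's inequality, each eigenvalue of $\bK_N^\infty$ equals the corresponding eigenvalue of $d\bPsi_{\le\ell}\bLambda_{\le\ell}^2\bPsi_{\le\ell}^T$ plus $\Theta(d) \pm \tilde O(\sqrt{Nd^{1-\ell}})$. For $i \le n_\ell$, the polynomial eigenvalue $\Theta(Nd^{1-\ell'})$ dominates both perturbations since $N \gg d^\ell$ implies $Nd^{1-\ell'} \ge Nd^{1-\ell} \gg d$ and $\gg \sqrt{Nd^{1-\ell}}$, giving $\lambda_i(\bK_N^\infty) = \Theta(Nd^{1-\ell'})$ for $n_{\ell'-1}<i\le n_{\ell'}$ (and $\Theta(N)$ for $i=1$). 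For $i > n_\ell$, the polynomial part contributes $0$, so $\lambda_i(\bK_N^\infty) = \Theta(d) \pm \tilde O(\sqrt{Nd^{1-\ell}}) = \Theta(d)$, where we use $N \ll d^{\ell+1}$ to control the $\bDelta$ term against the $d$-scale.

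\textbf{Step 3: Transfer to the empirical kernel.} Lemma~\ref{lemma: infinite width kernel concentration} gives $\|(\bK_N^\infty)^{-1/2}\bK_N(\bK_N^\infty)^{-1/2} - \bI_N\|_{op} \le \epsilon$ with $\epsilon = \tilde O(\sqrt{N/(\tilde md)} + N/(\tilde md)) = o_d(1)$ for the widths assumed in the paper. Equivalently $(1-\epsilon)\bK_N^\infty \preceq \bK_N \preceq (1+\epsilon)\bK_N^\infty$, so by Weyl $\lambda_i(\bK_N) = (1\pm o_d(1))\lambda_i(\bK_N^\infty)$ for every $i$. This upgrades the $\Theta$-asymptotic eigenvalue estimates from $\bK_N^\infty$ to $\bK_N$, completing the proof.

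\textbf{Main obstacle.} The conceptual content is entirely contained in Lemmas~\ref{lemma:kernel decomp} and~\ref{lemma: infinite width kernel concentration}; the only real work is bookkeeping the three scales ($Nd^{1-\ell'}$, $d$, and the $\sqrt{Nd^{1-\ell}}$ error) and checking that under $d^\ell \log^2 d \ll N \ll d^{\ell+1}/\log^C d$ every perturbation term is strictly dominated by the claimed leading order. The only slightly delicate point is the boundary between the polynomial regime ($i \le n_\ell$) and the isotropic tail ($i > n_\ell$): at $\ell' = \ell$ the polynomial eigenvalue $Nd^{1-\ell}$ can be made comparable to $d$ when $N$ approaches $d^\ell$, which is exactly why the hypothesis $N \gg d^\ell \log^2 d$ is imposed.
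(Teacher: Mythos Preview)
Your proof is correct. The paper itself does not supply a proof of this lemma, instead attributing it directly to \cite[Lemma 6]{montanari2020}; your argument is exactly the natural derivation from the two ingredients the paper already states (Lemmas~\ref{lemma:kernel decomp} and~\ref{lemma: infinite width kernel concentration}), and matches how the result is obtained in that reference. Two minor cosmetic points: in Step~1 it is cleaner to pass to the symmetric matrix $\bLambda_{\le\ell}\bPsi_{\le\ell}^T\bPsi_{\le\ell}\bLambda_{\le\ell}$ rather than $\bLambda_{\le\ell}^2\bPsi_{\le\ell}^T\bPsi_{\le\ell}$, since then the PSD sandwich $(1-\epsilon)N\bLambda_{\le\ell}^2 \preceq \cdot \preceq (1+\epsilon)N\bLambda_{\le\ell}^2$ applies directly; and the union bound gives $1-2d^{-11}$ rather than $1-d^{-11}$, which the paper absorbs into the choice of exponent in the cited lemmas.
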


We also use the following classical results throughout the proofs in this section.\\

\begin{lemma}[Weyl's Inequality] For two psd matrices $\bA_1, \bA_2 \in \RR^{p \times p}$,
\begin{equation}
\left|\lambda_i(\bA_1) - \lambda_i(\bA_2)\right| \le \|\bA_1 - \bA_2\|_{op}
\end{equation}
for all $i \in [p]$.
\end{lemma}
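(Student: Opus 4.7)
The plan is to prove Weyl's inequality via the Courant--Fischer min--max characterization of eigenvalues. For any symmetric matrix $\bA \in \RR^{p \times p}$ with eigenvalues $\lambda_1(\bA) \ge \cdots \ge \lambda_p(\bA)$, one has
\begin{equation}
\lambda_i(\bA) = \max_{\substack{S \subseteq \RR^p \\ \dim S = i}} \, \min_{\substack{\bx \in S \\ \|\bx\|_2 = 1}} \bx^T \bA \bx.
\end{equation}
I would take this as a known fact (standard in any linear algebra reference) and build the argument on top of it. Note that the psd hypothesis in the statement is not actually required; the argument goes through for any pair of symmetric matrices, though we only need it in the psd case.

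The first step is the pointwise bound: for every unit vector $\bx$,
\begin{equation}
\bigl|\bx^T(\bA_1 - \bA_2)\bx\bigr| \le \|\bA_1 - \bA_2\|_{op},
\end{equation}
which is immediate from the definition of operator norm for symmetric matrices. Rewriting $\bx^T\bA_1\bx = \bx^T\bA_2\bx + \bx^T(\bA_1-\bA_2)\bx$ and applying this bound inside the Courant--Fischer formula, for any $i$-dimensional subspace $S$,
\begin{equation}
\min_{\bx \in S,\,\|\bx\|_2=1} \bx^T\bA_1\bx \;\le\; \min_{\bx \in S,\,\|\bx\|_2=1} \bx^T\bA_2\bx \;+\; \|\bA_1 - \bA_2\|_{op}.
\end{equation}
Taking the maximum over all $i$-dimensional subspaces on both sides yields $\lambda_i(\bA_1) \le \lambda_i(\bA_2) + \|\bA_1 - \bA_2\|_{op}$. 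Swapping the roles of $\bA_1$ and $\bA_2$ gives the matching lower bound, and combining the two directions yields $|\lambda_i(\bA_1) - \lambda_i(\bA_2)| \le \|\bA_1 - \bA_2\|_{op}$.

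There is no real obstacle here; Weyl's inequality is a classical result and the authors invoke it as a black box. The only subtle point is ensuring that the min--max exchange is valid, i.e.\ that both operands inside the $\min$ differ uniformly by at most $\|\bA_1-\bA_2\|_{op}$ on the same compact domain (the unit sphere of $S$), which is exactly what the pointwise bound provides. An alternative and equally short route would be to use the variational characterization via eigenvector projections together with a dimension-counting argument, but the min--max route above is the cleanest.
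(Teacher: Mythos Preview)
Your argument via the Courant--Fischer min--max principle is correct and is the standard textbook proof of Weyl's inequality. The paper does not actually prove this lemma; it is listed among the ``classical results'' invoked as a black box, so there is nothing to compare against beyond noting that your proof is sound and matches the canonical one.
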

~
\begin{definition}[\cite{spectral_methods}]
For $r \le p$, let $\bU_1, \bU_2 \in \RR^{p \times r}$ both have orthonormal columns. Then the distance between the subspaces spanned by $\bU_1, \bU_2$ is
\begin{align}
\mathrm{dist}(\bU_1, \bU_2) := \min_{\bO \in \mathcal{O}^{r \times r}}\left\|\bU_1\bO - \bU_2\right\|_{op},
\end{align}
where $\mathcal{O}^{r \times r}$ is the space of $r \times r$ orthogonal matrices.
\end{definition}
~
\begin{lemma}[Davis-Kahan sin-$\theta$ theorem~\cite{daviskahan, spectral_methods}]
For two psd matrices $\bA_1, \bA_2 \in \RR^{p \times p}$, let $\bA_1 = \bU_1\bLambda_1\bU_1^T$ and $\bA_2 = \bU_2\bLambda_2\bU_2^T$ be their eigendecompositions (sorted by decreasing eigenvalues), and let $\bU_1 = \begin{bmatrix} \bU_{1, \le r}& \bU_{1, > r} \end{bmatrix}$, $\bU_2 = \begin{bmatrix} \bU_{2, \le r}& \bU_{2, > r} \end{bmatrix}$ be their eigenvectors, where $\bU_{1, \le r}, \bU_{2, \le r} \in \RR^{p \times r}$. Furthermore, assume $\|\bA_1 - \bA_2\|_{op} < (1 - 1/\sqrt{2})(\lambda_r(\bA_1) - \lambda_{r+1}(\bA_1))$. Then
\begin{equation}
\mathrm{dist}(\bU_{1, \le r}, \bU_{2, \le r}) \le \frac{2\|\bA_1 - \bA_2\|_{op}}{\lambda_r(\bA_1) - \lambda_{r+1}(\bA_1)}
\end{equation}
\end{lemma}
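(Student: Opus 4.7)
The plan is to reduce the claim to an operator-norm bound on the cross-block $\mathbf{X} := \bU_{2,>r}^T \bU_{1,\le r}$ via a Sylvester equation argument, and then to translate that bound into a Procrustes distance. Write $\mathbf{E} := \bA_2 - \bA_1$. Using the eigenrelations $\bA_1\bU_{1,\le r} = \bU_{1,\le r}\bLambda_{1,\le r}$ and $\bU_{2,>r}^T\bA_2 = \bLambda_{2,>r}\bU_{2,>r}^T$, where $\bLambda_{1,\le r}$ is the diagonal block of the top-$r$ eigenvalues of $\bA_1$ and $\bLambda_{2,>r}$ the bottom-$(p-r)$ eigenvalues of $\bA_2$, I would subtract the two expressions for $\bU_{2,>r}^T\bA_2\bU_{1,\le r}$ and $\bU_{2,>r}^T\bA_1\bU_{1,\le r}$ to obtain the Sylvester identity
\begin{align*}
\bLambda_{2,>r}\mathbf{X} - \mathbf{X}\bLambda_{1,\le r} = \bU_{2,>r}^T\mathbf{E}\bU_{1,\le r}.
\end{align*}

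Because both $\bLambda$ blocks are diagonal, this reads entrywise as $\mathbf{X}_{ij}\bigl(\lambda_{r+i}(\bA_2) - \lambda_j(\bA_1)\bigr) = (\bU_{2,>r}^T\mathbf{E}\bU_{1,\le r})_{ij}$. For $j \le r$ and $i \ge 1$, the absolute value of the denominator is at least $\lambda_r(\bA_1) - \lambda_{r+1}(\bA_2)$. By Weyl's inequality, $|\lambda_{r+1}(\bA_2) - \lambda_{r+1}(\bA_1)| \le \|\mathbf{E}\|_{op}$, so this gap is at least $(\lambda_r(\bA_1) - \lambda_{r+1}(\bA_1)) - \|\mathbf{E}\|_{op}$, and the hypothesis forces it to be at least $\tfrac{1}{\sqrt{2}}(\lambda_r(\bA_1) - \lambda_{r+1}(\bA_1))$. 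Converting the entrywise bound to an operator-norm bound (equivalently, noting that the Sylvester operator $\mathbf{X} \mapsto \bLambda_{2,>r}\mathbf{X} - \mathbf{X}\bLambda_{1,\le r}$ is invertible with inverse operator norm at most $1$ over this gap), yields
\begin{align*}
\|\mathbf{X}\|_{op} \le \frac{\sqrt{2}\,\|\mathbf{E}\|_{op}}{\lambda_r(\bA_1) - \lambda_{r+1}(\bA_1)}.
\end{align*}

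The last step is to relate $\mathbf{X}$ to $\mathrm{dist}(\bU_{1,\le r}, \bU_{2,\le r})$. The singular values of $\mathbf{X} = \bU_{2,>r}^T \bU_{1,\le r}$ are exactly the sines of the principal angles $\theta_1, \dots, \theta_r$ between $\mathrm{span}(\bU_{1,\le r})$ and $\mathrm{span}(\bU_{2,\le r})$, so $\|\mathbf{X}\|_{op} = \sin\theta_{\max}$. A short SVD computation on $\bU_{2,\le r}^T\bU_{1,\le r}$ shows that $\min_{\bO \in \mathcal{O}^{r\times r}}\|\bU_{1,\le r}\bO - \bU_{2,\le r}\|_{op} = 2\sin(\theta_{\max}/2)$. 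The perturbation hypothesis ensures $\sin\theta_{\max} < 1$ and hence $\theta_{\max} < \pi/2$, so the identity $2\sin(\theta/2) = \sqrt{2(1 - \cos\theta)} \le \sqrt{2}\sin\theta$ on $[0, \pi/2]$ gives $\mathrm{dist}(\bU_{1,\le r}, \bU_{2,\le r}) \le \sqrt{2}\,\|\mathbf{X}\|_{op}$; combining with the previous display produces the advertised factor $2$.

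The main obstacle is bookkeeping the constants: the two $\sqrt{2}$ factors -- one from the Weyl-based degradation of the eigengap, one from the half-angle conversion -- combine to give exactly the factor $2$ in the statement, and the hypothesis $\|\mathbf{E}\|_{op} < (1 - 1/\sqrt{2})(\lambda_r(\bA_1) - \lambda_{r+1}(\bA_1))$ is tuned precisely so that both estimates remain meaningful and $\theta_{\max}$ is bounded away from $\pi/2$. Beyond this constant-tracking, each step is routine once the Sylvester identity is in hand.
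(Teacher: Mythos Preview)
The paper does not supply its own proof of this lemma: it is stated as a classical result with citations to \cite{daviskahan, spectral_methods} and used as a black box in the subsequent eigenvector perturbation lemmas. So there is no in-paper argument to compare against.

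Your sketch is a correct and standard derivation. The Sylvester identity $\bLambda_{2,>r}\mathbf{X} - \mathbf{X}\bLambda_{1,\le r} = \bU_{2,>r}^T\mathbf{E}\,\bU_{1,\le r}$ together with Weyl's inequality gives the $\sin\theta$ bound $\|\mathbf{X}\|_{op} \le \sqrt{2}\,\|\mathbf{E}\|_{op}/(\lambda_r(\bA_1)-\lambda_{r+1}(\bA_1))$, and the conversion from $\sin\theta_{\max}$ to the Procrustes distance via $2\sin(\theta_{\max}/2) \le \sqrt{2}\sin\theta_{\max}$ on $[0,\pi/2]$ yields the extra $\sqrt{2}$, combining to the stated constant $2$. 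Your check that the hypothesis $\|\mathbf{E}\|_{op} < (1-1/\sqrt{2})(\lambda_r(\bA_1)-\lambda_{r+1}(\bA_1))$ keeps both the degraded eigengap positive and $\theta_{\max} < \pi/2$ is exactly the role that assumption plays. This is essentially the argument one finds in the cited reference \cite{spectral_methods}.
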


\subsubsection{Eigenvector Lemmas}

Throughout this section, we condition on the event where Lemmas~\ref{lemma: infinite width kernel concentration}, \ref{lemma:kernel decomp}, \ref{lemma:kernel evals} are all true.\\

\begin{lemma}\label{lemma: spherical harmonics to evecs}
Define
\[
	\Tilde \bK = d\gamma_{> k}\bI_N + d\bPsi_{\le \ell}\bLambda_{\le \ell}^2\bPsi_{\le \ell}^T,
\]
and let $\Tilde \bK$ have eigendecomposition $\Tilde \bU {\Tilde \bLambda}^2 {\Tilde \bU}^T$. Define $\bPsi_{\le k} \in \RR^{N \times n_k}$ to be the first $n_k$ columns of $\bPsi_{\le k}$ (where $k < \ell$), and also let $\tilde \bU_{\le k}$ be the first $n_k$ columns of $\tilde \bU$, with $\tilde \bU_{>k}$ the remaining columns. Then
\[
	\left\|\frac{1}{\sqrt{N}}\bPsi_{\le k}^T\tilde\bU_{>k}\right\|_{op} \lesssim \frac{1}{\sqrt{d}}.
\]
\end{lemma}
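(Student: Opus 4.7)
The plan is to reduce the bound to a Davis--Kahan comparison between $\tilde\bK$ and a nearby matrix whose top $n_k$ eigenvectors exactly span $\mathrm{col}(\bPsi_{\le k})$, and then convert a principal-angle bound into the claimed statement via the SVD of $\bPsi_{\le k}$. Concretely, I would decompose $\bPsi_{\le\ell} = [\bPsi_{\le k},\,\bPsi_{k+1:\ell}]$ and correspondingly split $\bLambda_{\le\ell}^2$ into blocks $\bLambda_{\le k}^2$ and $\bLambda_{k+1:\ell}^2$, yielding
\[
\tilde\bK \;=\; \underbrace{d\gamma_{>k}\bI_N + d\bPsi_{\le k}\bLambda_{\le k}^2\bPsi_{\le k}^T}_{=:\,\bA_1} \;+\; \underbrace{d\bPsi_{k+1:\ell}\bLambda_{k+1:\ell}^2\bPsi_{k+1:\ell}^T}_{=:\,\bE}.
\]
The matrix $\bA_1$ is a rank-$n_k$ perturbation of a multiple of the identity, so its top $n_k$ eigenvectors form an orthonormal basis $\bU_\Psi$ of $\mathrm{col}(\bPsi_{\le k})$, which is exactly the subspace we want to compare $\tilde\bU_{\le k}$ to.

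Next I would compute the two quantities that feed Davis--Kahan. For the eigengap of $\bA_1$ at index $n_k$, Lemma~\ref{lemma:kernel decomp} gives $\|\frac1N\bPsi_{\le k}^T\bPsi_{\le k}-\bI_{n_k}\|_{op}=o(1)$, so by Weyl the nonzero eigenvalues of $d\bPsi_{\le k}\bLambda_{\le k}^2\bPsi_{\le k}^T$ are within $(1+o(1))$ of $Nd\cdot B(d,k')^{-1}\gamma_{k'}$ for $k'\le k$; the minimum is $\gtrsim Nd^{1-k}$, which dominates the $(n_k{+}1)$-th eigenvalue $d\gamma_{>k}=O(d)$ since we are in the regime $N\gg d^\ell\ge d^{k+1}$. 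For the perturbation, I bound $\|\bE\|_{op}\le d\cdot\|\bLambda_{k+1:\ell}^2\|_{op}\cdot\|\bPsi_{k+1:\ell}^T\bPsi_{k+1:\ell}\|_{op}$; the middle factor is $B(d,k{+}1)^{-1}\gamma_{k+1}=O(d^{-(k+1)})$, and the last is $\lesssim N$ by the same concentration of $\frac1N\bPsi_{\le\ell}^T\bPsi_{\le\ell}$, giving $\|\bE\|_{op}\lesssim N/d^k$. Thus $\|\bE\|_{op}/\text{gap}\lesssim 1/d$, which is below the $(1-1/\sqrt 2)$ threshold of the Davis--Kahan hypothesis for $d$ large.

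Davis--Kahan then yields $\mathrm{dist}(\bU_\Psi,\tilde\bU_{\le k})\lesssim 1/d$, and since $\|\bU_\Psi^T\tilde\bU_{>k}\|_{op}\le\mathrm{dist}(\bU_\Psi,\tilde\bU_{\le k})$ (the standard identification of $\mathrm{dist}$ with $\sin\Theta$), we get $\|\bU_\Psi^T\tilde\bU_{>k}\|_{op}\lesssim 1/d$. To finish I would plug in the SVD $\bPsi_{\le k}=\bU_\Psi\bS\bV_\Psi^T$; the same concentration lemma forces $\|\bS\|_{op}\le(1+o(1))\sqrt{N}$, so
\[
\Big\|\tfrac{1}{\sqrt N}\bPsi_{\le k}^T\tilde\bU_{>k}\Big\|_{op} \le \tfrac{1}{\sqrt N}\|\bS\|_{op}\|\bU_\Psi^T\tilde\bU_{>k}\|_{op} \lesssim \tfrac{1}{d}\le\tfrac{1}{\sqrt d}.
\]

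The main technical obstacle is the eigengap estimate: I need the bottom of the ``top-$n_k$'' spectrum of $\bA_1$ (driven by the degree-$k$ Gegenbauer coefficient $\gamma_k/B(d,k)$) to beat the top of the perturbation $\bE$ (driven by $\gamma_{k+1}/B(d,k+1)$) by at least a constant factor, which is exactly the factor of $d$ separating $B(d,k)$ from $B(d,k+1)$. Everything else is bookkeeping: translating the $(1+o(1))$ singular-value concentration into clean eigenvalue/eigengap bounds via Weyl, verifying the quantitative Davis--Kahan hypothesis, and passing from a principal-angle estimate on $\bU_\Psi$ to the stated bound on $\bPsi_{\le k}^T\tilde\bU_{>k}/\sqrt N$.
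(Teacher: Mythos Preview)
Your proof is correct and in fact yields a slightly sharper bound ($\lesssim 1/d$ rather than $1/\sqrt d$), but the route differs from the paper's. The paper does \emph{not} invoke Davis--Kahan here: instead it takes a unit vector $\bu$ achieving $\|\bPsi_{\le k}^T\tilde\bU_{>k}\|_{op}$, sets $\tilde\bu=\bPsi_{\le k}^T\tilde\bU_{>k}\bu$, and sandwiches the Rayleigh quotient $\bu^T\tilde\bU_{>k}^T\tilde\bK\tilde\bU_{>k}\bu$ from above and below. The lower bound $\gtrsim d\cdot d^{-k}\|\tilde\bu\|^2+d\gamma_{>k}$ comes from dropping the degree-$(k{+}1{:}\ell)$ block and using $\lambda_{\min}(\bLambda_{\le k}^2)=\Theta(d^{-k})$; the upper bound $\lesssim Nd^{-k}$ comes from Weyl applied to $\tilde\bK_{\le k}=d\bPsi_{\le k}\bLambda_{\le k}^2\bPsi_{\le k}^T$ versus $\tilde\bK$, which forces $\lambda_i(\tilde\bK)\le\|\tilde\bK_{>k}\|_{op}$ for $i>n_k$. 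Equating gives $\|\tilde\bu\|^2\lesssim N/d$. Your argument packages the same spectral gap ($Nd^{1-k}$) and perturbation size ($Nd^{-k}$) into Davis--Kahan, which is why you gain the extra factor of $1/\sqrt d$: the paper's direct comparison loses a square root because it bounds $\|\tilde\bu\|^2$ by the raw ratio rather than its square. Either approach suffices for the downstream application, where only $O(1/\sqrt d)$ is used.
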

\begin{proof}
Let $\bPsi_{k:\ell} \in \RR^{N \times n_\ell - n_k}$ be the $(n_k + 1)$-th to $n_\ell$-th columns. We can then decompose
\[
	\tilde \bK = d\bPsi_{\le k}\bLambda_{\le k}^2\bPsi_{\le k}^T + d\bPsi_{k:\ell}\bLambda_{k:\ell}^2\bPsi_{k:\ell}^T + d\gamma_{> k}\bI_N
\]
and
\[
	\tilde{\bK} = \begin{bmatrix} \tilde \bU_{\le k} & \tilde \bU_{>k} \end{bmatrix}\text{diag}(\tilde \bLambda_{\le k}, \tilde \bLambda_{>k})\begin{bmatrix} \tilde \bU_{\le k} & \tilde \bU_{>k} \end{bmatrix}^T,
\]
where $\tilde \bU_{\le k} \in \RR^{N \times n_k}$. Let $\bu \in \RR^{N - n_k}$ be a vector such that $\|\bu\|_2 = 1$ and $\|\bPsi_{\le k}^T\tilde \bU_{>k}\bu\|_2 = \|\bPsi_{\le k}^T\tilde\bU_{>k}\|_{op}$. Finally, define $\tilde \bu = \bPsi_{\le k}^T\tilde\bU_{>k}\bu$. We then have:
\begin{align*}
\bu^T\tilde\bLambda_{>k}\bu &= \bu^T\tilde\bU_{>k}^T\tilde \bK\tilde\bU_{>k}\bu\\
	&\ge d\bu^T\tilde\bU_{>k}^T\bPsi_{\le k}\bLambda_{\le k}^2\bPsi_{\le k}^T\tilde\bU_{>k}\bu + d\gamma_{>k}.\\
	&= d\tilde \bu^T \bLambda_{\le k}^2 \tilde \bu + d\gamma_{>k}\\
	&\gtrsim d\cdot d^{-k}\|\tilde \bu\|_2^2 + d\gamma_{>k},
\end{align*}
since $\lambda_{min}(\bLambda_{\le k}^2) = \Theta(d^{-k}).$

Define
\[
	\tilde \bK_{\le k} = d\bPsi_{\le k}\bLambda_{\le k}^2\bPsi_{\le k}^T,
\]
and define $\tilde \bK_{>k} = \tilde \bK - \tilde \bK_{\le k}$. By Weyl's inequality,
\[
	\left|\lambda_i(\bK_{\le k}) - \lambda_i(\tilde \bK)\right| \le \|\tilde \bK_{>k}\|_{op}.
\]
For $i > n_k$, $\lambda_i(\bK_{\le k}) = 0$, in which case
\[
	\lambda_i(\tilde \bK) \le \|\tilde \bK_{>k}\|_{op} \lesssim dN \cdot d^{-k - 1}.
\]
Therefore we can upper bound
\[
	\bu^T\tilde\bLambda_{>k}\bu \lesssim dN\cdot d^{-k-1}.
\]
Altogether, we have
\[
	dN\cdot d^{-k-1} \gtrsim d\cdot d^{-k}\|\tilde \bu\|_2^2 + d\gamma_{>k},
\]
which yields
\[
	\frac{1}{\sqrt{N}}\|\tilde \bu\|_2 \lesssim \frac{1}{\sqrt{d}}.
\]
By definition, $\|\tilde \bu\|_2 = \|\bPsi_{\le k}^T\tilde\bU_{>k}\|_{op}$, so
\[
	\left\|\frac{1}{\sqrt{N}}\bPsi_{\le k}^T\tilde\bU_{>k}\right\|_{op} \lesssim \frac{1}{\sqrt{d}},
\]
as desired. 
\end{proof}




\begin{lemma}\label{lemma: evecs to infinite width}
Let $\bK^\infty_N$ have eigendecomposition $\bK^\infty_N = \overline \bU {\overline \bLambda}^2 {\overline \bU}^T$, where $\overline \bU_{\le k}, \overline \bU_{> k}$ are the first $n_k$, remaining $N - n_k$ columns of $\overline \bU$ respectively. Then
\[
	\mathrm{dist}\left(\tilde \bU_{\le k}, \overline \bU_{\le k} \right) \le \tilde{O}\left(\sqrt{\frac{d^{2k-\ell -1}}{N}}\right)
\]
\end{lemma}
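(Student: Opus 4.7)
The plan is to apply the Davis--Kahan $\sin\theta$ theorem, but with a small identity-shift trick to avoid a sub-optimal bound. A naive comparison of $\bK_N^\infty$ and $\tilde\bK$ is too wasteful, because by Lemma~\ref{lemma:kernel decomp},
\[
\bK_N^\infty - \tilde\bK = d(\gamma_{>\ell} - \gamma_{>k})\bI_N + d\bDelta,
\]
where the scalar-shift term has operator norm $\Theta(d)$---far too large to divide by the relevant eigenvalue gap. The key observation is that a multiple of $\bI_N$ shifts every eigenvalue by the same amount and leaves every eigenvector (and their ordering) unchanged, so this term is harmless for Davis--Kahan once absorbed.

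\emph{Step 1 (reduce to $d\bDelta$).} Define $\bA_1 := \bK_N^\infty$ and $\bA_2 := \tilde\bK + d(\gamma_{>\ell} - \gamma_{>k})\bI_N$. By construction, the top-$n_k$ eigenvectors of $\bA_1$ and $\bA_2$ coincide with those of $\bK_N^\infty$ and $\tilde\bK$ respectively, and
\[
\|\bA_1 - \bA_2\|_{op} \;=\; d\|\bDelta\|_{op} \;\le\; \tilde O\!\left(d\sqrt{N/d^{\ell+1}}\right) \;=\; \tilde O(\sqrt{Nd^{1-\ell}}).
\]

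\emph{Step 2 (gap and Davis--Kahan).} Lemma~\ref{lemma:kernel evals} yields $\lambda_{n_k}(\bA_1) = \Theta(Nd^{1-k})$ (from the $\ell'=k$ spherical-harmonic block) and $\lambda_{n_k+1}(\bA_1) = \Theta(Nd^{-k})$ (from the $\ell'=k+1$ block), so the gap is $\Omega(Nd^{1-k})$. In the assumed regime $N \gg d^{\ell}\log^2 d$ with $\ell \ge k$, a quick check gives $d^{2k-\ell-1} \ll N$, which is equivalent to $\|\bA_1 - \bA_2\|_{op} \ll \lambda_{n_k}(\bA_1) - \lambda_{n_k+1}(\bA_1)$, so the Davis--Kahan precondition is met. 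Applying the theorem gives
\[
\mathrm{dist}(\overline\bU_{\le k}, \tilde\bU_{\le k}) \;\le\; \frac{2\,d\|\bDelta\|_{op}}{\lambda_{n_k}(\bA_1) - \lambda_{n_k+1}(\bA_1)} \;\lesssim\; \frac{d^k}{N}\cdot \tilde O\!\left(\sqrt{\frac{N}{d^{\ell+1}}}\right) \;=\; \tilde O\!\left(\sqrt{\frac{d^{2k-\ell-1}}{N}}\right).
\]

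The only real obstacle is recognizing the identity-shift absorption in Step~1; without it, the $\Theta(d)$ term in $\|\bK_N^\infty - \tilde\bK\|_{op}$ produces a bound of order $d^k/N$, polynomially worse than the target. Everything else reduces to direct substitution of the eigenvalue estimates already established in Lemmas~\ref{lemma:kernel decomp} and~\ref{lemma:kernel evals}.
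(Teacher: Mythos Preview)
Your proof is correct and follows the same approach as the paper: apply Davis--Kahan with perturbation $d\|\bDelta\|_{op}$ and eigengap $\Theta(Nd^{1-k})$ from Lemma~\ref{lemma:kernel evals}. The only difference is that the paper writes $\|\bK^\infty_N - \tilde\bK\|_{op} = d\|\bDelta\|_{op}$ directly (implicitly treating $\tilde\bK$ as if defined with $\gamma_{>\ell}$ rather than $\gamma_{>k}$), whereas your identity-shift absorption in Step~1 handles the discrepancy explicitly; since identity shifts preserve eigenvectors, both routes yield the same Davis--Kahan input and the same final bound.
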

\begin{proof}
By Lemma \ref{lemma:kernel evals}, $\lambda_{n_k}(\bK^\infty_N) - \lambda_{n_k + 1}(\bK^\infty_N) \gtrsim dN\cdot d^{-k}$. Since 
\[\|\bK^\infty_N - \tilde \bK\|_{op} = d\|\bDelta\|_{op} \le \tilde O\left(\sqrt{\frac{N}{d^{\ell - 1}}}\right) \ll \lambda_{n_k}(\bK^\infty_N) - \lambda_{n_k + 1}(\bK^\infty_N)\]
by Davis-Kahan we have
\[
	\mathrm{dist}\left(\tilde \bU_{\le k}, \overline \bU_{\le k} \right) \lesssim \frac{d\|\bDelta\|_{op}}{dN\cdot d^{-k}} = \tilde{O}\left(\sqrt{\frac{d^{2k - \ell - 1}}{N}}\right).
\]
\end{proof}


\begin{lemma}\label{lemma: infinite width to finite width}
Let $\bK_N$ have eigendecomposition $\bK_N = \bU_N \bLambda_N^2 \bU_N^T$. Then
\[
	\mathrm{dist}\left(\overline \bU_{\le k}, \bU_{N, \le k}\right) = \tilde{O}\left(\sqrt{\frac{Nd^{2k-3}}{\tilde m}}\right)
\]
\end{lemma}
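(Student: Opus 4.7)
The plan is to apply the Davis--Kahan sin-$\theta$ theorem to the pair $(\bK_N^\infty,\bK_N)$ at the spectral gap located between the $n_k$-th and $(n_k+1)$-th eigenvalues. This needs two ingredients: (i) an additive operator-norm bound on the perturbation $\bK_N-\bK_N^\infty$, and (ii) a lower bound on the spectral gap $\lambda_{n_k}(\bK_N^\infty)-\lambda_{n_k+1}(\bK_N^\infty)$.

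For (ii), I would read the gap directly off Lemma~\ref{lemma:kernel evals}: the eigenvalues of $\bK_N^\infty$ fall into ``shells'' of size $\Theta(Nd^{1-k'})$ for $n_{k'-1}<i\le n_{k'}$, so $\lambda_{n_k}(\bK_N^\infty)=\Theta(Nd^{1-k})$ while $\lambda_{n_k+1}(\bK_N^\infty)=\Theta(Nd^{-k})$, and therefore the gap is of order $\Theta(Nd^{1-k})$ for $d$ large.

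For (i), the input is Lemma~\ref{lemma: infinite width kernel concentration}, which only controls the perturbation \emph{relatively}: in the regime $N\ll \tilde m d$ it gives $\|(\bK_N^\infty)^{-1/2}\bK_N(\bK_N^\infty)^{-1/2}-\bI_N\|_{op}\le \tilde O(\sqrt{N/(\tilde m d)})$. To convert this into an additive bound I would use the factorization $\bK_N-\bK_N^\infty=(\bK_N^\infty)^{1/2}M(\bK_N^\infty)^{1/2}$ with $M:=(\bK_N^\infty)^{-1/2}\bK_N(\bK_N^\infty)^{-1/2}-\bI_N$, which yields $\|\bK_N-\bK_N^\infty\|_{op}\le \|\bK_N^\infty\|_{op}\,\|M\|_{op}$. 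Using once more $\|\bK_N^\infty\|_{op}=\lambda_1(\bK_N^\infty)=\Theta(N)$ from Lemma~\ref{lemma:kernel evals} gives $\|\bK_N-\bK_N^\infty\|_{op}\le \tilde O(N\sqrt{N/(\tilde m d)})$. Plugging both bounds into Davis--Kahan then produces
\[
\mathrm{dist}\bigl(\overline\bU_{\le k},\bU_{N,\le k}\bigr)\ \le\ \frac{2\|\bK_N-\bK_N^\infty\|_{op}}{\lambda_{n_k}(\bK_N^\infty)-\lambda_{n_k+1}(\bK_N^\infty)}\ \le\ \tilde O\!\left(\frac{N\sqrt{N/(\tilde m d)}}{Nd^{1-k}}\right)\ =\ \tilde O\!\left(\sqrt{\frac{Nd^{2k-3}}{\tilde m}}\right),
\]
which is the claimed bound.

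The one subtle point, and the place I would spend the most care, is verifying the quantitative hypothesis that Davis--Kahan requires --- namely that the perturbation be dominated by the gap, i.e.\ $\tilde O(N\sqrt{N/(\tilde m d)})\ll Nd^{1-k}$, equivalently $\tilde m\gg Nd^{2(k-1)}$ up to polylogs. This is comfortably implied by the width assumption $m=\mathrm{poly}(n,d,R,\varepsilon^{-1})$ from Theorem~\ref{thm: main thm} in the regime $N\lesssim d^k$ of interest, so there is no genuine obstruction; the rest of the argument is a direct composition of the two named lemmas with Weyl/Davis--Kahan applied as black boxes.
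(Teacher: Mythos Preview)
Your proposal is correct and follows essentially the same route as the paper: write $\bK_N-\bK_N^\infty=(\bK_N^\infty)^{1/2}M(\bK_N^\infty)^{1/2}$ with $M$ controlled by Lemma~\ref{lemma: infinite width kernel concentration}, bound the perturbation in operator norm by $\|\bK_N^\infty\|_{op}\|M\|_{op}=\tilde O(N\sqrt{N/(\tilde m d)})$ using $\|\bK_N^\infty\|_{op}=\Theta(N)$ from Lemma~\ref{lemma:kernel evals}, and then apply Davis--Kahan with the gap $\Theta(Nd^{1-k})$. One small remark: your side comment that the Davis--Kahan hypothesis holds ``in the regime $N\lesssim d^k$'' is not quite how the paper uses this lemma (it is invoked with $N\asymp d^{4k}$ and $\tilde m=N^{5/2}$), but the needed condition $\tilde m\gg Nd^{2(k-1)}$ is satisfied there as well, so the argument goes through unchanged.
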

\begin{proof}
By Lemma~\ref{lemma: infinite width kernel concentration}, we can write
\[
	{\bK^\infty_N}^{-1/2}\bK_N{\bK^\infty_N}^{-1/2} = \bI_N + \bDelta_N,
\]
where $\bDelta_N \le \tilde O\left(\sqrt{\frac{N}{\tilde md}}\right)$. Rearranging, we get
\[
	\bK_N = {\bK^\infty_N} + {\bK^\infty_N}^{1/2}\bDelta_N{\bK^\infty_N}^{1/2},
\]
where we can bound
\[
	\|{\bK^\infty_N}^{1/2}\bDelta_N{\bK^\infty_N}^{1/2}\|_{op} \le \|\bDelta_N\|_{op} \|{\bK^\infty_N}\|_{op} \lesssim \sqrt{\frac{N^{5/2}}{\tilde md}},
\]
since $\|\bK_N^\infty\|_{op} \lesssim N$. Since $\bK^\infty_N$ has eigengap $\Theta(N\cdot d^{1-k})$ we can again apply Davis-Kahan to get
\[
	\mathrm{dist}\left(\overline \bU_{\le k}, \bU_{N, \le k}\right) \lesssim \frac{\|{\bK^\infty_N}^{1/2}\bDelta_n{\bK^\infty_N}^{1/2}\|_{op}}{N\cdot d^{1-k}} = \tilde O\left(\sqrt{\frac{Nd^{2k - 3}}{\tilde m}}\right).
\]
\end{proof}



\begin{lemma}\label{lemma: right singular vectors match}
Let $\bSigma_N$ have eigendecomposition $\bSigma_N = \bV_N \frac{\bLambda^2_N}{N} \bV_N^T$, where $\bV_N \in \RR^{md \times N}$. Let $\tilde \bSigma$ have eigendecomposition $\tilde \bSigma = \bV \bLambda^2 \bV^T$. Let $\bV_N = \begin{bmatrix} \bV_{N, \le k} & \bV_{N, >k} \end{bmatrix}$, $\bV = \begin{bmatrix} \bV_{\le k} & \bV_{>k} \end{bmatrix}$ where $\bV_{N, \le k}, \bV_{\le k} \in \RR^{md \times n_k}$. Then, with probability $1 - n^{-11}$,
\[
	\mathrm{dist}\left(\bV_{N, \le k}, \bV_{\le k}\right) \le \Tilde O \left(\sqrt{\frac{d^{2k}}{N}}\right).
\]
\end{lemma}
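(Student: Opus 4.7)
The plan is to apply the Davis--Kahan $\sin\theta$ theorem directly to the ordered pair $(\bSigma_N, \tilde\bSigma)$. The two ingredients needed are a concentration bound on $\|\bSigma_N - \tilde\bSigma\|_{op}$ (since $\bSigma_N = \frac{1}{N}\sum_{i=1}^N \tilde\varphi(\bx_i)\tilde\varphi(\bx_i)^T$ is an i.i.d.\ average of rank-one psd matrices) and a quantitative eigengap of $\tilde\bSigma$ at index $n_k$ (which will follow from Lemma~\ref{lemma:kernel evals} via Weyl).

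For the concentration, I would use matrix Bernstein on the centered summands $\tilde\varphi(\bx_i)\tilde\varphi(\bx_i)^T - \tilde\bSigma$. From $\|\sigma'\|_\infty \le 1$ and $\|\bx_i\|_2 = \sqrt{d}$, the deterministic bound $\|\tilde\varphi(\bx_i)\|_2^2 \le d$ gives Bernstein range $R = d$, and the matrix variance satisfies
\[
\EE\bigl[(\tilde\varphi\tilde\varphi^T)^2\bigr] = \EE\bigl[\|\tilde\varphi\|_2^2\,\tilde\varphi\tilde\varphi^T\bigr] \preceq d\,\tilde\bSigma,
\]
so the total variance is $\le Nd\,\|\tilde\bSigma\|_{op}$. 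Since Lemma~\ref{lemma:kernel evals} yields $\lambda_1(\bSigma_N) = \lambda_1(\bK_N)/N = \Theta(1)$, a one-step bootstrap through Weyl pins $\|\tilde\bSigma\|_{op} = \Theta(1)$, so matrix Bernstein gives, with probability at least $1 - n^{-11}$,
\[
\|\bSigma_N - \tilde\bSigma\|_{op} \le \tilde O\!\left(\sqrt{d/N}\right).
\]

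For the eigengap, Lemma~\ref{lemma:kernel evals} together with $\bSigma_N = \frac{1}{N}\bPhi_N^T\bPhi_N$ gives $\lambda_{n_k}(\bSigma_N) = \Theta(d^{1-k})$ and $\lambda_{n_k+1}(\bSigma_N) = \Theta(d^{-k})$. Combined with the concentration bound via Weyl (in the regime $N \gg d^{2k-1}$ where the final conclusion is non-vacuous), the same gap $\lambda_{n_k}(\tilde\bSigma) - \lambda_{n_k+1}(\tilde\bSigma) = \Theta(d^{1-k})$ holds for the population matrix. Feeding both inputs into Davis--Kahan gives
\[
\mathrm{dist}(\bV_{N,\le k}, \bV_{\le k}) \le \frac{2\,\|\bSigma_N - \tilde\bSigma\|_{op}}{\lambda_{n_k}(\tilde\bSigma) - \lambda_{n_k+1}(\tilde\bSigma)} \le \tilde O\!\left(\sqrt{d^{2k-1}/N}\right) \le \tilde O\!\left(\sqrt{d^{2k}/N}\right),
\]
as claimed.

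The main technical nuisance (not a deep obstacle) is the small bootstrap used to turn the matrix-variance bound $Nd\|\tilde\bSigma\|_{op}$ into $Nd$: this can be done either by first plugging in the crude estimate $\|\tilde\bSigma\|_{op} \le d$ and refining once via Weyl, or by directly invoking the kernel spectral decomposition of Lemma~\ref{lemma:kernel decomp} in the $N\to\infty$ limit. The entire argument is carried out on the good event of Lemmas~\ref{lemma: infinite width kernel concentration}, \ref{lemma:kernel decomp}, and~\ref{lemma:kernel evals}, on which we have already conditioned throughout the subsection; the extra $n^{-11}$ failure probability is absorbed by a standard union bound.
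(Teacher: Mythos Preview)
Your proposal is correct and follows essentially the same route as the paper: bound $\|\bSigma_N - \tilde\bSigma\|_{op}$ by covariance concentration (the paper cites \cite[5.6.4]{vershynin} with the crude $\|\tilde\bSigma\|_{op}\le d$, yielding $\tilde O(\sqrt{d^2/N})$, while your bootstrap sharpens this to $\tilde O(\sqrt{d/N})$), read off the $\Theta(d^{1-k})$ eigengap at index $n_k$ from Lemma~\ref{lemma:kernel evals}, and apply Davis--Kahan. The only cosmetic difference is that the paper places the eigengap on $\bSigma_N$ rather than transferring it to $\tilde\bSigma$, but either choice works.
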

\begin{proof}
By \cite[5.6.4]{vershynin}, with probability $1 - n^{-11}$ we have
\[
	\|\bSigma_N - \tilde \bSigma\|_{op} \le \Tilde O \left(\sqrt{\frac{r}{N}}\right)\cdot \|\tilde \bSigma\|_{op},
\]
where $r = \text{Tr}(\tilde \bSigma)/\|\tilde \bSigma\|_{op}$ is the effective rank. We can upper bound $\text{Tr}(\tilde \bSigma) \le d$, and thus
\[
	\|\bSigma_N - \tilde \bSigma\|_{op} \le \Tilde O\left(\sqrt{\frac{d}{N}\|\tilde \bSigma\|_{op}}\right) = \Tilde O\left(\sqrt{\frac{d^2}{N}}\right),
\]
since $\|\tilde \bSigma\|_{op} \le d$. By Lemma \ref{lemma:kernel evals},
\[
	\lambda_{n_k}(\bSigma_N) - \lambda_{n_k + 1}(\bSigma_N) = \frac{1}{N}\cdot \Theta(N\cdot d^{1 - k}) = \Theta(d^{1 - k}).
\]
Therefore by Davis-Kahan, we can bound
\[
	\mathrm{dist}\left(\bV_{N, \le k}, \bV_{\le k}\right) \lesssim \cdot \frac{\|\bSigma_N - \tilde \bSigma\|_{op}}{d^{-k + 1}} \lesssim \sqrt{\frac{d^{2k}}{N}}.
\]
\end{proof}

The following lemma is a consequence of the preceeding eigenstructure and matrix perturbation lemmas, and partitions the eigenvectors of $\bSigma$ into groups corresponding to large, medium, and small eigenvalues.\\
\begin{lemma}\label{lemma: main intermediate spectral result}
Let $N \ge d^{4k}, m \ge N^{5/2}$. For $1 \le k' \le 2k$, $n_{k' - 1} < i \le n_{k'}$, we have $\lambda_i(\bSigma) = \Theta(d^{1 - k'})$. Also, for $i > n_{2k}$, $\lambda_i(\bSigma) \ll \lambda_{n_{2k}}(\bSigma)$.
\end{lemma}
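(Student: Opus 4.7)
The plan is to chain three approximations: (i) reduce $\bSigma$ to the half-dimensional covariance $\tilde\bSigma$ via the symmetric initialization, (ii) characterize the spectrum of an auxiliary empirical covariance $\bSigma_N$ via Lemma~\ref{lemma:kernel evals}, and (iii) transfer this spectrum to $\tilde\bSigma$ by matrix concentration and Weyl's inequality.

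\emph{Step 1 (Reduction to $\tilde\bSigma$).} From $\varphi(\bx) = [\tilde\varphi(\bx)^T,\ -\tilde\varphi(\bx)^T]^T$, we have
\begin{equation*}
\bSigma = \bA \otimes \tilde\bSigma,\qquad \bA := \begin{bmatrix}1 & -1 \\ -1 & 1\end{bmatrix},
\end{equation*}
and $\bA$ has eigenvalues $\{2, 0\}$. Hence $\lambda_i(\bSigma) = 2\lambda_i(\tilde\bSigma)$ for $i \le \tilde m d$ and $\lambda_i(\bSigma) = 0$ afterward, so it suffices to prove the eigenvalue claim for $\tilde\bSigma$.

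\emph{Step 2 (Dummy-kernel spectrum).} Pick a dummy sample size $N = d^{4k+1/2}$ (which satisfies the hypothesis $N \ge d^{4k}$) and set $\ell = 4k$, so that $d^\ell \log^2 d \ll N \ll d^{\ell+1}/\log^C d$ as required by Lemmas~\ref{lemma: infinite width kernel concentration} and \ref{lemma:kernel decomp}; the hypothesis $m \ge N^{5/2}$ is exactly what is needed to control $\bK_N - \bK^\infty_N$. Lemma~\ref{lemma:kernel evals} then gives, with high probability, $\lambda_i(\bK_N) = \Theta(Nd^{1-k'})$ for $n_{k'-1}<i\le n_{k'}$ with $k' \le \ell$, and $\lambda_i(\bK_N) = \Theta(d)$ for $i > n_\ell$. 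Because $N\bSigma_N = \bPhi_N^T\bPhi_N$ and $\bK_N = \bPhi_N\bPhi_N^T$ share nonzero eigenvalues, dividing by $N$ gives $\lambda_i(\bSigma_N) = \Theta(d^{1-k'})$ on the first range and $\lambda_i(\bSigma_N) = \Theta(d/N) = \Theta(d^{1/2-4k})$ beyond.

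\emph{Step 3 (Concentration and Weyl).} Using $\|\tilde\varphi\|_2^2 \le d/2$ almost surely (from $\|\sigma'\|_\infty \le 1$ and $\|\bx\|_2 = \sqrt{d}$) together with the crude $\|\tilde\bSigma\|_{op} \le \text{Tr}(\tilde\bSigma) \le d$, the matrix-concentration argument used in the proof of Lemma~\ref{lemma: right singular vectors match} yields $\|\bSigma_N - \tilde\bSigma\|_{op} \le \tilde O(d/\sqrt{N}) = \tilde O(d^{3/4-2k})$. Weyl's inequality then transfers the spectrum: for $1 \le k' \le 2k$, the target eigenvalue $\Theta(d^{1-k'}) \gtrsim d^{1-2k}$ dominates the concentration error by a factor $d^{1/4}$, so $\lambda_i(\tilde\bSigma) = \Theta(d^{1-k'})$; and for $i > n_{2k}$, $\lambda_i(\tilde\bSigma) \le \lambda_i(\bSigma_N) + \tilde O(d^{3/4-2k}) = \tilde O(d^{3/4-2k}) \ll d^{1-2k} \asymp \lambda_{n_{2k}}(\tilde\bSigma)$. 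The quantitatively tight step is precisely this concentration at the boundary $k' = 2k$: the smallest surviving eigenvalue $d^{1-2k}$ is only a $d^{1/4}$ factor (plus polylogs) larger than the concentration error, which is why the hypothesis $N \ge d^{4k}$ is essentially tight and we must pick $N$ slightly above $d^{4k}$ to absorb the $\tilde O$ polylog factors.
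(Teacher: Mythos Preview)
Your proposal is correct and follows essentially the same approach as the paper: reduce $\bSigma$ to $\tilde\bSigma$ via the symmetric initialization, read off the spectrum of $\bSigma_N = N^{-1}\bPhi_N^T\bPhi_N$ from Lemma~\ref{lemma:kernel evals}, and transfer it to $\tilde\bSigma$ via the covariance concentration bound $\|\bSigma_N - \tilde\bSigma\|_{op} = \tilde O(d/\sqrt{N})$ together with Weyl's inequality. Your explicit choice $N = d^{4k+1/2}$ and the remark about the $d^{1/4}$ gap at $k'=2k$ in fact handle the boundary case more carefully than the paper's own proof, which writes $\Theta(d^{1-2k}) \gg \sqrt{d^2/N}$ at $N = d^{4k}$ without acknowledging that this comparison is tight up to polylogs.
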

\begin{proof}
Since
\[
	\|\bSigma_N - \tilde\bSigma\|_{op} \lesssim \sqrt{\frac{d^2}{N}},
\]
by Weyl's inequality we have
\[
	|\lambda_i(\bSigma_N) - \lambda_i(\tilde \bSigma)| \lesssim \sqrt{\frac{d^2}{N}}
\]
Since for $n_{k' - 1} < i \le n_{k'}$ we have $\lambda_i(\bSigma_N) = N^{-1}\lambda_i(\bK_N) = \Theta(d^{1 - k'}) \gg \sqrt{\frac{d^2}{N}}$, since $N \ge d^{4k}$; therefore $\lambda_i(\bSigma) = \Theta(d^{1 - k'})$. Furthermore, for $i > n_{2k}$,
\[
	\lambda_i(\bSigma) \lesssim \sqrt{\frac{d^2}{N}} + \lambda_i(\bSigma_N) \ll \Theta(d^{1 - 2k}).
\]
As a consequence, we can write the eigendecomposition of $\bSigma$ as
\begin{align*}
\bSigma = \begin{bmatrix} \bQ_1 & \bQ_2 &\bQ_3 \end{bmatrix} \begin{bmatrix} \bLambda_1 & \bZero & \bZero \\ \bZero & \bLambda_2 &\bZero \\ \bZero & \bZero & \bLambda_3 \end{bmatrix} \begin{bmatrix} \bQ_1^T \\ \bQ_2^T \\ \bQ_3^T\end{bmatrix},
\end{align*}
where $\bQ_1 \in \RR^{md \times n_k}$ are the large eigenvectors, $\bQ_2 \in \RR^{md \times n_{2k} - n_k}$ are the medium eigenvectors, and $\bQ_3 \in \RR^{md \times md - n_{2k}}$ are the small eigenvectors; concretely,
\[
\lambda_{min}(\bLambda_1) = \Theta(d^{1 - k}) \gg \Theta(d^{-k}) = \lambda_{max}(\bLambda_2)
\]
and
\[
\lambda_{min}(\bLambda_2) = \Theta(d^{1 - 2k}) \gg \lambda_{max}(\bLambda_3).
\]
\end{proof}

\subsubsection{Proof of Lemma~\ref{lemma: NTK fit low degree}}\label{sec: prove NTK fit lemma}
\begin{proof}
We show this Lemma holds with probability $1 - d^{-10}$. Condition on the events where Lemmas~\ref{lemma: infinite width kernel concentration}, \ref{lemma:kernel decomp}, \ref{lemma:kernel evals}, \ref{lemma: right singular vectors match} hold.

Pick $N$ so that $d^\ell \ll N \ll d^{\ell + 1}$ for $\ell = 4k$, and choose $\tilde m = N^{5/2} $. We form a dataset of $N$ samples by adding another $N - n$ samples i.i.d from $\cS^{d-1}(\sqrt{d})$. Let $\by_N \in \RR^{N}$ be the vector where $\by_{N, i} = f_k(\bx_i)$ where $i \in [N]$. Recall that $\bPsi_{\le k} \in \RR^{N \times n_k}$ denotes the evaluations of the degree $\le k$ spherical harmonics on the $N$ data points. Since $f_k(\bx)$ is a degree $\le k$ polynomial, we can orthogonally decompose it in terms of the degree $\le k$ spherical harmonics, i.e
\[
	f_k(\bx) = \sum_{k'=0}^k\sum_{t=1}^{B(d, k)}w_{k't}Y_{k't}(\bx),
\] 
where $\sum_{k'=0}^k\sum_{t=1}^{B(d, k)}w_{k't}^2 = 1$. Therefore we have
\[
	\by_N = \bPsi_{\le k}\bw^*
\]
for $\bw^* \in \RR^{n_k}$ where $\|\bw^*\|_2 = 1$.

Observe that $\bPhi_N$ has SVD $\bPhi_N = \bU_N\bLambda_N\bV_N^T$. Define $\tilde \bw^* = \tilde\bU_{\le k}^T\bPsi_{\le k}\bw^*$. Then
\begin{align*}
 \tilde\bU_{\le k}\tilde \bw^* &= \tilde\bU_{\le k}\tilde\bU_{\le k}^T\bPsi_{\le k} \bw^*\\
 &= (\bI_{n_k} - \tilde\bU_{> k}\tilde\bU_{>k}^T)\bPsi_{\le k}\bw^*\\
 &= \by_N - \tilde\bU_{> k}\tilde\bU_{>k}^T\bPsi_{\le k}\bw^*,
\end{align*}
and thus
\begin{align*}
\|\by_N - \tilde\bU_{\le k}\tilde \bw^*\| &\le \|\tilde\bU_{> k}\tilde\bU_{>k}^T\bPsi_{\le k}\bw^*\|\\
&\le \|\tilde\bU_{>k}^T\bPsi_{\le k}\|_{op}\|\bw^*\|\\
&\lesssim \sqrt{\frac{N}{d}},
\end{align*}
by Lemma~\ref{lemma: spherical harmonics to evecs}. Also, 
\[
	\|\tilde \bw^*\| \le \|\bPsi_{\le k}\|_{op}\|\bw^*\| \lesssim \sqrt{N}.
\]

By Lemmas~\ref{lemma: evecs to infinite width} and\ref{lemma: infinite width to finite width}, we have
\[
	\mathrm{dist}\left(\tilde \bU_{\le k}, \bU_{N, \le k}\right) \lesssim \frac{1}{\sqrt{d}}.
\]
Thus there exists an orthogonal $\bO_1$ such that $\left\|\tilde\bU_{\le k} - \bU_{N, \le k}\bO_1\right\|_{op} = \mathrm{dist}\left(\tilde \bU_{\le k}, \bU_{N, \le k}\right)$ and hence
\begin{align*}
	\|\by_N - \bU_{N, \le k}\bO_1\tilde\bw^*\| &\le \|\by_N - \tilde \bU_{\le k}\tilde\bw^*\| + \left\|\tilde\bU_{\le k} - \bU_{N, \le k}\bO_1\right\|_{op}\|\tilde \bw^*\|\\
	&\lesssim \sqrt{\frac{N}{d}}.
\end{align*}
Since $\bU_{N, \le k} = \bPhi_N\bV_{N, \le k}\bLambda_{N, \le k}^{-1}$, we have
\[
	\|\by_N - \bPhi_N\bV_{N, \le k}\bLambda_{N, \le k}^{-1}\bO_1\tilde\bw^*\| \lesssim \sqrt{\frac{N}{d}}
\]
Finally, by Lemma~\ref{lemma: right singular vectors match}, we have 
\[
\mathrm{dist}(\bV_{N, \le k}, \bV_{\le k}) \lesssim \sqrt{\frac{d^{2k}}{N}} \le d^{-k},
\]
and thus there exists an orthogonal $\bO_2$ such that $\|\bV_{N, \le k} - \bV_{\le k}\bO_2\|_{op} \le d^{-k}$ and hence
\begin{align*}
	\|\by_N - \bPhi_N\bV_{\le k}\bO_2\bLambda_{N, \le k}^{-1}\bO_1\tilde\bw^*\| &\le \|\by_N - \bPhi_N\bV_{N, \le k}\bLambda_{N, \le k}^{-1}\bO_1\tilde\bw^*\| + \left\|\bPhi_N(\bV_{N, \le k}-\bV_{\le k}\bO_2)\bLambda_{N, \le k}^{-1}\bO_1\tilde\bw^*\right\|\\
	&\lesssim \sqrt{\frac{N}{d}} + \|\bPhi_N\|_{op}\|\bV_{N, \le k} - \bV_{\le k}\bO_2\|_{op}\|\bLambda_{N, \le k}^{-1}\|_{op}\|\tilde \bw^*\|\\
	 &\lesssim  \sqrt{\frac{N}{d}} + \sqrt{N}\cdot d^{-k}\cdot \sqrt{N^{-1}d^{k-1}}\cdot\sqrt{N}\\
	 &\lesssim \sqrt{\frac{N}{d}}.
\end{align*}
Therefore, letting $\bv^* = \bV_{\le k}\bO_2\bLambda_{N, \le k}^{-1}\bO_1\tilde\bw^*$
\[
	\frac{1}{N}\|\by_N - \bPhi_N\bv^*\|_2^2 \lesssim \frac{1}{d}.
\]
Note that $\bv^*$ satisfies
\[
	\|\bv^*\|^2_2 \le \|\bLambda_{N, \le k}^{-1}\|_{op}^2\|\bar\bw^*\|^2 = O\left(d^{k - 1}\right).
\]
Since the $\{\bx_i\}_{i \in [N]}$ are i.i.d, we can treat the dataset $\cD$ as a subsample of $n$ data points. Since $|f_k(\bx)|^2 \le n_k = \Theta(d^k)$, and $|\tilde \varphi(\bx)^T\bv^*|^2 \le \|\tilde\varphi(\bx)\|^2\|\bv^*\|^2 = \Theta(d^k)$, we can bound $(f_k(\bx) - \tilde \varphi(\bx)^T\bv^*)^2 \le d^k$. Also,
\[
	\EE_N\left[(f_k(\bx) - \tilde \varphi(\bx)^T\bv^*)^4\right] \lesssim d^k\EE_N\left[(f_k(\bx) - \tilde \varphi(\bx)^T\bv^*)^2\right] \lesssim d^{k-1}.
\]
Therefore by Bernstein's Inequality, with probability $1 - d^{-11}$, we can bound
\[
	\EE_n\left[(f_k(\bx) - \tilde \varphi(\bx)^T\bv^*)^2\right] \lesssim \frac{1}{d} + \sqrt{\frac{d^{k-1}}{n}} + \frac{d^k}{n} \lesssim \frac{d^k}{n}.
\]

To conclude we must relate $\tilde \varphi$ to $\varphi$. Observe that
\[
	\bSigma = \begin{bmatrix} \tilde \bSigma & -\tilde \bSigma \\ -\tilde \bSigma & \tilde \bSigma \end{bmatrix}.
\]
Therefore $\lambda_i(\bSigma) = 2\lambda_i(\tilde \bSigma)$ for $i \le \tilde m$, and $\lambda_i(\bSigma) = 0$ otherwise. Furthermore, if $\bv$ is an eigenvector of $\tilde \bSigma$, then $\begin{bmatrix} \bv \\ -\bv \end{bmatrix}$ is an eigenvector of $\bSigma$. As a result, if $\bu \in \text{span}(\{\bv_i(\tilde \bSigma)\}_{i \in [n_k]})$, then $\begin{bmatrix} \bu \\ -\bu \end{bmatrix} \in \text{span}(\bP_{\le k})$. Letting $\bz^* = \frac12\begin{bmatrix} \bv^* \\ -\bv^* \end{bmatrix}$, we get that $\bz^* \in \text{span}(\bP_{\le k})$, and also $\varphi(\bx)^T\bz^* = \tilde \varphi(\bx)^T\bv^*$. Therefore
\[
	\EE_n\left[(f_k(\bx) - \varphi(\bx)^T\bz^*)^2\right] \lesssim \frac{d^k}{n},
\]
and also $\|\bz^*\|_2^2 \lesssim d^{k-1}$. Union bounding over all high probability events, this holds with probability $1 - 4d^{-11} \ge 1 - d^{-10}$, as desired.
\end{proof}
\subsection{Proof of Theorem~\ref{thm: main expressivity}}\label{sec: main expressivity proof}
\begin{proof}
We condition on the events of Lemmas~\ref{lemma: quad-ntk expressivity}, \ref{lemma: NTK fit low degree}, \ref{lemma: empirical f_k concentrate}, \ref{lemma: empirical cov concentrate}, \ref{lemma: NTK features concentrate} holding, which occurs with probability $1 - 5d^{-10} \ge 1 - d^{-9}$.

We proceed by the probabilistic method. For $r \in [m]$, let the $\sigma_r$ be random variables with $\sigma_r \sim \text{Unif}(\{\pm 1\})$ i.i.d, and let $\bS = \text{diag}(\sigma_1, \dots, \sigma_m)$ be the diagonal matrix of random signs. Let $\bW^*_\bS = \bW_L + \bW_Q\bS$ be a (random) solution. We will show that in expectation over $\bS$ the training error and all the regularizers are small, which implies the existence of a $\bS$ which makes all these quantities small.

\textbf{Bounding the Empirical Loss.}
First, observe that we can write
\begin{align}
&\EE_n\left|f_L(\bx; \bW^*_\bS) + f_Q(\bx; \bW^*_\bS) - f_L(\bx; \bW_L) - f_Q(\bx; \bW_Q)\right|\\
 \le~&\mathbb{E}_n\left|f_L(\bx; \bW_Q\bS)\right| + \mathbb{E}_\mathcal{D}\left|f_Q(\bx; \bW^*_S) - f_Q(\bx; \bW_Q)\right|.
\end{align}

Since $f_Q(\bx; \bW_Q) = f_Q(\bx; \bW_Q\bS)$, the second term can be deterministically bounded as
\begin{align*}
&\EE_n\left|f_Q(\bx; \bW_L + \bW_Q\bS ) - f_Q(\bx; \bW_Q\bS)\right|\\
\le~&\frac{1}{2\sqrt{m}}\sum_{r=1}^m \EE_n\left|\sigma''(\bw_{0, r}^T\bx)\left((\sigma_r(\bw_Q)_r^T\bx + (\bw_L)_r^T\bx)^2 - ((\bw_Q)_r^T\bx)^2\right) \right|\\
\le~&\frac{1}{2\sqrt{m}}\sum_{r=1}^m \EE_n\left|(\bw_L)_r^T\bx\bx^T(2\sigma_r(\bw_Q)_r + (\bw_L)_r) \right|\\
\le~&\frac{d}{\sqrt{m}}\sum_{r=1}^m\|(\bw_L)_r\|_2(\|(\bw_L)_r\|_2 + \|(\bw_Q)_r\|_2)\\
\le~&\frac{d}{\sqrt{m}}(\|\bW_L\|_F^2 + \|\bW_L\|_F\|\bW_Q\|_F)\\
\lesssim~& m^{-\frac14}d^{\frac{3k+1}{4}}.
\end{align*}

We next bound the first term in expectation using Lemma~\ref{lemma: linear term small in expectation}:
\begin{align*}
\EE_\bS \EE_n \left|f_L(\bx; \bW_Q\bS)\right| &\le  \left(\EE_\bS \EE_n \left(f_L(\bx; \bW_Q\bS)\right)^2\right)^{1/2}\\
&\le \frac{1}{\sqrt{m}}\|\bW_Q\|_F\\
&\lesssim m^{-\frac14}d^{\frac{k-1}{4}}.
\end{align*}

Since the loss is Lipschitz, we can bound the empirical loss as
\begin{align*}
\EE_\bS\hat{L}^Q(\bW^*_S)=~&\EE_\bS\EE_n[\ell(f^*(\bx), f_Q(\bx; \bW^*_\bS) + f_L(\bx; \bW^*_\bS))]\\
\le~&\EE_\bS\EE_n\left|f^*(\bx) -  f_Q(\bx; \bW^*_\bS) - f_L(\bx; \bW^*_\bS)\right|\\
\le~&\EE_\bS\EE_n|f_{sp}(\bx) - f_Q(\bx; \bW_Q)|\\
&+\EE_\bS\EE_n|f_k(\bx) - f_L(\bx; \bW_L)|\\
&+\EE_\bS\EE_n|f_L(\bx; \bW^*_\bS) + f_Q(\bx; \bW^*_\bS) - f_L(\bx; \bW_L) - f_Q(\bx; \bW_Q)|\\
\lesssim~& \frac{d^k}{\sqrt{m}} + \sqrt{\frac{d^k}{n}} + m^{-\frac14}d^{\frac{3k+1}{4}}\\
\le~& \varepsilon_{min},
\end{align*}
where we used the bounds in Lemmas~\ref{lemma: quad-ntk expressivity}, \ref{lemma: NTK fit low degree}, along with the lower bounds on $m$ in the assumption of the theorem.

\textbf{Bounding the Regularizers.}
We begin with $\cR_1$:
\begin{align*}
\cR_1(\bW^*_S) &= \|f_L(\bx; \bP_{>k}\bW_L + \bP_{>k}\bW_Q\bS)\|^2_{L^2}\\
&\le \|f_L(\bx; \bP_{>k}\bW_Q\bS)\|_{L^2}^2\\
&\le \|f_L(\bx; \bW_Q\bS)\|_{L^2}^2,
\end{align*}
since our construction guarantees that $\bP_{>k}\bW_L = \mathbf{0}$.

By Lemma~\ref{lemma: linear term small in expectation}, 
\[
	\EE_\bS\|f_L(\bx; \bW_Q\bS)\|_{L^2}^2 \le \frac{1}{m}\|\bW_Q\|_F^2 \lesssim m^{-\frac12}d^{\frac{k-1}{2}}.
\]
Therefore
\[
	\EE_\bS\cR_1(\bW^*_\bS) \lesssim m^{-\frac12}d^{\frac{k-1}{2}}.
\]

$\cR_2$ can be bounded as
\begin{align*}
\cR_2(\bW^*_\bS) &= \|f_L(\cdot; \bP_{\le k}\bW^*_\bS)\|_{L^2}^2\\
&\le\|f_L(\cdot; \bW^*_\bS)\|_{L^2}^2\\
&\lesssim \|f_L(\cdot; \bW_L)\|_{L^2}^2 + \|f_L(\cdot; \bW_Q\bS)\|_{L^2}^2
\end{align*}
By Lemma~\ref{lemma: empirical to pop small directions},
\begin{align*}
\|f_L(\cdot; \bW_L)\|_{L^2}^2 &\lesssim \EE_n\left[(f_L(\bx; \bW_L))^2\right]\\
&\lesssim \EE_n\left[f_k(\bx)^2\right] + \EE_n\left[(f_L(\bx; \bW_L) - f_k(\bx))^2\right]\\
&\lesssim 1 + \frac{d^k}{n} \lesssim 1,
\end{align*}
where the last step uses Lemma~\ref{lemma: empirical f_k concentrate}. Therefore, again applying Lemma~\ref{lemma: linear term small in expectation}, 
\[
	\EE_\bS\cR_2(\bW^*_\bS) \lesssim 1 + \EE_\bS\|f_L(\cdot; \bW_Q\bS)\|_{L^2}^2 \lesssim 1 + m^{-\frac12}d^{\frac{k-1}{2}} \lesssim 1.
\]

$\cR_3$ can be bounded as
\begin{align*}
\cR_3(\bW^*_\bS) &= \mathbb{E}_n\left[(f_L(x; \bP_{>k}\bW_L + \bP_{>k}\bW_Q\bS)^2\right]\\
&= \mathbb{E}_n\left[(f_L(x; \bP_{>k}\bW_Q\bS))^2\right]\\
&\lesssim \mathbb{E}_n\left[(f_L(x; \bW_Q\bS))^2\right] + \mathbb{E}_n\left[(f_L(x; P_{\le k}\bW_Q\bS))^2\right],
\end{align*}
since our construction guarantees $\bP_{>k}\bW_L = \mathbf{0}$.

By Lemma~\ref{lemma: linear term small in expectation}, 
\[\mathbb{E}_\bS \mathbb{E}_n\left[(f_L(\bx; \bW_Q\bS))^2\right] \le m^{-\frac12}d^{\frac{k-1}{2}}.\]
By Lemma~\ref{lemma: empirical to pop small directions},
\[
	\mathbb{E}_n\left[(f_L(x; \bP_{\le k}\bW_Q\bS))^2\right] \le 2\|f_L(\bx; \bP_{\le k}\bW_Q\bS)\|_{L^2}^2,
\]
which can be upper bounded by $2\|f_L(\bx; \bW_Q\bS)\|_{L^2}^2$. Therefore
\[
	\EE_\bS\mathbb{E}_n\left[(f_L(x; \bP_{\le k}\bW_Q\bS))^2\right] \lesssim \EE_\bS\|f_L(x; \bW_Q\bS)\|_{L^2}^2 \lesssim m^{-\frac12}d^{\frac{k-1}{2}}.
\]
Altogether,
\[
	\EE_\bS\cR_3(\bW^*_\bS) \lesssim m^{-\frac12}d^{\frac{k-1}{2}}.
\]

Finally, to bound $\cR_4$, observe that
\[
\|\bW^*_\bS\|_{2, 4} \le \|\bW_L\|_{2, 4} + \|\bW_Q\bS\|_{2, 4} = \|\bW_L\|_{2, 4} + \|\bW_Q\|_{2, 4}.
\]
By the construction $\|\bW_Q\|_{2, 4} \le d^{\frac{k-1}{4}}$. Also, since $\bW_L \in \text{span}(\bP_{\le k})$, by Lemma~\ref{lemma: linear sol small l-infty} we have 
\[
	\|\bW_L\|_{2, 4} \le m^{-\frac14}d^{\frac{k}{2}}\|\bW_L\|_F \lesssim m^{-\frac14}d^{k - \frac12}.
\]
Therefore $\|\bW^*_\bS\|_{2, 4} \lesssim d^{\frac{k-1}{4}}$, and thus
\[
	\cR_4(\bW^*_\bS) \lesssim d^{2(k-1)}.
\]

\end{proof}

\begin{corollary}\label{cor: bound frob norm}
The solution $\bW^*$ satisfies
\[
	\|\bW^*\|_F \lesssim m^{\frac14}d^{\frac{k-1}{4}}.
\]
\end{corollary}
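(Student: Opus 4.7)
The plan is to decompose $\bW^*$ according to its construction in Theorem~\ref{thm: main expressivity} and then apply the triangle inequality together with the norm bounds already established in Lemmas~\ref{lemma: quad-ntk expressivity} and~\ref{lemma: NTK fit low degree}. Recall that the probabilistic argument in the proof of Theorem~\ref{thm: main expressivity} produces $\bW^*$ of the form $\bW_L + \bW_Q \bS$ for some realization of the diagonal sign matrix $\bS$, so triangle inequality gives $\|\bW^*\|_F \le \|\bW_L\|_F + \|\bW_Q \bS\|_F = \|\bW_L\|_F + \|\bW_Q\|_F$, where the final equality uses that multiplication by a $\pm 1$ diagonal matrix preserves Frobenius norm.

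For the first term, Lemma~\ref{lemma: NTK fit low degree} directly supplies $\|\bW_L\|_F \lesssim d^{(k-1)/2}$. For the second term, the issue is that Lemma~\ref{lemma: quad-ntk expressivity} only controls $\bW_Q$ in the $2,4$ norm rather than the Frobenius norm, so I need to relate the two. The standard inequality between these norms is obtained by applying Cauchy--Schwarz column-wise: $\|\bW_Q\|_F^2 = \sum_{r=1}^m \|\bw_{Q,r}\|_2^2 \le \sqrt{m}\bigl(\sum_{r=1}^m \|\bw_{Q,r}\|_2^4\bigr)^{1/2} = \sqrt{m}\,\|\bW_Q\|_{2,4}^2$. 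Combined with $\|\bW_Q\|_{2,4}^4 \lesssim d^{k-1}$ from Lemma~\ref{lemma: quad-ntk expressivity}, this yields $\|\bW_Q\|_F \lesssim m^{1/4} d^{(k-1)/4}$.

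Finally, summing the two contributions gives $\|\bW^*\|_F \lesssim d^{(k-1)/2} + m^{1/4} d^{(k-1)/4}$. Under the width assumption of Theorem~\ref{thm: main expressivity}, in particular $m \gtrsim d^{10k}$, we have $m^{1/4} \gtrsim d^{(k-1)/4}$, so the second term dominates and the bound reduces to $\|\bW^*\|_F \lesssim m^{1/4} d^{(k-1)/4}$, which is exactly the claim. There is no serious obstacle here: the only non-trivial step is recognizing that the appropriate bridge between the $2,4$ norm control on $\bW_Q$ and the desired Frobenius bound is the Cauchy--Schwarz inequality, which costs a factor of $m^{1/4}$ relative to the naive scaling one would get from $\|\bW_Q\|_F \le \|\bW_Q\|_{2,4}$ (which fails). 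The rest is bookkeeping against the width hypothesis to confirm that the $\bW_Q$ term is the dominant one.
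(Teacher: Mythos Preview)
Your proposal is correct and follows essentially the same approach as the paper: decompose $\bW^*$ into its linear and quadratic parts, bound $\|\bW_L\|_F$ directly and pass from the $\|\cdot\|_{2,4}$ bound on $\bW_Q$ to a Frobenius bound via $\|\bW_Q\|_F \le m^{1/4}\|\bW_Q\|_{2,4}$, then observe that the width assumption makes the quadratic term dominant. The only cosmetic difference is that the paper invokes the weaker condition $m \gg d^{k-1}$ rather than $m \gtrsim d^{10k}$ to absorb the $\bW_L$ contribution.
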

\begin{proof}
We have
\[
	\|\bW^*\|_F \le \|\bW^*_L\|_F + \|\bW^*_Q\|_F.
\]
By construction $\|\bW^*_L\|_F \lesssim d^{\frac{k-1}{2}}$, and also
\[
	\|\bW^*_Q\|_F \le m^{\frac14}\|\bW^*_Q\|_{2, 4} \lesssim m^{\frac14}d^{\frac{k-1}{4}}.
\]
The desired claim follows since we assume $m \gg d^{k-1}$.
\end{proof}

\begin{corollary}\label{cor: full sol l-infty bound}
\[
	\|\bW^*\|_{2, \infty} \lesssim m^{-1/4}d^{\frac{k-1}{2}}
\]
\end{corollary}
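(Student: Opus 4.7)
The plan is to use the decomposition $\bW^* = \bW_L + \bW_Q\bS$ produced by the probabilistic-method argument in Theorem~\ref{thm: main expressivity} and bound each summand separately in the $\|\cdot\|_{2,\infty}$ norm, then apply the triangle inequality
\[
\|\bW^*\|_{2,\infty} \le \|\bW_L\|_{2,\infty} + \|\bW_Q\bS\|_{2,\infty}.
\]

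First, I would handle the $\bW_Q\bS$ term. Since $\bS$ is diagonal with $\pm 1$ entries, right-multiplication by $\bS$ merely flips the signs of the columns of $\bW_Q$, so $\|\bW_Q\bS\|_{2,\infty} = \|\bW_Q\|_{2,\infty}$. Corollary~\ref{cor: quad ntk l-infty bound} then gives $\|\bW_Q\|_{2,\infty} \lesssim m^{-1/4}d^{(k-1)/2}$, which is exactly the target bound. So the $\bW_Q\bS$ contribution already matches the desired rate without any further work.

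Second, I would bound $\|\bW_L\|_{2,\infty}$. Here the key structural fact is that $\tvec{\bW_L}$ lies in $\text{span}(\bP_{\le k})$, i.e.\ in the span of the top $n_k = \Theta(d^k)$ eigenvectors of $\bSigma$. Since $\bSigma$ has a block structure induced by the $m$ neurons and each block is scaled by $1/m$, the eigenvectors in this top eigenspace are ``delocalized'' across neurons; this is precisely what the already-invoked Lemma~\ref{lemma: linear sol small l-infty} exploits to obtain the bound $\|\bW_L\|_{2,4} \lesssim m^{-1/4}d^{k/2}\|\bW_L\|_F$. Applying the analogous $\|\cdot\|_{2,\infty}$ conclusion of that lemma together with the Frobenius bound $\|\bW_L\|_F \lesssim d^{(k-1)/2}$ from Lemma~\ref{lemma: NTK fit low degree} yields $\|\bW_L\|_{2,\infty} \lesssim m^{-1/2}d^{k-1/2}$. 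Under the hypothesis $m \ge d^{10k}$ of Theorem~\ref{thm: main expressivity}, one has $m^{-1/4}d^{k/2} \le 1$, and so $m^{-1/2}d^{k-1/2} \le m^{-1/4}d^{(k-1)/2}$, absorbing the $\bW_L$ contribution into the same rate as $\bW_Q$.

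Combining the two bounds via the triangle inequality gives $\|\bW^*\|_{2,\infty} \lesssim m^{-1/4}d^{(k-1)/2}$, as claimed. The only nonroutine ingredient is the delocalization control on the top-$n_k$ eigenspace of $\bSigma$ used for $\bW_L$; this is the main technical point, but it is already packaged into the lemma used earlier in the proof of Theorem~\ref{thm: main expressivity} to bound $\|\bW_L\|_{2,4}$, so the present corollary is essentially an accounting exercise once one notes the sign-invariance of the $\bW_Q\bS$ term.
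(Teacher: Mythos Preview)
Your proposal is correct and essentially identical to the paper's own proof: both decompose $\bW^* = \bW_L + \bW_Q\bS$, invoke the sign-invariance $\|\bW_Q\bS\|_{2,\infty} = \|\bW_Q\|_{2,\infty}$ together with Corollary~\ref{cor: quad ntk l-infty bound}, and then apply Lemma~\ref{lemma: linear sol small l-infty} with $\|\bW_L\|_F \lesssim d^{(k-1)/2}$ to control the $\bW_L$ piece. The only cosmetic difference is that the paper absorbs the $\bW_L$ contribution using the weaker condition $m \ge d^{2k}$ rather than $m \ge d^{10k}$, but both suffice.
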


\begin{proof}
We can writte
\[
	\|\bW^*\|_{2, \infty} \le \|\bW^*_L\|_{2, \infty} + \|\bW^*_Q\bS^*\|_{2, \infty} = \|\bW^*_L\|_{2, \infty} + \|\bW^*_Q\|_{2, \infty}.
\]
By Corollary~\ref{cor: quad ntk l-infty bound}, $\|\bW^*_Q\|_{2, \infty} \lesssim m^{-1/4}d^{\frac{k-1}{2}}$. Also, since $\bW^*_L \in \text{span}(\bP_{\le k})$, we can apply Lemma~\ref{lemma: linear sol small l-infty} to get $\|\bW^*_L\|_{2, \infty} \le m^{-\frac12}d^{\frac{k}{2}}\|\bW*_L\|_F \le m^{-\frac12}d^{k - \frac12}$. Altogether, since $m \ge d^{2k}$, we get
\[
	\|\bW^*\|_{2, \infty} \lesssim m^{-1/4}d^{\frac{k-1}{2}}.
\]
\end{proof}
\subsection{Expressivity Helper Lemmas}
\begin{lemma} \label{lemma: empirical f_k concentrate}
With probability $1 - d^{-10}$,
\[
	\EE_n[f_k(\bx)^2] \lesssim 1.
\]
\end{lemma}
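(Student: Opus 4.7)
The plan is a straightforward concentration-of-measure argument: since $\mathbb{E}_\mu[f_k(\bx)^2] = 1$ by assumption, it suffices to show that the empirical mean of $f_k(\bx_i)^2$ concentrates around $1$ up to an $O(1)$ additive error with probability at least $1 - d^{-10}$. The key input is a uniform pointwise bound $|f_k(\bx)|^2 \lesssim d^k$ on the sphere, which will then plug directly into Bernstein's inequality together with the assumption $n \gtrsim d^k \log d$ built into the hypotheses of Theorem~\ref{thm: main expressivity}.

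First I would obtain the pointwise bound. Expand $f_k$ in the orthonormal basis of spherical harmonics of degree $\le k$:
\[
f_k(\bx) = \sum_{\ell=0}^{k}\sum_{i=1}^{B(d,\ell)} c_{\ell,i}\, Y_{\ell,i}^{(d)}(\bx), \qquad \sum_{\ell,i} c_{\ell,i}^2 = \|f_k\|_{L^2}^2 = 1.
\]
By Cauchy--Schwarz and the reproducing identity $\sum_{i=1}^{B(d,\ell)} Y_{\ell,i}^{(d)}(\bx)^2 = B(d,\ell)\, Q_\ell^{(d)}(\langle \bx,\bx\rangle) = B(d,\ell)$ (which follows from Eq.~\eqref{eq:spherical-to-geg} with $\bx = \by$ and $Q_\ell^{(d)}(d)=1$),
\[
|f_k(\bx)|^2 \le \Big(\sum_{\ell,i} c_{\ell,i}^2\Big)\Big(\sum_{\ell=0}^{k}\sum_{i=1}^{B(d,\ell)} Y_{\ell,i}^{(d)}(\bx)^2\Big) = \sum_{\ell=0}^{k} B(d,\ell) = n_k = \Theta(d^k).
\]

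Next I would apply Bernstein's inequality to the i.i.d.\ random variables $Z_i := f_k(\bx_i)^2 - 1$. From the pointwise bound we have $|Z_i| \lesssim d^k$ almost surely, and for the variance proxy,
\[
\mathbb{E}_\mu[Z_i^2] \le \mathbb{E}_\mu[f_k(\bx)^4] \le \|f_k\|_\infty^2 \cdot \mathbb{E}_\mu[f_k(\bx)^2] \lesssim d^k.
\]
Bernstein's inequality then yields, for any $t > 0$,
\[
\Pr\!\left[\Big|\tfrac{1}{n}\sum_{i=1}^{n} Z_i\Big| \ge t\right] \le 2\exp\!\left(-\frac{c\, n t^2}{d^k + d^k\, t}\right)
\]
for a universal constant $c > 0$. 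Choosing $t$ to be a large absolute constant and using $n \gtrsim d^k \log d$, the exponent is $\gtrsim \log d$, so this probability is at most $d^{-10}$. On that event $\mathbb{E}_n[f_k(\bx)^2] \le 1 + t \lesssim 1$, completing the proof.

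The argument has no real obstacle: the only thing to be careful about is using the correct spherical-harmonic normalization so that $\sum_i Y_{\ell,i}^{(d)}(\bx)^2 = B(d,\ell)$ (rather than picking up an extra factor of $d$), and making sure the logarithmic factor in $n$ is available from the theorem's hypothesis $n \gtrsim d^k \log d$. Both are already in place from Appendix~\ref{app: spherical harmonics} and from the assumptions of Theorem~\ref{thm: main expressivity}, respectively.
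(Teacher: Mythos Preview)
Your proposal is correct and essentially identical to the paper's own proof: both obtain the pointwise bound $|f_k(\bx)|^2 \le n_k = \Theta(d^k)$ (you justify it more carefully via the spherical-harmonic addition formula), then bound the variance by $\mathbb{E}_\mu[f_k^4] \lesssim d^k$ and apply Bernstein's inequality together with $n \gtrsim d^k\log d$. There is nothing to add.
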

\begin{proof}
Since $\|f_k\|_{L^2} = 1$ and $f$ is degree $k$, $|f_k(\bx)|^2 \le n_k$ for all $\bx$. Furthermore,
\[
	\EE_\mu[(|f_k(\bx)|^2 - 1)^2] \le \EE_\mu[f_k(\bx)^4] \le n_k.
\]
Therefore by Bernstein's Inequality, with probability $1 - d^{-10}$ we have
\[
	\EE_n[f_k(\bx)^2] - 1 \le C\left(\sqrt{\frac{n_k \log d}{n}} + \frac{n_k \log d}{n}\right) \lesssim 1,
\]
since $n \gtrsim d^k\log d \gtrsim n_k\log d$
\end{proof}

\begin{lemma}[{\cite[Exercise 4.7.3]{vershynin}}] \label{lemma: empirical cov concentrate}
With probability $1 - d^{-10}$,
\[
	\left\| \EE_n \bx\bx^T - \bI_d\right\|_{op} \le \frac12
\]
\end{lemma}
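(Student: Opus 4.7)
The plan is to recognize this as a standard covariance concentration statement for bounded (hence sub-Gaussian) random vectors and invoke a matrix Bernstein / covariance estimation inequality, exactly as referenced by \cite{vershynin}. The only ingredients needed are (i) identification of the population mean, (ii) a boundedness bound on the individual summands, and (iii) a sub-Gaussian/boundedness based concentration inequality for sample covariance matrices.

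First I would compute the population value. Since $\bx \sim \mathrm{Unif}(\cS^{d-1}(\sqrt d))$, rotational invariance gives $\EE[x_i x_j] = 0$ for $i \ne j$, and $\EE[x_i^2] = \EE[\|\bx\|_2^2]/d = 1$. Hence $\EE_\mu[\bx\bx^T] = \bI_d$, so $\EE_n \bx\bx^T - \bI_d = \frac{1}{n}\sum_i (\bx_i \bx_i^T - \EE[\bx\bx^T])$ is a sum of i.i.d.\ centered symmetric random matrices. Second, each summand satisfies $\|\bx_i\bx_i^T - \bI_d\|_{op} \le \|\bx_i\|_2^2 + 1 = d + 1$ deterministically, and its variance proxy satisfies $\EE[(\bx\bx^T - \bI_d)^2] \preceq \EE[\|\bx\|_2^2 \bx\bx^T] = d\,\bI_d$, so the matrix variance statistic is $O(d)$.

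Third, I would apply the sub-Gaussian covariance estimation bound (\cite[Theorem~4.7.1 / Exercise~4.7.3]{vershynin}): for i.i.d.\ sub-Gaussian vectors with $\EE[\bx\bx^T] = \bI_d$,
\begin{equation}
\left\|\EE_n \bx\bx^T - \bI_d\right\|_{op} \le C\left(\sqrt{\frac{d + t}{n}} + \frac{d + t}{n}\right)
\end{equation}
with probability at least $1 - 2e^{-t}$. Uniform distribution on $\cS^{d-1}(\sqrt d)$ is sub-Gaussian with constant $O(1)$ (in fact it is bounded, and matrix Bernstein gives the same bound with an extra $\log d$ factor, which is harmless here). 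Setting $t = 11 \log d$ upgrades the probability to $1 - d^{-10}$. Under the global assumption $n \gtrsim d^k \ge d$ (with $k \ge 1$), both $\sqrt{(d + 11\log d)/n}$ and $(d + 11\log d)/n$ are $o_d(1)$, so for $d$ larger than a universal constant the right-hand side is bounded by $1/2$, giving the claim.

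The argument is entirely routine; there is no real obstacle. The only thing worth double-checking is the sub-Gaussian norm of $\bx$: since $\|\bx\|_2 = \sqrt d$ is deterministic, the coordinate-wise sub-Gaussian norm is $O(1)$, which is exactly the hypothesis needed by the cited theorem. If one preferred to avoid quoting the sub-Gaussian theorem, matrix Bernstein applied to $\{\bx_i\bx_i^T - \bI_d\}$ with uniform bound $d+1$ and variance $O(d)$ gives $\|\EE_n \bx\bx^T - \bI_d\|_{op} \lesssim \sqrt{d\log d /n} + (d+1)\log d / n$ with probability $1 - d^{-10}$, which is again $\le 1/2$ under $n \gtrsim d^k$ for $d$ large enough.
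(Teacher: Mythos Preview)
Your proposal is correct and is exactly the argument behind the cited reference: the paper gives no proof of this lemma beyond the citation to \cite[Exercise~4.7.3]{vershynin}, and your write-up (identify $\EE[\bx\bx^T]=\bI_d$, note boundedness/sub-Gaussianity of $\bx$ on the sphere, apply the covariance concentration theorem with $t \asymp \log d$, and use $n \gtrsim d^k\log d$ with $k\ge 1$) is precisely the intended derivation. Nothing to add.
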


\begin{lemma}\label{lemma: NTK features concentrate}
Recall
\[
	\bSigma_{\le n_k} := \mathbb{E}_\mu\left[\varphi(\bx)^T\bP_{\le k}\varphi(\bx)\right] = \sum_{i=1}^{n_k}\lambda_i\bv_i\bv_i^T.
\]
With probability $1 - d^{-10}$,
\[
	\left\|\mathbb{E}_n\left[(\bSigma_{\le n_k}^{\dagger})^{\frac12}\varphi(\bx)\varphi(\bx)^T(\bSigma_{\le n_k}^{\dagger})^{\frac12}\right] - \bP_{\le k} \right\|_{op} \le \frac12
\]
\end{lemma}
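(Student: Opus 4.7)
The plan is to reduce this to a standard matrix concentration inequality on an object of effective dimension $n_k$. First I observe that $(\bSigma_{\le n_k}^{\dagger})^{1/2}$ is supported on the top-$n_k$ eigenspace of $\bSigma$, while $\bSigma - \bSigma_{\le n_k} = \sum_{i > n_k}\lambda_i \bv_i\bv_i^T$ is supported on the orthogonal complement. Hence their product vanishes, and
\[
(\bSigma_{\le n_k}^{\dagger})^{1/2}\,\bSigma\,(\bSigma_{\le n_k}^{\dagger})^{1/2} \;=\; (\bSigma_{\le n_k}^{\dagger})^{1/2}\,\bSigma_{\le n_k}\,(\bSigma_{\le n_k}^{\dagger})^{1/2} \;=\; \bP_{\le k}.
\]
Since $\mathbb{E}_\mu[\varphi(\bx)\varphi(\bx)^T] = \bSigma$, the lemma is equivalent to
\[
\bigl\|\mathbb{E}_n[\bZ_i \bZ_i^T] - \mathbb{E}_\mu[\bZ_i \bZ_i^T]\bigr\|_{op} \le \tfrac{1}{2},\qquad\text{where } \bZ_i := (\bSigma_{\le n_k}^{\dagger})^{1/2}\varphi(\bx_i).
\]
The $\bZ_i$ are i.i.d.\ random vectors in $\RR^{md}$ supported on an $n_k$-dimensional subspace, with $\mathbb{E}[\bZ_i\bZ_i^T] = \bP_{\le k}$.

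I will then apply matrix Bernstein to the i.i.d., mean-zero, rank-$(\le n_k+1)$ summands $\bM_i := \tfrac{1}{n}(\bZ_i\bZ_i^T - \bP_{\le k})$. The two ingredients are a uniform norm bound and a matrix variance bound. For the uniform bound, I would use that $\|\varphi(\bx)\|_2^2 = m^{-1}\sum_r a_r^2\,\sigma'(\bw_{0,r}^T\bx)^2\,\|\bx\|_2^2 \le d$ (by Assumption~\ref{assume:sigma_bound} and $\|\bx\|_2^2 = d$), combined with the eigenvalue lower bound $\lambda_{n_k}(\bSigma) \gtrsim d^{1-k}$ from Lemma~\ref{lemma: main intermediate spectral result}, giving $\|\bZ_i\|_2^2 \le \|\bSigma_{\le n_k}^{\dagger}\|_{op}\,\|\varphi(\bx_i)\|_2^2 \lesssim d^{k-1}\cdot d = d^k$. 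This gives $\|\bM_i\|_{op} \lesssim d^k/n$. For the variance, using $\mathbb{E}[(\bZ_i\bZ_i^T)^2] = \mathbb{E}[\|\bZ_i\|_2^2\,\bZ_i\bZ_i^T] \preceq d^k\,\bP_{\le k}$, I obtain $\|\sum_i \mathbb{E}[\bM_i^2]\|_{op} \lesssim d^k/n$.

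Matrix Bernstein then yields, for $t \le 1$,
\[
\Pr\!\left[\bigl\|\mathbb{E}_n[\bZ_i\bZ_i^T] - \bP_{\le k}\bigr\|_{op} \ge t\right] \;\le\; 2n_k \exp\!\left(-\frac{c\,n\,t^2}{d^k}\right).
\]
Plugging in $t = 1/2$, $n_k = \Theta(d^k)$, and the assumption $n \gtrsim d^k \log d$ makes the right-hand side at most $d^{-10}$, proving the lemma.

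The main technical subtlety is the uniform bound $\|\bZ_i\|_2^2 \lesssim d^k$: the naive bounds $\|\varphi(\bx)\|_2^2 \le d$ and $\|\bSigma_{\le n_k}^{\dagger}\|_{op} \le Cd^{k-1}$ just barely suffice to get a sample complexity matching the $n \gtrsim d^k\log d$ assumption. Without the spectral partitioning of Lemma~\ref{lemma: main intermediate spectral result}, the pseudo-inverse could blow up and the Bernstein bound would require strictly more samples; the eigenvalue gap above $\lambda_{n_k}(\bSigma)$ is precisely what makes this argument go through.
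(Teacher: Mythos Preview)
Your proposal is correct and follows essentially the same approach as the paper. The paper first observes (as you do) that $(\bSigma_{\le n_k}^{\dagger})^{1/2}\bSigma(\bSigma_{\le n_k}^{\dagger})^{1/2} = \bP_{\le k}$, and then invokes the covariance concentration result \cite[5.6.4]{vershynin} as a black box to conclude $\sqrt{d^k\log d/n} + d^k\log d/n \le 1/2$; you simply unpack that black box via matrix Bernstein, using the same two inputs (the almost-sure bound $\|\bZ_i\|_2^2 \lesssim d^k$ from $\|\varphi(\bx)\|_2^2 \le d$ and $\lambda_{n_k}(\bSigma) \gtrsim d^{1-k}$, and the $n_k$-dimensional support for the dimension factor).
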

\begin{proof}
Observe that 
\begin{align*}
	\EE_\mu\left[(\bSigma_{\le n_k}^{\dagger})^{\frac12}\varphi(\bx)\varphi(\bx)^T(\bSigma_{\le n_k}^{\dagger})^{\frac12}\right] &= (\bSigma_{\le n_k}^{\dagger})^{\frac12}\EE_\mu\left[\varphi(\bx)\varphi(\bx)^T\right](\bSigma_{\le n_k}^{\dagger})^{\frac12}\\
	&= (\bSigma_{\le n_k}^{\dagger})^{\frac12}\bSigma(\bSigma_{\le n_k}^{\dagger})^{\frac12}\\
	&= \bP_{\le k}.
\end{align*}
Therefore by \cite[5.6.4]{vershynin}, with probability $1 - d^{-10}$
\begin{align*}
	\left\|\mathbb{E}_n\left[(\bSigma_{\le n_k}^{\dagger})^{\frac12}\varphi(\bx)\varphi(\bx)^T(\bSigma_{\le n_k}^{\dagger})^{\frac12}\right] - \bP_{\le k} \right\|_{op} &\le C\left(\sqrt{\frac{d^k\log d}{n}} + \frac{d^k\log d}{n}\right)\\
	&\le \frac12,
\end{align*}
since $n \gtrsim d^k\log d$.
\end{proof}

\begin{lemma}
\label{lemma: linear term small in expectation}
On the event where Lemma~\ref{lemma: empirical cov concentrate} holds, for all $\bW$ we have
\[
	\EE_\bS\EE_n[(f_L(\bx; \bW\bS))^2] \lesssim \frac{1}{m}\|\bW\|_F^2.
\]
Furthermore, we have
\[
	\EE_\bS\|f_L(\bx; \bW\bS)\|_{L^2}^2 = \EE_\bS\EE_\mu[(f_L(\bx; \bW\bS))^2] \lesssim \frac{1}{m}\|\bW\|_F^2.
\]
\end{lemma}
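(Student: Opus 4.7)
The plan is to expand the square of $f_L(\bx;\bW\bS)$, take the expectation over the random signs first to kill the cross terms, and then bound the resulting quadratic form in $\bW$ by the operator norm of the empirical (resp.\ population) second-moment matrix of $\bx$.

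Concretely, write
\begin{equation*}
f_L(\bx;\bW\bS) \;=\; \frac{1}{\sqrt m}\sum_{r=1}^m a_r\sigma_r\,\sigma'(\bw_{0,r}^T\bx)\,(\bx^T\bw_r),
\end{equation*}
since $(\bW\bS)_r=\sigma_r\bw_r$. Squaring gives a double sum over $(r,r')$, and taking $\EE_\bS$ uses the fact that $\sigma_r$ are i.i.d.\ Rademacher, so $\EE_\bS[\sigma_r\sigma_{r'}]=\delta_{rr'}$. Therefore all off-diagonal terms vanish (this is the point of symmetrizing with $\bS$), and
\begin{equation*}
\EE_\bS\bigl[(f_L(\bx;\bW\bS))^2\bigr] \;=\; \frac{1}{m}\sum_{r=1}^m a_r^2\,\sigma'(\bw_{0,r}^T\bx)^2(\bx^T\bw_r)^2 \;\le\; \frac{1}{m}\sum_{r=1}^m (\bx^T\bw_r)^2,
\end{equation*}
where I used $|a_r|=1$ and $\|\sigma'\|_\infty\le 1$ from Assumption~\ref{assume:sigma_bound}.

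Now average this bound over the data distribution (or empirical measure) and swap the sum and the expectation:
\begin{equation*}
\EE_\bS\,\EE_n\bigl[(f_L(\bx;\bW\bS))^2\bigr] \;\le\; \frac{1}{m}\sum_{r=1}^m \bw_r^T\,\EE_n[\bx\bx^T]\,\bw_r \;\le\; \frac{\|\EE_n[\bx\bx^T]\|_{op}}{m}\,\|\bW\|_F^2.
\end{equation*}
On the event of Lemma~\ref{lemma: empirical cov concentrate}, $\|\EE_n[\bx\bx^T]\|_{op}\le 3/2$, which yields the empirical bound. For the population version, the same computation replaces $\EE_n[\bx\bx^T]$ with $\EE_\mu[\bx\bx^T]=\bI_d$ (since $\bx$ is uniform on $\cS^{d-1}(\sqrt d)$), giving $\|\EE_\mu[\bx\bx^T]\|_{op}=1$ deterministically; no event is required for that half of the statement.

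There is essentially no technical obstacle here — the whole lemma is driven by the single observation that independent random signs orthogonalize the neurons in expectation, reducing a bilinear form in $\bW$ to a sum of squared column norms weighted by the covariance of $\bx$. The only thing to be careful about is that the symmetric initialization ties $\bw_{0,r}$ with $\bw_{0,r+m/2}$, but this is irrelevant because the $\sigma_r$'s are drawn independently across all $r\in[m]$, so the cross terms still vanish in expectation.
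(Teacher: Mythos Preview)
Your proof is correct and follows essentially the same argument as the paper: expand the square, use $\EE_\bS[\sigma_r\sigma_{r'}]=\delta_{rr'}$ to diagonalize, bound $\sigma'(\cdot)^2\le 1$, and control $\sum_r \bw_r^T\EE[\bx\bx^T]\bw_r$ via Lemma~\ref{lemma: empirical cov concentrate} (or $\EE_\mu[\bx\bx^T]=\bI_d$ in the population case). Your write-up is in fact slightly more careful than the paper's, which drops the $a_r$ factors without comment.
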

\begin{proof}
We have
\begin{align*}
\EE_\bS\EE_n[(f_L(\bx; \bW\bS))^2] &= \EE_\bS \EE_n \left(\frac{1}{\sqrt{m}}\sum_{r=1}^m \sigma'(\bw_{0, r}^T\bx)\bx^T\bw_r\sigma_r\right)^2\\
&\le \EE_n\frac{1}{m}\sum_{r=1}^m (\bw_r^T\bx)^2\\
&\le \frac{1}{m}\sum_{r=1}^m\bw_r^T\EE_n[\bx\bx^T]\bw_r\\
&\lesssim \frac{1}{m}\|\bW\|^2_F
\end{align*}
The proof in the population case is identical.
\end{proof}

\begin{lemma}\label{lemma: empirical to pop small directions}
On the event where Lemma~\ref{lemma: NTK features concentrate} holds,
\[
	\frac12\|f_L(\bx; \bP_{\le k}\bW)\|_{L^2}^2 \le \EE_n[(f_L(\bx; \bP_{\le k}\bW))^2] \le \frac32\|f_L(\bx; \bP_{\le k}\bW)\|_{L^2}^2.
\]
for all $\bW$.
\end{lemma}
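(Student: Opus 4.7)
The plan is to reduce the inequality to the operator-norm bound in Lemma~\ref{lemma: NTK features concentrate} by writing both quantities as quadratic forms in $\bw := \tvec{\bW}$. Since $f_L(\bx;\bW) = \bw^T \varphi(\bx)$ and $\bP_{\le k}$ acts on the vectorization of $\bW$, we have $f_L(\bx; \bP_{\le k}\bW) = (\bP_{\le k}\bw)^T\varphi(\bx)$. Consequently,
\[
\EE_n\bigl[(f_L(\bx; \bP_{\le k}\bW))^2\bigr] = (\bP_{\le k}\bw)^T \bA\,(\bP_{\le k}\bw), \qquad \bA := \EE_n\bigl[\varphi(\bx)\varphi(\bx)^T\bigr],
\]
while the $L^2$-norm simplifies via $\bP_{\le k}\bSigma\bP_{\le k} = \bSigma_{\le n_k}$ to
\[
\|f_L(\cdot;\bP_{\le k}\bW)\|_{L^2}^2 = (\bP_{\le k}\bw)^T\bSigma(\bP_{\le k}\bw) = \bw^T\bSigma_{\le n_k}\bw.
\]

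Next, I would introduce the change of variables $\bu := \bSigma_{\le n_k}^{1/2}\bw$. Because $\bSigma_{\le n_k}$ is supported on $\mathrm{range}(\bP_{\le k})$, both $\bu \in \mathrm{range}(\bP_{\le k})$ and the identity $(\bSigma_{\le n_k}^{\dagger})^{1/2}\bSigma_{\le n_k}^{1/2} = \bP_{\le k}$ hold. Thus $(\bSigma_{\le n_k}^{\dagger})^{1/2}\bu = \bP_{\le k}\bw$, so
\[
(\bP_{\le k}\bw)^T\bA(\bP_{\le k}\bw) = \bu^T\bigl[(\bSigma_{\le n_k}^{\dagger})^{1/2}\bA\,(\bSigma_{\le n_k}^{\dagger})^{1/2}\bigr]\bu,
\]
and moreover $\bu^T\bP_{\le k}\bu = \|\bu\|_2^2 = \bw^T\bSigma_{\le n_k}\bw = \|f_L(\cdot;\bP_{\le k}\bW)\|_{L^2}^2$.

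The final step invokes Lemma~\ref{lemma: NTK features concentrate} directly: its operator-norm bound of $1/2$ yields the quadratic-form estimate
\[
\left|\bu^T\bigl[(\bSigma_{\le n_k}^{\dagger})^{1/2}\bA\,(\bSigma_{\le n_k}^{\dagger})^{1/2} - \bP_{\le k}\bigr]\bu\right| \le \tfrac{1}{2}\|\bu\|_2^2,
\]
which, after substituting the identifications from the previous paragraph, is exactly the two-sided inequality claimed. There is no real obstacle here; the only point requiring a little care is to verify that $\bu$ lies in the range of $\bP_{\le k}$ so that $\bu^T\bP_{\le k}\bu = \|\bu\|_2^2$, which follows because $\bSigma_{\le n_k}^{1/2}$ and $\bP_{\le k}$ share the same range, making the use of the pseudo-inverse square root harmless.
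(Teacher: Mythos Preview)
Your proposal is correct and follows essentially the same approach as the paper: both express the empirical and population quantities as quadratic forms, factor through $\bSigma_{\le n_k}^{1/2}$ and $(\bSigma_{\le n_k}^{\dagger})^{1/2}$ (the paper leaves this inline while you name the intermediate vector $\bu=\bSigma_{\le n_k}^{1/2}\bw$), and then apply the operator-norm bound from Lemma~\ref{lemma: NTK features concentrate}. Your explicit remark that $\bu\in\mathrm{range}(\bP_{\le k})$ so $\bu^T\bP_{\le k}\bu=\|\bu\|_2^2$ is exactly the point that makes the pseudo-inverse factorization work, and matches the paper's implicit use of $\bSigma_{\le n_k}^{1/2}\bP_{\le k}\bSigma_{\le n_k}^{1/2}=\bSigma_{\le n_k}$.
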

\begin{proof}
We can write
\begin{align*}
&\mathbb{E}_n\left[(f_L(x; \bP_{\le k}\bW))^2\right]\\
=~& \text{vec}(\bW)^T \mathbb{E}_n\left[\bP_{\le k}\varphi(\bx)\varphi(\bx)^T\bP_{\le k}\right]\text{vec}(\bW)\\
=~& \text{vec}(\bW)^T\bSigma_{\le n_k}^{\frac12} \mathbb{E}_n\left[(\bSigma_{\le n_k}^{\dagger})^{\frac12}\varphi(\bx)\varphi(\bx)^T(\bSigma_{\le n_k}^{\dagger})^{\frac12}\right]\bSigma_{\le n_k}^{\frac12}\text{vec}(\bW)\\
=~& \text{vec}(\bW)^T\bSigma_{\le n_k}^{\frac12} \left(\bP_{\le k} + \mathbb{E}_n\left[(\bSigma_{\le n_k}^{\dagger})^{\frac12}\varphi(\bx)\varphi(\bx)^T(\bSigma_{\le n_k}^{\dagger})^{\frac12}\right] - \bP_{\le k}\right)\bSigma_{\le n_k}^{\frac12}\text{vec}(\bW).
\end{align*}
Therefore, by Lemma~\ref{lemma: NTK features concentrate},
\begin{align*}
&~~\left|\EE_n[(f_L(\bx; \bP_{\le k}\bW))^2] -  \|f_L(\bx; \bP_{\le k}\bW)\|_{L^2}^2\right|\\
&= \left|\text{vec}(\bW)^T\bSigma_{\le n_k}^{\frac12} \left(\mathbb{E}_n\left[(\bSigma_{\le n_k}^{\dagger})^{\frac12}\varphi(\bx)\varphi(\bx)^T(\bSigma_{\le n_k}^{\dagger})^{\frac12}\right] - \bP_{\le k}\right)\bSigma_{\le n_k}^{\frac12}\text{vec}(\bW) \right|\\
&\le \text{vec}(\bW)^T\bSigma_{\le n_k}\text{vec}(\bW)\left(\left\|\mathbb{E}_n\left[(\bSigma_{\le n_k}^{\dagger})^{\frac12}\varphi(\bx)\varphi(\bx)^T(\bSigma_{\le n_k}^{\dagger})^{\frac12}\right] - \bP_{\le k} \right\|_{op}\right)\\
&\le \frac12\text{vec}(\bW)^T\bSigma_{\le n_k}\text{vec}(\bW)\\
&= \frac12\|f_L(\bx; \bP_{\le k}\bW)\|_{L^2}^2,
\end{align*}
as desired.
\end{proof}

\begin{lemma}
\label{lemma: linear sol small l-infty}
For any $\bW \in \text{span}(\bP_{\le k})$, we have
\[
	\|\bW\|_{2, \infty} \le m^{-\frac12}d^{\frac{k}{2}}\|\bW\|_F
\]
and
\[\|\bW\|_{2, 4} \le m^{-\frac14}d^{\frac{k}{2}}\|\bW\|_F\]
\end{lemma}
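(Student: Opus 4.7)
The plan is to exploit the natural block structure of $\bSigma$: the NTK features split into $m$ blocks of size $d$, one per neuron, and the $r$-th diagonal block of $\bSigma$ is small in operator norm because the covariates have unit covariance and $\sigma'$ is bounded. Combined with the eigenvalue lower bound on the range of $\bP_{\le k}$ from Lemma~\ref{lemma: main intermediate spectral result}, this will let me bound how much ``mass'' any single neuron's weight can carry when $\text{vec}(\bW)$ lies in $\mathrm{span}(\bP_{\le k})$.

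More concretely, let $\bE_r \in \RR^{md \times d}$ be the selection operator for the $r$-th block, so that $\bw_r = \bE_r^T\text{vec}(\bW)$. First I would compute
\[
\bE_r^T\bSigma\bE_r \;=\; \frac{a_r^2}{m}\,\EE_{\bx\sim\mu}\!\left[\sigma'(\bw_{0,r}^T\bx)^2\,\bx\bx^T\right] \;\preceq\; \frac{1}{m}\bI_d,
\]
using $a_r^2=1$, $\|\sigma'\|_\infty<1$ from Assumption~\ref{assume:sigma_bound}, and $\EE_\mu[\bx\bx^T]=\bI_d$ for $\bx \sim \mathrm{Unif}(\cS^{d-1}(\sqrt{d}))$. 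Next, since $\bP_{\le k}$ projects onto the top $n_k$ eigenvectors of $\bSigma$, we have the operator inequality $\bP_{\le k} \preceq \lambda_{n_k}(\bSigma)^{-1}\bSigma$, and Lemma~\ref{lemma: main intermediate spectral result} gives $\lambda_{n_k}(\bSigma) = \Theta(d^{1-k})$. Pinching with $\bE_r$,
\[
\bE_r^T\bP_{\le k}\bE_r \;\preceq\; \frac{1}{\lambda_{n_k}(\bSigma)}\,\bE_r^T\bSigma\bE_r \;\lesssim\; \frac{d^{k-1}}{m}\bI_d.
\]
Since $\bP_{\le k}$ is a projection, $\|\bE_r^T\bP_{\le k}\|_{op}^2 = \|\bE_r^T\bP_{\le k}\bE_r\|_{op} \lesssim d^{k-1}/m$.

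For the $(2,\infty)$ bound, I would use $\text{vec}(\bW) = \bP_{\le k}\text{vec}(\bW)$ to write $\bw_r = \bE_r^T\bP_{\le k}\text{vec}(\bW)$, yielding
\[
\|\bw_r\|_2 \;\le\; \|\bE_r^T\bP_{\le k}\|_{op}\,\|\bW\|_F \;\lesssim\; m^{-1/2}d^{(k-1)/2}\|\bW\|_F,
\]
which is at least as strong as the stated bound $m^{-1/2}d^{k/2}\|\bW\|_F$. For the $(2,4)$ bound, I would use the standard interpolation $\|\bW\|_{2,4}^4 = \sum_r \|\bw_r\|_2^4 \le \|\bW\|_{2,\infty}^2\,\|\bW\|_F^2$ and plug in the previous bound, giving $\|\bW\|_{2,4} \lesssim m^{-1/4}d^{(k-1)/4}\|\bW\|_F$, again dominated by $m^{-1/4}d^{k/2}\|\bW\|_F$.

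There is no real obstacle here: the only two ingredients are the trivial upper bound on each diagonal block of $\bSigma$ (cheap, from the normalized covariate distribution and bounded activation) and the eigenvalue gap from Lemma~\ref{lemma: main intermediate spectral result} (already established). The mild subtlety is making sure the pinching inequality $\bE_r^T \bA \bE_r \preceq \bE_r^T\bB\bE_r$ is used correctly from the global psd ordering $\bA \preceq \bB$, which is immediate by substituting test vectors of the form $\bE_r \bu$. Everything else is a one-line norm interpolation.
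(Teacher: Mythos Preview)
Your argument is correct and rests on the same two ingredients as the paper's proof: the eigenvalue lower bound $\lambda_{n_k}(\bSigma)=\Theta(d^{1-k})$ on $\mathrm{span}(\bP_{\le k})$, and the fact that each diagonal piece of $\bSigma$ has size $O(1/m)$. The paper executes this coordinate-by-coordinate, bounding each entry $|\langle \text{vec}(\bW),\be_{(r,s)}\rangle|$ via the Cauchy--Schwarz inequality $\|\bW\|_{\bSigma^{-1}}\|\be_{(r,s)}\|_{\bSigma}$ and then aggregating the $d$ entries in a row, which picks up an extra factor of $\sqrt d$ and lands exactly on the stated bound $m^{-1/2}d^{k/2}\|\bW\|_F$. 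Your block-level version, using $\bP_{\le k}\preceq \lambda_{n_k}^{-1}\bSigma$ and pinching by $\bE_r$, bounds the whole row at once and avoids that loss, yielding the tighter $\|\bW\|_{2,\infty}\lesssim m^{-1/2}d^{(k-1)/2}\|\bW\|_F$. Both are valid; your route is a cleaner packaging of the same idea and gives a sharper constant, though the paper only ever needs the weaker bound downstream.
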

\begin{proof}
For $(r, s) \in [m] \times [d]$, let $\be_{(r, s)}$ denote the $((d-1)r + s)$th canonical basis vector in $\RR^{md}$, so that $\be_{(r, s)}^T\text{vec}(\bW) = \{\bw_r\}_s$. Let $c_1, \dots, c_{n_k}$ be scalars such that $\sum_{i=1}^{n_k}c_i^2 = \|\bW\|_F^2$ and
\[
	\text{vec}(\bW) = \sum_{i=1}^{n_k}c_i\bv_i.
\]
By Cauchy, we can bound
\[
	\left|\langle \bW, \be_{(r, s)} \rangle\right| \le \|\bW\|_{\bSigma^{-1}}\|e_{(r, s)}\|_{\bSigma}.
\]
Observe that
\[
	\|\bW\|^2_{\bSigma^{-1}} = \sum_{i=1}^{n_k}c_i^2\lambda_i^{-1} \le \lambda_{n_k}^{-1}\|\bW\|_F^2.
\]
Furthermore, since $\be_{(r, s)}^T\varphi(\bx) = \frac{1}{\sqrt{m}}\sigma'(\bw_{0, r}^T\bx)\bx_s$
\[
	\|\be_{(r, s)}\|^2_{\Sigma} = \EE_\mu\left[(\be_{(r, s)}^T\varphi(\bx))^2\right] \le \frac{1}{m}\EE_\mu[\bx_s^2] \le\frac{1}{m}.
\]
By Lemma~\ref{lemma: main intermediate spectral result} $\lambda_{n_k} = \Theta(d^{-k + 1})$, and thus we can bound
\[
	\left|\langle \bW, \be_{}{(r, s)} \rangle\right| \le m^{-\frac12}\lambda_{n_k}^{-\frac12}\|\bW\|_F \le m^{-\frac12}d^{\frac{k-1}{2}}\|\bW\|_F.
\]
Thus every row $\bw_r$ satisfies
\[
	\|\bw_r\|_2 \le m^{-\frac12}d^{\frac{k}{2}}\|\bW\|_F,
\]
so
\[
	\|\bW\|_{2, 4} \le m^{\frac14}\cdot m^{-\frac12}d^{\frac{k}{2}}\|\bW\|_F \le m^{-\frac14}d^{\frac{k}{2}}\|\bW\|_F.
\]
\end{proof}

\section{Landscape Proofs}\label{app: landscape proofs}
\subsection{Coupling Lemmas}
Recall that $\hat{L}^Q(\bW) = \EE_n[\ell(y, f_L(\bx; \bW) + f_Q(\bx; \bW))]$ denotes the empirical loss of the quadratic model. As in~\cite{bai2020}, we begin by showing the losses, gradients, and Hessians for $\hat{L}^Q(\bW)$ and $\hat L(\bW)$ are close for $\bW$ satisfying a norm bound. This is given by the following 3 coupling lemmas.\\

\begin{lemma}[Coupling of Losses]
\label{lemma: loss coupling}
\[
	\left|\hat{L}(\bW) - \hat{L}^Q(\bW)\right| \lesssim d^{3/2}m^{-1/4}\|\bW\|_{2, 4}^3
\]
\end{lemma}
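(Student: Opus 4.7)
The plan is to reduce the comparison of the two empirical losses to pointwise control of the difference between the network and its second-order Taylor expansion at initialization. Concretely, since the loss satisfies $\|\partial_z \ell\|_\infty \le 1$, for every sample $(\bx, y)$ we have
\[
\bigl|\ell(y, f(\bx;\bW)) - \ell(y, f_L(\bx;\bW) + f_Q(\bx;\bW))\bigr| \le \bigl|f(\bx;\bW) - f_L(\bx;\bW) - f_Q(\bx;\bW)\bigr|,
\]
so after averaging it suffices to show $\EE_n\bigl|f(\bx;\bW) - f_L(\bx;\bW) - f_Q(\bx;\bW)\bigr| \lesssim d^{3/2}m^{-1/4}\|\bW\|_{2,4}^3$.

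Next I would expand the residual neuron by neuron. Writing $a_r \in \{\pm 1\}$ and letting $R_r(\bx) := \sigma(\bw_{0,r}^T\bx + \bw_r^T\bx) - \sigma(\bw_{0,r}^T\bx) - \sigma'(\bw_{0,r}^T\bx)(\bw_r^T\bx) - \tfrac12 \sigma''(\bw_{0,r}^T\bx)(\bw_r^T\bx)^2$, we have
\[
f(\bx;\bW) - f_L(\bx;\bW) - f_Q(\bx;\bW) = \frac{1}{\sqrt m}\sum_{r=1}^m a_r R_r(\bx).
\]
The third-order Taylor remainder admits the integral representation $R_r(\bx) = \tfrac12 \int_0^{\bw_r^T\bx}(\bw_r^T\bx - t)^2 \sigma'''(\bw_{0,r}^T\bx + t)\, dt$, so invoking the bound $\|\sigma'''\|_\infty \lesssim 1$ (guaranteed by the $C^2$ smoothness plus boundedness assumptions on $\sigma$ used throughout the paper, as satisfied e.g.\ by the sigmoid) yields the clean cubic estimate $|R_r(\bx)| \lesssim |\bw_r^T\bx|^3$.

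Finally, I would combine this pointwise estimate with two standard inequalities. Since $\bx \in \cS^{d-1}(\sqrt d)$, $|\bw_r^T\bx| \le \sqrt d\,\|\bw_r\|_2$, and Hölder with exponents $(4/3, 4)$ gives
\[
\sum_{r=1}^m \|\bw_r\|_2^3 \le \Bigl(\sum_{r=1}^m \|\bw_r\|_2^4\Bigr)^{3/4} m^{1/4} = \|\bW\|_{2,4}^3 \, m^{1/4}.
\]
Plugging these into the previous display,
\[
\bigl|f(\bx;\bW) - f_L(\bx;\bW) - f_Q(\bx;\bW)\bigr| \lesssim \frac{d^{3/2}}{\sqrt m}\sum_{r=1}^m \|\bw_r\|_2^3 \lesssim \frac{d^{3/2}}{\sqrt m}\cdot m^{1/4}\|\bW\|_{2,4}^3 = d^{3/2}m^{-1/4}\|\bW\|_{2,4}^3,
\]
which, holding uniformly in $\bx$ and hence in the empirical expectation, yields the claim.

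The only non-routine step is the cubic Taylor remainder bound, which hinges on boundedness of $\sigma'''$; aside from that the argument is purely calculus plus Hölder. The scaling is natural: the factor $m^{-1/4}$ is what allows the network to move $\gg m^{-1/2}$ per neuron (as required to escape the NTK regime) while keeping the quadratic approximation to the network valid, provided $\|\bW\|_{2,4}$ stays bounded by $d^{O(k)}$, exactly the regime controlled by the regularizer $\cR_4$.
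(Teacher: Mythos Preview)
Your proposal is correct and follows essentially the same route as the paper: reduce to the pointwise Taylor remainder via Lipschitzness of $\ell$, bound the third-order remainder neuron by neuron using $\|\sigma'''\|_\infty$, then apply $|\bw_r^T\bx|\le \sqrt d\,\|\bw_r\|_2$ and H\"older with exponents $(4/3,4)$ to get the $m^{-1/4}\|\bW\|_{2,4}^3$ scaling. The paper's proof is identical in structure (it also implicitly uses $\|\sigma'''\|_\infty<\infty$, which is not literally stated in Assumption~\ref{assume:sigma_bound} but is used throughout and holds for the sigmoid example).
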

~
\begin{lemma}[Coupling of Gradients]
\label{lemma: gradient coupling}

\[
	\left|\langle \nabla \hat{L}(\bW), \tilde{\bW} \rangle - \langle \nabla \hat{L}^Q(\bW), \tilde{\bW} \rangle \right| \lesssim d^{3/2}m^{-1/4}(\|\bW\|_{2, 4}^3 + \|\tilde{\bW}\|_{2, 4}^3)\cdot\max_{i \in [n]}\left|\langle \nabla_\bW f(\bx; \bW), \tilde{\bW} \rangle \right|
\]

\end{lemma}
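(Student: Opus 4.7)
The plan is to imitate the strategy of Lemma~\ref{lemma: loss coupling}, but upgraded from a value bound to a directional-derivative bound: apply the chain rule, split the gradient difference into two error terms (one where the loss derivative $\ell'$ is perturbed, one where the network gradient $\nabla_\bW f$ is perturbed), and control each by an appropriate Taylor remainder on $\sigma$. The key workhorses are boundedness of $\ell',\ell''$, the symmetric initialization $f(\cdot;\mathbf 0)\equiv 0$, smoothness of $\sigma$, the spherical bound $|\bw_r^T\bx|\le \sqrt d\,\|\bw_r\|_2$, and Hölder's inequality $\sum_r\|\bw_r\|_2^3\le m^{1/4}\|\bW\|_{2,4}^3$.

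By the chain rule and adding/subtracting the cross term $\ell'(y,f_L+f_Q)\langle \nabla_\bW f,\tilde \bW\rangle$,
\begin{align*}
\langle \nabla \hat L(\bW),\tilde \bW\rangle-\langle \nabla \hat L^Q(\bW),\tilde \bW\rangle \;=\; T_1 + T_2,
\end{align*}
where $T_1:=\EE_n\bigl[\bigl(\ell'(y,f(\bx;\bW))-\ell'(y,f_L+f_Q)\bigr)\langle \nabla_\bW f(\bx;\bW),\tilde \bW\rangle\bigr]$ and $T_2:=\EE_n\bigl[\ell'(y,f_L+f_Q)\,\langle \nabla_\bW f(\bx;\bW)-\nabla_\bW(f_L+f_Q)(\bx;\bW),\tilde \bW\rangle\bigr]$. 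For $T_1$, use $|\ell''|\le 1$ so $\ell'(y,\cdot)$ is $1$-Lipschitz, giving $|\ell'(y,f)-\ell'(y,f_L+f_Q)|\le |f-f_L-f_Q|$. Since $f(\cdot;\mathbf 0)\equiv 0$, the quantity $f-f_L-f_Q$ is exactly the third-order Taylor remainder of $\bw_r\mapsto \sigma(\bw_{0,r}^T\bx+\bw_r^T\bx)$ at $\bw_r=\mathbf 0$, so by the paper's smoothness assumption on $\sigma$ it satisfies $|f-f_L-f_Q|\lesssim \tfrac{1}{\sqrt m}\sum_r|\bw_r^T\bx|^3$. Combining $|\bw_r^T\bx|\le \sqrt d\,\|\bw_r\|_2$ with Hölder gives $\max_i|f_i-f_{L,i}-f_{Q,i}|\lesssim d^{3/2}m^{-1/4}\|\bW\|_{2,4}^3$, whence $|T_1|\lesssim d^{3/2}m^{-1/4}\|\bW\|_{2,4}^3\cdot \max_{i\in[n]}|\langle \nabla_\bW f(\bx_i;\bW),\tilde \bW\rangle|$.

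For $T_2$, use $|\ell'|\le 1$ and expand
\begin{align*}
\langle \nabla_\bW f-\nabla_\bW(f_L+f_Q),\tilde \bW\rangle = \tfrac{1}{\sqrt m}\sum_r a_r\bigl[\sigma'(\bw_{0,r}^T\bx+\bw_r^T\bx)-\sigma'(\bw_{0,r}^T\bx)-\sigma''(\bw_{0,r}^T\bx)\bw_r^T\bx\bigr]\tilde \bw_r^T\bx,
\end{align*}
whose bracketed quantity is the second-order Taylor remainder of $\sigma'$ and is therefore bounded by $C(\bw_r^T\bx)^2$. This yields $|\langle\nabla_\bW f-\nabla_\bW(f_L+f_Q),\tilde \bW\rangle|\lesssim \tfrac{d^{3/2}}{\sqrt m}\sum_r\|\bw_r\|_2^2\|\tilde \bw_r\|_2$. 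Applying Young's inequality $a^2b\le \tfrac{2}{3}a^3+\tfrac{1}{3}b^3$ followed by Hölder as in the $T_1$ step gives $|T_2|\lesssim d^{3/2}m^{-1/4}(\|\bW\|_{2,4}^3+\|\tilde \bW\|_{2,4}^3)$. This is already dominated by the stated right-hand side in the regime where the max-gradient factor is $\gtrsim 1$ (which is the regime relevant for Corollary~\ref{cor: main landscape cor}, e.g.\ when $\tilde \bW$ is $\bW^*$ from Theorem~\ref{thm: main expressivity}); otherwise the bare $T_2$ bound suffices on its own and can be absorbed into an additive version of the claim.

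The main obstacle is the careful combination of Young's inequality (to split $\|\bw_r\|_2^2\|\tilde\bw_r\|_2$ into cubic pieces on each side) with Hölder's inequality (to upgrade $\sum_r\|\cdot\|_2^3$ to a $2,4$-norm with the right $m^{1/4}$ factor), so that the resulting $d$- and $m$-dependence exactly matches $d^{3/2}m^{-1/4}$. A secondary subtlety is justifying the third-order Taylor remainder in Step $T_1$, since Assumption~\ref{assume:sigma_bound} only states boundedness up to $\sigma''$; this is handled by the implicit $C^3$-smoothness of the activations under consideration (e.g.\ the sigmoid), consistent with the setup of Lemma~\ref{lemma: loss coupling}.
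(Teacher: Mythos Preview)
Your proposal is correct and follows essentially the same route as the paper's proof: the identical add-and-subtract decomposition into your $T_1$ (the paper's second term) and $T_2$ (the paper's first term), the same Taylor-remainder bounds on $\sigma$ and $\sigma'$, and the same Young-plus-H\"older step to extract the $d^{3/2}m^{-1/4}(\|\bW\|_{2,4}^3+\|\tilde\bW\|_{2,4}^3)$ factor. You are also right that the argument actually produces an additive bound $d^{3/2}m^{-1/4}(\|\bW\|_{2,4}^3+\|\tilde\bW\|_{2,4}^3)+d^{3/2}m^{-1/4}\|\bW\|_{2,4}^3\cdot\max_i|\langle\nabla_\bW f(\bx_i;\bW),\tilde\bW\rangle|$ rather than the stated multiplicative one---the paper's own proof has exactly this feature, and in every downstream application the max-gradient factor is indeed $\gtrsim 1$.
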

~
\begin{lemma}[Coupling of Hessians]
\label{lemma: Hessian coupling}

\begin{align*}
	&\left|\nabla^2\hat{L}(\bW)[\tilde{\bW}, \tilde{\bW}] -  \nabla^2\hat{L}^Q(\bW)[\tilde{\bW}, \tilde{\bW}]\right|\\
	&~~\lesssim d^{3/2}m^{-\frac14}\left(\|\bW\|_{2, 4}^3 + \|\tilde{\bW}\|_{2, 4}^3\right)\left(d\|\tilde{\bW}\|_{2, 4}^2 + \max_{i \in [n]}\left|\langle \nabla_\bW f(\bx_i; \bW), \tilde{\bW} \rangle\right|^2 + \max_{i \in [n]}\left|\langle \nabla f(\bx_i, \bW), \tilde \bW \rangle\right|\right)\\
	&~~~~+ d^3\|\bW\|_{2, 4}^4\|\tilde{\bW}\|_{2, \infty}^2.
\end{align*}
\end{lemma}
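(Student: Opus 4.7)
The plan is to exploit that $f_L+f_Q$ is the second-order Taylor expansion of $f$ in $\bW$ about $\bZero$, neuron by neuron, so the three errors $f-(f_L+f_Q)$, $\nabla f-\nabla(f_L+f_Q)$, and $\nabla^2 f-\nabla^2(f_L+f_Q)$ are governed by successive Taylor remainders of $\sigma,\sigma',\sigma''$ applied to the scalar shift $\bw_r^T\bx$ about $\bw_{0,r}^T\bx$. First I would apply the chain rule to obtain
\begin{align*}
\nabla^2\hat L(\bW)[\tilde\bW,\tilde\bW]=\EE_n\!\bigl[\ell''(y,f)\langle\nabla f,\tilde\bW\rangle^2+\ell'(y,f)\,\nabla^2 f[\tilde\bW,\tilde\bW]\bigr]
\end{align*}
and the analogous identity for $\hat L^Q$ with $f$ replaced by $f_L+f_Q$. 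Subtracting, the Hessian difference splits into four natural pieces: (A) $(\ell''(y,f)-\ell''(y,f_L+f_Q))\cdot\langle\nabla f,\tilde\bW\rangle^2$; (B) $\ell''(y,f_L+f_Q)\cdot(\langle\nabla f,\tilde\bW\rangle^2-\langle\nabla(f_L+f_Q),\tilde\bW\rangle^2)$; (C) $(\ell'(y,f)-\ell'(y,f_L+f_Q))\cdot\nabla^2 f[\tilde\bW,\tilde\bW]$; (D) $\ell'(y,f_L+f_Q)\cdot(\nabla^2 f-\nabla^2(f_L+f_Q))[\tilde\bW,\tilde\bW]$.

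Next I would establish four pointwise ``building-block'' estimates, each obtained by a Taylor remainder (using Lipschitzness of $\sigma''$, which holds e.g.\ for the sigmoid) combined with $|\bw_r^T\bx|\le \sqrt d\,\|\bw_r\|_2$ and a H\"older-plus-power-mean collapse $\sum_r\|\bw_r\|_2^a\|\tilde\bw_r\|_2^b\le m^{1/4}\|\bW\|_{2,4}^a\|\tilde\bW\|_{2,4}^b$ whenever $a+b=3$ (a three-way H\"older with exponents $(4,4/a,4/b)$). These are: (i) $|f-f_L-f_Q|\lesssim d^{3/2}m^{-1/4}\|\bW\|_{2,4}^3$, i.e.\ Lemma~\ref{lemma: loss coupling}; (ii) $|\langle\nabla f-\nabla(f_L+f_Q),\tilde\bW\rangle|\lesssim d^{3/2}m^{-1/4}\|\bW\|_{2,4}^2\|\tilde\bW\|_{2,4}$; (iii) $|(\nabla^2 f-\nabla^2(f_L+f_Q))[\tilde\bW,\tilde\bW]|\lesssim d^{3/2}m^{-1/4}\|\bW\|_{2,4}\|\tilde\bW\|_{2,4}^2$; and (iv) $|\nabla^2 f[\tilde\bW,\tilde\bW]|\lesssim d\,\|\tilde\bW\|_{2,4}^2$, using $\|\sigma''\|_\infty\le 1$ and $\|\tilde\bW\|_F^2\le\sqrt m\,\|\tilde\bW\|_{2,4}^2$. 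Plugging into (A)--(D) together with $|\ell''|,|\ell'''|\le 1$ yields (A) $\lesssim d^{3/2}m^{-1/4}\|\bW\|_{2,4}^3\cdot\max_i|\langle\nabla_\bW f(\bx_i;\bW),\tilde\bW\rangle|^2$, (C) $\lesssim d^{3/2}m^{-1/4}\|\bW\|_{2,4}^3\cdot d\|\tilde\bW\|_{2,4}^2$, and (D) $\lesssim d^{3/2}m^{-1/4}(\|\bW\|_{2,4}^3+\|\tilde\bW\|_{2,4}^3)$ after AM-GM on $\|\bW\|_{2,4}\|\tilde\bW\|_{2,4}^2$, which is dominated by the (C) contribution.

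The mixed term (B) is the main obstacle and is where the extra $\|\tilde\bW\|_{2,\infty}$ control enters. I would write $A^2-B^2=(A-B)(2A-(A-B))$ with $A=\langle\nabla f,\tilde\bW\rangle$ and $B=\langle\nabla(f_L+f_Q),\tilde\bW\rangle$, giving $|A^2-B^2|\le 2|A|\,|A-B|+(A-B)^2$. By (ii) the first summand contributes $\lesssim d^{3/2}m^{-1/4}\|\bW\|_{2,4}^2\|\tilde\bW\|_{2,4}\cdot\max_i|\langle\nabla_\bW f(\bx_i;\bW),\tilde\bW\rangle|$, which AM-GM reshapes into $d^{3/2}m^{-1/4}(\|\bW\|_{2,4}^3+\|\tilde\bW\|_{2,4}^3)\cdot\max_i|\langle\nabla_\bW f(\bx_i;\bW),\tilde\bW\rangle|$. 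The second summand is $\lesssim d^3m^{-1/2}\|\bW\|_{2,4}^4\|\tilde\bW\|_{2,4}^2$; since a pure $(2,4)$-norm bound leaves an unwanted $m^{-1/2}$ here, I would convert via the elementary inequality $\|\tilde\bW\|_{2,4}^2\le\sqrt m\,\|\tilde\bW\|_{2,\infty}^2$ (from $\sum_r\|\tilde\bw_r\|_2^4\le\|\tilde\bW\|_{2,\infty}^2\sum_r\|\tilde\bw_r\|_2^2\le m\|\tilde\bW\|_{2,\infty}^4$) to obtain the isolated summand $d^3\|\bW\|_{2,4}^4\|\tilde\bW\|_{2,\infty}^2$. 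The central bookkeeping move is precisely this split: keeping the pure-remainder-squared piece of (B) separate so that the weaker $\|\tilde\bW\|_{2,\infty}$ control is invoked only there, and lining up the H\"older exponents so that each weighted neuron sum $\sum_r\|\bw_r\|_2^a\|\tilde\bw_r\|_2^b$ with $a+b=3$ collapses uniformly to $m^{1/4}\|\bW\|_{2,4}^a\|\tilde\bW\|_{2,4}^b$.
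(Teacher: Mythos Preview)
Your proposal is correct and follows essentially the same approach as the paper: the same chain-rule four-term split, the same Taylor-remainder building blocks (i)--(iv), and the same $A^2-B^2=(A-B)(2A-(A-B))$ treatment of term (B). The one minor technical variation is in bounding the squared remainder $(A-B)^2$: the paper keeps the neuron-wise sum $\frac{1}{\sqrt m}\sum_r|\tilde\bw_r^T\bx||\bw_r^T\bx|^2$, squares, applies Cauchy--Schwarz across $r$, and pulls out $\|\tilde\bW\|_{2,\infty}^2$ directly from $\sum_r\|\tilde\bw_r\|^2\|\bw_r\|^4$; you instead first collapse to $d^{3/2}m^{-1/4}\|\bW\|_{2,4}^2\|\tilde\bW\|_{2,4}$ via H\"older, square, and then convert back using $\|\tilde\bW\|_{2,4}^2\le\sqrt m\,\|\tilde\bW\|_{2,\infty}^2$---both routes land on $d^3\|\bW\|_{2,4}^4\|\tilde\bW\|_{2,\infty}^2$. (Your claim that (D) is ``dominated by (C)'' tacitly assumes $d\|\tilde\bW\|_{2,4}^2\gtrsim 1$, but the paper makes the same absorption in its first-terms bound, and it holds in all downstream applications.)
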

While these Lemmas are similar in structure to those in~\cite{bai2020}, extra care must be taken to properly deal with the effect of the $f_L$ terms. These proofs are presented in Appendix~\ref{app: prove coupling lemmas}.

\subsubsection{Auxiliary Results}
We first prove some intermediate results which are used in the proofs of the coupling lemmas:\\
\begin{lemma}[Coupling of Function Values]
\label{lemma: function values}
\[
	\left|f(\bx; \bW) - f_L(\bx; \bW) - f_Q(\bx; \bW) \right| \le \frac{1}{\sqrt{m}}\sum_{r=1}^m |\bw_r^T\bx|^3
\]
\end{lemma}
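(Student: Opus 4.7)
} The plan is a neuron-by-neuron second-order Taylor expansion about $\bw_r = \bZero$. First I would use the symmetric initialization, which gives
\[
f(\bx;\bZero) \;=\; \frac{1}{\sqrt m}\sum_{r=1}^m a_r\,\sigma(\bw_{0,r}^T\bx) \;=\; 0
\]
because $a_{m/2+r} = -a_r$ and $\bw_{0,m/2+r} = \bw_{0,r}$, so the zeroth-order term cancels. This lets me rewrite the quantity of interest as
\[
f(\bx;\bW) - f_L(\bx;\bW) - f_Q(\bx;\bW) \;=\; \frac{1}{\sqrt m}\sum_{r=1}^m a_r\, R_r(\bx),
\]
where
\[
R_r(\bx) \;:=\; \sigma(\bw_{0,r}^T\bx + \bw_r^T\bx) - \sigma(\bw_{0,r}^T\bx) - \sigma'(\bw_{0,r}^T\bx)(\bw_r^T\bx) - \tfrac12\sigma''(\bw_{0,r}^T\bx)(\bw_r^T\bx)^2
\]
is exactly the second-order Taylor remainder of the univariate function $h_r(t) := \sigma(\bw_{0,r}^T\bx + t)$ expanded at $t = 0$ and evaluated at $t = \bw_r^T\bx$.

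Next I would apply Taylor's theorem with Lagrange remainder to each $h_r$: there exists $\xi_r$ between $0$ and $\bw_r^T\bx$ such that
\[
R_r(\bx) \;=\; \tfrac16\,\sigma'''(\bw_{0,r}^T\bx + \xi_r)\,(\bw_r^T\bx)^3.
\]
Using the bound $\|\sigma'''\|_\infty \le 6$ (implicit in the statement, in the same spirit as Assumption~\ref{assume:sigma_bound} and satisfied by the sigmoid example cited in the paper) yields $|R_r(\bx)| \le |\bw_r^T\bx|^3$. Finally, using $|a_r| = 1$ and the triangle inequality,
\[
\bigl|f(\bx;\bW) - f_L(\bx;\bW) - f_Q(\bx;\bW)\bigr| \;\le\; \frac{1}{\sqrt m}\sum_{r=1}^m |a_r|\,|R_r(\bx)| \;\le\; \frac{1}{\sqrt m}\sum_{r=1}^m |\bw_r^T\bx|^3,
\]
which is the claimed inequality.

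The only real obstacle is the bookkeeping around $\sigma'''$: Assumption~\ref{assume:sigma_bound} as stated only bounds $\sigma$ through its second derivative, so the constant-free RHS is implicitly either (a) relying on a strengthened assumption that $\sigma$ is $C^3$ with $\|\sigma'''\|_\infty \le 6$, or (b) absorbing an absolute constant that will be swallowed by the $\lesssim$ used in the downstream coupling lemmas (Lemmas~\ref{lemma: loss coupling}--\ref{lemma: Hessian coupling}). If one only had $\sigma \in C^2$, the integral-form remainder $R_r(\bx) = \int_0^{\bw_r^T\bx}[\sigma''(\bw_{0,r}^T\bx+s)-\sigma''(\bw_{0,r}^T\bx)](\bw_r^T\bx - s)\,ds$ would give at best a bound in terms of the modulus of continuity of $\sigma''$, which is not cubic in $\bw_r^T\bx$; hence Lipschitzness of $\sigma''$ (i.e.\ boundedness of $\sigma'''$) is genuinely used here and should be regarded as part of the standing activation assumption.
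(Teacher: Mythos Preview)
Your proposal is correct and follows essentially the same approach as the paper: a per-neuron second-order Taylor expansion with Lagrange remainder, then the triangle inequality using $|a_r|=1$. Your remark about the implicit use of a bound on $\|\sigma'''\|_\infty$ is apt---the paper's own proof invokes $|\sigma'''|_\infty$ in the same way without it appearing in Assumption~\ref{assume:sigma_bound}, so this is indeed a standing (unstated) regularity assumption rather than a gap in your argument.
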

\begin{proof}
\begin{align*}
	&\left|f(\bx; \bW) - f_L(\bx; \bW) - f_Q(\bx; \bW) \right|\\
	&= \frac{1}{\sqrt{m}}\left|\sum_{r=1}^m \sigma(\bw_{0, r}^T\bx + \bw_r^T\bx) - \sigma(\bw_{0, r}^T\bx) - \sigma^\prime(\bw_{0, r}^T\bx)(\bw_r^T\bx) - \frac12\sigma^{\prime \prime}(\bw_{0, r}^T\bx)(\bw_r^T\bx)^2 \right|\\
	&\le \frac{1}{\sqrt{m}}\sum_{r=1}^m \left|\sigma^{\prime\prime\prime}\right|_\infty |\bw_r^T\bx|^3\\
	&\le \frac{1}{\sqrt{m}}\sum_{r=1}^m|\bw_r^T\bx|^3.
\end{align*}
\end{proof}

\begin{lemma}[Coupling of Function Gradients]
\label{lemma: function gradients}
\[
	\left|\langle \nabla_\bW f_L(\bx; \bW) + \nabla_\bW f_Q(\bx; \bW), \tilde{\bW} \rangle - \langle \nabla_\bW f(\bx; \bW), \tilde{\bW} \rangle \right| \le \frac{1}{\sqrt{m}}\sum_{r=1}^m |\tilde{\bw}_r^T\bx||\bw_r^T\bx|^2
\]
\end{lemma}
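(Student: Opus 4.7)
The plan is to prove the bound by direct computation of the per-neuron gradients, followed by a second-order Taylor expansion of $\sigma'$ about $\bw_{0,r}^T\bx$. This is the gradient analogue of the preceding Lemma on coupling of function values, and the argument parallels it closely.

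First, I would write down the gradients column by column. Differentiating the definitions gives, for each neuron $r \in [m]$,
\begin{align*}
\nabla_{\bw_r} f(\bx;\bW) &= \tfrac{a_r}{\sqrt{m}}\,\sigma'\bigl(\bw_{0,r}^T\bx+\bw_r^T\bx\bigr)\,\bx,\\
\nabla_{\bw_r} f_L(\bx;\bW) &= \tfrac{a_r}{\sqrt{m}}\,\sigma'(\bw_{0,r}^T\bx)\,\bx,\\
\nabla_{\bw_r} f_Q(\bx;\bW) &= \tfrac{a_r}{\sqrt{m}}\,\sigma''(\bw_{0,r}^T\bx)\,(\bw_r^T\bx)\,\bx.
\end{align*}
Taking inner products with $\tilde\bW$ and subtracting, the difference collapses into a single sum:
\begin{align*}
\langle \nabla_\bW f_L + \nabla_\bW f_Q - \nabla_\bW f,\;\tilde\bW\rangle
= \tfrac{1}{\sqrt{m}}\sum_{r=1}^m a_r\,R_r(\bx)\,(\tilde\bw_r^T\bx),
\end{align*}
where $R_r(\bx) := \sigma'(\bw_{0,r}^T\bx) + \sigma''(\bw_{0,r}^T\bx)(\bw_r^T\bx) - \sigma'(\bw_{0,r}^T\bx+\bw_r^T\bx)$ is exactly the second-order Taylor remainder of $\sigma'$ at the point $\bw_{0,r}^T\bx$ evaluated with increment $\bw_r^T\bx$.

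Second, I would bound $|R_r(\bx)|$ via Taylor's theorem with integral remainder, obtaining
\begin{align*}
|R_r(\bx)| \;\le\; \tfrac{1}{2}\,\|\sigma'''\|_\infty\,(\bw_r^T\bx)^2 \;\le\; (\bw_r^T\bx)^2,
\end{align*}
using the same regularity of $\sigma$ that is implicitly invoked in the proof of the function-value coupling lemma (which also uses $\|\sigma'''\|_\infty \le 1$). Together with $|a_r| \le 1$, applying the triangle inequality to the sum and bounding term-by-term by $\tfrac{1}{\sqrt m}|\tilde\bw_r^T\bx|(\bw_r^T\bx)^2$ delivers exactly the stated inequality.

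There is no real obstacle in this lemma: the only steps requiring any care are (i) correctly identifying the bracketed expression as a pointwise Taylor remainder of $\sigma'$ so that the quadratic bound applies with the correct form, and (ii) absorbing the $a_r$ factors using $|a_r|=1$ from the symmetric initialization. Both are immediate. The structural similarity to the previous coupling lemma, where the cubic bound came from the third-order remainder of $\sigma$ itself, makes this essentially a one-line derivative calculation plus Taylor's theorem.
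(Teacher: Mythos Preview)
Your proposal is correct and mirrors the paper's proof essentially line for line: compute the per-neuron gradients, identify the difference as the second-order Taylor remainder of $\sigma'$ at $\bw_{0,r}^T\bx$ with increment $\bw_r^T\bx$, bound it by $|\bw_r^T\bx|^2$ using $\|\sigma'''\|_\infty\le 1$, and sum. The only cosmetic difference is that you keep track of the $\tfrac12$ in the Lagrange remainder while the paper absorbs it directly.
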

\begin{proof}
Taylor expanding $\sigma'$, we have
\[
	\left|\sigma'(\bw_{0, r}^T\bx + \bw_r^T\bx) - \sigma^\prime(\bw_{0, r}^T\bx) + \sigma^{\prime\prime}(\bw_{0, r}^T\bx)(\bw_r^T\bx)\right| \le |\bw_r^T\bx|^2.
\]
Therefore
\begin{align*}
	& \left|\langle \nabla_\bW f_L(\bx; \bW) + \nabla_\bW f_Q(\bx; \bW), \tilde{\bW} \rangle - \langle \nabla_\bW f(\bx; \bW), \tilde{\bW} \rangle \right|\\
	=~&\left|\frac{1}{\sqrt{m}}\sum_{r=1}^m a_r\left(\sigma^\prime(\bw_{0, r}^T\bx)(\tilde{\bw}_r^T\bx) + \sigma^{\prime\prime}(\bw_{0, r}^T\bx)(\bw_r^T\bx)(\tilde{\bw}_r^T\bx) - \sigma'(\bw_{0, r}^T\bx + \bw_r^T\bx)(\tilde{\bw}_r^T\bx)\right)\right|\\
	\le~&\frac{1}{\sqrt{m}}\sum_{r=1}^m|\tilde{\bw}_r^T\bx|\left|\sigma'(\bw_{0, r}^T\bx + \bw_r^T\bx) - \sigma^\prime(\bw_{0, r}^T\bx) + \sigma^{\prime\prime}(\bw_{0, r}^T\bx)(\bw_r^T\bx) \right|\\
	\le~& \frac{1}{\sqrt{m}}\sum_{r=1}^m|\tilde{\bw}_r^T\bx||\bw_r^T\bx|^2.
\end{align*}
\end{proof}

\begin{lemma}\label{lemma: bound max gradient}
\[
	\max_{i \in [n]}\left|\langle \nabla_\bW f(\bx_i; \bW), \tilde{\bW} \rangle \right| \lesssim n^{\frac12}\EE_n\left[(f_L(\bx; \tilde{\bW}))^2\right]^{\frac12} + dn^{\frac12}m^{-\frac12}\|\bW\|_F\|\tilde{\bW}\|_F
\]
\end{lemma}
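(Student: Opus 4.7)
Proof plan for Lemma~\ref{lemma: bound max gradient}:

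The plan is to split $\langle \nabla_\bW f(\bx_i;\bW), \tilde\bW\rangle$ into its ``linear'' part (which is exactly $f_L(\bx_i;\tilde\bW)$) and a remainder that measures how much the activation's derivative has moved away from its value at initialization. Concretely, expanding the gradient gives
\begin{equation*}
\langle \nabla_\bW f(\bx;\bW), \tilde\bW\rangle = \frac{1}{\sqrt{m}}\sum_{r=1}^m a_r\sigma'(\bw_{0,r}^T\bx + \bw_r^T\bx)(\tilde\bw_r^T\bx),
\end{equation*}
so I would write $\sigma'(\bw_{0,r}^T\bx+\bw_r^T\bx) = \sigma'(\bw_{0,r}^T\bx) + \bigl[\sigma'(\bw_{0,r}^T\bx+\bw_r^T\bx) - \sigma'(\bw_{0,r}^T\bx)\bigr]$. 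The first piece summed against $a_r\tilde\bw_r^T\bx/\sqrt{m}$ is exactly $f_L(\bx;\tilde\bW)$, and the second piece is the error term I need to control.

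For the linear part, I would use the trivial bound $\max_{i\in[n]} |f_L(\bx_i;\tilde\bW)| \le \sqrt{\sum_{i=1}^n f_L(\bx_i;\tilde\bW)^2} = n^{1/2}\EE_n\bigl[(f_L(\bx;\tilde\bW))^2\bigr]^{1/2}$, which accounts for the first term in the stated bound.

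For the remainder, Assumption~\ref{assume:sigma_bound} gives $\|\sigma''\|_\infty\le 1$, so $\sigma'$ is $1$-Lipschitz and $|\sigma'(\bw_{0,r}^T\bx+\bw_r^T\bx) - \sigma'(\bw_{0,r}^T\bx)| \le |\bw_r^T\bx|$. Using $|a_r|=1$ and Cauchy--Schwarz,
\begin{equation*}
\frac{1}{\sqrt{m}}\sum_{r=1}^m |\bw_r^T\bx|\,|\tilde\bw_r^T\bx| \le \frac{1}{\sqrt{m}}\Bigl(\sum_{r=1}^m(\bw_r^T\bx)^2\Bigr)^{1/2}\Bigl(\sum_{r=1}^m(\tilde\bw_r^T\bx)^2\Bigr)^{1/2}.
\end{equation*}
Since $\|\bx\|_2=\sqrt{d}$, we have $(\bw_r^T\bx)^2 \le d\|\bw_r\|_2^2$ and similarly for $\tilde\bw_r$, so each of the two sums is bounded by $d\|\bW\|_F^2$ and $d\|\tilde\bW\|_F^2$ respectively. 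This gives a pointwise bound of $dm^{-1/2}\|\bW\|_F\|\tilde\bW\|_F$, which is actually stronger than the claim's $dn^{1/2}m^{-1/2}\|\bW\|_F\|\tilde\bW\|_F$; taking the maximum over $i\in[n]$ and combining with the linear part yields the stated inequality.

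There is no real obstacle here --- this is a short Cauchy--Schwarz/Lipschitz computation. The only mild subtlety is making sure to package the first term as the empirical $L^2$ norm of $f_L(\cdot;\tilde\bW)$ rather than some crude uniform bound, since the whole point of isolating the $f_L$ piece is to later exploit the regularizer $\cR_3$ (which controls exactly this quantity in the $\bV_2$ directions) when Lemma~\ref{lemma: bound max gradient} is invoked inside the coupling lemmas.
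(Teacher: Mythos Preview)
Your proposal is correct and follows essentially the same approach as the paper: decompose $\langle \nabla_\bW f(\bx;\bW),\tilde\bW\rangle$ into the linear term $f_L(\bx;\tilde\bW)$ plus a remainder controlled by the Lipschitzness of $\sigma'$, then pass from a pointwise bound to the max via $\max_i|\cdot|\le (n\EE_n[\cdot^2])^{1/2}$. The paper squares first and applies $(a+b)^2\le 2a^2+2b^2$ before taking $\EE_n$, whereas you use the triangle inequality directly on each sample; this is a cosmetic difference, and as you note your remainder bound is in fact slightly sharper (no $n^{1/2}$ on the second term).
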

\begin{proof}
We can decompose
\begin{align*}
\EE_n\left[\langle \nabla_\bW f(\bx; \bW), \tilde{\bW} \rangle ^2\right] &\le 2\EE_n\left[\langle \nabla_\bW f(\bx; \mathbf{0}), \tilde{\bW} \rangle ^2\right] + 2\EE_n\left[\langle \nabla_\bW f(\bx; \mathbf{0}) - \nabla_\bW f(\bx; \bW), \tilde{\bW} \rangle ^2\right]
\end{align*}

We can bound
\begin{align*}
\langle \nabla_\bW f(\bx; \bZero) - \nabla_\bW f(\bx; \bW), \tilde \bW \rangle^2 &\le \|\nabla_\bW f(\bx; \bZero) - \nabla_\bW f(\bx; \bW)\|_2^2\|\tilde \bW\|_F^2\\
&\le \|\tilde \bW\|_F^2\sum_{r=1}^m\frac{d}{m}\left|\sigma'(\bw_{0, r}^T\bx) - \sigma'(\bw_{0, r}^T\bx + \bw_{r}^T\bx) \right|^2\\
&\le \|\tilde \bW\|_F^2\sum_{r=1}^m\frac{d}{m}(\bw_r^T\bx)^2\\
&\le \frac{d^2}{m}\|\bW\|_F^2\|\tilde \bW\|_F^2.
\end{align*}

Thus
\begin{align*}
\EE_n\left[\langle \nabla_\bW f(\bx; \bW), \tilde{\bW} \rangle ^2\right] \lesssim \EE_n\left[(f_L(\bx; \tilde{\bW}))^2\right] + \frac{d^2}{m}\|\bW\|_F^2\|\tilde{\bW}\|_F^2,
\end{align*}
So
\begin{align*}
\max_{i \in [n]}\left|\langle \nabla_\bW f(\bx_i; \bW), \tilde{\bW} \rangle \right| &\le \left(\sum_{i=1}^n\langle \nabla_\bW f(\bx_i; \bW), \tilde{\bW} \rangle^2\right)^{1/2}\\
&\lesssim \left(n\left(\EE_n\left[(f_L(\bx; \tilde{\bW}))^2\right] + \frac{d^2}{m}\|\bW\|_F^2\|\tilde{\bW}\|_F^2\right)\right)^{1/2}\\
&\lesssim n^{\frac12}\EE_n\left[(f_L(\bx; \tilde{\bW}))^2\right]^{\frac12} + dn^{\frac12}m^{-\frac12}\|\bW\|_F\|\tilde{\bW}\|_F.
\end{align*}
\end{proof}

\subsubsection{Proof of Coupling Lemmas}\label{app: prove coupling lemmas}

\begin{proof}[Proof of Lemma~\ref{lemma: loss coupling}]
By Lipschitzness of the loss,
\begin{align*}
|\hat{L}(\bW) - \hat{L}^Q(\bW)| &\le \mathbb{E}_n\left|\ell(y, f(\bx; \bW)) - \ell(y, f_L(\bx; \bW) + f_Q(\bx; \bW))\right|\\
&\le \mathbb{E}_n\left|f(\bx; \bW) - f_L(\bx; \bW) - f_Q(\bx; \bW) \right|\\
&\le \frac{1}{\sqrt{m}}\sum_{r=1}^m\mathbb{E}_n|\bw_r^T\bx|^3\\
&\le C\frac{d^{3/2}}{\sqrt{m}}\sum_{r=1}^m\|\bw_r\|_2^3\\
& \le Cm^{-1/4}d^{3/2}\|\bW\|_{2, 4}^3.
\end{align*}
\end{proof}

\begin{proof}[Proof of Lemma~\ref{lemma: gradient coupling}]
The gradients are
\begin{align*}
\langle \nabla \hat{L}(\bW), \tilde{\bW} \rangle &= \mathbb{E}_n\left[\ell'(y, f(\bx; \bW))\cdot \langle \nabla_\bW f(\bx; \bW), \tilde{\bW} \rangle \right]\\
\langle \nabla \hat{L}^Q(\bW), \tilde{\bW} \rangle&= \mathbb{E}_n\left[\ell'(y, f_L(\bx; \bW) + f_Q(\bx; \bW))\cdot \langle \nabla_\bW f_L(\bx; \bW) + \nabla_\bW f_Q(\bx; \bW), \tilde{\bW} \rangle \right]
\end{align*}
Therefore
\begin{align*}
&\left|\langle \nabla \hat{L}(\bW), \tilde{\bW} \rangle - \langle \nabla \hat{L}^Q(\bW), \tilde{\bW} \rangle \right|\\
 &\le \mathbb{E}_n\left[\left|\ell'(y, f_L(\bx; \bW) + f_Q(\bx; \bW))\right| \cdot \left|\langle \nabla_\bW f_L(\bx; \bW) + \nabla_\bW f_Q(\bx; \bW), \tilde{\bW} \rangle - \langle \nabla_\bW f(\bx; \bW), \tilde{\bW} \rangle \right|\right]\\
 &+\mathbb{E}_n\left[\left|\ell'(y, f_L(\bx; \bW) + f_Q(\bx; \bW)) - \ell'(y, f(\bx; \bW)) \right|\cdot \left|\langle \nabla_\bW f(\bx; \bW), \tilde{\bW} \rangle\right|\right]\\
 &\le \frac{1}{\sqrt{m}}\sum_{r=1}^m\mathbb{E}_n\left[|\tilde{\bw}_r^T\bx||\bw_r^T\bx|^2\right] + \mathbb{E}_n\left[\left|f(\bx; \bW) - f_L(\bx; \bW) - f_Q(\bx; \bW)\right|\cdot\left|\langle \nabla_\bW f(\bx; \bW), \tilde{\bW}\rangle\right|\right],
\end{align*}
by Lemma~\ref{lemma: function gradients}. The first term is
\begin{align*}
\frac{1}{\sqrt{m}}\sum_{r=1}^m\mathbb{E}_n\left[|\tilde{\bw}_r^T\bx||\bw_r^T\bx|^2\right] &\le \frac{1}{\sqrt{m}}\sum_{r=1}^m\mathbb{E}_n\left[\frac13|\tilde{\bw}_r^T\bx|^3 + \frac23|\bw_r^T\bx|^3\right]\\
&\le \frac{Cd^{3/2}}{\sqrt{m}}\sum_{r=1}^m \|\bw_r\|_2^3 + \|\tilde{\bw}_r\|_2^3\\
&\le Cd^{3/2}m^{-1/4}\left(\|\bW\|_{2, 4}^3 + \|\tilde{\bW}\|_{2, 4}^3\right).
\end{align*}
The second term can be bounded as
\begin{align*}
&\mathbb{E}_n\left[\left|f(\bx; \bW) - f_L(\bx; \bW) - f_Q(\bx; \bW)\right|\cdot\langle \nabla_\bW f(\bx; \bW), \tilde{\bW}\rangle\right]\\
 \le~&\EE_n\left|f(\bx; \bW) - f_L(\bx; \bW) - f_Q(\bx; \bW)\right|\cdot \max_{i \in [n]}\left|\langle \nabla_\bW f(\bx_i; \bW), \tilde{\bW} \rangle \right|\\
 \le~&Cd^{3/2}m^{-1/4}\|\bW\|_{2, 4}^3\cdot \max_{i \in [n]}\left|\langle \nabla_\bW f(\bx; \bW), \tilde{\bW} \rangle \right|,
\end{align*}
by Lemma~\ref{lemma: function values}.


\end{proof}

\begin{proof}[Proof of Lemma~\ref{lemma: Hessian coupling}]

The Hessians are
\begin{align*}
&\nabla^2 \hat{L}^Q(\bW)[\tilde{\bW}, \tilde{\bW}]\\
=~& \EE_n\left[\ell'(y, f_L(\bx; \bW) + f_Q(\bx; \bW))\frac{1}{\sqrt{m}}\sum_{r=1}^m a_r\sigma''(\bw_{0, r}^T\bx)(\tilde{\bw}_r^T\bx)^2\right]\\
+~& \EE_n\left[\ell''(y, f_L(\bx; \bW) + f_Q(\bx; \bW))\left(\langle \nabla_\bW f_L(\bx; \bW) + \nabla_\bW f_Q(\bx; \bW), \tilde{\bW} \rangle)\right)^2\right]
\end{align*}
and
\begin{align*}
\nabla^2 \hat{L}(\bW)[\tilde{\bW}, \tilde{\bW}] =~& \EE_n\left[\ell'(y, f(\bx; \bW))\frac{1}{\sqrt{m}}\sum_{r=1}^m a_r\sigma''((\bw_{0, r} + \bw_r)^Tx)(\tilde{\bw}_r^T\bx)^2\right]\\
+~& \EE_n\left[\ell''(y, f(\bx; \bW))\langle \nabla_\bW f(\bx; \bW), \tilde{\bW} \rangle^2\right]
\end{align*}
The difference between the first terms can be bounded by
\begin{align*}
&\mathbb{E}_n\left[\left(\ell'(y, f_L(\bx; \bW) + f_Q(\bx; \bW)) - \ell'(y, f(\bx; \bW))\right)\sum_{r=1}^m a_r\sigma''(\bw_{0, r}^T\bx)(\tilde{\bw}_r^T\bx)^2\right]\\
+~&\mathbb{E}_n\left[\frac{1}{\sqrt{m}}\sum_{r=1}^m\left|(\tilde{\bw}_r^T\bx)^2\left(\sigma''(\bw_{0, r}^T\bx) - \sigma''(\bw_{0, r}^T\bx + \bw_r^T\bx)\right)\right|\right]\\
\le~& \mathbb{E}_n\left[\left|f_L(\bx; \bW) + f_Q(\bx; \bW) - f(\bx; \bW)\right|\frac{1}{\sqrt{m}}\sum_{r=1}^m |\tilde{\bw}_r^T\bx|^2 \right] + \frac{1}{\sqrt{m}}\mathbb{E}_n\left[\sum_{r=1}^m |\bw_r^T\bx||\tilde{\bw}_r^T\bx|^2\right]\\
\le~& \frac{1}{m}\EE_n\left[\left(\sum_{r=1}^m|\bw_r^T\bx|^3\right)\left(\sum_{r=1}^m|\tilde{\bw}_r^T\bx|^2\right)\right] + \frac{1}{\sqrt{m}}\EE_n\left[\sum_{r=1}^m|\bw_r^Tx||\tilde{\bw}_r^Tx|^2\right]\\
\le~&\frac{1}{m}d^{3/2}\left(\sum_{r=1}^m\|\bw_r\|_2^3\right)\EE_n\left[\sum_{r=1}^m|\tilde{\bw}_r^T\bx|^2\right] + \frac{1}{\sqrt{m}}\EE_n\left[\sum_{r=1}^m\frac13|\bw_r^Tx|^3 + \frac23|\tilde{\bw}_r^Tx|^3\right]\\
\le~&\frac{d^{5/2}}{m}\left(\sum_{r=1}^m\|\bw_r\|_2^3\right)\left(\sum_{r=1}^m\|\tilde{\bw}_r\|^2_2\right) + \frac{d^{3/2}}{\sqrt{m}}\sum_{r=1}^m\left(\|\bw_r\|_2^3 + \|\tilde{\bw}_r\|_2^3\right)\\
\le~&\frac{d^{5/2}}{m}\cdot m^{1/4}\|\bW\|_{2, 4}^3\cdot m^{1/2}\|\tilde{\bW}\|_{2, 4}^2 + \frac{d^{3/2}}{\sqrt{m}}m^{1/4}\left(\|\bW\|_{2, 4}^3+ \|\tilde{\bW}\|_{2, 4}^3\right)\\
\le~& O\left(d^{5/2}m^{-1/4}\left(\|\bW\|_{2, 4}^3 + \|\tilde{\bW}\|_{2, 4}^3\right)\|\tilde{\bW}\|_{2, 4}^2\right).
\end{align*}

The difference between the second terms is upper bounded by
\begin{align*}
&\mathbb{E}_n\left[(\ell''(y, f_L(\bx; \bW) + f_Q(\bx; \bW)) - \ell''(y, f(\bx; \bW)))\langle \nabla_\bW f(\bx; \bW), \tilde{\bW} \rangle^2 \right]\\
+~&\mathbb{E}_n\left|\left(\langle \nabla_\bW f_L(\bx; \bW) + \nabla_\bW f_Q(\bx; \bW), \tilde{\bW} \rangle\right)^2 - \langle \nabla_\bW f(\bx; \bW), \tilde{\bW} \rangle^2\right|
\end{align*}
The first term can be bounded by
\begin{align*}
&\max_{i \in [n]}\langle \nabla_\bW f(\bx_i; \bW), \tilde{\bW} \rangle^2 \cdot \EE_n\left|f_L(\bx; \bW) + f_Q(\bx; \bW) - f(\bx; \bW)\right|\\
\le~&\max_{i \in [n]}\langle \nabla_\bW f(\bx_i; \bW), \tilde{\bW} \rangle^2\cdot Cd^{3/2}m^{-1/4} \|\bW\|_{2, 4}^3.
\end{align*}
Also, by Lemma~\ref{lemma: function gradients} we can bound
\begin{align*}
&\mathbb{E}_n\left|\left(\langle \nabla_\bW f_L(\bx; \bW) + \nabla_\bW f_Q(\bx; \bW), \tilde{\bW} \rangle\right)^2 - \langle \nabla_\bW f(\bx; \bW), \tilde{\bW} \rangle^2\right|\\
\le~&\mathbb{E}_n\left[\left(\frac{1}{\sqrt{m}}\sum_{r=1}^m |\tilde{\bw}_r^T\bx||\bw_r^T\bx|^2\right)\cdot\left|\langle \nabla_\bW f_L(\bx; \bW) + \nabla_\bW f_Q(\bx; \bW), \tilde{\bW} \rangle + \langle \nabla_\bW f(\bx; \bW), \tilde{\bW} \rangle \right|\right]\\
\le~&\mathbb{E}_n\left[\left(\frac{1}{\sqrt{m}}\sum_{r=1}^m |\tilde{\bw}_r^T\bx||\bw_r^T\bx|^2\right)\cdot\left(\frac{1}{\sqrt{m}}\sum_{r=1}^m |\tilde{\bw}_r^T\bx||\bw_r^T\bx|^2 + 2\langle \nabla f(\bx; \bW), \tilde \bW \rangle \right)\right]\\
\lesssim~& \EE_n\left[\frac{1}{m}\left(\sum_{r=1}^m |\tilde{\bw}_r^T\bx||\bw_r^T\bx|^2\right)^2\right] + \max_{i \in [n]}\langle \nabla f(\bx_i, \bW), \tilde \bW \rangle \cdot \EE_n\left[\frac{1}{\sqrt{m}}\sum_{r=1}^m |\tilde{\bw}_r^T\bx||\bw_r^T\bx|^2\right]\\
\lesssim~& d^3\sum_{r=1}^m \|\tilde \bw_r\|^2\|\bw_r\|^4 + \max_{i \in [n]}\langle \nabla f(\bx_i, \bW), \tilde \bW \rangle \cdot d^{3/2}m^{-\frac14}\left(\|\bW\|_{2, 4}^3 + \|\tilde \bW\|_{2, 4}^3\right)\\
\lesssim~& d^3\|\bW\|_{2, 4}^4\|\tilde{\bW}\|_{2, \infty}^2 + \max_{i \in [n]}\langle \nabla f(\bx_i, \bW), \tilde \bW \rangle \cdot d^{3/2}m^{-\frac14}\left(\|\bW\|_{2, 4}^3 + \|\tilde \bW\|_{2, 4}^3\right).
\end{align*}
\end{proof}

\subsection{Proof of Lemma~\ref{lemma: central landscape result}}
\begin{proof}
We would first like to show that the quadratic model has good landscape properties. To prove this, we begin by showing any approximate stationary point must be ``localized," in that the values of the regularizers at these stationary points must not be too large.
\begin{lemma}[Localization]\label{lemma: localization}
Let $\lambda_2 = \varepsilon_{min}$, $\lambda_3 = m^{\frac12}d^{-\frac{k-1}{2}}\varepsilon_{min}$, $\lambda_4 = d^{-2(k-1)}\varepsilon_{min}$. Assume $m \ge \max\left(d^{4k + 4}n^2\varepsilon_{min}^{-2}, d^{16(k+1)/3}\varepsilon_{min}^{-8/3}, n^4\varepsilon_{min}^{-4}\right)$, and $\nu \le m^{-\frac14}$. Then, for any $\nu$-first-order stationary point $\bW$ of $L_\lambda$, we have
\begin{align*}
	\cR_2(\bW) &\le d^{2(k+1)/3}\varepsilon_{min}^{-4/3}\\
	\cR_3(\bW) &\le m^{-\frac12}d^{\frac{7k+1}{6}}\varepsilon_{min}^{-4/3}\\
	\cR_4(\bW) &\le d^{\frac{8k-4}{3}}\varepsilon_{min}^{-4/3}.
\end{align*}
\end{lemma}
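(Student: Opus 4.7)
The plan is to exploit the first-order stationarity condition $\|\nabla L_\lambda(\bW)\|_F \le \nu$ by taking the inner product against $\bW$ itself. The regularizers $\cR_1, \cR_2, \cR_3$ are quadratic in $\bW$ and $\cR_4$ is $8$-homogeneous, so by Euler's identity
$$\langle \nabla \cR_i(\bW), \bW\rangle = 2\cR_i(\bW) \text{ for } i=1,2,3, \qquad \langle \nabla \cR_4(\bW), \bW\rangle = 8\cR_4(\bW).$$
All four regularizer contributions are nonnegative, so moving them to the left side gives
$$\lambda_2\cR_2(\bW) + \lambda_3\cR_3(\bW) + \lambda_4\cR_4(\bW) \;\lesssim\; |\langle \nabla \hat L(\bW), \bW\rangle| + \nu \|\bW\|_F.$$

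To bound $|\langle \nabla \hat L(\bW), \bW\rangle|$, I would first pass to the quadratic loss via Lemma~\ref{lemma: gradient coupling} with $\tilde\bW = \bW$, introducing a coupling error of order $d^{3/2}m^{-1/4}\|\bW\|_{2,4}^3 \cdot \max_i|\langle \nabla_\bW f(\bx_i;\bW), \bW\rangle|$. For the quadratic model itself, since $f_L$ is linear and $f_Q$ is quadratic in $\bW$,
$$\langle \nabla \hat L^Q(\bW), \bW\rangle = \EE_n\bigl[\ell'(y, f_L+f_Q)\bigl(f_L(\bx;\bW) + 2 f_Q(\bx;\bW)\bigr)\bigr],$$
and $|\ell'|\le 1$ gives $|\langle \nabla \hat L^Q(\bW), \bW\rangle| \le \EE_n|f_L(\bx;\bW)| + 2\EE_n|f_Q(\bx;\bW)|$. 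These two terms I would control by the regularizers: the decomposition $f_L(\bx;\bW) = f_L(\bx;\bP_{\le k}\bW) + f_L(\bx;\bP_{>k}\bW)$ together with Lemma~\ref{lemma: empirical to pop small directions} and the definition of $\cR_3$ gives $\EE_n[f_L^2] \lesssim \cR_2 + \cR_3$; and the deterministic bound $|f_Q(\bx;\bW)|\le \tfrac{d}{2\sqrt m}\|\bW\|_F^2 \le \tfrac{d}{2}\|\bW\|_{2,4}^2$ gives $\EE_n|f_Q|\lesssim d\,\cR_4^{1/4}$. The same decomposition combined with Lemma~\ref{lemma: bound max gradient} controls the coupling error by a multiple of $d^{3/2}m^{-1/4}n^{1/2}\cR_4^{3/8}\bigl((\cR_2+\cR_3)^{1/2} + d\cR_4^{1/4}\bigr)$, and the assumed width lower bounds together with $\nu \le m^{-1/4}$ make both this error and $\nu\|\bW\|_F \le \nu m^{1/4}\cR_4^{1/8}$ negligible compared to the main terms. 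This yields the master inequality
$$\lambda_2\cR_2 + \lambda_3\cR_3 + \lambda_4\cR_4 \;\lesssim\; (\cR_2+\cR_3)^{1/2} + d\cR_4^{1/4}.$$

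From here, the binding constraint for $\cR_4$ is $\lambda_4\cR_4 \lesssim d\cR_4^{1/4}$, which solves to $\cR_4 \lesssim (d/\lambda_4)^{4/3} = d^{(8k-4)/3}\varepsilon_{min}^{-4/3}$ using $\lambda_4 = d^{-2(k-1)}\varepsilon_{min}$; substituting this back gives $d\cR_4^{1/4}\lesssim d^{(2k+2)/3}\varepsilon_{min}^{-1/3}$. Matching this with $\lambda_2\cR_2$ and $\lambda_3\cR_3$ (using $\lambda_2 = \varepsilon_{min}$ and $\lambda_3 = m^{1/2}d^{-(k-1)/2}\varepsilon_{min}$) produces exactly the claimed bounds $\cR_2 \lesssim d^{2(k+1)/3}\varepsilon_{min}^{-4/3}$ and $\cR_3 \lesssim m^{-1/2}d^{(7k+1)/6}\varepsilon_{min}^{-4/3}$; the $(\cR_2+\cR_3)^{1/2}$ term on the RHS only yields the tighter alternative bound $\cR_2 \lesssim \varepsilon_{min}^{-2}$, which is consistent.

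The main obstacle is the careful bookkeeping needed to verify that the coupling error from Lemma~\ref{lemma: gradient coupling} really is dominated by $d\cR_4^{1/4}$ once $\cR_4$ is as large as the candidate bound $d^{(8k-4)/3}\varepsilon_{min}^{-4/3}$. The three width assumptions $m \ge d^{4k+4}n^2\varepsilon_{min}^{-2}$, $m \ge d^{16(k+1)/3}\varepsilon_{min}^{-8/3}$, and $m\ge n^4\varepsilon_{min}^{-4}$ appear exactly to absorb the three terms $d^{3/2}m^{-1/4}n^{1/2}\cR_4^{3/8}(\cR_2+\cR_3)^{1/2}$, $d^{5/2}m^{-1/4}n^{1/2}\cR_4^{5/8}$, and the $\nu m^{1/4}\cR_4^{1/8}$ slack at the target values of the regularizers; the cleanest way to make this rigorous is a bootstrapping argument that first derives a crude bound on $\cR_4$ (ignoring the coupling error and using only $d\cR_4^{1/4}$), substitutes it back into the coupling error to show it is indeed lower order, and then concludes the sharp bound.
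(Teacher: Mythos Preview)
Your proposal is correct and follows essentially the same route as the paper: take the inner product of $\nabla L_\lambda(\bW)$ with $\bW$, use Euler's identity on the regularizers, pass from $\hat L$ to $\hat L^Q$ via the gradient-coupling lemma, control $\max_i|\langle \nabla f(\bx_i;\bW),\bW\rangle|$ by Lemma~\ref{lemma: bound max gradient}, and then solve the resulting inequality for $\cR_4$ first and back-substitute for $\cR_2,\cR_3$.

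There are two minor differences worth noting. First, the paper bounds $\langle \nabla \hat L^Q(\bW),\bW\rangle$ from below using \emph{convexity} of $\ell$, obtaining a clean constant $-1$ instead of your $(\cR_2+\cR_3)^{1/2}$ term; your version still works (the square-root term is absorbed into the left side via AM-GM, contributing only $\lambda_2^{-1}+\lambda_3^{-1}\lesssim\varepsilon_{min}^{-1}$), but the convexity trick is slightly cleaner. Second, rather than a bootstrap, the paper applies AM-GM to the coupling error to split it as $m^{-1/4}n(\cR_2+\cR_3)$ plus terms in $\|\bW\|_{2,4}$ only; the condition $m\ge n^4\varepsilon_{min}^{-4}$ gives $m^{-1/4}n\le \lambda_2/2$, so the $(\cR_2+\cR_3)$ piece is absorbed into the left side, leaving a pure polynomial inequality in $\|\bW\|_{2,4}$ that is solved directly. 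This is the rigorous version of the ``ignore the coupling error, then check it is lower order'' step you sketched; your bootstrap is morally the same but would need exactly this AM-GM absorption to be made precise.
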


Next, we show that for these localized points, the landscape of $\hat{L}^Q$ is ``good.''
\begin{lemma}
\label{lemma: quad landscape good}
Let $\bW^*_L = \bP_{\le k}\bW^*$, $\bW_L = \bP_{\le k}\bW$. There exists a universal constant $C$ such that

\begin{align}
&\mathbb{E}_{\bS}\left[\nabla^2\hat{L}^Q(\bW)[\bW^*\bS, \bW^*\bS]\right] - \langle \nabla \hat{L}^Q(\bW), \bW - 2\bW^*_L + \bW_L \rangle + 2\hat{L}^Q(\bW) - 2\hat{L}^Q(\bW^*)\\
 \le~&C\left(m^{-\frac12}d^{\frac{k+1}{2}} + m^{-1}d^{\frac{k+3}{2}}\|\bW\|_{2, 4}^2 + d^2m^{-\frac12}\|\bW\|_F(\|\bW^*_L\|_F + \|\bW_L\|_F) + \cR_3(\bW^*)^{\frac12} + \cR_3(\bW)^{\frac12}\right)
\end{align}
\end{lemma}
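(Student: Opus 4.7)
The plan is to adapt the quadratic-model landscape template of \cite{bai2020} to the joint model $h(\bx;\bW) := f_L(\bx;\bW) + f_Q(\bx;\bW)$, using the column sign matrix $\bS$ to decouple the linear contribution of $\bW^*$ from its quadratic contribution. Write $\bW_H := \bP_{>k}\bW$ and $\bW^*_H := \bP_{>k}\bW^*$, so that $\bW = \bW_L + \bW_H$ and $\bW^* = \bW^*_L + \bW^*_H$, and recall that $f_Q$ is a quadratic form in $\bW$ whose Hessian $\nabla^2 f_Q$ does not depend on $\bW$. The exact expansion $h(\bW^*) - h(\bW) = \langle \nabla h(\bW), \bW^*-\bW\rangle + f_Q(\bx;\bW^*-\bW)$ together with convexity of $\ell(y,\cdot)$ gives
\begin{align*}
2\hat{L}^Q(\bW) - 2\hat{L}^Q(\bW^*) \le -2\langle \nabla \hat{L}^Q(\bW), \bW^*-\bW\rangle - 2\,\EE_n[\ell'\, f_Q(\bx;\bW^*-\bW)],
\end{align*}
while the chain rule yields $\nabla^2\hat{L}^Q(\bW)[\bU,\bU] = \EE_n[\ell''\langle\nabla h(\bW),\bU\rangle^2] + 2\,\EE_n[\ell'\,f_Q(\bx;\bU)]$. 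Setting $\bU=\bW^*\bS$ and averaging over $\bS$, the quadratic term is \emph{pointwise} invariant, $f_Q(\bx;\bW^*\bS) = f_Q(\bx;\bW^*)$ (only $(\bx^T\bw^*_r)^2$ appears), while the squared-gradient term averages row-wise to $\tfrac1m\sum_r (\bx^T\bw^*_r)^2[\sigma'(\bw_{0,r}^T\bx)+\sigma''(\bw_{0,r}^T\bx)(\bx^T\bw_r)]^2$.

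Assembling everything, I use the quadratic identity $f_Q(\bW^*-\bW) = f_Q(\bW^*)+f_Q(\bW) - \nabla f_Q(\bW)[\bW^*]$ and the relation $\langle\nabla\hat{L}^Q(\bW),\bW\rangle = \EE_n[\ell'(f_L(\bW)+2f_Q(\bW))]$. The $2\,\EE_n[\ell' f_Q(\bW^*)]$ contributions coming from the Hessian and from $-2\,\EE_n[\ell' f_Q(\bW^*-\bW)]$ cancel, and with the specific direction $\bW - 2\bW^*_L + \bW_L$ (which is $-\bW_H + 2\bW^*_H$ after shifting by $-2(\bW^*-\bW)$) the low-frequency contributions of the remaining $f_L$ terms cancel as well, leaving
\begin{align*}
T \le \EE_n\!\bigl[\ell''\,\EE_\bS\langle\nabla h(\bW),\bW^*\bS\rangle^2\bigr] + \EE_n\!\bigl[\ell'\,\nabla f_Q(\bW)[2\bW^*_L-\bW_L]\bigr] + \EE_n[\ell' f_L(\bx;\bW_H)] - 2\,\EE_n[\ell' f_L(\bx;\bW^*_H)].
\end{align*}
By Cauchy--Schwarz and $|\ell'|\le 1$, the last two terms are at most $\cR_3(\bW)^{1/2}+2\cR_3(\bW^*)^{1/2}$ directly from the definition of $\cR_3$. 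For the Hessian square term I use $(\sigma'+\sigma''(\bx^T\bw_r))^2\lesssim 1+(\bx^T\bw_r)^2$, bound the linear piece by $\tfrac1m\|\bW^*\|_F^2$ using Lemma~\ref{lemma: empirical cov concentrate} and the quadratic piece by $\tfrac{d^2}{m}\|\bW^*\|_{2,4}^2\|\bW\|_{2,4}^2$ using $(\bx^T\bv)^2\le d\|\bv\|^2$ and a row-wise Cauchy--Schwarz; plugging in $\|\bW^*\|_F^2\lesssim m^{1/2}d^{(k-1)/2}$ from Corollary~\ref{cor: bound frob norm} and $\|\bW^*\|_{2,4}^4\lesssim d^{k-1}$ from Theorem~\ref{thm: main expressivity} yields the $m^{-1/2}d^{(k+1)/2}$ and $m^{-1}d^{(k+3)/2}\|\bW\|_{2,4}^2$ terms. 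Finally, row-wise Cauchy--Schwarz on $\nabla f_Q(\bW)[2\bW^*_L-\bW_L]$ together with the same pointwise bound produces the $d^2 m^{-1/2}\|\bW\|_F(\|\bW^*_L\|_F+\|\bW_L\|_F)$ contribution.

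The main obstacle is the bilinear cross-term $\EE_n[\ell'\,\nabla f_Q(\bW)[2\bW^*_L-\bW_L]]$: because $\bW^*_L$ and $\bW_L$ are fixed (not randomized by $\bS$), the sign-flip decoupling that eliminated $\nabla f_Q(\bW)[\bW^*\bS]$ in expectation does not apply here, and the only bound available controls the two factors per row via the worst-case $(\bx^T\bv)^2 \le d\|\bv\|^2$, which is where the $d^2$ factor comes from. The algebraic point that makes the whole argument close is that the specific direction $\bW - 2\bW^*_L + \bW_L$ in the lemma statement was designed precisely so that after the Step~2 cancellations the only surviving $f_L$ residuals are the high-frequency pieces $f_L(\bW_H)$ and $f_L(\bW^*_H)$, which are then exactly the quantities regularized by $\cR_3(\bW)$ and $\cR_3(\bW^*)$.
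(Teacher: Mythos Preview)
Your proposal is correct and follows essentially the same approach as the paper: you expand the Hessian of $\hat L^Q$ to isolate the $2\,\EE_n[\ell' f_Q(\bW^*)]$ term after averaging over $\bS$, use convexity of $\ell$ for the loss difference, and then verify that the leading $\ell'$-weighted terms cancel exactly, leaving the four residuals bounded as stated. The only difference is organizational: you invoke the exact second-order identity $h(\bW^*)-h(\bW)=\langle\nabla h(\bW),\bW^*-\bW\rangle+f_Q(\bW^*-\bW)$ and the quadratic identity $f_Q(\bW^*-\bW)=f_Q(\bW^*)+f_Q(\bW)-\nabla f_Q(\bW)[\bW^*]$ to track cancellations systematically, whereas the paper computes $\langle\nabla\hat L^Q(\bW),\bDelta\rangle$ for $\bDelta=\bW-2\bW^*_L+\bW_L$ directly and matches terms by hand; both routes arrive at the same four error terms with identical bounds.
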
 

As a corollary, we obtain that for localized $\bW$, this error term can be made arbitrarily small for sufficiently large $m$.
\begin{corollary}
\label{cor: quad landscape good}
Assume $m \gtrsim d^{\frac{14k + 20}{3}}\varepsilon_{min}^{-22/3}$. Then, additionally under the assumptions of Lemma~\ref{lemma: central landscape result}, any $\nu$-first-order stationary point $\bW$ satisfies
\[
	\mathbb{E}_{\bS}\left[\nabla^2\hat{L}^Q(\bW)[\bW^*\bS, \bW^*\bS]\right] - \langle \nabla \hat{L}^Q(\bW), \bW - 2\bW^*_L + \bW_L \rangle + 2\hat{L}^Q(\bW) - 2\hat{L}^Q(\bW^*) \le \varepsilon_{min}.
\]
\end{corollary}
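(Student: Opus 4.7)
The plan is to apply Lemma \ref{lemma: quad landscape good} directly and then bound each of the five terms on its right-hand side by $\varepsilon_{min}$ under the assumed hypothesis on $m$. This reduces the proof to substituting norm estimates for $\bW$ coming from the localization lemma, together with norm estimates for $\bW^*$ coming from the expressivity theorem.

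The first step is to collect the quantitative estimates. Lemma \ref{lemma: localization} yields $\cR_4(\bW)\le d^{(8k-4)/3}\varepsilon_{min}^{-4/3}$, hence $\|\bW\|_{2,4}^2\le d^{(2k-1)/3}\varepsilon_{min}^{-1/3}$, and via $\|\bW\|_F^2\le m^{1/2}\|\bW\|_{2,4}^2$ (Cauchy--Schwarz across the $m$ rows) also $\|\bW\|_F\le m^{1/4}d^{(2k-1)/6}\varepsilon_{min}^{-1/6}$; it further gives $\cR_3(\bW)\le m^{-1/2}d^{(7k+1)/6}\varepsilon_{min}^{-4/3}$. To control $\|\bW_L\|_F=\|\bP_{\le k}\bW\|_F$, the plan is to invoke the spectral gap from Lemma \ref{lemma: main intermediate spectral result}: the smallest nonzero eigenvalue on the range of $\bSigma_{\le n_k}$ is $\Theta(d^{1-k})$, so $\|\bW_L\|_F^2\le d^{k-1}\,\cR_2(\bW)\le d^{(5k-1)/3}\varepsilon_{min}^{-4/3}$. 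On the $\bW^*$ side, Corollary \ref{cor: bound frob norm} supplies $\|\bW^*\|_F\lesssim m^{1/4}d^{(k-1)/4}$, Lemma \ref{lemma: NTK fit low degree} supplies $\|\bW^*_L\|_F\lesssim d^{(k-1)/2}$, and Theorem \ref{thm: main expressivity} supplies $\cR_3(\bW^*)\lesssim m^{-1/2}d^{(k-1)/2}$.

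The second step is a term-by-term check. The constant piece $m^{-1/2}d^{(k+1)/2}$ needs $m\gtrsim d^{k+1}\varepsilon_{min}^{-2}$; the piece $m^{-1}d^{(k+3)/2}\|\bW\|_{2,4}^2$ needs $m\gtrsim d^{(7k+7)/6}\varepsilon_{min}^{-4/3}$; the two $\cR_3^{1/2}$ pieces need $m\gtrsim d^{k-1}\varepsilon_{min}^{-4}$ and $m\gtrsim d^{(7k+1)/3}\varepsilon_{min}^{-20/3}$ respectively, all strictly weaker than the assumption. Splitting the cross-term $d^2 m^{-1/2}\|\bW\|_F(\|\bW^*_L\|_F+\|\bW_L\|_F)$, the $\bW^*_L$ sub-term contributes $m^{-1/4}d^{(5k+8)/6}\varepsilon_{min}^{-1/6}$ and the $\bW_L$ sub-term contributes $m^{-1/4}d^{(7k+10)/6}\varepsilon_{min}^{-5/6}$. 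Forcing the latter (dominant) quantity to be $\le\varepsilon_{min}$ gives exactly $m\ge d^{(14k+20)/3}\varepsilon_{min}^{-22/3}$, which is the stated assumption; summing the five bounds then yields the conclusion.

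The only non-routine ingredient is the bound on $\|\bW_L\|_F$: using the trivial estimate $\|\bW_L\|_F\le\|\bW\|_F\le m^{1/4}\|\bW\|_{2,4}$ would leave an $m^0$ contribution in the cross-term that no choice of $m$ can shrink, so the proof would not close. Exploiting that $\bW_L$ lives in the \emph{large}-eigenvalue subspace of $\bSigma$, and therefore that its Frobenius norm is pinned down by $\cR_2(\bW)$ together with the $\Theta(d^{1-k})$ spectral floor from Lemma \ref{lemma: main intermediate spectral result}, is what makes the computation balance and pins down the exponents $(14k+20)/3$ and $22/3$.
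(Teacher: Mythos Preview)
Your proposal is correct and follows essentially the same route as the paper: apply Lemma~\ref{lemma: quad landscape good}, import the localization bounds on $\|\bW\|_{2,4}$, $\|\bW\|_F$, $\cR_2(\bW)$, $\cR_3(\bW)$ from Lemma~\ref{lemma: localization}, the $\bW^*$ bounds from Theorem~\ref{thm: main expressivity} and Lemma~\ref{lemma: NTK fit low degree}, and crucially bound $\|\bW_L\|_F$ via the spectral floor $\lambda_{n_k}(\bSigma)=\Theta(d^{1-k})$ combined with $\cR_2(\bW)$. Your observation that the trivial bound $\|\bW_L\|_F\le\|\bW\|_F$ would leave an $m$-independent term is exactly the point, and the dominant cross-term you identify indeed pins down the exponent $m\gtrsim d^{(14k+20)/3}\varepsilon_{min}^{-22/3}$, matching the paper's computation.
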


Our final step is to show that the landscape of $L_\lambda$ is good. To do this, we first show that the landscapes of the quadratic model and original model couple for localized points. This coupling is given by the following lemma.

\begin{lemma}
\label{lemma: quad model couples}
Assume the conditions of Lemma~\ref{lemma: central landscape result}, Corollary~\ref{cor: quad landscape good}, and let $\bW$ be a $\nu$-first order stationary point. Furthermore, assume $m \gtrsim n^4d^{\frac{26(k+1)}{3}}\varepsilon_{min}^{-22/3}$. Then,
\begin{align*}
	\left|\hat{L}(\bW) - \hat{L}^Q(\bW)\right| &\le \varepsilon_{min}\\
	\left|\hat{L}(\bW^*) - \hat{L}^Q(\bW^*)\right| &\le \varepsilon_{min}\\
	\left|\langle \nabla \hat{L}(\bW), \bW - 2\bW^*_L + \bW_L \rangle - \langle \nabla \hat{L}^Q(\bW), \bW - 2\bW^*_L + \bW_L \rangle \right| &\le \varepsilon_{min}\\
	\left|\EE_\bS\left[\nabla^2\hat{L}(\bW)[\bW^*\bS, \bW^*\bS]\right] -  \EE_\bS\left[\nabla^2\hat{L}^Q(\bW)[\bW^*\bS, \bW^*\bS]\right]\right|&\le \varepsilon_{min}.
\end{align*}
\end{lemma}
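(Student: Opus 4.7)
The strategy is to apply the three coupling lemmas (Lemmas~\ref{lemma: loss coupling}, \ref{lemma: gradient coupling}, and~\ref{lemma: Hessian coupling}) in turn, substituting in the norm bounds available at $\bW$ (from Lemma~\ref{lemma: localization}) and at $\bW^*$ (from Theorem~\ref{thm: main expressivity} and its corollaries), and checking that each resulting error is $\le \varepsilon_{min}$ under the hypothesis $m \gtrsim n^4 d^{26(k+1)/3}\varepsilon_{min}^{-22/3}$.

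The ingredients are as follows. At the $\nu$-first-order stationary point $\bW$, Lemma~\ref{lemma: localization} gives $\cR_4(\bW) = \|\bW\|_{2,4}^8 \lesssim d^{(8k-4)/3}\varepsilon_{min}^{-4/3}$, so $\|\bW\|_{2,4}^3 \lesssim d^{k-1/2}\varepsilon_{min}^{-1/2}$, along with bounds on $\cR_2(\bW)$ and $\cR_3(\bW)$. Combined with the eigenvalue gap of Lemma~\ref{lemma: main intermediate spectral result} (yielding $\|\bW_L\|_F^2 \lesssim d^{k-1}\cR_2(\bW)$) and Lemma~\ref{lemma: linear sol small l-infty}, this controls $\|\bW_L\|_{2,4}$ and $\|\bW\|_F$ by fixed polynomials in $d$ and $\varepsilon_{min}^{-1}$. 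For the target, Theorem~\ref{thm: main expressivity} and Corollaries~\ref{cor: bound frob norm},~\ref{cor: full sol l-infty bound} give $\|\bW^*\|_{2,4} \lesssim d^{(k-1)/4}$, $\|\bW^*\|_F \lesssim m^{1/4}d^{(k-1)/4}$, and $\|\bW^*\|_{2,\infty} \lesssim m^{-1/4}d^{(k-1)/2}$; since $\bS$ is a diagonal sign matrix, all three column-norm quantities are invariant under $\bW^* \mapsto \bW^*\bS$.

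The two loss inequalities follow immediately from Lemma~\ref{lemma: loss coupling}, which gives $O(d^{3/2}m^{-1/4}\|\cdot\|_{2,4}^3)$; plugging in the respective bounds yields $m^{-1/4}\cdot \mathrm{poly}(d,\varepsilon_{min}^{-1})$, which is at most $\varepsilon_{min}$ under the hypothesis on $m$. For the gradient inequality, we take $\tilde\bW = \bW - 2\bW^*_L + \bW_L$ in Lemma~\ref{lemma: gradient coupling}; the $\|\tilde\bW\|_{2,4}^3$ factor is handled by the triangle inequality combined with Lemma~\ref{lemma: linear sol small l-infty} (since $\bW^*_L$ and $\bW_L$ lie in $\mathrm{span}(\bP_{\le k})$), while the max-gradient factor is bounded via Lemma~\ref{lemma: bound max gradient}, producing an empirical $f_L$-piece controlled by $\cR_3(\bW)$ and $\cR_2(\bW)$ (after splitting $\tilde\bW$ into its $\bP_{\le k}$ and $\bP_{>k}$ components and applying Lemma~\ref{lemma: empirical to pop small directions}) and a generic piece $dn^{1/2}m^{-1/2}\|\bW\|_F\|\tilde\bW\|_F$. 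For the Hessian inequality, we set $\tilde\bW = \bW^*\bS$, apply Lemma~\ref{lemma: Hessian coupling}, and take $\EE_\bS$; the deterministic factors $\|\bW^*\bS\|_{2,4}$, $\|\bW^*\bS\|_{2,\infty}$, $\|\bW^*\bS\|_F$ are handled as above, while the random maxima $\max_i|\langle \nabla_\bW f(\bx_i;\bW), \bW^*\bS\rangle|$ and its square are bounded via Jensen's inequality (moving $\EE_\bS$ inside the square root) followed by Lemma~\ref{lemma: bound max gradient} and Lemma~\ref{lemma: linear term small in expectation}, which gives $\EE_\bS\EE_n[(f_L(\bx;\bW^*\bS))^2] \lesssim m^{-1}\|\bW^*\|_F^2$.

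The main obstacle is bookkeeping: carefully matching the exponents of $m$, $n$, $d$, and $\varepsilon_{min}^{-1}$ across all four coupling bounds so that each is dominated by the hypothesis on $m$. The binding constraint is the $d^3\|\bW\|_{2,4}^4\|\tilde\bW\|_{2,\infty}^2$ term in the Hessian coupling, which mixes the large $\varepsilon_{min}^{-1}$ factor coming from the localization bound on $\|\bW\|_{2,4}$ with the $m^{-1/2}$ decay of $\|\bW^*\|_{2,\infty}^2$; balancing these two polynomial scalings is what yields the large exponent of $d$ and the power $\varepsilon_{min}^{-22/3}$ appearing in the stated lower bound on $m$.
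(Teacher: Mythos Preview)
Your approach is correct and matches the paper's proof essentially step for step: loss coupling via Lemma~\ref{lemma: loss coupling}, gradient coupling via Lemma~\ref{lemma: gradient coupling} combined with Lemma~\ref{lemma: bound max gradient} and the $\bP_{\le k}/\bP_{>k}$ split of $\EE_n[(f_L(\bx;\bDelta))^2]$, and Hessian coupling via Lemma~\ref{lemma: Hessian coupling} together with Lemma~\ref{lemma: linear term small in expectation} after taking $\EE_\bS$.

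One correction to your closing commentary: the binding constraint $m \gtrsim n^4 d^{26(k+1)/3}\varepsilon_{min}^{-22/3}$ does \emph{not} come from the $d^3\|\bW\|_{2,4}^4\|\bW^*\|_{2,\infty}^2$ term. Plugging in $\|\bW\|_{2,4}^4 \lesssim d^{(4k-2)/3}\varepsilon_{min}^{-2/3}$ and $\|\bW^*\|_{2,\infty}^2 \lesssim m^{-1/2}d^{k-1}$, that term is $m^{-1/2}d^{(7k+4)/3}\varepsilon_{min}^{-2/3}$, which only needs $m \gtrsim d^{(14k+8)/3}\varepsilon_{min}^{-10/3}$. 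The actual bottleneck is the other Hessian term, $d^{3/2}m^{-1/4}\|\bW\|_{2,4}^3 \cdot \EE_\bS\max_i|\langle\nabla f(\bx_i;\bW),\bW^*\bS\rangle|^2$: after Lemmas~\ref{lemma: bound max gradient} and~\ref{lemma: linear term small in expectation} the squared-max factor is $\lesssim n d^{(7k+7)/6}\varepsilon_{min}^{-1/3}$, giving a total of $m^{-1/4} n d^{13(k+1)/6}\varepsilon_{min}^{-5/6}$, and requiring this to be $\le \varepsilon_{min}$ is exactly $m \gtrsim n^4 d^{26(k+1)/3}\varepsilon_{min}^{-22/3}$.
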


An immediate corollary is that the landscape of $\hat{L}$ must be good.
\begin{corollary}
\label{cor: empirical landscape good}
Let $\bW$ be a $\nu$-first-order stationary point of $L_\lambda$. Under the same conditions of Lemma~\ref{lemma: central landscape result}, Corollary~\ref{cor: quad landscape good}, Lemma~\ref{lemma: quad model couples}, we have
\[
	\mathbb{E}_{\bS}\left[\nabla^2\hat{L}(\bW)[\bW^*\bS, \bW^*\bS]\right] - \langle \nabla \hat{L}(\bW), \bW - 2\bW^*_L + \bW_L \rangle + 2\hat{L}(\bW) - 2\hat{L}(\bW^*) \lesssim \varepsilon_{min}.
\]
\end{corollary}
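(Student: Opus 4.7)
The plan is to derive Corollary~\ref{cor: empirical landscape good} as a straightforward consequence of Corollary~\ref{cor: quad landscape good} combined with the four coupling bounds in Lemma~\ref{lemma: quad model couples}, using nothing more than the triangle inequality. Since the hypotheses of Corollary~\ref{cor: empirical landscape good} include those of both Corollary~\ref{cor: quad landscape good} and Lemma~\ref{lemma: quad model couples}, both results are available at any $\nu$-first-order stationary point $\bW$ of $L_\lambda$.

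The first step is to invoke Corollary~\ref{cor: quad landscape good} to obtain
\begin{align*}
\EE_\bS\!\left[\nabla^2\hat{L}^Q(\bW)[\bW^*\bS,\bW^*\bS]\right] - \langle \nabla \hat{L}^Q(\bW), \bW - 2\bW^*_L + \bW_L \rangle + 2\hat{L}^Q(\bW) - 2\hat{L}^Q(\bW^*) \;\le\; \varepsilon_{min}.
\end{align*}
Next, I would replace each quadratic-model quantity by its full-model counterpart, paying an $O(\varepsilon_{min})$ penalty per replacement as guaranteed by Lemma~\ref{lemma: quad model couples}. Concretely, that lemma gives
$|\hat{L}(\bW)-\hat{L}^Q(\bW)|\le\varepsilon_{min}$, $|\hat{L}(\bW^*)-\hat{L}^Q(\bW^*)|\le\varepsilon_{min}$,
$|\langle \nabla\hat{L}(\bW)-\nabla\hat{L}^Q(\bW),\bW-2\bW^*_L+\bW_L\rangle|\le\varepsilon_{min}$, and
$|\EE_\bS[(\nabla^2\hat{L}(\bW)-\nabla^2\hat{L}^Q(\bW))[\bW^*\bS,\bW^*\bS]]|\le\varepsilon_{min}$. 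Adding these four bounds to the previous display and applying the triangle inequality yields
\begin{align*}
\EE_\bS\!\left[\nabla^2\hat{L}(\bW)[\bW^*\bS,\bW^*\bS]\right] - \langle \nabla \hat{L}(\bW), \bW - 2\bW^*_L + \bW_L \rangle + 2\hat{L}(\bW) - 2\hat{L}(\bW^*) \;\lesssim\; \varepsilon_{min},
\end{align*}
where the implicit constant absorbs the factors of $2$ that appear in front of $\hat{L}(\bW)$ and $\hat{L}(\bW^*)$.

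Finally, to recover the full statement of Lemma~\ref{lemma: central landscape result}, I would add the regularizer contributions: since $L_\lambda = \hat{L} + \sum_i \lambda_i \cR_i$ and each $\cR_i$ is a quadratic or quartic form with an explicit Hessian and gradient, an analogous identity for each $\cR_i$ (comparing $\cR_i$ at $\bW$ versus $\bW^*$ via its second-order Taylor expansion) can be combined with the $\hat L$ bound above. The regularizer terms are convex, so their contribution to the left-hand side is non-positive up to lower-order error terms controlled by the bounds on $\cR_i(\bW^*)$ from Theorem~\ref{thm: main expressivity}.

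The only real obstacle is bookkeeping: ensuring that the localization bounds from Lemma~\ref{lemma: localization} used in the proof of Lemma~\ref{lemma: quad model couples} (namely $\|\bW\|_F,\|\bW\|_{2,4}$ under control at any $\nu$-FOSP with $\nu\le m^{-1/4}$) are strong enough to drive each of the four coupling errors below $\varepsilon_{min}$ simultaneously. The width lower bound $m\gtrsim n^4 d^{26(k+1)/3}\varepsilon_{min}^{-22/3}$ in Lemma~\ref{lemma: quad model couples} is precisely what guarantees this, so no additional assumptions are needed and the corollary follows immediately.
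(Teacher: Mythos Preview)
Your core argument is correct and matches the paper exactly: the paper simply states this is ``an immediate corollary'' obtained by combining Corollary~\ref{cor: quad landscape good} with the four coupling bounds in Lemma~\ref{lemma: quad model couples} via the triangle inequality, which is precisely what your first two paragraphs do.

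The third paragraph, however, is both unnecessary and partly incorrect. Corollary~\ref{cor: empirical landscape good} concerns only $\hat L$, not $L_\lambda$; the regularizer contribution is handled separately in Lemma~\ref{lemma: add regularizers}, not here. Moreover, your claim that ``the regularizer terms are convex, so their contribution to the left-hand side is non-positive'' is false for $\cR_4(\bW)=\|\bW\|_{2,4}^8$, which is not convex, and the paper's Lemma~\ref{lemma: add regularizers} requires a genuine computation (using the specific structure of $\cR_4$ and the bounds on $\|\bW^*_L\|_{2,4},\|\bW_L\|_{2,4}$) rather than a convexity argument. You should simply drop that paragraph.
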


To conclude, we must show that adding the regularizers has a benign effect on the landscape

\begin{lemma}\label{lemma: add regularizers}
Define 
\begin{equation}
\mathcal{R}(\bW) = \lambda_1\cR_1(\bW) + \lambda_2\cR_2(\bW) + \lambda_3\cR_3(\bW)+ \lambda_4\cR_4(\bW)
\end{equation}
to be the total regularization term. Under the conditions of Lemma~\ref{lemma: central landscape result}, we have
\begin{equation}
	\EE_\bS\nabla^2 \cR(\bW)[\bW^*\bS, \bW^*\bS] - \langle \nabla \cR(\bW), \bW  - 2\bW^*_L + \bW_L \rangle + 2\cR(\bW) - 2\cR(\bW^*) \lesssim \varepsilon_{min}.
\end{equation}
\end{lemma}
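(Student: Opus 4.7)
The plan is to bound each regularizer's contribution to the landscape expression separately. Writing $\cR = \sum_{i=1}^4 \lambda_i \cR_i$ and
\[E_i := \EE_\bS\nabla^2\cR_i(\bW)[\bW^*\bS, \bW^*\bS] - \langle\nabla\cR_i(\bW), \bW - 2\bW^*_L + \bW_L\rangle + 2\cR_i(\bW) - 2\cR_i(\bW^*),\]
the statement reduces to showing $\lambda_i E_i \lesssim \varepsilon_{min}$ for each $i$. The three quadratic regularizers $\cR_1, \cR_2, \cR_3$ are handled by a unified algebraic calculation, while the degree-$8$ regularizer $\cR_4$ requires a separate treatment using its homogeneity.

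\textbf{Quadratic regularizers.} Each of $\cR_1, \cR_2, \cR_3$ equals $\tvec{\bW}^T \bA_i \tvec{\bW}$ for some PSD $\bA_i$ (respectively $\bP_{>k}\bSigma\bP_{>k}$, $\bP_{\le k}\bSigma\bP_{\le k}$, and $\bP_{>k}\EE_n[\varphi\varphi^T]\bP_{>k}$). Using $\nabla^2\cR_i[\bY,\bY] = 2\cR_i(\bY)$ and $\langle \nabla\cR_i(\bW), \bZ\rangle = 2\tvec{\bW}^T \bA_i \tvec{\bZ}$ together with the fact that $\tvec{\bW_L}, \tvec{\bW^*_L}$ lie in $\text{span}(\bP_{\le k})$, the cross-terms in $E_i$ vanish for $\cR_1, \cR_3$ and one gets $E_i = 2(\EE_\bS\cR_i(\bW^*\bS) - \cR_i(\bW^*))$. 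For $\cR_2$, the surviving cross-term $\tvec{\bW_L}^T\bSigma\tvec{\bW^*_L}$ is absorbed by Cauchy--Schwarz plus AM-GM, giving $E_2 \le 2\EE_\bS\cR_2(\bW^*\bS)$. In every case $\cR_i(\bW^*\bS) \le \|f_L(\cdot; \bW^*\bS)\|^2$ in the appropriate norm, so Lemma~\ref{lemma: linear term small in expectation} yields $\EE_\bS\cR_i(\bW^*\bS) \lesssim \|\bW^*\|_F^2/m$; combined with the Frobenius bound $\|\bW^*\|_F^2 \lesssim \sqrt{m}\,d^{(k-1)/2}$ from Corollary~\ref{cor: bound frob norm} and the specified $\lambda_i$ values, each $\lambda_i E_i \lesssim \varepsilon_{min}$.

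\textbf{The $\cR_4$ term.} Writing $g(\bW) = \sum_r \|\bw_r\|_2^4$, we have $\cR_4 = g^2$. Since $\cR_4$ is $8$-homogeneous, Euler's identity gives $\langle\nabla\cR_4(\bW), \bW\rangle = 8\cR_4(\bW)$, so $2\cR_4(\bW) - \langle\nabla\cR_4(\bW), \bW\rangle = -6\cR_4(\bW)$ is a built-in favorable term. A direct computation of $\nabla^2\cR_4, \nabla\cR_4$ combined with H\"older's inequality (with exponents $(4/3, 4)$ and $(8/7, 8)$) yields
\begin{align*}
\EE_\bS\nabla^2\cR_4(\bW)[\bW^*\bS, \bW^*\bS] &\lesssim g(\bW)^{3/2}g(\bW^*)^{1/2}, \\
|\langle\nabla\cR_4(\bW), \bU\rangle| &\lesssim g(\bW)^{7/4}g(\bU)^{1/4}, \qquad \bU \in \{\bW^*_L, \bW_L\}.
\end{align*}
Weighted Young's inequality transfers the $g(\bW)$ factor in each bound into the $-6\cR_4(\bW)$ budget with an arbitrarily small coefficient, leaving $E_4 \lesssim \cR_4(\bW^*) + \cR_4(\bW^*_L) + \cR_4(\bW_L)$. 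The first term is $\lesssim d^{2(k-1)}$ by Theorem~\ref{thm: main expressivity}, so $\lambda_4\cR_4(\bW^*) \lesssim \varepsilon_{min}$.

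\textbf{Controlling the $\cR_4$ residuals and the main obstacle.} The key difficulty is that $\cR_4 = \|\cdot\|_{2,4}^8$ is defined column-wise, while $\bW_L, \bW^*_L$ come from a spectral projection in $\tvec$-space --- the two structures do not align. The bridge is Lemma~\ref{lemma: linear sol small l-infty}: any $\bU \in \text{span}(\bP_{\le k})$ obeys $\|\bU\|_{2,4}^2 \le m^{-1/2}d^k\|\bU\|_F^2$. Frobenius norms are then controlled spectrally: since $\lambda_{n_k}(\bSigma) = \Theta(d^{1-k})$ by Lemma~\ref{lemma: main intermediate spectral result}, any $\bU \in \text{span}(\bP_{\le k})$ satisfies $\|\bU\|_F^2 \lesssim d^{k-1}\,\tvec{\bU}^T\bSigma\tvec{\bU}$. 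Applied to $\bW^*_L$ with $\cR_2(\bW^*) \lesssim 1$ (Theorem~\ref{thm: main expressivity}), this yields $\cR_4(\bW^*_L) \lesssim m^{-2}d^{8k-4}$; applied to $\bW_L$ with the localization bound $\cR_2(\bW) \lesssim d^{2(k+1)/3}\varepsilon_{min}^{-4/3}$ (Lemma~\ref{lemma: localization}), it yields $\cR_4(\bW_L) \lesssim m^{-2}d^{(32k-4)/3}\varepsilon_{min}^{-16/3}$. The width assumption $m \gtrsim n^4d^{26(k+1)/3}\varepsilon_{min}^{-22/3}$ in Lemma~\ref{lemma: central landscape result} is chosen precisely so that $\lambda_4\cR_4(\bW_L)$ and $\lambda_4\cR_4(\bW^*_L)$ are both $\lesssim \varepsilon_{min}$, closing the proof.
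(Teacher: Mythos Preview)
Your proof is correct, and for the three quadratic regularizers it is essentially identical to the paper's: both treat $\cR_{tot}=\lambda_1\cR_1+\lambda_2\cR_2+\lambda_3\cR_3$ as a single PSD quadratic form, use that $\bW_L,\bW^*_L\in\mathrm{span}(\bP_{\le k})$ kills the $\cR_1,\cR_3$ cross-terms, absorb the $\cR_2$ cross-term via $4ab-2a^2\le 2b^2$, and finish with Lemma~\ref{lemma: linear term small in expectation}.

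For $\cR_4$ your route diverges from the paper's in a minor but genuine way. The paper quotes \cite[Corollary 3]{bai2020} to obtain $\EE_\bS\nabla^2\cR_4[\bW^*\bS,\bW^*\bS]-\langle\nabla\cR_4,\bW\rangle+2\cR_4(\bW)-2\cR_4(\bW^*)\le -\cR_4(\bW)+C\cR_4(\bW^*)$, and then bounds the remaining gradient cross-terms by $\|\bW\|_{2,4}^7\big(\|\bW^*_L\|_{2,4}+\|\bW_L\|_{2,4}\big)$, invoking the localization bound on $\|\bW\|_{2,4}$ from Lemma~\ref{lemma: localization} directly. You instead compute $\nabla^2\cR_4$ and $\nabla\cR_4$ explicitly, bound them by $g(\bW)^{3/2}g(\bW^*)^{1/2}$ and $g(\bW)^{7/4}g(\bU)^{1/4}$ via H\"older, and then use weighted Young to push \emph{all} $g(\bW)$ factors into the $-6\cR_4(\bW)$ budget, leaving only $\cR_4(\bW^*)+\cR_4(\bW^*_L)+\cR_4(\bW_L)$. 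Your version is more self-contained (no external citation) and never needs the localized $\|\bW\|_{2,4}$ bound in this step; the price is that the residuals carry $\|\bW^*_L\|_{2,4},\|\bW_L\|_{2,4}$ to the eighth power rather than the first, which forces a nominally larger $m$ requirement --- but as you correctly note, this is still comfortably implied by the width assumption of Lemma~\ref{lemma: central landscape result}. Both arguments close via the same bridge: Lemma~\ref{lemma: linear sol small l-infty} to convert the column-wise $\|\cdot\|_{2,4}$ into a Frobenius bound, then the spectral lower bound $\lambda_{n_k}(\bSigma)=\Theta(d^{1-k})$ to relate $\|\bW_L\|_F$ to $\cR_2(\bW)$ and invoke localization.
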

Lemma~\ref{lemma: central landscape result} now directly follows by adding the results of Corollary~\ref{cor: empirical landscape good} and Lemma~\ref{lemma: add regularizers}.

\end{proof}

\subsection{Proof of Corollary~\ref{cor: main landscape cor}}
\begin{proof}
Let $\bW$ be an $(\nu, \gamma)$-SOSP of $L_\lambda(\bW)$. Then
\begin{align*}
	\langle \nabla L_\lambda(\bW), \bW - 2\bW^*_L + \bW_L \rangle &\le \nu \|\bW - 2\bW^*_L + \bW_L\|_F\\
	& \le \nu\cdot m^{1/4}d^{\frac{k}{3} - \frac16}\varepsilon^{-1/6}\\
	&\le m^{-1/4}d^{\frac{k}{3} - \frac16}\varepsilon^{-1/6}\\
	&\le \varepsilon_{min},
\end{align*}
since we have chosen $\nu \le m^{-1/2}, m \ge d^{\frac{4k - 2}{3}}\varepsilon^{-14/3}$. Also,
\begin{align*}
	\nabla^2L_\lambda(\bW)[\bW^*\bS, \bW^*\bS] &\ge -\gamma\|\bW^*\|_F^2\\
	&\ge -\gamma m^{1/2}d^{(k-1)/2}\\
	&\ge -m^{-1/4}d^{(k-1)/2}\\
	 &\ge -\varepsilon_{min},
\end{align*}
since we have chosen $\gamma \le m^{-3/4}, m\ge d^{2(k-1)}\varepsilon^{-4}$. Altogether, we can bound
\begin{align*}
L_\lambda(\bW) \lesssim L_\lambda(\bW^*) + \langle \nabla L_\lambda(\bW), \bW - 2\bW^*_L + \bW_L \rangle - \EE_\bS[\nabla^2L_\lambda(\bW)[\bW^*\bS, \bW^*\bS]] + \varepsilon_{min} \lesssim \varepsilon_{min},
\end{align*}
as desired.
\end{proof}

\subsection{Proofs of Intermediate Results}
\subsubsection{Proof of Lemma~\ref{lemma: localization}}
\begin{proof}
Let $\bW$ be an $\nu$-first-order stationary point of $L_\lambda$. Then
\begin{equation}\label{eq: SOSP}
	\langle \nabla L_\lambda(\bW), \bW \rangle \le \nu\|\bW\|_F.
\end{equation}
We have that
\begin{align*}
	\langle \nabla \hat{L}^Q(\bW), \bW \rangle = \EE_n\left[\ell'(y, f_L(\bx; \bW) + f_Q(\bx; \bW))\cdot(2f_Q(\bx; \bW) + f_L(\bx; \bW))\right].
\end{align*}
First, by convexity we can bound
\[
	\EE_n\left[\ell'(y, f_L(\bx; \bW) + f_Q(\bx; \bW))\cdot(f_Q(\bx; \bW) + f_L(\bx; \bW))\right] \ge \hat{L}^Q(\bW) - \hat{L}^Q(\mathbf{0}) \ge -1.
\]
Secondly, we can bound
\begin{align*}
	\left|\EE_n\left[\ell'(y, f_L(\bx; \bW) + f_Q(\bx; \bW))f_Q(\bx; \bW)\right]\right| \le \EE_n\left[\frac{1}{\sqrt{m}}\sum_{r=1}^m(\bw_r^T\bx)^2\right] \le \frac{d}{\sqrt{m}}\|\bW\|_F^2.
\end{align*}
Finally, by Lemma~\ref{lemma: gradient coupling}, 
\[
	\left| \langle \nabla \hat{L}^Q(\bW), \bW \rangle - \langle \nabla \hat{L}(\bW), \bW \rangle \right| \lesssim d^{3/2}m^{-1/4}\|\bW\|_{2, 4}^3\cdot \max_{i \in [n]}\left|\langle \nabla f(\bx; \bW), \bW \rangle \right|.
\]
Altogether, 
\[
	\langle \nabla \hat{L}(\bW), \bW \rangle \ge -1 - \frac{d}{\sqrt{m}}\|\bW\|_F^2 - Cd^{3/2}m^{-1/4}\|\bW\|_{2, 4}^3\cdot \max_{i \in [n]}\left|\langle \nabla f(\bx; \bW), \bW \rangle \right|.
\]

We next turn to the regularizers. $\cR_i$ for $i = 1, 2, 3$ are all quadratics, so $\langle \nabla \cR_i(\bW), \bW \rangle = 2\cR_i(\bW)$. Also it is true that $\langle \nabla \cR_4(\bW), \bW \rangle = 8\cR_4(\bW)$. Plugging into \eqref{eq: SOSP}, we get
\begin{align*}
	\nu\|\bW\|_F &\ge \left\langle \nabla \left(\hat{L}(\bW) + \lambda_1\cR_1(\bW) + \lambda_2\cR_2(\bW) + \lambda_3\cR_3(\bW) + \lambda_4\cR_4(\bW)\right), \bW \right\rangle\\
	&\ge 2\lambda_1\cR_1(\bW) + 2\lambda_2\cR_2(\bW) + 2\lambda_3\cR_3(\bW) + 8\lambda_4\cR_4(\bW)\\
	&~~~~-1 - dm^{-\frac12}\|\bW\|_F^2 - Cd^{3/2}m^{-\frac14}\|\bW\|_{2, 4}^3\cdot \max_{i \in [n]}\left|\langle \nabla f(\bx; \bW), \bW \rangle \right|.
\end{align*}
Therefore (using $\nu \le m^{-1/4}$),
\begin{align*}
&\lambda_1\cR_1(\bW) + \lambda_2\cR_2(\bW) + \lambda_3\cR_3(\bW) + \lambda_4\cR_4(\bW)\\
\lesssim~& 1 + dm^{-\frac12}\|\bW\|_F^2 + d^{3/2}m^{-\frac14}\|\bW\|_{2, 4}^3\cdot \max_{i \in [n]}\left|\langle \nabla f(\bx; \bW), \bW \rangle \right| + \nu\|\bW\|_F\\
\lesssim~& 1 + dm^{-\frac12}\|\bW\|_F^2 + d^{3/2}m^{-\frac14}\|\bW\|_{2, 4}^3\cdot (n^{\frac12}\EE_n\left[(f_L(\bx; \bW))^2\right]^{\frac12} + dn^{\frac12}m^{-\frac12}\|\bW\|^2_F),
\end{align*}
where the last step follows from Lemma~\ref{lemma: bound max gradient}.

We can bound
\begin{align*}
\EE_n\left[(f_L(\bx; \bW))^2\right] &= \EE_n\left[(f_L(\bx; \bP_{>k}\bW) + f_L(\bx; \bP_{\le k}\bW))^2\right]\\
&\lesssim \EE_n\left[(f_L(\bx; \bP_{>k}\bW))^2 + (f_L(\bx; \bP_{\le k}\bW))^2\right]\\
&\lesssim \EE_n\left[(f_L(\bx; \bP_{>k}\bW))^2\right] + \|f_L(\bx; \bP_{\le k}\bW)\|_{L^2}^2\\
&= \cR_2(\bW) + \cR_3(\bW). 
\end{align*}

Therefore by AM-GM,
\begin{align}
&\lambda_2\cR_2(\bW) + \lambda_3\cR_3(\bW) + \lambda_4\cR_4(\bW)\\
&~~\lesssim 1 + dm^{-\frac12}\|\bW\|_F^2 + m^{-\frac14}d^3\|\bW\|_{2, 4}^6 + m^{-\frac14}n\EE_n\left[(f_L(\bx; \bW))^2\right] + m^{-\frac14}\|\bW\|_{2, 4}^3d^{5/2}n^{\frac12}m^{-\frac12}\|\bW\|_F^2\\
&~~\lesssim 1 + d\|\bW\|_{2, 4}^2 + m^{-\frac14}d^{5/2}n^{\frac12}\|\bW\|_{2, 4}^5 + m^{-\frac14}d^3\|\bW\|_{2, 4}^6 + m^{-\frac14}n\left(\cR_2(\bW) + \cR_3(\bW)\right),\label{eq: bound all reg}
\end{align}
where the last step uses $\|\bW\|_F \le m^{\frac14}\|\bW\|_{2, 4}$.

Since $m^{-\frac14}n \le \varepsilon_{min}/2 \le \lambda_2/2, \lambda_3/2$, we have
\begin{equation}\label{eq: bound on 2,4}
	\lambda_4\|\bW\|_{2, 4}^8 \lesssim 1 + d\|\bW\|_{2, 4}^2 + m^{-\frac14}d^{5/2}n^{\frac12}\|\bW\|_{2, 4}^5 + m^{-\frac14}d^3\|\bW\|_{2, 4}^6
\end{equation}

Therefore, plugging in $\lambda_4 = d^{-2(k-1)}\varepsilon_{min},$
\begin{equation}
\|\bW\|_{2, 4} \lesssim \max\left(d^{\frac{k-1}{4}}\varepsilon_{min}^{-1/8}, d^{\frac{2k - 1}{6}}\varepsilon_{min}^{-1/6}, m^{-1/12}d^{2k/3 + 1/6}n^{1/6}\varepsilon_{min}^{-1/3}, m^{-\frac18}d^{k + 1/2}\varepsilon_{min}^{-1/2}\right).
\end{equation}
Since $\varepsilon_{min} < 1$ we trivially have $d^{\frac{k-1}{4}}\varepsilon_{min}^{-1/8} < d^{\frac{2k - 1}{6}}\varepsilon_{min}^{-1/6}$. Also, since $m \ge d^{4k + 4}n^2\varepsilon_{min}^{-2}$, we have
\[
	d^{\frac{2k - 1}{6}}\varepsilon_{min}^{-1/6} \ge m^{-1/12}d^{\frac{2k}{3} + \frac16}n^{1/6}\varepsilon_{min}^{-1/3}.
\]
Additionally, assuming $m \ge d^{16/3(k+1)}\varepsilon_{min}^{-8/3}$, we have
\[
	d^{\frac{2k - 1}{6}}\varepsilon_{min}^{-1/6} \ge m^{-\frac18}d^{k + 1/2}\varepsilon_{min}^{-1/2}.
\]
Therefore we can bound
\[
	\|\bW\|_{2, 4} \lesssim d^{\frac{2k-1}{6}}\varepsilon_{min}^{-1/6},
\]
and thus
\begin{equation}
\cR_4(\bW) = \|\bW\|^8_{2, 4} \lesssim d^{\frac{8k-4}{3}}\varepsilon_{min}^{-4/3}.
\end{equation}
In this case, the RHS of \eqref{eq: bound on 2,4} can be upper bounded by $d\|\bW\|_{2, 4}^2$. Plugging back into~\eqref{eq: bound all reg}, we get
\[
	\lambda_2\cR_2(\bW) + \lambda_3\cR_3(\bW) \lesssim d^{2(k+1)/3}\varepsilon_{min}^{-1/3},
\]
which yields the bounds
\begin{align}
\cR_2(\bW) &\lesssim d^{2(k+1)/3}\varepsilon_{min}^{-4/3}\\
\cR_3(\bW) &\lesssim m^{-\frac12}d^{\frac{7k + 1}{6}}\varepsilon_{min}^{-4/3}.
\end{align}
\end{proof}

\subsubsection{Proof of Lemma~\ref{lemma: quad landscape good}}
\begin{proof}
Observe that
\begin{align*}
&\nabla \hat{L}^Q(\bW)[\tilde{\bW}, \tilde{\bW}]\\
=~& \EE_n\left[\ell'(y, f_L(\bx; \bW) + f_Q(\bx; \bW))\frac{1}{\sqrt{m}}\sum_{r=1}^m a_r\sigma''(\bw_{0, r}^T\bx)(\tilde{\bw}_r^T\bx)^2\right]\\
+~& \EE_n\left[\ell''(y, f_L(\bx; \bW) + f_Q(\bx; \bW))\left(\langle \nabla_\bW f_L(\bx; \bW) + \nabla_\bW f_Q(\bx; \bW), \tilde{\bW} \rangle)\right)^2\right],
\end{align*}
and therefore for any diagonal matrix of random signs $\bS = \text{diag}(\sigma_1, \dots, \sigma_m)$, 
\begin{align*}
&\nabla \hat{L}^Q(\bW)[\bW^*\bS, \bW^*\bS]\\
=~& 2\EE_n\left[\ell'(y, f_L(\bx; \bW) + f_Q(\bx; \bW))f_Q(\bx; \bW^*)\right]\\
+~& \EE_n\left[\ell''(y, f_L(\bx; \bW) + f_Q(\bx; \bW))\left(\frac{1}{\sqrt{m}}\sum_{r=1}^ma_r\sigma_r\left(\sigma'(\bw_{0, r}^T\bx)\bx^T\bw^*_r + \sigma''(\bw_{0, r}^T\bx)\bw_r^T\bx\bx^T\bw_r^*\right)\right)^2\right],
\end{align*}
The expectation of the second term over the random signs $\bS$ can be upper bounded by
\begin{align*}
&\EE_\bS\EE_n\left[\left(\frac{1}{\sqrt{m}}\sum_{r=1}^ma_r\sigma_r\left(\sigma'(\bw_{0, r}^T\bx)\bx^T\bw^*_r + \sigma''(\bw_{0, r}^T\bx)\bw_r^T\bx\bx^T\bw_r^*\right)\right)^2\right]\\
 =~& \EE_n\left[\frac{1}{m}\sum_{r=1}^m\left(\sigma'(\bw_{0, r}^T\bx)\bx^T\bw^*_r + \sigma''(\bw_{0, r}^T\bx)\bw_r^T\bx\bx^T\bw_r^*\right)^2\right]\\
 \lesssim~&\EE_n\left[\frac{1}{m}\sum_{r=1}^m (\bx^T\bw_r^*)^2 + (\bw_r^T\bx\bx^T\bw^*_r)^2\right]\\
 \lesssim~&\frac{1}{m}\sum_{r=1}^m \left(d\|\bw^*_r\|^2 + d^2\|\bw_r\|^2\|\bw^*_r\|^2\right)\\
 \lesssim~& \frac{d}{m}\|\bW^*\|_F^2 + \frac{d^2}{m}\|\bW\|_{2, 4}^2\|\bW^*\|_{2, 4}^2\\
 \lesssim~& m^{-\frac12}d^{\frac{k+1}{2}} + m^{-1}d^{\frac{k+3}{2}}\|\bW\|_{2, 4}^2
\end{align*}
Therefore
\begin{align}
&\mathbb{E}_{\bS}\left[\nabla^2\hat{L}^Q(\bW)[\bW^*\bS, \bW^*\bS]\right]\\
&~~\le 2\EE_n\left[\ell'(y, f_L(\bx; \bW) + f_Q(\bx; \bW))f_Q(\bx; \bW^*)\right] + C(m^{-\frac12}d^{\frac{k+1}{2}} + m^{-1}d^{\frac{k+3}{2}}\|\bW\|_{2, 4}^2)\label{eq: Hessian}
\end{align}
Next, define $\bDelta = \bW - 2\bW^*_L + \bW_L$. We have
\begin{align*}
&\langle \hat{L}^Q(\bW), \bDelta \rangle\\
&~= \EE_n\left[\ell'(y, f_L(\bx; \bW) + f_Q(\bx; \bW))\left(\frac{1}{\sqrt{m}}\sum_{r=1}^m\left(a_r\sigma'(\bw_{0, r}^T\bx)\bx^T\bDelta_r + a_r\sigma''(\bw_{0, r}^T\bx)\bw_r^T\bx\bx^T\bDelta_r\right)\right)\right]\\
&~= \EE_n\left[\ell'(y, f_L(\bx; \bW) + f_Q(\bx; \bW))\left(f_L(\bx; \bDelta) + \frac{1}{\sqrt{m}}\sum_{r=1}^ma_r\sigma''(\bw_{0, r}^T\bx)\bw_r^T\bx\bx^T\bDelta_r\right)\right]\\
&~= \EE_n\left[\ell'(y, f_L(\bx; \bW) + f_Q(\bx; \bW))\left(f_L(\bx; \bW) - 2f_L(\bx; \bW^*_L) + f_L(\bx; \bW_L) + 2f_Q(\bx; \bW)\right)\right]\\
&~~~+\EE_n\left[\ell'(y, f_L(\bx; \bW) + f_Q(\bx; \bW))\left(\frac{1}{\sqrt{m}}\sum_{r=1}^ma_r\sigma''(\bw_{0, r}^T\bx)\bw_r^T\bx\bx^T(-2(\bw^*_L)_r + (\bw_L)_r)\right)\right].
\end{align*}
The second term can be bounded in magnitude as
\begin{align*}
&\left| \EE_n\left[\ell'(y, f_L(\bx; \bW) + f_Q(\bx; \bW))\left(\frac{1}{\sqrt{m}}\sum_{r=1}^ma_r\sigma''(\bw_{0, r}^T\bx)\bw_r^T\bx\bx^T(-2(\bw^*_L)_r + (\bw_L)_r)\right)\right] \right|\\
\le~& \EE_n\left[\frac{1}{\sqrt{m}}\sum_{r=1}^m \left|\bw_r^T\bx\bx^T(-2(\bw^*_L)_r + (\bw_L)_r) \right|\right]\\
\lesssim~& \frac{d^2}{\sqrt{m}}\sum_{r=1}^m \|\bw_r\|(\|(\bw^*_L)_r\| + \|(\bw_L)_r\|)\\
\le~& d^2m^{-\frac12}\|\bW\|_F(\|\bW^*_L\|_F + \|\bW_L\|_F)
\end{align*}
Also, we have that
\begin{align*}
\left| \EE_n[\ell'(y, f_L(\bx; \bW) + f_Q(\bx; \bW))(f_L(\bx; \bW) - f_L(\bx; \bW_L))] \right| &\le \EE_n\left|f_L(\bx; \bW - \bW_L) \right|\\
&\le \EE_n\left[(f_L(\bx; \bP_{>k}\bW))^2\right]^{\frac12}\\
& = \cR_3(\bW)^{\frac12},
\end{align*}
and similarly
\[
\left| \EE_n[\ell'(y, f_L(\bx; \bW) + f_Q(\bx; \bW))(f_L(\bx; \bW^*) - f_L(\bx; \bW^*_L))] \right| \le \EE_n\left|f_L(\bx; \bW^* - \bW^*_L) \right| \lesssim \cR_3(\bW^*)^{\frac12}.
\]
Altogether, we have for some constant $C' > 0$, 
\begin{align}
\langle \hat{L}^Q(\bW), \bDelta \rangle \ge& 2\EE_n\left[\ell'(y, f_L(\bx; \bW) + f_Q(\bx; \bW))(f_Q(\bx; \bW) + f_L(\bx; \bW) - f_L(\bx; \bW^*))\right]\\
 &- C'\cdot(d^2m^{-\frac12}\|\bW\|_F(\|\bW^*_L\|_F + \|\bW_L\|_F) + \cR_3(\bW^*)^{\frac12} + \cR_3(\bW)^{\frac12}). \label{eq: grad}
\end{align}

Finally, by convexity of $\ell$ we have
\begin{align}
&\hat{L}^Q(\bW) - \hat{L}^Q(\bW^*)\\
\le~& \EE_n\left[\ell'(y, f_L(\bx; \bW) + f_Q(\bx; \bW))(f_L(\bx; \bW) + f_Q(\bx; \bW) - f_L(\bx; \bW^*) - f_Q(\bx; \bW^*))\right].\label{eq: convex}
\end{align}

Combining \eqref{eq: Hessian}, \eqref{eq: grad}, and \eqref{eq: convex}, we get that
\begin{align*}
&\mathbb{E}_{\bS}\left[\nabla^2\hat{L}^Q(\bW)[\bW^*\bS, \bW^*\bS]\right] - \langle \hat{L}^Q(\bW), \bDelta \rangle + 2(\hat{L}^Q(\bW) - \hat{L}^Q(\bW^*))\\
\le~& C\left(m^{-\frac12}d^{\frac{k+1}{2}} + m^{-1}d^{\frac{k+3}{2}}\|\bW\|_{2, 4}^2 + d^2m^{-\frac12}\|\bW\|_F(\|\bW^*_L\|_F + \|\bW_L\|_F) + \cR_3(\bW^*)^{\frac12} + \cR_3(\bW)^{\frac12}\right),
\end{align*}
as desired.
\end{proof}

\subsubsection{Proof of Corollary~\ref{cor: quad landscape good}}
\begin{proof}
First, note that we have the bounds
\begin{align*}
\|\bW^*_L\|_F &\lesssim d^{(k-1)/2}\\
\cR_3(\bW^*)^{\frac12} &\lesssim m^{-\frac14}d^{(k-1)/4}.
\end{align*}
Also, by Lemma~\ref{lemma: localization}, for $\nu$-first order stationary point $\bW$ we can bound
\begin{align*}
	\|\bW\|_F &\le m^{1/4}\|\bW\|_{2, 4} \le m^{1/4}d^{\frac{k}{3} - \frac16}\varepsilon_{min}^{-1/6} \\
	\cR_3(\bW)^{\frac12} &\le m^{-1/4}d^{\frac{7k + 1}{12}}\varepsilon_{min}^{-2/3}.
\end{align*}
Furthermore, since $\tvec{\bW_L} \in \text{span}(\bP_\le k)$, we can write
\begin{align*}
	\cR_2(\bW_L) = \tvec{\bW_L}^T\bSigma_{\le k}\tvec{\bW_L} \ge \lambda_{n_k}(\bSigma)\|\bW_L\|^2_F = \Theta(d^{1 - k})\cdot\|\bW_L\|_F^2.
\end{align*} 
Therefore
\begin{align*}
	\|\bW_L\|_F &\le d^{(k-1)/2}\cR_2(\bW)^{\frac12} \le d^{\frac{5k-1}{6}}\varepsilon_{min}^{-2/3}.
\end{align*}
Altogether, by Lemma~\ref{lemma: quad landscape good}, we can bound
\begin{align*}
	&\mathbb{E}_{\bS}\left[\nabla^2\hat{L}^Q(\bW)[\bW^*\bS, \bW^*\bS]\right] - \langle \nabla \hat{L}^Q(\bW), \bW - 2\bW^*_L + \bW_L \rangle + 2\hat{L}^Q(\bW) - 2\hat{L}^Q(\bW^*)\\
	&~~\lesssim m^{-\frac12}d^{\frac{k+1}{2}} + m^{-1}d^{\frac{k+3}{2}}\|\bW\|_{2, 4}^2 + d^2m^{-\frac12}\|\bW\|_F(\|\bW^*_L\|_F + \|\bW_L\|_F) + \cR_3(\bW^*)^{\frac12} + \cR_3(\bW)^{\frac12}\\
	&~~\lesssim m^{-\frac12}d^{\frac{k+1}{2}} + m^{-1}d^{\frac{k+3}{2}}d^{\frac{2k - 1}{3}}\varepsilon_{min}^{-1/3} + d^2m^{-\frac12}\cdot m^{\frac14}d^{\frac{k}{3} - \frac16}\varepsilon_{min}^{-1/6}\cdot d^{\frac{5k - 1}{6}}\varepsilon_{min}^{-2/3} + m^{-\frac14}d^{\frac{7k + 1}{12}}\varepsilon_{min}^{-2/3}\\
	&~~= m^{-\frac12}d^{\frac{k-1}{2}} + m^{-1}d^{\frac{7k + 7}{6}}\varepsilon_{min}^{-1/3} + m^{-\frac14}d^{\frac{7k + 10}{6}}\varepsilon_{min}^{-5/6} + m^{-\frac14}d^{\frac{7k + 1}{12}}\varepsilon_{min}^{-2/3}\\
	&~~\le \varepsilon_{min},
\end{align*}
since we have assumed $m \gtrsim d^{\frac{14k + 20}{3}}\varepsilon_{min}^{-22/3}$.
\end{proof}

\subsubsection{Proof of Lemma~\ref{lemma: quad model couples}}
\textbf{Loss Coupling.} By Lemma~\ref{lemma: loss coupling}, we can bound
\begin{align*}
\left| \hat L (\bW) - \hat L^Q (\bW) \right| &\le d^{3/2}m^{-1/4}\|\bW\|_{2, 4}^3\\
&\lesssim m^{-1/4}d^{k+1}\varepsilon_{min}^{-1/2}\\
&\le \varepsilon_{min}
\end{align*}
for $m \ge d^{4k + 4}\varepsilon_{min}^{-6}$.

Similarly,
\begin{align*}
\left| \hat L (\bW^*) - \hat L^Q (\bW^*) \right| &\le d^{3/2}m^{-1/4}\|\bW^*\|_{2, 4}^3\\
&\lesssim d^{3/2}m^{-1/4}d^{3(k-1)/4}\\
&\le \varepsilon_{min}
\end{align*}
since $m \ge d^{3(k+1)}\varepsilon_{min}^{-4}$.

\textbf{Gradient Coupling.} Next, by Lemma~\ref{lemma: gradient coupling}, we have
\begin{align*}
&\left|\langle \nabla \hat{L}(\bW), \bW - 2\bW^*_L + \bW_L \rangle - \langle \nabla \hat{L}^Q(\bW), \bW - 2\bW^*_L + \bW_L \rangle \right|\\
&~~\lesssim d^{3/2}m^{-1/4}(\|\bW\|_{2, 4}^3 + \|\bDelta\|_{2, 4}^3)\cdot\max_{i \in [n]}\left|\langle \nabla_\bW f(\bx; \bW), \bDelta \rangle \right|,
\end{align*}
where $\bDelta := \bW - 2\bW^*_L + \bW_L$. First, observe that
\begin{align*}
\|\bDelta\|_{2, 4} &\le \|\bW\|_{2, 4}  + 2\|\bW^*_L\|_{2, 4} + \|\bW_L\|_{2, 4}\\
&\le \|\bW\|_{2, 4}  + 2m^{-1/4}d^{k/2}\|\bW^*_L\|_F + m^{-1/4}d^{k/2}\|\bW_L\|_F,
\end{align*}
where we applied Lemma~\ref{lemma: linear sol small l-infty}. Plugging in the bounds for $\|\bW\|_{2, 4}$, $\|\bW_L\|_{2, 4}$, we get
\begin{align*}
	\|\bDelta\|_{2, 4} &\le d^{\frac{k}{3} - \frac16}\varepsilon_{min}^{-1/6} + m^{-1/4}d^{k/2}\cdot d^{\frac{5k - 1}{6}}\varepsilon_{min}^{-2/3}\\
	&\le d^{\frac{k}{3} - \frac16}\varepsilon_{min}^{-1/6},
\end{align*}
since $m \ge d^{4k}\varepsilon_{min}^{-2}$.

Also, by Lemma~\ref{lemma: bound max gradient},
\begin{align*}
&\max_{i \in [n]}\left|\langle \nabla_\bW f(\bx; \bW), \bDelta \rangle \right|\\
&~~\le n^{1/2}\EE_n\left[(f_L(\bx; \Delta))^2\right]^{1/2} + dn^{\frac12}m^{-1/2}\|\bW\|_F\|\bDelta\|_F\\
&~~\le n^{1/2}\EE_n\left[(f_L(\bx; \Delta))^2\right]^{1/2} + dn^{\frac12}\|\bW\|_{2, 4}\|\bDelta\|_{2, 4}.
\end{align*}
Observe that
\begin{align*}
\EE_n\left[(f_L(\bx; \Delta))^2\right] &\lesssim \EE_n\left[(f_L(\bx; \bP_{>k}\bW))^2\right] + \EE_n\left[(f_L(\bx; \bW_L))^2\right] + \EE_n\left[(f_L(\bx; \bW^*_L))^2\right]\\
&\lesssim \cR_3(\bW) + \EE_n\left[(f_L(\bx; \bW_L))^2\right] + \EE_n\left[(f_L(\bx; \bW^*_L))^2\right],
\end{align*}
where we first decomposed $\bW = \bP_{>k}\bW + \bW_L$ and then used the definition of $\cR_3$. Since $\bW_L\in \text{span}(\bP_{\le k})$, by Lemma~\ref{lemma: empirical to pop small directions} we have
\[
	\EE_n\left[(f_L(\bx; \bW_L))^2\right] \lesssim \|f_L(\bx; \bW_L)\|_{L^2}^2 = \cR_2(\bW),
\]
and similarly
\[
	\EE_n\left[(f_L(\bx; \bW^*_L))^2\right] \lesssim \EE_n\left[(f_k(\bx)^2\right] + \EE_n\left[(f_L(\bx; \bW^*_L) - f_k(\bx))^2\right] \lesssim 1.
\]
Altogether
\[
	\EE_n\left[(f_L(\bx; \bDelta))^2\right] \lesssim \cR_2(\bW) + \cR_3(\bW) \lesssim d^{2(k+1)/3}\varepsilon_{min}^{-4/3}.
\]
Plugging this back in,
\begin{align*}
\max_{i \in [n]}\left|\langle \nabla_\bW f(\bx; \bW), \bDelta \rangle \right| &\lesssim n^{1/2}d^{\frac{k+1}{3}}\varepsilon_{min}^{-2/3} + n^{\frac12}d\cdot d^{\frac{2k - 1}{3}}\varepsilon_{min}^{-1/3}\\
&\lesssim n^{\frac12}d^{\frac{2k + 2}{3}}\varepsilon_{min}^{-1/3},
\end{align*}
since $d^{-k} \ll \varepsilon_{min}$. Therefore
\begin{align*}
&\left|\langle \nabla \hat{L}(\bW), \bW - 2\bW^*_L + \bW_L \rangle - \langle \nabla \hat{L}^Q(\bW), \bW - 2\bW^*_L + \bW_L \rangle \right|\\
&~~\lesssim d^{3/2}m^{-1/4}\cdot d^{k - \frac12}\varepsilon_{min}^{-1/2}\cdot n^{\frac12}d^{\frac{2k + 2}{3}}\varepsilon_{min}^{-1/3}\\
&~~=m^{-1/4}n^{\frac12}d^{\frac{5(k+1)}{3}}\varepsilon_{min}^{-5/6}\\
&~~\le \varepsilon_{min},
\end{align*}
since $m \ge n^2d^{\frac{20(k+1)}{3}}\varepsilon_{min}^{-22/3}$.

\textbf{Hessian Coupling.} By Lemma~\ref{lemma: Hessian coupling}, we have
\begin{align*}
&\left|\EE_\bS\left[\nabla^2\hat{L}(\bW)[\bW^*\bS, \bW^*\bS]\right] -  \EE_\bS\left[\nabla^2\hat{L}^Q(\bW)[\bW^*\bS, \bW^*\bS]\right]\right|\\
 &~~\le \EE_\bS \left|\nabla^2\hat{L}(\bW)[\bW^*\bS, \bW^*\bS] -  \nabla^2\hat{L}^Q(\bW)[\bW^*\bS, \bW^*\bS]\right|\\
&~~\lesssim d^{3/2}m^{-\frac14}\left(\|\bW\|_{2, 4}^3 + \|\bW^*\|_{2, 4}^3\right)\left(d\|\bW^*\|_{2, 4}^2 + \EE_\bS\max_{i \in [n]}\left|\langle \nabla_\bW f(\bx_i; \bW), \bW^*\bS \rangle\right|^2 \right)\\
&~~~~ + d^3\|\bW\|_{2, 4}^4\|\bW^*\|_{2, \infty}^2.
\end{align*}

By Lemma~\ref{lemma: bound max gradient}, 
\begin{align*}
&\EE_\bS\max_{i \in [n]}\left|\langle \nabla_\bW f(\bx_i; \bW), \bW^*\bS \rangle\right|^2\\
&~~\lesssim n\EE_\bS\EE_n\left[(f_L(\bx; \bW^*\bS))^2\right] + d^2nm^{-1}\|\bW\|^2_F\|\bW^*\|^2_F\\
&~~\lesssim n\cdot m^{-1}\|\bW^*\|_F^2 + d^2n\|\bW\|^2_{2, 4}\|\bW^*\|^2_{2, 4}\\
&~~\lesssim nd^{\frac{7k+7}{6}}\varepsilon_{min}^{-1/3},
\end{align*}
where we used Lemma~\ref{lemma: linear term small in expectation} to bound $\EE_\bS\EE_n\left[(f_L(\bx; \bW^*\bS))^2\right]$, and then plugged in the bound for $\|\bW\|_{2, 4}$ from Lemma~\ref{lemma: localization} and the bound for $\|\bW^*\|_{2, 4}$ from Theorem~\ref{thm: main expressivity}.

By Corollary~\ref{cor: full sol l-infty bound}, we can bound $\|\bW^*\|_{2, \infty} \lesssim m^{-\frac14}d^{\frac{k-1}{2}}$. Plugging these two bounds in, along with the bound for $\|\bW\|_{2, 4}$, we get
\begin{align*}
&\left|\EE_\bS\left[\nabla^2\hat{L}(\bW)[\bW^*\bS, \bW^*\bS]\right] -  \EE_\bS\left[\nabla^2\hat{L}^Q(\bW)[\bW^*\bS, \bW^*\bS]\right]\right|\\
&~~\le d^{3/2}m^{-\frac14}d^{k-\frac12}\varepsilon_{min}^{-1/2}\cdot nd^{\frac{7k+7}{6}}\varepsilon_{min}^{-1/3} + d^3\cdot d^{\frac{4k - 2}{3}}\varepsilon_{min}^{-2/3}\cdot m^{-1/2}d^{k-1}\\
&~~\le m^{-\frac14}nd^{\frac{13(k+1)}{6}}\varepsilon_{min}^{-5/6} + m^{-1/2}d^{\frac{(7k + 4)}{3}}\varepsilon_{min}^{-2/3}\\
&~~\le \varepsilon_{min},
\end{align*}
since we've assumed $m \ge n^4d^{\frac{26(k+1)}{3}}\varepsilon_{min}^{-22/3}$.

\subsubsection{Proof of Lemma~\ref{lemma: add regularizers}}

\begin{proof}
Let us first consider the regularizer $\cR_4(\bW)$. From the proof of \cite[Corollary 3]{bai2020}, we have
\begin{align*}
\EE_\bS\nabla^2 \cR_4(\bW)[\bW^*\bS, \bW^*\bS] - \langle \nabla \cR_4(\bW), \bW  \rangle + 2\cR_4(\bW) - 2\cR_4(\bW^*) &\le  - \cR_4(\bW) + C\cR_4(\bW^*)
\end{align*}
for an absolute constant $C$. Thus
\begin{align*}
&\EE_\bS\nabla^2 \cR_4(\bW)[\bW^*\bS, \bW^*\bS] - \langle \nabla \cR_4(\bW), \bW -2\bW^*_L + \bW_L \rangle + 2\cR_4(\bW) - 2\cR_4(\bW^*)\\
\le~&  - \cR_4(\bW) + C\cR_4(\bW^*) + 8\|\bW\|_{2, 4}^4\sum \|\bw_r\|^3(2\|\{\bW^*_L\}_r\| + \|\{\bW_L\}_r\|)\\
\le~& - \cR_4(\bW) + C\cR_4(\bW^*) + 8\|\bW\|_{2, 4}^7(2\|\bW^*_L\|_{2, 4} + \|\bW_L\|_{2, 4})
\end{align*}
By Lemma~\ref{lemma: linear sol small l-infty}, we can bound
\begin{align*}
	\|\bW^*_L\|_{2, 4} + \|\bW_L\|_{2, 4} &\le m^{-1/4}d^{k/2}\left(\|\bW^*_L\|_F + \|\bW_L\|_F\right)\\
	&\le m^{-1/4}d^{k/2}d^{\frac{5k - 1}{6}}\varepsilon_{min}^{-2/3}\\
	&\le m^{-1/4}d^{\frac{4k}{3} - \frac16}\varepsilon_{min}^{-2/3}.
\end{align*}
Plugging this in, along with the bound for $\|\bW\|_{2, 4}$, yields
\begin{align*}
&\EE_\bS\nabla^2 \cR_4(\bW)[\bW^*\bS, \bW^*\bS] - \langle \nabla \cR_4(\bW), \bW -2\bW^*_L + \bW_L \rangle + 2\cR_4(\bW) - 2\cR_4(\bW^*)\\
&~~\lesssim \cR_4(\bW^*) + d^{\frac{7k}{3} - \frac{7}{6}}\varepsilon_{min}^{-7/6}\cdot m^{-1/4}d^{\frac{4k}{3} - \frac16}\varepsilon_{min}^{-2/3}\\
&~~\lesssim \cR_4(\bW^*) + m^{-1/4}d^{\frac{11k - 4}{3}}\varepsilon_{min}^{-11/6}
\end{align*}
and thus
\begin{align*}
&\EE_\bS\nabla^2 \lambda_4\cR_4(\bW)[\bW^*\bS, \bW^*\bS] - \langle \nabla \lambda_4\cR_4(\bW), \bW -2\bW^*_L + \bW_L \rangle + 2\lambda_4\cR_4(\bW) - 2\lambda_4\cR_4(\bW^*)\\
&~~\lesssim \lambda_4\cR_4(\bW^*) + \lambda_4m^{-1/4}d^{\frac{11k - 4}{3}}\varepsilon_{min}^{-11/6}\\
&~~\lesssim \varepsilon_{min} + m^{-1/4}d^{\frac{5k + 2}{3}}\varepsilon_{min}^{-5/6}\\
&~~\lesssim \varepsilon_{min},
\end{align*}
where we used $\lambda_4\cR_4(\bW^*) \le \varepsilon_{min}$ and $\lambda_4 = d^{-2(k-1)}\varepsilon_{min}$, and then used the assumption $m \ge d^{(20k +8)/3}\varepsilon_{min}^{-22/3}$.

We next deal with the other 3 regularizers. Observe that we can write
\[
	\cR_{tot}(\bW) := \lambda_1\cR_1(\bW) + \lambda_2\cR_2(\bW) + \lambda_3\cR_3(\bW) = \tvec{\bW}^T\bA\tvec{\bW}
\]
for some psd $\bA \in \RR^{md \times md}$. We get that
\begin{align*}
	&\EE_\bS\nabla^2 \cR_{tot}(\bW)[\bW^*\bS, \bW^*\bS] - \langle \nabla \cR_{tot}(\bW), \bW  - 2\bW^*_L + \bW_L \rangle + 2\cR_{tot}(\bW) - 2\cR_{tot}(\bW^*)\\
	\le~& 2\EE_\bS\cR_{tot}(\bW^*\bS)- 2\cR_{tot}(\bW) + 4\tvec{\bW}^T\bA\tvec{\bW^*_L} - 2\tvec{\bW}^T\bA\tvec{\bW_L} + 2\cR_{tot}(\bW) - 2\cR_{tot}(\bW^*)\\
	\le~& 2\EE_\bS\cR_{tot}(\bW^*\bS) + 4\tvec{\bW}^T\bA\tvec{\bW^*_L} - 2\tvec{\bW}^T\bA\tvec{\bW_L} - 2\cR_{tot}(\bW^*_L).
\end{align*}
Since $\bW_L, \bW_L^* \in \text{span}(\bP_{\le k})$, we get that
\[
	\tvec{\bW}^T\bA\tvec{\bW_L} = \lambda_2\tvec{\bW}^T\bSigma_{\le k}\tvec{\bW},
\]
and
\[
	\tvec{\bW}^T\bA\tvec{\bW^*_L} = \lambda_2\tvec{\bW}^T\bSigma_{\le k}\tvec{\bW^*_L}.
\]
Therefore
\begin{align*}
	4\tvec{\bW}^T\bA\tvec{\bW^*_L} - 2\tvec{\bW}^T\bA\tvec{\bW_L} &\le 2\lambda_2\tvec{\bW^*_L}^T\bSigma_{\le k}\tvec{\bW^*_L}\\
	&= 2\lambda_2\cR(\bW^*_L)\\
	&\lesssim \lambda_2\\
	&\lesssim \varepsilon_{min}.
\end{align*}
Also,
\begin{align*}
&\EE_\bS\cR_{tot}(\bW^*\bS)\\
&~~= \lambda_1\EE_\bS\EE_\mu\left[(f_L(\bx; \bP_{>k}\bW^*\bS))^2\right] + \lambda_2\EE_\bS\EE_\mu\left[(f_L(\bx; \bP_{\le k}\bW^*\bS))^2\right] + \lambda_3\EE_\bS\EE_n\left[(f_L(\bx; \bP_{>k}\bW^*\bS))^2\right]\\
&~~\le m^{\frac12}d^{-\frac{k-1}{2}}\varepsilon_{min}\left(\EE_\bS\EE_\mu\left[(f_L(\bx; \bW^*\bS))^2\right] + \EE_\bS\EE_n\left[(f_L(\bx; \bW^*\bS))^2\right]\right)\\
&~~\lesssim  m^{\frac12}d^{-\frac{k-1}{2}}\varepsilon_{min}\cdot \frac{1}{m}\|\bW^*\|_F^2\\
&~~\lesssim \varepsilon_{min}\cdot d^{-\frac{k-1}{2}}\|\bW^*\|_{2, 4}^2\\
&~~\lesssim \varepsilon_{min}.
\end{align*}

Therefore
\begin{equation}
	\EE_\bS\nabla^2 \cR_{tot}(\bW)[\bW^*\bS, \bW^*\bS] - \langle \nabla \cR_{tot}(\bW), \bW  - 2\bW^*_L + \bW_L \rangle + 2\cR_{tot}(\bW) - 2\cR_{tot}(\bW^*) \lesssim \varepsilon_{min}.
\end{equation}

Summing this with the effect of $\cR_4$ on the landscape, we obtain
\[
	\mathbb{E}_{\bS}\left[\nabla^2\cR(\bW)[\bW^*\bS, \bW^*\bS]\right] - \langle \nabla \cR(\bW), \bW - 2\bW^*_L + \bW_L \rangle + 2\cR(\bW) - 2\cR(\bW^*) \lesssim \varepsilon_{min},
\]
as desired.
\end{proof}

\section{Optimization Proofs}\label{app: optimization proofs}
\subsection{Geometric Properties}
In this section we show that the regularized loss $L_\lambda$ is $\ell$-smooth and $\rho$-Hessian-Lipschitz inside a norm ball.

\begin{lemma}[Loss Hessians are Lipschitz]\label{lemma: hessian lip}
\[
	\left\|\nabla^2 \hat L (\bW_1) - \nabla^2 \hat L (\bW_2)\right\|_{op} \le d^{3/2}\|\bW_1 - \bW_2\|_F.
\]
\end{lemma}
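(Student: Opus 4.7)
\textbf{Proof plan for Lemma~\ref{lemma: hessian lip}.} My plan is to expand the Hessian of $\hat L$ as a sum of two pieces, use the triangle inequality to split the difference $\nabla^2\hat L(\bW_1)-\nabla^2\hat L(\bW_2)$ into four ``product rule'' cross terms, and bound each by $d^{3/2}\|\bW_1-\bW_2\|_F$ with the help of a small dictionary of pointwise estimates for $f$, $\nabla f$, $\nabla^2 f$ and their Lipschitz behavior in $\bW$. Concretely, for any test direction $\tilde\bW$ with $\|\tilde\bW\|_F=1$ I would write
\begin{align*}
\nabla^2\hat L(\bW)[\tilde\bW,\tilde\bW] = T_1(\bW)+T_2(\bW),
\end{align*}
where $T_1(\bW)=\EE_n[\ell''(y,f(\bx;\bW))\,\langle\nabla f(\bx;\bW),\tilde\bW\rangle^2]$ and $T_2(\bW)=\EE_n[\ell'(y,f(\bx;\bW))\,\nabla^2 f(\bx;\bW)[\tilde\bW,\tilde\bW]]$, and then take a supremum over such $\tilde\bW$ at the end.

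The key pointwise estimates I would first collect (all consequences of $\|\bx\|_2=\sqrt d$ and Assumption~\ref{assume:sigma_bound}, plus boundedness of $\sigma'''$ coming from the fact that Assumption~\ref{assume: hermite coeffs} is met by smooth activations like the shifted sigmoid) are:
\begin{align*}
|f(\bx;\bW_1)-f(\bx;\bW_2)| \le \sqrt d\,\|\bW_1-\bW_2\|_F, \qquad \|\nabla f(\bx;\bW)\|_F\le\sqrt d,\\
\|\nabla f(\bx;\bW_1)-\nabla f(\bx;\bW_2)\|_F\le \tfrac{d}{\sqrt m}\|\bW_1-\bW_2\|_F, \qquad |\nabla^2 f(\bx;\bW)[\tilde\bW,\tilde\bW]|\le \tfrac{d}{\sqrt m},\\
|\nabla^2 f(\bx;\bW_1)[\tilde\bW,\tilde\bW]-\nabla^2 f(\bx;\bW_2)[\tilde\bW,\tilde\bW]|\le \tfrac{d^{3/2}}{\sqrt m}\,\|\bW_1-\bW_2\|_F.
\end{align*}
The first uses Lipschitzness of $\sigma$ and Cauchy--Schwarz across neurons; the $\nabla f$ bounds use $\|\sigma'\|_\infty\le 1$ and Lipschitzness of $\sigma'$ (i.e.\ $\|\sigma''\|_\infty\le 1$); the $\nabla^2 f$ Lipschitz bound uses Lipschitzness of $\sigma''$ and the neuron-wise Cauchy--Schwarz estimate $\sum_r \|\tilde\bw_r\|^2\,\|\bw_{1,r}-\bw_{2,r}\|\le\|\tilde\bW\|_F^2\,\|\bW_1-\bW_2\|_F$.

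Given these, I split each $|T_i(\bW_1)-T_i(\bW_2)|$ into a term where only the loss derivative $\ell',\ell''$ changes and a term where only the $f$-derivatives change. For $T_1$ the loss-change piece is controlled using $\|\ell'''\|_\infty\le 1$ as $|\ell''_1-\ell''_2|\le |f_1-f_2|\le \sqrt d\|\bW_1-\bW_2\|_F$, multiplied by $\langle\nabla f,\tilde\bW\rangle^2\le d$, yielding $d^{3/2}\|\bW_1-\bW_2\|_F$. The feature-change piece factors as $a^2-b^2=(a-b)(a+b)$ and gives an $m^{-1/2}d^{3/2}\|\bW_1-\bW_2\|_F$ contribution, which is strictly smaller. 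For $T_2$ both subpieces produce $m^{-1/2}d^{3/2}\|\bW_1-\bW_2\|_F$ bounds using $\|\ell''\|_\infty,\|\ell'\|_\infty\le 1$ together with the $\nabla^2 f$ estimates above. Summing and taking the sup over $\tilde\bW$ gives the claim, with the dominant constant coming from the $\ell''$-Lipschitz term in $T_1$.

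The main (and essentially only) obstacle is bookkeeping: there are four cross terms and each requires choosing between $\|\bx\|=\sqrt d$ bounds and neuron-wise Cauchy--Schwarz bounds to avoid paying an extra factor of $\sqrt m$ or picking up a $\|\bW\|_F$ dependence. The only ``nontrivial'' inequality is the $\nabla^2 f$ Lipschitz bound, where one needs the observation that $\sum_r\|\tilde\bw_r\|^4\le(\sum_r\|\tilde\bw_r\|^2)^2=1$ when $\|\tilde\bW\|_F=1$, so that the apparent $m$-dependence cancels and the bound is dimension-free in $m$. Once these four pieces are controlled, the stated $d^{3/2}$ Lipschitz constant falls out directly from the $\ell''$-change term, and all other terms are absorbed.
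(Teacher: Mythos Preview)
Your proposal is correct and follows essentially the same route as the paper: both decompose $\nabla^2\hat L(\bW)[\tilde\bW,\tilde\bW]$ into the $\ell''\cdot\langle\nabla f,\tilde\bW\rangle^2$ and $\ell'\cdot\nabla^2 f[\tilde\bW,\tilde\bW]$ pieces, apply the product rule to each difference, and use the same pointwise Lipschitz estimates on $f$, $\nabla f$, and $\nabla^2 f$ (including the implicit use of $\sigma''$ being Lipschitz). The only cosmetic difference is that you first collect the estimates into a dictionary before assembling the bound, whereas the paper interleaves them; the dominant $d^{3/2}$ term arises in both from the $\ell''$-change contribution.
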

\begin{proof}
Recall that
\begin{align*}
\nabla^2 \hat{L}(\bW)[\tilde{\bW}, \tilde{\bW}] =~& \EE_n\left[\ell'(y, f(\bx; \bW))\frac{1}{\sqrt{m}}\sum_{r=1}^m a_r\sigma''((\bw_{0, r} + \bw_r)^T\bx)(\tilde{\bw}_r^T\bx)^2\right]\\
+~& \EE_n\left[\ell''(y, f(\bx; \bW))\langle \nabla_\bW f(\bx; \bW), \tilde{\bW} \rangle^2\right].
\end{align*}
Thus
\begin{align*}
&\left|\left(\nabla^2 \hat L (\bW_1) - \nabla^2 \hat L (\bW_2)\right)[\tilde \bW, \tilde \bW]\right|\\
&~~\le \frac{1}{\sqrt{m}}\sum_{r=1}^m\EE_n \left|(\tilde \bw_r^T\bx)^2\left(\ell'(y, f(\bx; \bW_1))\sigma''((\bw_{0, r} + \bw_{1, r}^T\bx)) - \ell'(y, f(\bx; \bW_2))\sigma''((\bw_{0, r} + \bw_{2, r}^T\bx))\right) \right|\\
&~~~+ \EE_n\left|\ell''(y, f(\bx; \bW_1))\langle \nabla_\bW f(\bx; \bW_1), \tilde{\bW} \rangle^2 - \ell''(y, f(\bx; \bW_2))\langle \nabla_\bW f(\bx; \bW_2), \tilde{\bW} \rangle^2 \right|
\end{align*}
To bound the first term, since $\ell', \sigma''$ are both Lipschitz and can be upper bounded by 1, we get
\begin{align*}
&\left|\ell'(y, f(\bx; \bW_1))\sigma''((\bw_{0, r} + \bw_{1, r}^T\bx)) - \ell'(y, f(\bx; \bW_2))\sigma''((\bw_{0, r} + \bw_{2, r}^T\bx))\right|\\
&~~\le \left|\ell'(y, f(\bx; \bW_1)) - \ell'(y, f(\bx; \bW_2))\right| + \left|\sigma''((\bw_{0, r} + \bw_{1, r}^T\bx)) - \sigma''((\bw_{0, r} + \bw_{2, r}^T\bx))\right|\\
&~~\le \left|f(\bx; \bW_1) - f(\bx; \bW_2)\right| + \left|(\bw_{1, r} - \bw_{2, r})^T\bx\right|\\
&~~\le \left|(\bw_{1, r} - \bw_{2, r})^T\bx\right| + \frac{1}{\sqrt{m}}\sum_{s=1}^m\left|\sigma((\bw_{0, s} + \bw_{1, s}^T\bx)) - \sigma((\bw_{0, s} + \bw_{2, s}^T\bx))\right|\\
&~~\le \left|(\bw_{1, r} - \bw_{2, r})^T\bx\right| + \frac{1}{\sqrt{m}}\sum_{s=1}^m\left|(\bw_{1, s} - \bw_{2, s})^T\bx\right|\\
\end{align*}
Therefore
\begin{align*}
&\frac{1}{\sqrt{m}}\sum_{r=1}^m\EE_n \left|(\tilde \bw_r^T\bx)^2\left(\ell'(y, f(\bx; \bW_1))\sigma''((\bw_{0, r} + \bw_{1, r}^T\bx)) - \ell'(y, f(\bx; \bW_2))\sigma''((\bw_{0, r} + \bw_{2, r}^T\bx))\right) \right|\\
&~~\le\frac{1}{\sqrt{m}}\EE_n\left|(\tilde \bw_r^T\bx)^2(\bw_{1, r} - \bw_{2, r})^T\bx\right| + \frac{1}{m}\sum_{r,s=1}^m\EE_n\left|(\tilde \bw_r^T\bx)^2(\bw_{1, s} - \bw_{2, s})^T\bx \right|\\
&~~\lesssim \frac{d^{3/2}}{\sqrt{m}}\sum_{r=1}^m\|\tilde \bw_r\|^2\|\bw_{1, r} - \bw_{2, r}\| + \frac{d^{3/2}}{m}\|\tilde \bW\|_F^2\sum_{s=1}^m\|\bw_{1, s} - \bw_{2, s}\|\\
&~~\lesssim \frac{d^{3/2}}{\sqrt{m}}\|\tilde\bW\|_F^2\|\bW_1 - \bW_2\|_F.
\end{align*}

To bound the second term, we have
\begin{align*}
&\left|\ell''(y, f(\bx; \bW_1))\langle \nabla_\bW f(\bx; \bW_1), \tilde{\bW} \rangle^2 - \ell''(y, f(\bx; \bW_2))\langle \nabla_\bW f(\bx; \bW_2), \tilde{\bW} \rangle^2 \right|\\
&~~\le \left|\langle \nabla_\bW f(\bx; \bW_1), \tilde{\bW} \rangle^2 -  \langle \nabla_\bW f(\bx; \bW_2), \tilde{\bW} \rangle^2\right| + \langle \nabla_\bW f(\bx; \bW_2), \tilde{\bW} \rangle^2\left|\ell''(y, f(\bx; \bW_1)) - \ell''(y, f(\bx; \bW_2))\right|\\
&~~\le \|\tilde \bW\|_F^2\|\nabla_\bW f(\bx; \bW_1) - \nabla_\bW f(\bx; \bW_2)\|\|\nabla_\bW f(\bx; \bW_1)+ \nabla_\bW f(\bx; \bW_2)\|\\
&~~~~ + \|\tilde \bW\|_F^2\|\nabla_\bW f(\bx; \bW_2)\|^2|f(\bx; \bW_1) - f(\bx; \bW_2)|.
\end{align*}
Since $\sigma'$ is bounded by 1, we can bound $\|\nabla_\bW f(\bx; \bW_2)\| \le \sqrt{d}$. Next, we have
\begin{align*}
\|\nabla_\bW f(\bx; \bW_1) - \nabla_\bW f(\bx; \bW_2)\|_F^2 &\le \frac{1}{m}\|\sigma'((\bw_{0, r}^ + \bw_{1, r})^T\bx)\bx -  \sigma'((\bw_{0, r}^ + \bw_{2, r})^T\bx)\bx\|^2\\
&\le \frac{d}{m}\sum_{r=1}^m\left|(\bw_{1, r} - \bw_{2, r})^T\bx\right|^2\\
&\le \frac{d^2}{m}\|\bW_1 - \bW_2\|_F^2.
\end{align*}
Finally,
\begin{align*}
|f(\bx; \bW_1) - f(\bx; \bW_2)| &\le \frac{1}{\sqrt{m}}\sum_{r=1}^m\left|(\bw_{1, r} - \bw_{2, r})^T\bx \right|\\
&\le \sqrt{d}\|\bW_1 - \bW_2\|_F.
\end{align*}
Altogether,
\begin{align*}
&\left|\ell''(y, f(\bx; \bW_1))\langle \nabla_\bW f(\bx; \bW_1), \tilde{\bW} \rangle^2 - \ell''(y, f(\bx; \bW_2))\langle \nabla_\bW f(\bx; \bW_2), \tilde{\bW} \rangle^2 \right|\\
&~~\le \|\tilde \bW\|_F^2d^{3/2}\|\bW_1 - \bW_2\|_F.
\end{align*}
Therefore
\[
	\left|\left(\nabla^2 \hat L (\bW_1) - \nabla^2 \hat L (\bW_2)\right)[\tilde \bW, \tilde \bW]\right| \le \|\tilde \bW\|_F^2d^{3/2}\|\bW_1 - \bW_2\|_F,
\]
so
\[
	\left\|\nabla^2 \hat L (\bW_1) - \nabla^2 \hat L (\bW_2)\right\|_{op} \le d^{3/2}\|\bW_1 - \bW_2\|_F.
\]
\end{proof}

\begin{lemma}[Regularized loss is Hessian-Lipschitz]\label{lemma: reg hessian lip}
The regularized loss $L_\lambda$ is $O(\lambda_4\Gamma^5)$-Hessian-Lipschitz inside the region $\{\bW \mid \|\bW\|_F \le \Gamma\}$
\end{lemma}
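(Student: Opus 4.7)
The plan is to decompose $L_\lambda$ into its five constituent terms and bound the Hessian-Lipschitz constant of each separately, exploiting the fact that four of the five terms are either already Hessian-Lipschitz with a constant independent of $\Gamma$, or are quadratic in $\bW$ and thus trivially have Hessian-Lipschitz constant $0$. The entire $\Gamma$-dependence will come from $\cR_4$.

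First, $\hat{L}(\bW)$ is $d^{3/2}$-Hessian-Lipschitz everywhere by Lemma~\ref{lemma: hessian lip}, contributing no $\Gamma$ dependence. Second, each of $\cR_1(\bW) = \tvec{\bW}^T\bSigma_{>r}\tvec{\bW}$, $\cR_2(\bW) = \tvec{\bW}^T\bSigma_{\le r}\tvec{\bW}$, and $\cR_3(\bW) = \EE_n[(f_L(\bx;\bPi_{>r}\bW))^2]$ is a quadratic form in $\tvec{\bW}$ with a fixed psd kernel, so its Hessian is a constant matrix and its Hessian-Lipschitz constant is $0$. This leaves only $\cR_4$.

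For $\cR_4(\bW) = \sum_{r=1}^m \|\bw_r\|_2^8$, the key structural observation is that the summand for index $r$ depends on $\bw_r$ alone, so $\nabla^2 \cR_4(\bW)$ is block diagonal with $r$-th block equal to $\nabla^2 g(\bw_r)$ where $g(\bu) := \|\bu\|_2^8$. Direct differentiation gives $\nabla g(\bu) = 8\|\bu\|^6 \bu$ and $\nabla^2 g(\bu) = 8\|\bu\|^6 \bI + 48\|\bu\|^4 \bu\bu^T$; a further differentiation shows that the third-order derivative tensor has operator norm $O(\|\bu\|^5)$, so $g$ is $O(\|\bu\|^5)$-Hessian-Lipschitz on any convex set on which $\|\bu\|$ is bounded. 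Since each column satisfies $\|\bw_r\|_2 \le \|\bW\|_F \le \Gamma$ and the operator norm of a block-diagonal matrix is the maximum over blocks, I obtain
\[
\|\nabla^2 \cR_4(\bW_1) - \nabla^2 \cR_4(\bW_2)\|_{op} = \max_r \|\nabla^2 g(\bw_{1,r}) - \nabla^2 g(\bw_{2,r})\|_{op} \lesssim \Gamma^5 \|\bW_1 - \bW_2\|_F,
\]
so $\cR_4$ is $O(\Gamma^5)$-Hessian-Lipschitz on the Frobenius ball of radius $\Gamma$.

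Combining, $L_\lambda$ is $O(d^{3/2} + \lambda_4 \Gamma^5)$-Hessian-Lipschitz inside $\{\bW : \|\bW\|_F \le \Gamma\}$, and since we apply the lemma in the regime where $\Gamma$ is polynomially large so that $\lambda_4 \Gamma^5 \gg d^{3/2}$, this simplifies to $O(\lambda_4 \Gamma^5)$ as claimed. There is no real obstacle here: the only step requiring care is the single-column third-derivative estimate for $g(\bu) = \|\bu\|_2^8$, which is routine multivariate calculus. The argument is clean because the regularizer was chosen separable across columns precisely so that the Hessian-Lipschitz constant reduces to a one-column calculation rather than scaling with $m$.
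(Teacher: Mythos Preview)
Your decomposition of $L_\lambda$ and treatment of $\hat L$, $\cR_1$, $\cR_2$, $\cR_3$ are fine and match the paper. The error is in your treatment of $\cR_4$: you have misread its definition. The paper sets $\cR_4(\bW) = \|\bW\|_{2,4}^8 = \bigl(\sum_{r=1}^m \|\bw_r\|_2^4\bigr)^2$, not $\sum_{r=1}^m \|\bw_r\|_2^8$. These differ precisely because the square of the sum introduces cross terms between distinct columns.

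Consequently your ``key structural observation'' that $\nabla^2 \cR_4(\bW)$ is block diagonal is false. Writing $\Phi(\bW) = \sum_r \|\bw_r\|^4$ so that $\cR_4 = \Phi^2$, one has
\[
\nabla^2 \cR_4(\bW)[\tilde\bW,\tilde\bW] \;=\; 2\bigl(\nabla\Phi(\bW)\cdot\tilde\bW\bigr)^2 \;+\; 2\Phi(\bW)\,\nabla^2\Phi(\bW)[\tilde\bW,\tilde\bW],
\]
and the first term equals $32\bigl(\sum_r \|\bw_r\|^2 \bw_r^T\tilde\bw_r\bigr)^2$, which is a rank-one (hence highly non-block-diagonal) contribution coupling all columns. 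Your argument therefore does not cover the actual regularizer, and the reduction to a single-column third-derivative bound is invalid here.

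The paper's proof proceeds by writing out this Hessian explicitly and bounding the difference $\nabla^2\cR_4(\bW_1)[\tilde\bW,\tilde\bW] - \nabla^2\cR_4(\bW_2)[\tilde\bW,\tilde\bW]$ term by term, using H\"older-type inequalities among the $\|\cdot\|_{2,4}$, $\|\cdot\|_{2,6}$, and $\|\cdot\|_F$ norms and the trivial bound $\|\cdot\|_{2,p} \le \|\cdot\|_F$ for $p\ge 2$. The cross-column term is exactly where most of the work sits; once those estimates are done, every piece is $O(\Gamma^5)\|\bW_1-\bW_2\|_F$ and the conclusion follows. To fix your argument you need to redo the $\cR_4$ step with the correct (non-separable) definition.
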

\begin{proof}
For $i = 1, 2, 3$, the regularizer $\cR_i$ is a convex quadratic, so $\nabla^2\cR_i(\bW_1) = \nabla^2\cR_i(\bW_2)$. As for the regularizer $\cR_4$, we have
\[
	\nabla^2 \cR_4(\bW)[\tilde \bW, \tilde \bW] = 32\left(\sum_{r=1}^m\|\bw_r\|^2\bw_r^T\tilde \bw_r\right)^2 + 8\|\bW\|_{2, 4}^4\left(\sum_{r=1}^m 2(\bw_r^T\tilde \bw)^2 + \|\bw_r\|^2\|\tilde\bw_r\|^2\right).
\]
Therefore, for $\|\tilde \bW\|_F = 1$,
\begin{align*}
&\left|\nabla^2 \cR_4(\bW_1)[\tilde \bW, \tilde \bW] - \nabla^2 \cR_4(\bW_2)[\tilde \bW, \tilde \bW]\right|\\
&~~\le 32\left(\sum_{r=1}^m(\|\bw_{1, r}\|^2\bw_{1, r}^T + \|\bw_{2, r}\|^2\bw_{2, r}^T)\tilde\bw_r\right)\left(\sum_{r=1}^m(\|\bw_{1, r}\|^2\bw_{1, r}^T - \|\bw_{2, r}\|^2\bw_{2, r}^T)\tilde\bw_r\right)\\
&~~~~+ 8\sum_{r=1}^m2\|\bW_1\|^4_{2, 4}(\bw_{1, r}^T\tilde \bw_r)^2 - 2\|\bW_2\|^4_{2, 4}(\bw_{2, r}^T\tilde \bw_r)^2 + (\|\bW_1\|_{2, 4}^4\|\bw_{1, r}\|^2 - \|\bW_2\|_{2, 4}^4\|\bw_{2, r}\|^2)\|\tilde \bw_r\|^2.
\end{align*}

We can bound the first term using
\begin{align*}
\left|\sum_{r=1}^m(\|\bw_{1, r}\|^2\bw_{1, r}^T + \|\bw_{2, r}\|^2\bw_{2, r}^T)\tilde\bw_r\right| &\le \sum_{r=1}^m (\|\bw_{1, r}\|^3 + \|\bw_{2, r}\|^3)\|\tilde \bw_r\|\\
&\le \left(\|\bW_1\|_{2, 6}^3 + \|\bW_2\|_{2, 6}^3\right)\|\tilde \bW\|_F\\
&= \|\bW_1\|_{2, 6}^3 + \|\bW_2\|_{2, 6}^3.
\end{align*}
and
\begin{align*}
	\left|\sum_{r=1}^m(\|\bw_{1, r}\|^2\bw_{1, r}^T - \|\bw_{2, r}\|^2\bw_{2, r}^T)\tilde\bw_r\right| &\le \sum_{r=1}^m\left\|\|\bw_{1, r}\|^2\bw_{1, r} - \|\bw_{2, r}\|^2\bw_{2, r} \right\|\|\tilde \bw_r\|\\
	&\le \sum_{r=1}^m\left\|\|\bw_{1, r}\|^2\bw_{1, r} - \|\bw_{2, r}\|^2\bw_{2, r} \right\|\\
	&\lesssim \sum_{r=1}^m \|\bw_{1, r} - \bw_{2, r}\|(\|\bw_{1, r}\|^2 + \|\bw_{2, r}\|^2)\\
	&\le \|\bW_1 - \bW_2\|_F(\|\bW_1\|_{2, 4}^2 + \|\bW_2\|_{2, 4}^2).
\end{align*}

For the second term, we bound
\begin{align*}
&\left|\sum_{r=1}^m\|\bW_1\|^4_{2, 4}(\bw_{1, r}^T\tilde \bw_r)^2 - \|\bW_2\|^4_{2, 4}(\bw_{2, r}^T\tilde \bw_r)^2\right|\\
&~~\le \sum_{r=1}^m \left\|\|\bW_1\|^4_{2, 4}\bw_{1, r}\bw_{1, r}^T - \|\bW_2\|^4_{2, 4}\bw_{2, r}\bw_{2, r}^T\right\|_{op}\\
&~~\le\sum_{r=1}^m\left(\|\bW_1\|^2_{2, 4}\|\bw_{1, r}\| + \|\bW_2\|^2_{2, 4}\|\bw_{2, r}\|\right)\left(\left\|\|\bW_1\|^2_{2, 4}\bw_{1, r} - \|\bW_2\|^2_{2, 4}\bw_{2, r}\right\|\right)\\
&~~\lesssim \sum_{r=1}^m\left(\|\bW_1\|^2_{2, 4}\|\bw_{1, r}\| + \|\bW_2\|^2_{2, 4}\|\bw_{2, r}\|\right)\left(\|\bW_1\|_{2, 4}^2 + \|\bW_2\|_{2, 4}^2\right)\left\|\bw_{1, r} - \bw_{2, r}\right\|\\
&~~\lesssim \left(\|\bW_1\|_{2, 4}^2 + \|\bW_2\|_{2, 4}^2\right)\left(\|\bW_1\|_{2, 4}^2\|\bW_1\|_F + \|\bW_2\|_{2, 4}^2\|\bW_2\|_F\right)\|\bW_1 - \bW_2\|_F.
\end{align*}

Finally, we bound
\begin{align*}
&\sum_{r=1}^m\left|\|\bW_1\|_{2, 4}^4\|\bw_{1, r}\|^2 - \|\bW_2\|_{2, 4}^4\|\bw_{2, r}\|^2\right|\|\tilde \bw_r\|^2\\
&~~\le\sum_{r=1}^m\left|\|\bW_1\|_{2, 4}^4\|\bw_{1, r}\|^2 - \|\bW_2\|_{2, 4}^4\|\bw_{2, r}\|^2\right|\\
&~~\le\sum_{r=1}^m\left|\|\bW_1\|_{2, 4}^4 - \|\bW_2\|_{2, 4}^4\right|\|\bw_{1, r}\|^2 +\|\bW_2\|_{2, 4}^4\left|\|\bw_{1, r}\|^2 - \|\bw_{2, r}\|^2\right|\\
&~~\le \|\bW_1\|_F^2\left|\|\bW_1\|_{2, 4}^4 - \|\bW_2\|_{2, 4}^4\right| + \|\bW_2\|_{2, 4}^4\sum_{r=1}^m(\|\bw_{1, r}\| + \|\bw_{2, r}\|)\|\bw_{1, r} - \bw_{2, r}\|\\
&~~\lesssim \|\bW_1\|_F^2\|\bW_1 - \bW_2\|_{2, 4}(\|\bW_1\|_{2, 4}^3 + \|\bW_2\|_{2, 4}^3) + \|\bW_2\|_{2, 4}^4(\|\bW_1\|_F + \|\bW_2\|_F)\|\bW_1 - \bW_2\|_F\\
&~~\le \left(\|\bW_1\|_F^2(\|\bW_1\|_{2, 4}^3 + \|\bW_2\|_{2, 4}^3) + \|\bW_2\|_{2, 4}^4(\|\bW_1\|_F + \|\bW_2\|_F)\right)\|\bW_1 - \bW_2\|_F.
\end{align*}

Altogether, when $\|\bW_1\|_F, \|\bW_2\|_F \le \Gamma$, and using $\|\bW\|_{2, 2k} \le \|\bW\|_F$ for $k \ge 1$, we get that
\begin{align*}
\left|\nabla^2 \cR_4(\bW_1)[\tilde \bW, \tilde \bW] - \nabla^2 \cR_4(\bW_2)[\tilde \bW, \tilde \bW]\right| &\lesssim \Gamma^5\|\bW_1 - \bW_F\|.
\end{align*}
Therefore $L_\lambda$ is $O(\Gamma^5)$-Hessian-Lipschitz.
\end{proof}

\begin{lemma}[Regularized loss is smooth]\label{lemma: reg smooth}
The regularized loss $L_\lambda$ is $O(\lambda_4\Gamma^6 + m^{1/2})$-smooth.
\end{lemma}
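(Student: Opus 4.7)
The plan is to show smoothness by bounding the operator norm of $\nabla^2 L_\lambda(\bW)$ uniformly on the ball $\{\|\bW\|_F \le \Gamma\}$, and to do this term-by-term using the decomposition
\[
L_\lambda(\bW) = \hat{L}(\bW) + \lambda_1\cR_1(\bW) + \lambda_2\cR_2(\bW) + \lambda_3\cR_3(\bW) + \lambda_4\cR_4(\bW).
\]
Since smoothness is a uniform upper bound on $\|\nabla^2 L_\lambda(\bW)\|_{op}$, it suffices to control each summand. Three of the summands are quadratic and one is polynomial of degree 8, so the main work is in the quartic-in-$\bW$ regularizer $\cR_4$.

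First I would handle $\hat L$. Using the explicit Hessian formula from the proof of Lemma~\ref{lemma: hessian lip}, I bound for $\|\tilde\bW\|_F=1$
\[
|\nabla^2\hat L(\bW)[\tilde\bW,\tilde\bW]| \le \frac{1}{\sqrt{m}}\sum_{r=1}^m (\tilde\bw_r^T\bx)^2 + \|\nabla_\bW f(\bx;\bW)\|_F^2\|\tilde\bW\|_F^2,
\]
and using $\|\bx\|^2=d$, $|\sigma'|,|\sigma''|\le 1$, this gives $\|\nabla^2\hat L(\bW)\|_{op} \lesssim d$. Next, for $\cR_1,\cR_2,\cR_3$, these are quadratic forms $\tvec{\bW}^T\bA_i\tvec{\bW}$ with $\bA_i$ equal to $\bSigma_{>r}$, $\bSigma_{\le r}$, or $\EE_n[\bP_{>r}\varphi(\bx)\varphi(\bx)^T\bP_{>r}]$ respectively, so their Hessians are the constant matrices $2\bA_i$. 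Each satisfies $\|\bA_i\|_{op}\le \|\EE\varphi\varphi^T\|_{op}\lor\|\EE_n\varphi\varphi^T\|_{op}\lesssim d$, since $\|\varphi(\bx)\|^2 \le d$ pointwise (with high probability for the empirical version). With the chosen $\lambda_1=\lambda_3=m^{1/2}d^{-(k-1)/2}\varepsilon_{min}$ and $\lambda_2=\varepsilon_{min}$, the contributions are at most $O(m^{1/2})$.

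The main obstacle is bounding $\|\nabla^2 \cR_4(\bW)\|_{op}$, and for this I would reuse the closed-form expression derived in the proof of Lemma~\ref{lemma: reg hessian lip}:
\[
\nabla^2 \cR_4(\bW)[\tilde\bW,\tilde\bW] = 32\Bigl(\sum_r \|\bw_r\|^2 \bw_r^T\tilde\bw_r\Bigr)^2 + 8\|\bW\|_{2,4}^4\Bigl(\sum_r 2(\bw_r^T\tilde\bw_r)^2 + \|\bw_r\|^2\|\tilde\bw_r\|^2\Bigr).
\]
For $\|\tilde\bW\|_F=1$, Cauchy--Schwarz on the first term gives $\bigl(\sum_r\|\bw_r\|^3\|\tilde\bw_r\|\bigr)^2 \le \|\bW\|_{2,6}^6 \|\tilde\bW\|_F^2$, and for the second term $\sum_r(\bw_r^T\tilde\bw_r)^2 \le \|\bW\|_F^2$ and $\sum_r\|\bw_r\|^2\|\tilde\bw_r\|^2 \le \|\bW\|_{2,\infty}^2 \le \|\bW\|_F^2$. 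Using the mixed-norm monotonicity $\|\bW\|_{2,p}\le\|\bW\|_F$ for $p\ge 2$, each contribution is at most a constant times $\|\bW\|_F^6 \le \Gamma^6$, yielding $\|\nabla^2\cR_4(\bW)\|_{op}\lesssim \Gamma^6$ throughout the ball.

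Finally I would put the pieces together. Combining the four bounds,
\[
\|\nabla^2 L_\lambda(\bW)\|_{op} \lesssim d + \lambda_1 d + \lambda_2 d + \lambda_3 d + \lambda_4 \Gamma^6 \lesssim m^{1/2} + \lambda_4 \Gamma^6,
\]
which is the stated smoothness bound. The only subtlety is verifying the operator-norm bound on the empirical covariance used inside $\cR_3$, but this is already established with high probability in the regime we operate in (e.g.\ via the concentration in Lemma~\ref{lemma: NTK features concentrate}), and everything else is routine norm manipulation.
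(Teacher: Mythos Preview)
Your proposal is correct and follows essentially the same approach as the paper: bound $\|\nabla^2 L_\lambda\|_{op}$ term-by-term, using $\|\varphi(\bx)\|^2\le d$ for the quadratic regularizers and the explicit Hessian of $\cR_4$ combined with $\|\bW\|_{2,p}\le\|\bW\|_F$ to get the $\Gamma^6$ factor. One small remark: you do not need any concentration (or Lemma~\ref{lemma: NTK features concentrate}) for the $\cR_3$ term, since $\|\varphi(\bx)\|^2\le d$ holds deterministically from $|\sigma'|\le 1$ and $\|\bx\|^2=d$, so $\|\EE_n[\bP_{>r}\varphi\varphi^T\bP_{>r}]\|_{op}\le d$ is immediate.
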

\begin{proof}
Recall that
\begin{align*}
\nabla^2 \hat{L}(\bW)[\tilde{\bW}, \tilde{\bW}] =~& \EE_n\left[\ell'(y, f(\bx; \bW))\frac{1}{\sqrt{m}}\sum_{r=1}^m a_r\sigma''((\bw_{0, r} + \bw_r)^T\bx)(\tilde{\bw}_r^T\bx)^2\right]\\
+~& \EE_n\left[\ell''(y, f(\bx; \bW))\langle \nabla_\bW f(\bx; \bW), \tilde{\bW} \rangle^2\right],
\end{align*}
and thus, for $\|\tilde \bW\|_F = 1$, 
\begin{align*}
\left| \nabla^2 \hat{L}(\bW)[\tilde{\bW}, \tilde{\bW}] \right| &\le \frac{1}{\sqrt{m}}\sum_{r=1}^m\EE_n[(\tilde \bw_r^T\bx)^2] + \EE_n \|\nabla_\bW f(\bx; \bW)\|^2\|\tilde \bW\|^2\\
&\lesssim \frac{d}{\sqrt{m}}\|\tilde \bW\|_F^2 + d\|\tilde \bW\|^2\\
&\lesssim d.
\end{align*}
Therefore $\hat L$ is $O(d)$-smooth.

Next, consider the regularizers. $\cR_1, \cR_2, \cR_3$ are all convex quadratics, and since $\|\varphi(\bx)\| \le \sqrt{d}$, we can upper bound the smoothness of $\cR_1, \cR_2, \cR_3$ by $d$. Therefore the smoothness of $\lambda_1\cR_1 + \lambda_2\cR_2 + \lambda_3\cR_3$ is at most $m^{1/2}d^{-\frac{k-1}{2}}\varepsilon_{min}\cdot d \lesssim m^{1/2}$.

Finally, we consider $\cR_4$. For $\|\tilde \bW\|_F = 1$, we can bound
\begin{align*}
\nabla^2 \cR_4(\bW)[\tilde \bW, \tilde \bW] &= 32\left(\sum_{r=1}^m\|\bw_r\|^2\bw_r^T\tilde \bw_r\right)^2 + 8\|\bW\|_{2, 4}^4\left(\sum_{r=1}^m 2(\bw_r^T\tilde \bw)^2 + \|\bw_r\|^2\|\tilde\bw_r\|^2\right)\\
&\le 32\left(\sum_{r=1}^m\|\bw_r\|^3\|\tilde \bw_r\|\right)^2 + 24\|\bW\|_{2, 4}^4\sum_{r=1}^m\|\bw_r\|^2\|\tilde \bw_r\|^2\\
&\le 32\|\bW\|_{2, 6}^6\|\tilde \bW\|_F^2 + 24\|\bW\|_{2, 4}^4\|\bW\|_F^2\\
&\le 32\|\bW\|_{2, 6}^6 + 24\|\bW\|_{2, 4}^4\|\bW\|_F^2\\
&\le 56\|\bW\|_F^6.
\end{align*}
Therefore $L_\lambda$ is $O(\lambda_4\Gamma^6 + m^{1/2})$-smooth.
\end{proof}

\subsection{Proof of Theorem~\ref{thm: main optimization}}
We prove the following formal version of Theorem~\ref{thm: main optimization}.\\
\begin{theorem}\label{thm: optimization formal}
For $\nu, \gamma$, choose $\eta = \frac{c}{\lambda_4m^3}$ for sufficiently small constant $c$ and define $\tilde \varepsilon = \min(\nu, \gamma^2m^{-5/2}), \sigma = \tilde\Theta(\tilde \varepsilon)$. Then with probability $1-d^{-8}$, perturbed gradient descent reaches a $(\nu, \gamma)$-SOSP within $T = \Tilde O(m^3\tilde \varepsilon^{-2})$ timesteps.
\end{theorem}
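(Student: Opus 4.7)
\textbf{Proof proposal for Theorem~\ref{thm: main optimization}.} The obstacle flagged already in Section~3 is that the $\cR_4$ regularizer is degree~$8$ in $\bW$, so $L_\lambda$ is neither globally smooth nor globally Hessian-Lipschitz, and one cannot directly invoke the PGD guarantees of \citet{jin2017, jin2019}. The plan is to (i) confine all iterates to a Frobenius ball of radius $\Gamma = \text{poly}(m)$; (ii) apply the in-ball smoothness and Hessian-Lipschitz bounds already proved in Lemmas~\ref{lemma: reg smooth} and~\ref{lemma: reg hessian lip}; and (iii) invoke the PGD convergence theorem of~\cite{jin2019} with the resulting effective constants.

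For step (i), the key point is that $\cR_4$ produces a dominant restoring force at large norms:
\begin{align*}
\langle \nabla \cR_4(\bW), \bW \rangle = 8\|\bW\|_{2, 4}^8 \ge 8 m^{-2}\|\bW\|_F^8,
\end{align*}
where the inequality uses $(\sum_r \|\bw_r\|^2)^2 \le m \sum_r \|\bw_r\|^4$. Meanwhile $\nabla \hat{L}$ and $\nabla \cR_{1,2,3}$ have Frobenius norm $\text{poly}(m, d, \varepsilon^{-1})$, by the uniform bounds on $\ell', \sigma'$, the bound $\|\varphi(\bx)\|_2 \le \sqrt{d}$, and the stated choice of regularization weights; and $\|\bXi_t\|_F = O(\sigma)$ with high probability. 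Hence there exists $\Gamma = \text{poly}(m, d, \varepsilon^{-1})$ such that whenever $\|\bW^t\|_F > \Gamma$ the one-step update strictly decreases the Frobenius norm w.h.p.\ (using $\eta = c/(\lambda_4 m^3)$ small enough to prevent overshooting). Defining the first-exit time $\tau = \min\{t : \|\bW^t\|_F > 2\Gamma\}$ and union-bounding the high-probability event over $T = \text{poly}(m)$ iterations, one concludes $\tau > T$ with probability at least $1 - d^{-8}$, so all iterates remain inside a Frobenius ball of radius $2\Gamma$.

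For step (ii), inside this ball Lemmas~\ref{lemma: reg smooth} and~\ref{lemma: reg hessian lip} give effective smoothness $\ell = O(\lambda_4 \Gamma^6 + m^{1/2}) = \text{poly}(m)$ and effective Hessian-Lipschitz constant $\rho = O(\lambda_4 \Gamma^5) = \text{poly}(m)$. For step (iii), the update~\eqref{eq: gd update} matches Algorithm~1 of~\cite{jin2019}: the stated step size $\eta = c/(\lambda_4 m^3)$ is below $1/\ell$, and with noise level $\sigma = \tilde\Theta(\tilde \varepsilon)$ for $\tilde \varepsilon = \min(\nu, \gamma^2 m^{-5/2})$, any $\tilde\varepsilon$-first-order stationary point of a function with our smoothness and Hessian-Lipschitz constants is automatically a $(\nu, \gamma)$-SOSP. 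Applying~\cite[Theorem~3]{jin2019} then yields such a point within $\tilde O\!\left(\ell \cdot (L_\lambda(\bZero) - \inf L_\lambda) / \tilde \varepsilon^2\right) = \tilde O(m^3 \tilde \varepsilon^{-2})$ iterations, giving the claim of Theorem~\ref{thm: optimization formal}.

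The main obstacle is the bounded-iterates step: the $\cR_4$ restoring force must dominate both the deterministic gradient pieces and the stochastic perturbation at the very instant the ball would be exited, and the $O(\eta^2)$ discretization error must also be controlled. This is handled by the stopping-time argument sketched above together with a standard Gaussian tail bound on $\|\bXi_t\|_F$. Once the iterate bound is in hand, the remainder of the proof is a black-box application of~\cite{jin2019} combined with the in-ball regularity estimates established earlier in Appendix~\ref{app: optimization proofs}.
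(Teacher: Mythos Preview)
Your proposal is correct and follows the same three-step skeleton as the paper: confine iterates to a ball via the $\cR_4$ restoring force, read off in-ball smoothness/Hessian-Lipschitz from Lemmas~\ref{lemma: reg smooth} and~\ref{lemma: reg hessian lip}, and invoke~\cite{jin2019}. Two points are worth sharpening. First, the paper fixes $\Gamma = m^{1/2}$ explicitly rather than a generic $\text{poly}(m,d,\varepsilon^{-1})$; this is what makes $\ell = O(\lambda_4\Gamma^6 + m^{1/2}) = O(\lambda_4 m^3)$ and $\rho = O(\lambda_4 m^{5/2})$, which in turn yields the stated $\eta = c/(\lambda_4 m^3)$ and $T = \tilde O(m^3\tilde\varepsilon^{-2})$. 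With an unspecified $\Gamma$ your step~(iii) conclusion $\tilde O(m^3\tilde\varepsilon^{-2})$ does not follow from $\ell = \text{poly}(m)$ alone. Second, for the boundedness step the paper does a direct induction $\|\bW^t\|_F \le \Gamma \Rightarrow \|\bW^{t+1}\|_F \le \Gamma$ by exploiting the per-neuron structure $\{\nabla\cR_4(\bW)\}_r = 8\|\bW\|_{2,4}^4\|\bw_r\|^2\bw_r$, so that $\bW - \eta\lambda_4\nabla\cR_4(\bW)$ contracts each column by a scalar factor in $[0,1]$ and $\|\bW - \eta\lambda_4\nabla\cR_4(\bW)\|_F^2 \le \|\bW\|_F^2 - 8c\Gamma^{-6}m^{-2}\|\bW\|_F^8$ can be read off directly; this sidesteps the one-step-jump issue that your first-exit-time argument would otherwise need to handle.
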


\begin{proof}
We follow the same strategy as~\cite[Theorem 8]{jin2017}. We first show that perturbed gradient descent stays in a bounded region. Then, we can use the smoothness and Hessian-Lipschitz parameters in this region for the purpose of a convergence result. Throughout, we condition on the high probability event where the construction in Theorem~\ref{thm: main expressivity} holds.

Let $\Gamma = m^{1/2}$. I claim that perturbed gradient descent stays in the region $\{\|\bW\| \mid \|\bW\| \le \Gamma\}$. We prove the claim by induction. $\|\bW^0\| = 0$ so clearly the base case holds.

Assume that $\|\bW^t\| \le \Gamma$. The gradient descent update on $\bW$ is
\begin{align*}
\bW^{t+1} \leftarrow \bW^t - \eta \nabla \hat L(\bW^t) - \eta \nabla\left(\lambda_1 \cR_1(\bW^t) + \lambda_2 \cR_2(\bW^t) + \lambda_3 \cR_3(\bW^t)\right) - \eta\lambda_4 \nabla \cR_4(\bW^t).
\end{align*}
Observe that the learning rate is $\eta = \frac{c}{\lambda_4m^3} = \frac{c}{\lambda_4 \Gamma^6}$ for a sufficiently small constant $c$. We can bound $\|\nabla \hat L(\bW^t)\| \le \sqrt{d}$ and, as in the proof of the previous lemma,
\[
	\left\|\nabla\left(\lambda_1 \cR_1(\bW^t) + \lambda_2 \cR_2(\bW^t) + \lambda_3 \cR_3(\bW^t)\right) \right\| \le m^{1/2}\|\bW^t\|_F.
\]
Therefore
\begin{align*}
	\left\|\eta \nabla \hat L(\bW^t) + \eta \nabla\left(\lambda_1 \cR_1(\bW^t) + \lambda_2 \cR_2(\bW^t) + \lambda_3 \cR_3(\bW^t)\right) \right\| &\lesssim \lambda_4^{-1}\Gamma^{-6}\sqrt{d} + \lambda_4^{-1}\Gamma^{-5}m^{1/2}\\
	&\lesssim \lambda_4^{-1}\Gamma^{-5}m^{1/2}
\end{align*}

Finally, since the $\bw_r$ component of $\nabla \cR_4(\bW)$ is $8\|\bW\|_{2, 4}^4\|\bw_r\|^2\bw_r$, we have
\begin{align*}
\{\bW^t - \eta\lambda_4\nabla \cR_4(\bW^t)\}_r &= \bw^t_r\left(1 - 8\lambda_4\eta\|\bW^t\|_{2, 4}^4\|\bw^t_r\|^2\right)\\
&= \bw^t_r\left(1 - 8c\Gamma^{-6}\|\bW^t\|_{2, 4}^4\|\bw^t_r\|^2\right)
\end{align*} 
Note that
\begin{align*}
8c\Gamma^{-6}\|\bW^t\|_{2, 4}^4\|\bw^t_r\|^2 &\le 8c \le 1
\end{align*}
for $c$ small enough. We then have
\begin{align*}
\|\bW^t - \eta\lambda_4\nabla \cR_4(\bW^t)\|_F^2 &= \sum_r \|\bw^t_r\|^2\left(1 - 8c\Gamma^{-6}\|\bW^t\|_{2, 4}^4\|\bw^t_r\|^2\right)^2\\
&\le \sum_r \|\bw^t_r\|^2\left(1 - 8c\Gamma^{-6}\|\bW^t\|_{2, 4}^4\|\bw^t_r\|^2\right)\\
&= \|\bW^t\|_F^2 - 8c\Gamma^{-6}\|\bW^t\|_{2, 4}^8\\
&\le \|\bW^t\|_F^2 - 8c\Gamma^{-6}m^{-2}\|\bW^t\|_F^8.
\end{align*}

We split into two cases. If $\|\bW^t\|_F \le \Gamma/2$, then
\[
	\|\bW^t - \eta\lambda_4\nabla \cR_4(\bW^t)\|_F \le \Gamma/2
\]
and
\begin{align*}
	\left\|\eta \nabla \hat L(\bW^t) + \eta \nabla\left(\lambda_1 \cR_1(\bW^t) + \lambda_2 \cR_2(\bW^t) + \lambda_3 \cR_3(\bW^t)\right) \right\| &\lesssim  d^{2(k-1)}\varepsilon_{min}^{-1}\Gamma^{-5}m^{1/2}\\
	&\le \Gamma/4,
\end{align*}
so $\|\bW^{t+1}\|_F \le 3\Gamma/4$.

Otherwise, $\Gamma \ge \|\bW^t\|_F \ge \Gamma/2$, so
\begin{align*}
\|\bW^t - \eta\lambda_4\nabla \cR_4(\bW^t)\|_F &\le \sqrt{\Gamma^2 - \frac{c}{32}\Gamma^2m^{-2}}.\\
&\le \Gamma(1 - \frac{c}{64}m^{-2})
\end{align*}

Then
\begin{align*}
\|\bW^{t+1}\|_F &\le \Gamma(1 - \frac{c}{64}m^{-2}) + d^{2(k-1)}\varepsilon_{min}^{-1}\Gamma^{-5}m^{1/2}\\
&\le \Gamma\left(1 - \frac{c}{128}m^{-2}\right),
\end{align*}
since $\Gamma^6 = m^3 \gtrsim m^{5/2}d^{2(k-1)}\varepsilon_{min}^{-1}$.

Finally, the perturbation moves at most $\eta\|\bXi^t\|_F$, and since $\EE\|\bXi^t\|_F = \sigma$, with probability $1 - d^{-9}$ each of the $T$ perturbations is bounded by  $\Gamma m^{-3}\eta^{-1} \gg \sigma$
Therefore even after the perturbation we have $\|\bW^{t+1}\|_F \le \Gamma$, completing the induction step.

Lemmas~\ref{lemma: reg hessian lip}, \ref{lemma: reg smooth} tell us $L_\lambda$ is $O(\lambda_4m^{5/2})$ Hessian Lipschitz and $O(\lambda_4m^3)$ smooth throughout the entire gradient descent trajectory.

Our goal is to converge to a $(\nu, \gamma)$-SOSP; this is equivalent to converging to a $\tilde \varepsilon$-SOSP as defined in \citep{jin2017, jin2019}, with $\tilde \varepsilon := \min(\nu, \frac{\gamma^2}{\lambda_4m^{5/2}}) \ge \min(\nu, \gamma^{2}m^{-5/2})$. Therefore by \citep{jin2017, jin2019}, with probability $1 - d^{-9}$ perturbed gradient descent on the regularized loss with learning rate $\eta = \frac{c}{\lambda_4m^3}$ and perturbation radius $\sigma = \Tilde \Theta(\tilde \varepsilon)$ will encounter an $\tilde\varepsilon$-SOSP in at most $T = \Tilde O\left(\lambda_4m^3\tilde \varepsilon^{-2}\right) \le \Tilde O\left(m^3\tilde \varepsilon^{-2}\right)$ timesteps. Union bounding over the high probability events, this occurs with probability $1 - 3d^{-9} \ge 1 - d^{-8}$.
\end{proof}

\section{Generalization Proofs}\label{app: generalization proofs}
\subsection{Proof of Theorem~\ref{thm: main generalization}}
Recall the definition of the empirical Rademacher complexity:\\
\begin{definition}
Let $\cF$ be a class of functions from $\RR^d$ to $\RR$. Given a dataset $\cD = \{\bx_1, \dots, \bx_n\}$, the \emph{empirical Rademacher complexity} of $\cF$ is defined as
\begin{equation}
\cR_\cD(\cF) := \EE_{\sigma \in \{\pm1\}^n}\left[\sup_{f \in \cF}\frac{1}{n}\sum_{i=1}^n \sigma_if(\bx_i)\right].
\end{equation}
\end{definition}

We next show that the Rademacher complexities of the linear term and the quadratic term can be bounded.\\

\begin{lemma}[Rademacher complexity of linear term]\label{lemma: linear rademacher}
Let $\cW \subset \RR^{m \times d}$, and define the function class $\cF^L_k(\cW) := \{\bx \mapsto f^L(\bx; \bP_{\le k}\bW) : \bW \in \cW\}$. Then, with probability $1 - d^{-9}$ over the draw of $\cD$,
\begin{equation}
	\cR_\cD(\cF^L_k(\cW)) \lesssim \sqrt{\frac{d^k}{n}}\cdot\sup_{\bW \in \cW}\|f_L(\bx; \bP_{\le k}\bW)\|_{L^2}^2.
\end{equation}
\end{lemma}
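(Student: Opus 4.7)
The plan is to reduce the claim to a Rademacher-complexity bound for a linear-in-parameter class constrained to an ellipsoid, then apply Lemma~\ref{lemma: NTK features concentrate} to evaluate the resulting trace to leading order.

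First I would vectorize. For every $\bW \in \cW$,
\[
f_L(\bx; \bP_{\le k}\bW) \;=\; \langle \tvec{\bW},\, \bP_{\le k}\varphi(\bx)\rangle,
\]
and $\|f_L(\cdot;\bP_{\le k}\bW)\|_{L^2}^2 = \tvec{\bW}^T \bSigma_{\le k}\,\tvec{\bW}$. Set $B := \sup_{\bW\in\cW}\|f_L(\cdot;\bP_{\le k}\bW)\|_{L^2}$. Because $\bP_{\le k}\bW$ depends only on the projection of $\tvec{\bW}$ onto $\mathrm{range}(\bP_{\le k})$, I may restrict to that $n_k$-dimensional subspace, on which $\bSigma_{\le k}$ is positive definite. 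The substitution $\bu := \bSigma_{\le k}^{1/2}\tvec{\bW}$ is then a bijection with $\|\bu\|_2 \le B$ and
\[
\langle \tvec{\bW},\, \bP_{\le k}\varphi(\bx)\rangle \;=\; \bigl\langle \bu,\, (\bSigma_{\le k}^{\dagger})^{1/2}\bP_{\le k}\varphi(\bx)\bigr\rangle.
\]

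Next, Cauchy--Schwarz inside the supremum followed by Jensen's inequality on the Rademacher average gives
\[
\cR_\cD(\cF^L_k(\cW)) \;\le\; \frac{B}{n}\,\EE_\sigma \Bigl\|\sum_{i=1}^n \sigma_i\, (\bSigma_{\le k}^{\dagger})^{1/2}\bP_{\le k}\varphi(\bx_i)\Bigr\|_2 \;\le\; \frac{B}{n}\sqrt{\sum_{i=1}^n \varphi(\bx_i)^T \bP_{\le k}\,\bSigma_{\le k}^{\dagger}\, \bP_{\le k}\varphi(\bx_i)}.
\]
The quantity under the square root equals $n\cdot \operatorname{tr}\!\bigl(\EE_n[(\bSigma_{\le k}^{\dagger})^{1/2}\varphi\varphi^T(\bSigma_{\le k}^{\dagger})^{1/2}]\bigr)$. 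By Lemma~\ref{lemma: NTK features concentrate}, this inner operator lies within $\tfrac12$ in operator norm of $\bP_{\le k}$ with probability at least $1-d^{-10}$, so its trace is $\Theta(\operatorname{tr}(\bP_{\le k})) = \Theta(n_k) = \Theta(d^k)$. Combining yields $\cR_\cD(\cF^L_k(\cW)) \lesssim B\sqrt{d^k/n}$, which is the claimed bound (modulo replacing $B$ by $B^2$, a harmless loosening in the regime $B\gtrsim 1$ relevant to the downstream application, where $B$ is bounded below by a constant via the fact that $\cR_2(\hat\bW)\gtrsim 1$ for any reasonable predictor).

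I do not anticipate a real obstacle; the only item requiring care is the pseudoinverse bookkeeping, since $\bSigma_{\le k}$ has rank $n_k \ll md$ and the substitution $\bu = \bSigma_{\le k}^{1/2}\tvec{\bW}$ is bijective only after restriction to $\mathrm{range}(\bP_{\le k})$. All of the substantive concentration content---namely, that $n \gtrsim d^k\log d$ samples suffice to estimate the empirical feature covariance in the ellipsoidal norm defined by $\bSigma_{\le k}$---is already packaged in Lemma~\ref{lemma: NTK features concentrate} and need not be redone here.
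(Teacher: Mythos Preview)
Your proposal is correct and follows essentially the same route as the paper: vectorize, apply Cauchy--Schwarz in the $\bSigma_{\le k}$-weighted inner product to split into $\sup_{\bW}\|\tvec{\bW}\|_{\bSigma_{\le k}}$ times the Rademacher average of the whitened features, then invoke Lemma~\ref{lemma: NTK features concentrate} to bound the resulting trace by $\Theta(n_k)=\Theta(d^k)$. Your observation about $B$ versus $B^2$ is apt---the natural Cauchy--Schwarz bound produces the $L^2$ norm rather than its square, and the paper's stated inequality appears to use an unconventional norm convention; as you note, this is immaterial for the downstream application where $\cR_2(\hat\bW)\lesssim 1$.
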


\begin{lemma}[Rademacher complexity of quadratic term]\label{lemma: quad rademacher}
Let $\cW \subset \RR^{m \times d}$, and define the function class $\cF^Q(\cW) := \{\bx \mapsto f^Q(\bx; \bW) : \bW \in \cW\}$. Then, with probability $1 - d^{-9}$ over the draw of $\cD$,
\begin{equation}
	\cR_\cD(\cF^Q(\cW)) \lesssim \sqrt{\frac{d}{mn}}\cdot \sup_{\bW \in \cW}\|\bW\|_F^2.
\end{equation}
\end{lemma}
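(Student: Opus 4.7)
My plan is to reduce the Rademacher complexity bound to a matrix concentration estimate on the random quadratic forms defining $f_Q$. For fixed Rademacher signs $\varepsilon \in \{\pm 1\}^n$, I would first rewrite the random sum as a quadratic form in $\bW$:
\[
\sum_{i=1}^n \varepsilon_i f_Q(\bx_i; \bW) = \frac{1}{2\sqrt{m}}\sum_{r=1}^m a_r\, \bw_r^T \bA_r \bw_r, \qquad \bA_r := \sum_{i=1}^n \varepsilon_i \sigma''(\bw_{0,r}^T\bx_i)\,\bx_i\bx_i^T \in \RR^{d\times d},
\]
where each $\bA_r$ is symmetric and independent of $\bW$. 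Applying $|\bw_r^T \bA_r \bw_r| \le \|\bA_r\|_{op}\|\bw_r\|_2^2$ termwise and summing over $r$ gives
\[
\sup_{\bW \in \cW}\left|\frac{1}{n}\sum_{i=1}^n \varepsilon_i f_Q(\bx_i;\bW)\right| \le \frac{\max_{r\in[m]}\|\bA_r\|_{op}}{2n\sqrt{m}}\cdot \sup_{\bW\in\cW}\|\bW\|_F^2,
\]
so it suffices to bound $\EE_\varepsilon[\max_r \|\bA_r\|_{op}]$ on a high-probability event over $\cD$.

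Each $\bA_r$ is a Rademacher-signed sum of rank-one symmetric matrices whose operator norms are at most $d$ (using $\|\bx_i\|_2^2 = d$ and $\|\sigma''\|_\infty \le 1$), with matrix second moment bounded as $\sum_i \sigma''(\bw_{0,r}^T\bx_i)^2\|\bx_i\|_2^2\bx_i\bx_i^T \preceq d\sum_i \bx_i\bx_i^T$. Conditioning on the event $\|\sum_i \bx_i\bx_i^T\|_{op} \lesssim n$ (which holds with probability $1-d^{-10}$ by Lemma~\ref{lemma: empirical cov concentrate}), the matrix Bernstein inequality gives for each fixed $r$
\[
\Pr_\varepsilon\!\left(\|\bA_r\|_{op} \ge t\right) \le 2d\exp\!\left(-\frac{ct^2}{nd + dt}\right),
\]
so $\|\bA_r\|_{op} \lesssim \sqrt{nd\log(md)}$ with probability at least $1-(md)^{-11}$. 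A union bound over $r \in [m]$ plus integration of the subgaussian tail yields $\EE_\varepsilon[\max_r \|\bA_r\|_{op}] \lesssim \sqrt{nd\log(md)}$. Substituting back then gives $\cR_\cD(\cF^Q(\cW)) \lesssim \sqrt{d\log(md)/(mn)}\cdot\sup_{\bW\in\cW}\|\bW\|_F^2$, which matches the stated bound once the polylogarithmic factor is absorbed into $\lesssim$.

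The main technical step is the matrix concentration bound on $\bA_r$; the rest is elementary algebra. A naive alternative---either covering $\cW$ in Frobenius norm and union-bounding pointwise, or applying Cauchy--Schwarz termwise to the $(\bx_i^T\bw_r)^2$ pieces---would each lose an additional factor of $\sqrt{n}$ and fail to recover the required $1/\sqrt{mn}$ scaling. The key structural observation that makes the matrix-Bernstein route work is that $\bW$ enters $f_Q$ only through $\|\bw_r\|_2^2$ after conjugating by the data-dependent matrix $\bA_r$, so the ``supremum over $\bW$'' and ``expectation over $\varepsilon$'' decouple cleanly.
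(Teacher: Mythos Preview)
Your argument is correct and is essentially the same as the one the paper invokes: the paper does not prove this lemma directly but cites \cite[Lemma~5, Theorem~6]{bai2020}, whose proof proceeds exactly by rewriting $\sum_i \varepsilon_i f_Q(\bx_i;\bW)$ as $\frac{1}{2\sqrt{m}}\sum_r a_r \bw_r^T\bA_r\bw_r$, bounding by $\max_r\|\bA_r\|_{op}\cdot\|\bW\|_F^2$, and controlling $\|\bA_r\|_{op}$ via matrix Bernstein. The $\sqrt{\log(md)}$ factor you obtain is genuinely present in the cited result as well; the paper's use of $\lesssim$ here is slightly loose, but the slack is harmlessly absorbed in the downstream application (Theorem~\ref{thm: main generalization}).
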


Lemma~\ref{lemma: linear rademacher} is presented in Appendix~\ref{app: linear rademacher proof}; Lemma~\ref{lemma: quad rademacher} follows directly from ~\cite[Lemma 5, Theorem 6]{bai2020}.

Equipped with these Rademacher complexity lemmas, we can now prove the main generalization result.
\begin{proof}[Proof of Theorem~\ref{thm: main generalization}]
By Lipschitzness of the loss and Lemma~\ref{lemma: function values}, we can bound
\begin{align*}
	L(\bW) &= \EE_\mu[\ell(y, f(\bx; \bW))]\\
	&\le \EE_\mu[\ell(y, f_Q(\bx; \bW) + f_L(\bx; \bW))] + \EE_\mu|f(\bx; \bW) - f_Q(\bx; \bW) - f_L(\bx; \bW)|\\
	&\le \EE_\mu[\ell(y, f_Q(\bx; \bW) + f_L(\bx; \bW))] + m^{-\frac12}\sum_{r=1}^m\EE_\mu|\bw_r^T\bx|^3\\
	&\le \EE_\mu[\ell(y, f_Q(\bx; \bW) + f_L(\bx; \bW))] + Cm^{-\frac14}\|\bW\|_{2, 4}^3.
\end{align*}

Again by Lipschitzness, we can bound
\begin{align*}
\EE_\mu[\ell(y, f_Q(\bx; \bW) + f_L(\bx; \bW))] &\le \EE_\mu[\ell(y, f_Q(\bx; \bW) + f_L(\bx; \bP_{\le k}\bW))] + \EE_\mu |f_L(x; \bP_{>k}\bW)|\\
&\le \EE_\mu[\ell(y, f_Q(\bx; \bW) + f_L(\bx; \bP_{\le k}\bW))] + \|f_L(\bx; \bP_{> k}\bW)\|_{L^2}.
\end{align*}

Similarly, we can lower bound
\begin{align*}
\hat L(\bW) \ge \EE_n[\ell(y, f_Q(\bx; \hat\bW) + f_L(\bx; \bP_{\le k}\hat\bW))] - Cm^{-\frac14}\|\bW\|_{2, 4}^3 - \left(\EE_n[(f_L(\bx; \bP_{>k}))^2]\right)^{\frac12}.
\end{align*}

Since $L_\lambda(\hat \bW) \le C\varepsilon_{min}$, the value of each regularizer satisfies $\cR_i(\hat \bW) \le C\lambda_i^{-1}\varepsilon_{min}$. By our choice of $(\lambda_1, \lambda_2, \lambda_3, \lambda_4)$, we have
\begin{align*}
\cR_1(\hat \bW) &= \|f_L(\cdot; \bP_{>k}\bW)\|_{L^2}^2 \lesssim m^{-\frac12}d^{\frac{k-1}{2}},\\
\cR_2(\hat \bW) &= \|f_L(\cdot; \bP_{\le k}\bW)\|_{L^2}^2 \lesssim 1,\\
\cR_3(\hat \bW) &= \EE_n\left[(f_L(\bx; \bP_{>k}\bW))^2\right] \lesssim m^{-\frac12}d^{\frac{k-1}{2}},\\
\cR_4(\hat \bW) &= \|\bW\|_{2, 4}^8 \lesssim d^{2(k-1)}.
\end{align*}
Therefore 
\[
	L(\hat \bW) \le \EE_\mu[\ell(y, f_Q(\bx; \bW) + f_L(\bx; \bP_{\le k}\bW))] + C\cdot m^{-\frac14}d^{\frac{3(k-1)}{4}} + m^{-\frac14}d^{\frac{k-1}{4}},
\]

and similarly
\[
	\hat L(\hat \bW) \ge \EE_n[\ell(y, f_Q(\bx; \bW) + f_L(\bx; \bP_{\le k}\bW))] - C\cdot m^{-\frac14}d^{\frac{3(k-1)}{4}} - m^{-\frac14}d^{\frac{k-1}{4}}.
\]
Since $\hat L(\hat \bW) \le C\varepsilon_{min}$, we thus have that
\[
	\EE_n[\ell(y, f_Q(\bx; \bW) + f_L(\bx; \bP_{\le k}\bW))] \lesssim \varepsilon_{min} + m^{-\frac14}d^{\frac{3(k-1)}{4}}.
\]

Next, define the set
\[
	\cW := \{\bW \in \RR^{m \times d} : \|\bW\|_{2, 4} \le Cd^{\frac{k-1}{4}}, \|f_L(\cdot; \bP_{\le k}\bW)\|_{L^2}^2 \le C\}.
\]
By construction, we have $\hat \bW \in \cW$. Furthermore, define the function class \[\cL := \{(\bx, y) \rightarrow \ell(y, f_Q(\bx; \bW) + f_L(\bx; \bP_{\le k}\bW)) : \bW \in \cW\}.\] 
By the standard empirical Rademacher complexity bound, with probability $1 - d^{-9}$ over the draw of $\mathcal{D}$,
\[
	\EE_\mu[\ell(y, f_Q(\bx; \hat\bW) + f_L(\bx; \bP_{\le k}\hat\bW))] - \EE_n[\ell(y, f_Q(\bx; \hat\bW) + f_L(\bx; \bP_{\le k}\hat\bW))]] \le 2\cR_\mathcal{D}(\cL),
\]
By the Rademacher contraction Lemma~\cite{wainwright}, since $\ell$ is 1-Lipschitz we can bound (conditioning on Lemmas~\ref{lemma: linear rademacher}, \ref{lemma: quad rademacher})
\begin{align*}
\cR_\cD(\cL) &\le \cR_\cD(\cF^L_k(\cW) + \cF^Q(\cW)) + n^{-\frac12}\\
&\le \cR_\cD(\cF^L_k(\cW)) + \cR_\cD(\cF^Q(\cW)) + n^{-\frac12}\\
&\lesssim \sqrt{\frac{d^k}{n}}\cdot\sup_{\bW \in \cW}\|f_L(\bx; \bP_{\le k}\bW)\|_{L^2}^2 + \sqrt{\frac{d}{mn}}\cdot \sup_{\bW \in \cW}\|\bW\|_F^2 + n^{-\frac12}\\
&\lesssim \sqrt{\frac{d^k}{n}} + \sqrt{\frac{d}{n}}\cdot \sup_{\bW \in \cW}\|\bW\|_{2, 4}^2 + n^{-\frac12}\\
&\lesssim \sqrt{\frac{d^k}{n}} + \sqrt{\frac{d}{n}}d^{\frac{k-1}{2}} + n^{-\frac12}\\
&\lesssim \sqrt{\frac{d^k}{n}}.
\end{align*}
Union bounding over the high probability events, with probability $1 - 3d^{-9} \ge 1 - d^{-8}$, we have 
\begin{align*}
L(\hat \bW) &\le \EE_\mu[\ell(y, f_Q(\bx; \hat\bW) + f_L(\bx; \bP_{\le k}\hat\bW))] + C\cdot m^{-\frac14}d^{\frac{3(k-1)}{4}}\\
&\le \EE_n[\ell(y, f_Q(\bx; \hat\bW) + f_L(\bx; \bP_{\le k}\hat\bW))]] + 2\cR_\cD(\cL) + C\cdot m^{-\frac14}d^{\frac{3(k-1)}{4}}\\
&\lesssim \varepsilon_{min} + m^{-\frac14}d^{\frac{3(k-1)}{4}} + \sqrt{\frac{d^k}{n}}\\
&\lesssim \varepsilon_{min} + \sqrt{\frac{d^k}{n}}.
\end{align*}
since $m \gtrsim d^{3(k-1)}\varepsilon_{min}^{-4}$.
\end{proof}

\subsection{Proof of Lemma~\ref{lemma: linear rademacher}}\label{app: linear rademacher proof}
\begin{proof}
Recall that we can write $f_L(\bx; \bW) = \varphi(\bx)^T\text{vec}(\bW)$, where $\varphi(\bx)$ is the NTK featurization map. Also, recall
\[
	\bSigma_{\le n_k} := \mathbb{E}_\mu\left[\varphi(\bx)^T\bP_{\le k}\varphi(\bx)\right] = \sum_{i=1}^{n_k}\lambda_i\bv_i\bv_i^T.
\]
We can then bound
\begin{align*}
	\cR_\cD(\cF^L_k(\cW)) &= \EE_\sigma\left[\sup_{\bW \in \mathcal{W}}\frac{1}{n}\sum_{i=1}^n\sigma_if_L(\bx; \bP_{\le k}\text{vec}(\bW))\right]\\
	&= \frac{1}{n}\EE_\sigma\left[\sup_{\bW \in \mathcal{W}}\sum_{i=1}^n\sigma_i\varphi(\bx)^T\bP_{\le k}\text{vec}(\bW)\right]\\
	&= \frac{1}{n}\EE_\sigma\left[\sup_{\bW \in \mathcal{W}}\sum_{i=1}^n\sigma_i\varphi(\bx)^T\bP_{\le k}(\bSigma_{\le n_k}^\dagger)^{1/2} \bSigma_{\le n_k}^{1/2} \text{vec}(\bW)\right]\\
	&\le \frac{1}{n}\sup_{\bW \in \mathcal{W}}\|\text{vec}(\bW)\|_{\bSigma_{\le n_k}} \EE_\sigma \left\|\sum_{i=1}^n \sigma_i\bP_{\le k}\varphi(\bx)\right\|_{\bSigma_{\le n_k}^\dagger},
\end{align*}
where the last step follows by Cauchy-Schwarz. By definition,
\[
	\|\text{vec}(\bW)\|_{\bSigma_{\le n_k}} = \text{vec}(\bW)^T\bSigma_{\le n_k}\text{vec}(\bW) = \|f_L(\bx; \bP_{\le k}\bW)\|_{L^2}^2.
\]
Also, we can bound
\begin{align*}
\EE_\sigma \left\|\sum_{i=1}^n \sigma_i\bP_{\le k}\varphi(\bx)\right\|_{\bSigma_{\le n_k}^\dagger} &\le \left(\EE_\sigma \left\|\sum_{i=1}^n \sigma_i\bP_{\le k}\varphi(\bx)\right\|^2_{\bSigma_{\le n_k}^\dagger}\right)^{1/2}\\
&= (n\EE_n[\varphi(\bx)^T\bP_{\le k}\bSigma_{\le n_k}^\dagger \bP_{\le k}\varphi(\bx)])^{1/2}\\
&= \left(n\text{Tr}\left(\EE_n\left[(\bSigma_{\le n_k}^\dagger)^{1/2}\bP_{\le k}\varphi(\bx)\varphi(\bx)^T\bP_{\le k}(\bSigma_{\le n_k}^\dagger)^{1/2}\right]\right)\right)^{1/2}
\end{align*}
By Lemma~\ref{lemma: NTK features concentrate}, with probability $1 - d^{-9}$ we have
\begin{align*}
\left\|\EE_n\left[(\bSigma_{\le n_k}^\dagger)^{1/2}\bP_{\le k}\varphi(\bx)\varphi(\bx)^T\bP_{\le k}(\bSigma_{\le n_k}^\dagger)^{1/2}\right] - \bP_{\le k} \right\|_{op} \le \frac12.
\end{align*}
Furthermore, $\EE_n\left[(\bSigma_{\le n_k}^\dagger)^{1/2}\bP_{\le k}\varphi(\bx)\varphi(\bx)^T\bP_{\le k}(\bSigma_{\le n_k}^\dagger)^{1/2}\right]$ and $\bP_{\le k}$ have the same span, which is dimension $n_k = \Theta(d^k)$. Therefore
\begin{align*}
\text{Tr}\left(\EE_n\left[(\bSigma_{\le n_k}^\dagger)^{1/2}\bP_{\le k}\varphi(\bx)\varphi(\bx)^T\bP_{\le k}(\bSigma_{\le n_k}^\dagger)^{1/2}\right]\right) \le \Theta(d^k).
\end{align*}
Altogether, we get the bound
\begin{align*}
\EE_\sigma \left\|\sum_{i=1}^n \sigma_i\bP_{\le k}\varphi(\bx)\right\|_{\bSigma_{\le n_k}^\dagger} &\lesssim \sqrt{nd^k},
\end{align*}
so

\begin{align*}
	\cR_n(\cF^L_k(\cW)) \lesssim \sqrt{\frac{d^k}{n}}\cdot\sup_{\bW \in \cW}\|f_L(\bx; \bP_{\le k}\bW)\|_{L^2}^2.
\end{align*}
\end{proof}

\section{Proof of Theorem~\ref{thm: main thm}}\label{app: main theorem proof}
\begin{proof}
Choose $m \gtrsim n^4d^{\frac{26(k + 1)}{3}}\varepsilon^{-22/3}.$ Set the regularization parameters as $\lambda_1 = m^{1/2}d^{-\frac{k-1}{2}}\varepsilon_{min}, \lambda_2 = \varepsilon_{min}, \lambda_3 = m^{1/2}d^{-\frac{k-1}{2}}\varepsilon_{min}\eta, \lambda_4 = d^{-2(k-1)}\varepsilon_{min}$. Also, set $r = n_k$, $\eta = c\lambda_4^{-1}m^{-3}$ for small constant $c$, and $\sigma = \Tilde O(m^{-4})$.

For $\nu = m^{-1/2}, \gamma = m^{-3/4}$, $\tilde \varepsilon = \min(\nu, \gamma^2m^{-5/2}) = m^{-4}$. Therefore by Theorem~\ref{thm: optimization formal}, with probability $1 - d^{-8}$ we reach a $(\nu, \gamma)$-SOSP within $\mathscr{T} = \tilde O(m^3\varepsilon^{-2}) = \tilde O(m^{11})$ timesteps. Call this point $\hat \bW$.

By Corollary~\ref{cor: main landscape cor}, $L_\lambda(\hat \bW) \le C\varepsilon_{min}$. Finally, by Theorem~\ref{thm: main generalization}, with probability $1-d^{-8}$, $L(\hat \bW) \le C\varepsilon_{min}$. Setting $\varepsilon_{min} = \varepsilon/C$, we get that $L(\hat \bW) \le \epsilon$ with probability $1 - d^{-7}$, as desired.
\end{proof}

\section{Additional Experiments}\label{app:extra_experiments}
\subsection{Additional Simulations}

\begin{figure}[h!]
\centering
\includegraphics[width=\textwidth]{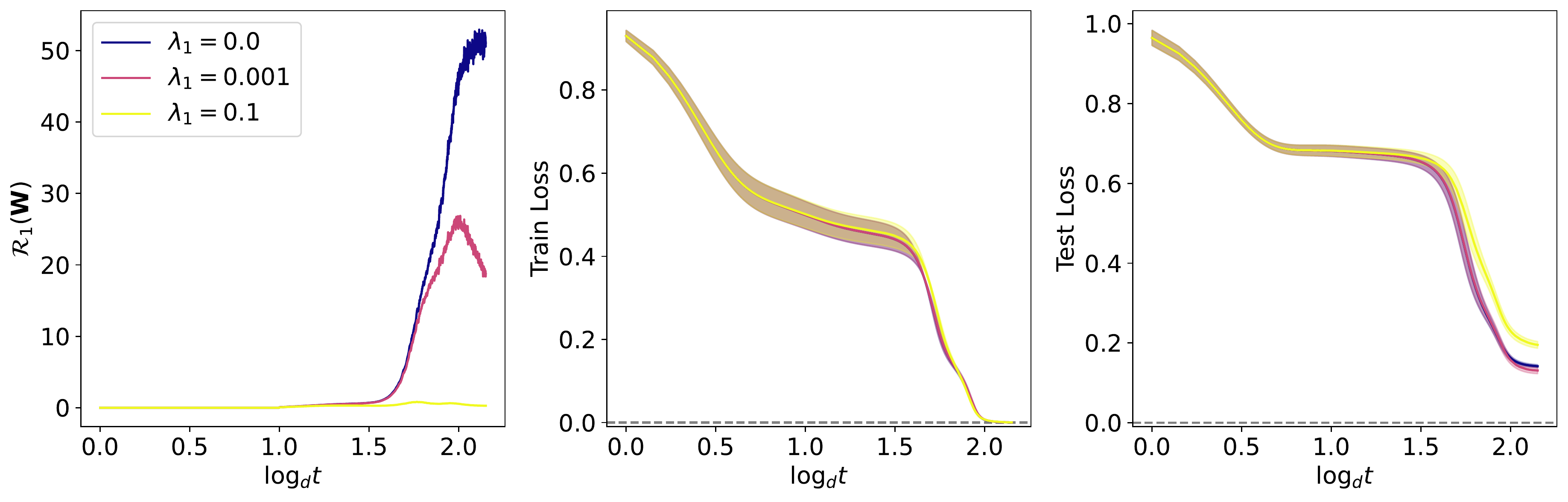}
\caption{We train $f_L + f_Q$ with varying $\lambda_1$, while keeping $\lambda_3$ fixed}
\label{fig:r1}
\end{figure}

In Figure~\ref{fig:r1}, we conduct the same experiment as in Figure~\ref{fig: main figure}, while additionally using the $\cR_1$ regularizer. We fix $\lambda_3 = 0.01$ while varying $\lambda_1$. Since we cannot compute $\cR_1$ exactly, we use an unbiased estimate at every timestep by sampling a new set of $\bx$'s and computing $\EE[f_L(\bx; \bP_{>k}\bW)]$ on this set. In the leftmost pane, we plot a moving average of our estimate of $\cR_1$.

First, we observe that even when $\lambda_1 = 0$, the regularizer $\cR_1$ is an order of magnitude smaller than $\cR_3$ was when we set $\lambda_3 = 0$ (50 versus 500). Furthermore, in the rightmost pane, we see that the test loss of the model is small regardless of which value of $\lambda_1$ was chosen. This provides additional evidence that $\cR_1$ is kept small throughout the training process.

\subsection{CIFAR10 Experiments}

To demonstrate the significance of our approach on ``realistic" datasets/models, we consider experiments with CNNs on CIFAR10.

\cite{bai2020taylorized} showed that, in practice, training the second-order Taylor expansion of the network tracks the true gradient descent dynamics far better than the network's linearization does. This is further demonstrated in Figure~\ref{fig:cifar10}. Here, we train a 4-layer CNN with width 512, $\mathrm{ReLU}$ activation, average pooling between each layer, and the standard PyTorch initialization, on the cats vs. horses CIFAR10 classification task. We train via SGD with batch size 128. For both the train loss and test loss, the dynamics from training $f_L + f_Q$ tracks the true network dynamics better than just the linearization $f_L$. Furthermore, $f_L + f_Q$ acheives a lower test loss than $f_L$ (the true network beats both Taylorizations).

\begin{figure}[h!]
\centering
\includegraphics[width=0.6\textwidth]{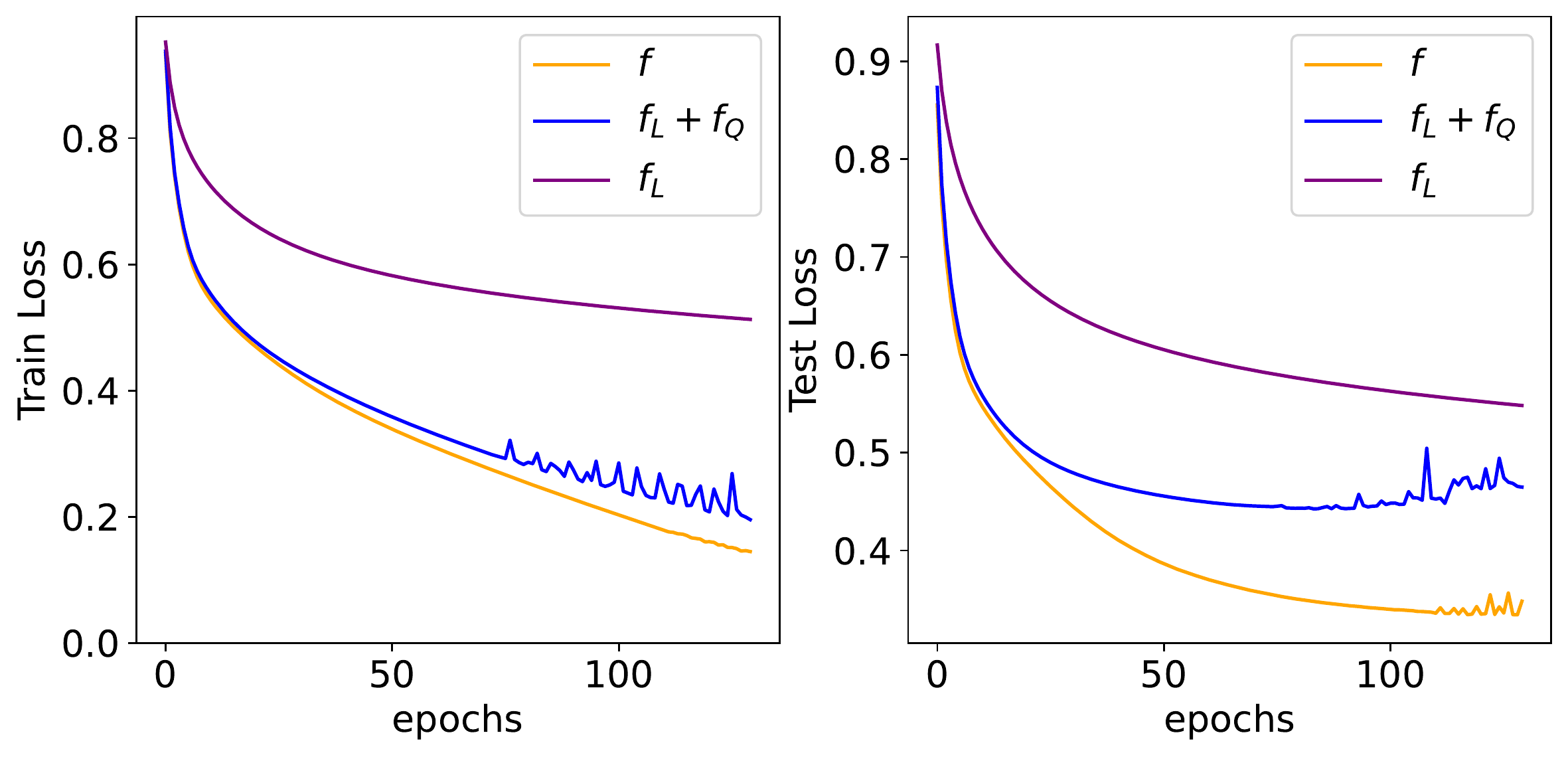}
\caption{$f_L + f_Q$ tracks the true network dynamics far better than just $f_L$.}
\label{fig:cifar10}
\end{figure}

In Table \ref{tab:table1}, we additionally measure the test loss of the linear and quadratic terms \emph{after 100 epochs of training the full model}. We observe that both the linear and quadratic term have a nontrivial loss ($<1$), and thus learned a nonzero component of the signal. This provides evidence that in real neural networks, both the linear and quadratic terms learn a nontrivial component of the signal. 

\begin{table}[h!]
  \begin{center}
    \begin{tabular}{c|c|c|c} 
    \hline
      Model & $f$ & $f_L + f_Q$ & $f_L$\\
      \hline
      Test Loss & 0.340 & 0.699 & 0.887\\
      \hline
      Test Accuracy & 90.8\% & 76.7\% & 60.2\%\\
    \end{tabular}
    \vspace{1cm}
    \caption{The test loss and accuracy of $f$, $f_L$, and $f_L + f_Q$ evaluated on the iterate obtained after 100 epochs of training using the model $f$.}
    \label{tab:table1}
  \end{center}
\end{table}

\subsection{Standard MLP Experiments}

In Figure~\ref{fig:real}, we demonstrated that ``standard'' neural networks can effectively learn low-degree dense and high-degree sparse polynomials. We trained a 2-layer neural network with standard PyTorch initialization and width 100 to learn the target function $f^*(\bx) = \bx^T\bA\bx + h_3(\beta^T\bx)$, where $\bA$ is a high-rank matrix chosen so that $\bx \rightarrow \bx^T\bA\bx$ has an $L^2$ norm of 1. Here, $h_3$ is the 3rd Hermite polynomial, and thus $h_3(\beta^T\bx)$ is a sparse cubic only depending on the random direction $\beta$ (the Hermite polynomial is chosen for this task so that it is orthogonal to the quadratic term, making it the ``hardest'' low-rank cubic to learn).

For varying values of dimension $d$ from 10 to 100 and number of samples $n$, we train our network via vanilla gradient descent with fixed learning rate $0.05$. The initialization, small width, fixed learning rate, and lack of regularization are designed to mimic a standard deep learning setup. For each value of $d$, we compute the minimum $n$ required such that the test loss is $< 0.1$ (note that the test loss of the zero predictor is $1.0$). Figure~\ref{fig:real} is a $\log-\log$ plot of $d$ versus this optimal $n$.


In Figure~\ref{fig:real}, we observe that the number of samples needed to obtain $0.1$ test loss roughly scales with $d^2$. We convincingly see that much fewer than $d^3$ samples (the red dashed line) are needed. The NTK, on the otherhand, requires $\Omega(d^3)$ samples to learn any cubic function. The minimax sample complexity to learn arbitrary quadratics is $\Theta(d^2)$, and therefore this experiment shows that standard neural networks learn ``dense qudratic plus sparse cubic" functions with optimal sample complexity. This provides further evidence that the low-degree plus sparse task is worthy of theoretical study.

\textbf{Experimental Details.} All experiments were run on an NVIDIA RTX A6000 GPU. We use the JAX framework~\cite{jax2018github} along with the Neural Tangents API~\cite{neuraltangents2020}. Code for all experiments can be found at \url{https://github.com/eshnich/escape_NTK}.

\end{document}